%% file: main_paper.tex
\documentclass{article}
\usepackage[preprint]{icml2026}
\usepackage{microtype}
\usepackage{graphicx}
\usepackage{subcaption}
\usepackage{booktabs} 

\usepackage{hyperref}

\usepackage{algorithm}
\usepackage{algorithmic} 

\usepackage{amsmath}

\usepackage{amssymb}
\usepackage{mathtools}
\usepackage{amsthm}

\usepackage[capitalize,noabbrev]{cleveref}

\input{math_commands}
\usepackage{import}
\usepackage{subcaption}

\usepackage[table]{xcolor}
\definecolor{lightyellow}{RGB}{255,250,205} 

\allowdisplaybreaks

\theoremstyle{plain}
\newtheorem{theorem}{Theorem}[section]
\newtheorem{proposition}[theorem]{Proposition}
\newtheorem{lemma}[theorem]{Lemma}
\newtheorem{corollary}[theorem]{Corollary}
\theoremstyle{definition}
\newtheorem{definition}[theorem]{Definition}
\newtheorem{assumption}[theorem]{Assumption}
\newtheorem{example}[theorem]{Example}
\theoremstyle{remark}
\newtheorem{remark}[theorem]{Remark}

\usepackage[textsize=tiny]{todonotes}

\icmltitlerunning{Periodic Regularized Q-Learning}

\begin{document}

\twocolumn[
  \icmltitle{Periodic Regularized Q-Learning}



  \icmlsetsymbol{equal}{*}

  \begin{icmlauthorlist}
    \icmlauthor{Hyukjun Yang}{equal,sch}
    \icmlauthor{Han-Dong Lim}{equal,sch}
    \icmlauthor{Donghwan Lee}{equal,sch}

  \end{icmlauthorlist}


  \icmlaffiliation{sch}{Department of Electrical Engineering, Korea Advanced Institute of Science and Technology, Daejeon, South Korea}

  \icmlcorrespondingauthor{Firstname1 Lastname1}{first1.last1@xxx.edu}

  \icmlkeywords{Reinforcement Learning, Q-learning, convergence}

  \vskip 0.3in
]



\printAffiliationsAndNotice{}  

\import{contents}{00_abstract}

\import{contents}{01_introduction}

\import{contents}{02_preliminaries}

\import{contents}{08_regularized_projection_operator}

\import{contents}{03_AVI_reg}

\import{contents}{04_PRQ}

\import{contents}{05_main_result}

\import{contents}{06_experiments}

\import{contents}{07_conclusion}

\nocite{langley00}

\bibliography{main_paper}
\bibliographystyle{icml2026}

\newpage
\appendix
\onecolumn

\import{contents}{XX_appendix}


\end{document}

%% file: math_commands.tex
\usepackage{amsmath,amsfonts,bm}

\def\eqref#1{equation~\ref{#1}}

\def\1{\bm{1}}

\def\eps{{\epsilon}}

\def\mGamma{{\bm{\Gamma}}}

\def\vmu{{\bm{\mu}}}
\def\vtheta{{\bm{\theta}}}
\def\va{{\bm{a}}}
\def\vb{{\bm{b}}}

\def\ve{{\bm{e}}}

\def\vv{{\bm{v}}}

\def\vx{{\bm{x}}}
\def\vy{{\bm{y}}}

\def\mA{{\bm{A}}}
\def\mB{{\bm{B}}}

\def\mD{{\bm{D}}}

\def\mI{{\bm{I}}}

\def\mP{{\bm{P}}}
\def\mQ{{\bm{Q}}}
\def\mR{{\bm{R}}}

\def\mPhi{{\bm{\Phi}}}

\def\vphi{{\bm{\phi}}}
\def\mPi{{\bm{\Pi}}}

\def\vmu{{\bm{\mu}}}

\def\vxi{{\bm{\xi}}}
\def\vDelta{{\bm{\Delta}}}

\DeclareMathAlphabet{\mathsfit}{\encodingdefault}{\sfdefault}{m}{sl}
\SetMathAlphabet{\mathsfit}{bold}{\encodingdefault}{\sfdefault}{bx}{n}

\def\gA{{\mathcal{A}}}

\def\gE{{\mathcal{E}}}
\def\gF{{\mathcal{F}}}

\def\gO{{\mathcal{O}}}
\def\gP{{\mathcal{P}}}

\def\gS{{\mathcal{S}}}
\def\gT{{\mathcal{T}}}

\def\sN{{\mathbb{N}}}

\newcommand{\E}{\mathbb{E}}

\newcommand{\R}{\mathbb{R}}

\DeclareMathOperator*{\argmax}{arg\,max}
\DeclareMathOperator*{\argmin}{arg\,min}

\usepackage{enumitem}
\usepackage{pifont}
\newcommand{\cmark}{\ding{51}} 
\newcommand{\xmark}{\ding{55}} 

%% file: contents/00_abstract.tex
\begin{abstract}

In reinforcement learning (RL), Q-learning is a fundamental algorithm whose convergence is guaranteed in the tabular setting. However, this convergence guarantee does not hold under linear function approximation. To overcome this limitation, a significant line of research has introduced regularization techniques to ensure stable convergence under function approximation. In this work, we propose a new algorithm, periodic regularized Q-learning (PRQ). We first introduce regularization at the level of the projection operator and explicitly construct a regularized projected value iteration (RP-VI), subsequently extending it to a sample-based RL algorithm. By appropriately regularizing the projection operator, the resulting projected value iteration becomes a contraction. By extending this regularized projection into the stochastic setting, we establish the PRQ algorithm and provide a rigorous theoretical analysis that proves finite-time convergence guarantees for PRQ under linear function approximation.

\end{abstract}

%% file: contents/01_introduction.tex
\section{Introduction}

Recent advances in deep reinforcement learning (deep RL) have achieved remarkable empirical success across a wide range of domains, including board games such as Go~\citep{silver2017mastering} and video games such as Atari~\citep{mnih2013playing}. At the foundation of these achievements lies one of the most fundamental algorithms in reinforcement learning (RL), known as Q-learning~\citep{watkins1992q}. 
Despite its simplicity and broad applicability, the theoretical understanding of the convergence properties of Q-learning is still incomplete. The tabular version of Q-learning is known to converge under standard assumptions, but when combined with function approximation, the algorithm can exhibit instability. This phenomenon is commonly attributed to the so-called deadly triad of off-policy learning, bootstrapping, and function approximation~\citep{sutton1998reinforcement}. Such instability appears even in the relatively simple case of linear function approximation. 
To address these challenges, a substantial body of research has sought to identify sufficient conditions for convergence~\citep{melo2007convergence, melo2008analysis, yang2019sample, lee2020unified, chen2022finite, lim2025understanding} or to design regularized or constrained variants of Q-learning that promote stable learning dynamics~\citep{gallici2024simplifying, limregularized, maei2010toward, zhang2021breaking, lu2021convex, devraj2017zap}. Among these approaches, our focus lies on regularization in Q-learning, where a properly designed regularizer facilitates convergence and stabilizes the iterative learning process. However, we hypothesize that regularization alone is insufficient for stable convergence in Q-learning. Introducing periodic parameter updates, which separate the update rule into an inner convex optimization and an outer Bellman update, is the key structure to stabilize learning and successfully converge to the desired solution.
Building on this perspective, we propose a new framework that introduces the principles of periodic updates into the structure of a regularized method. We refer to this unified approach as \textit{periodic regularized Q-learning} (PRQ). By incorporating a parameterized regularizer into the projection step, PRQ induces a contraction mapping in the projected Bellman operator. This property ensures both stable and provable convergence of the learning process.

\subsection{Related works}
\paragraph{Regularized methods and Bellman equation} 
RL with function approximation frequently suffers from instability. A prominent approach to address this issue is to introduce regularization into the algorithm, a direction explored by several prior works. Regularization has been widely employed to stabilize temporal-difference (TD) learning~\citep{sutton1998reinforcement} and Q-learning, improving convergence under challenging conditions. \citet{farahmand2016regularized} studied a regularized policy iteration which solves a regularized policy evaluation problem and then takes a policy improvement step. The authors derived the performance loss and used a regularization coefficient which decreases as the number of samples used in the policy evaluation step increases.~\citet{bertsekas2011temporal} applied a regularized approach to solve a policy evaluation problem with singular feature matrices.~\citet{zhang2021breaking} studied convergence of Q-learning with a target network and a projection method.~\citet{limregularized} studied convergence of Q-learning with regularization without using a target network or requiring projection onto a ball.~\citet{manek2022pitfalls} studied fixed points of off-policy TD-learning algorithms with regularization, showing that error bounds can be large under certain ill-conditioned scenarios. Meanwhile, a different line of research~\cite{geist2019theory} focuses on regularization on the policy parametrization.

\paragraph{Target-based update} In a broader sense, our periodic update mechanism can be viewed as a target-based approach, as it intentionally holds one set of parameters stationary while updating the other. This target-based paradigm was originally introduced in temporal-difference learning to improve stability and convergence, and has since been extended to Q-learning.~\citet{lee2019target} studied finite-time analysis of TD-learning, followed by~\citet{lee2020periodic}, who presented a non-asymptotic analysis under the tabular setup. Further research has addressed specific algorithmic modifications. For instance,~\citet{chen2023target} examined truncation methods, while~\citet{che2024target} explored the effects of overparameterization.~\citet{asadi2024td} studied target network updates of TD-learning. Focusing on off-policy TD learning,~\citet{fellows2023target} investigated a target network update mechanism combined with a regularization term that vanishes when the target parameters and the current iterate coincide, under the assumption of bounded variance. Finally,~\citet{wu2025unifying} studied convergence of TD-learning and target-based TD learning from a matrix splitting perspective.

\subsection{Contributions}

Our main contributions are summarized as follows:

\begin{enumerate}
    \item We formulate the regularized projected Bellman equation (RP-BE) and the associated regularized projected value iteration (RP-VI), and provide a convergence analysis of the resulting operator. Building on its convergence analysis, we develop PRQ, a fully model-free RL algorithm.
    \item We develop a rigorous theoretical analysis of PRQ establishing finite-time convergence and sample-complexity bounds under both i.i.d. and Markovian observation models. Our results provide non-asymptotic convergence guarantees for Q-learning with linear function approximation using a single regularization mechanism. These guarantees hold in a broad range of settings without relying on truncation, projection, or strong local convexity assumptions~\citep{zhang2021breaking, chen2023target, limregularized, zhang2023convergence}.

    \item We empirically demonstrate that the joint use of periodic target updates~\citep{lee2020periodic} and regularization~\citep{limregularized} is crucial for stable learning. In particular, we provide counterexamples showing that the algorithm can fail when either component is removed, while stable learning is achieved only when both mechanisms are employed.
\end{enumerate}

%% file: contents/02_preliminaries.tex
\section{Preliminaries and notations}
\label{sec:prelim}

\paragraph{Markov decision process} A Markov decision process (MDP) consists of a 5-tuple $(\gS,\gA,\gamma,\gP,r)$, where $\gS:=\{1,2,\dots,|\gS|\}$ and $\gA:=\{1,2,\dots,|\gA|\}$ are the finite sets of states and actions, respectively, and $\gamma\in(0,1)$ is the discount factor. $\gP:\gS\times\gA\to \Delta(\gS)$ is the Markov transition kernel, and $r:\gS\times\gA\times\gS\to\R$ is the reward function. A policy $\pi:\gS\to\Delta^{\gA}$ defines a probability distribution over the action space for each state, and a deterministic policy $\pi:\gS\to\gA$ maps a state $s$ to an action $a\in\gA$. The set of deterministic policies is denoted as $\Omega$. An agent at state $s$ selects an action $a$ following a policy $\pi$, transitions to the next state $s^{\prime}\sim \gP(\cdot\mid s,a)$, and receives a reward $r(s,a,s^{\prime})$. The action-value function induced by policy $\pi$ is the expected sum of discounted rewards following a policy $\pi$, i.e., $Q^{\pi}(s,a)=\E\left[ \sum^{\infty}_{k=0}\gamma^k r(s_k,a_k,s_{k+1})\mid (s_0,a_0)=(s,a) \right]$. The goal is to find a policy $\pi$ that maximizes the overall sum of rewards $\pi^*:=\argmax_{\pi\in\Omega}\E\left[ \sum^{\infty}_{k=0}\gamma^k r(s_k,a_k,s_{k+1}) \middle| \pi \right]$. We denote the action-value function induced by $\pi^*$ as $Q^*:\gS\times\gA\to \R$, and $\pi^*$ can be recovered from $Q^*$ by the greedy policy, i.e., $\pi^*(s)=\argmax_{a\in\gA}Q^*(s,a)$. $Q^*$ can be obtained by solving the Bellman optimality equation: $Q^*(s,a)=\E[r(s,a,s^{\prime})+\gamma \max_{u\in\gA} Q^*(s^{\prime},u)\mid s,a]$.

\paragraph{Notations} Let us introduce some matrix notations used throughout the paper. $\mD\in\R^{|\gS||\gA|\times|\gS||\gA|}$ is a diagonal matrix such that $[\mD]_{(s-1)|\gA|+a,(s-1)|\gA|+a}=d(s,a)$ where $d$ is a probability distribution over the state-action space, which will be clarified in a further section; $\mP\in\R^{|\gS||\gA|\times|\gS|}$ is defined such that $[\mP]_{(s-1)|\gA|+a,s^{\prime}}=\gP(s^{\prime}\mid s,a)$; and $\mR\in\R^{|\gS||\gA|}$ is such that $[\mR]_{(s-1)|\gA|+a}=\E\left[ r(s,a,s^{\prime}) \middle | s,a \right]$. For a vector $\mQ\in\R^{|\gS||\gA|}$, the greedy policy with respect to $\mQ$, $\pi_{\mQ}:\gS \to \gA$ is defined as $\pi(s)=\argmax_{a\in\gA}  (\ve_s\otimes\ve_a)^{\top}\mQ$ where $\ve_s\in\R^{|\gS|}$ and $\ve_a\in\R^{|\gA|}$ are unit vectors whose $s$-th and $a$-th elements are one, while all others are zero, respectively. $\otimes$ denotes the Kronecker product. Moreover, we denote a policy defined by a deterministic policy $\pi\in\Omega$ as a matrix notation $\mPi_{\pi}\in\R^{|\gS|\times |\gS||\gA|}$ such that the $s$-th row vector is $(\ve_{s}\otimes \ve_{\pi(s)})^{\top}$ for $s\in\gS$. For simplicity, we denote $\mPi_{\mQ}:=\mPi_{\pi_{\mQ}}$. A linear parametrization is used to represent an action-value function induced by a policy $\pi$, $Q^{\pi}(s,a)\approx\vphi^{\top}(s,a)\vtheta$ given a feature map $\vphi:\gS\times\gA\to\R^h$. $\vtheta$ is the learnable parameter and $h$ is the feature dimension. We denote by $\mPhi \in \mathbb{R}^{|\mathcal{S}||\mathcal{A}|\times h}$ the feature matrix, where the row indexed by $(s-1)|\mathcal{A}| + a$ corresponds to $\vphi(s,a)^{\top}$. Throughout the paper, let us adopt the following standard assumption on the feature matrix:
\begin{assumption}\label{assmp:feature_matrix}
    $\mPhi\in\R^{|\gS||\gA| \times h}$ is a full-column rank matrix and $||\mPhi||_{\infty}\leq 1$.
\end{assumption}

\subsection{Projected Bellman equation}

The Bellman operator  $\gT\mQ=\mR+\gamma\mP\mPi_{\mQ}\mQ$ is a non-linear operator that may yield a vector outside the image of $\mPhi\in\R^{|\gS||\gA|\times h}$. Therefore, a composition of the Bellman operator and the weighted Euclidean projection is often used, yielding the following equation
\begin{align}
 \mGamma \gT\mPhi\vtheta = \mPhi\vtheta \label{eq:pbe_1}   
\end{align}
 where $\mGamma:=\mPhi(\mPhi^{\top}\mD\mPhi)^{-1}\mPhi^{\top}\mD$ is the weighted Euclidean projection operator. This equation is called the projected Bellman equation (P-BE). To find the solution of the above equation (we defer the discussion of existence and uniqueness of the solution to a later section), we consider minimizing the following objective function:
\begin{align}
    f(\vtheta) =\frac{1}{2}\left\| \mGamma (\mR+\gamma\mP\mPi_{\mPhi\vtheta}\mPhi\vtheta)-\mPhi\vtheta \right\|_{\mD}^2 \label{eq:mspbe}.
\end{align}
 Since the max operator $\mPi$ introduces nonsmoothness, the function $f$ is non-differentiable at certain points. Therefore, to find the minimizer of $f(\vtheta)$, we investigate the Clarke subdifferential~\cite{clarke1981generalized} of the above objective, which satisfies
\begin{align*}
    \partial \! f(\vtheta)\! \subseteq & \mathrm{conv}\{ \!(\gamma\mP\mPi_{\beta}\mPhi \!-\!\mPhi)^{\top}\!\mD\mGamma(\gT\mPhi\vtheta \!-\!\mPhi\vtheta) \! \mid \! \beta \!\in\! \Lambda(\vtheta)\! \}
\end{align*}
where $\Lambda(\vtheta):=\{ \pi \in \Omega : \pi(s) \in \argmax_{a\in\gA} \vphi(s,a)^{\top}\vtheta \}$ and $\mathrm{conv}(A)$ for a set $A$ denotes the convex hull of the set $A$. The detailed derivation is deferred to Lemma~\ref{lem:subdifferential_f_cvx_hull_derivation} in the Appendix. A necessary condition for some point $\vtheta\in\R^h$ to be a minimizer of $f$ is
\begin{align*}
    0 \in \partial f(\vtheta).
\end{align*}
Such a point $\vtheta$ is called a (Clarke) stationary point~\cite{clarke1981generalized}. At a stationary point $\vtheta$, there exists some policy $\beta$ such that
\[{(\gamma \mP{\mPi _\beta }\mPhi  - \mPhi )^\top} \mD \mGamma (\gT\mPhi \vtheta  - \mPhi \vtheta ) = \bm{0}\]
or equivalently
\begin{align*}
&{\mPhi ^\top}{(\gamma \mP{\mPi_{\beta}} - \mI)^\top}\mD\mPhi \vtheta \\
=& {\mPhi ^\top}{(\gamma \mP{\mPi_{\beta}} - \mI)^\top}\mD\mGamma (\mR + \gamma \mP{\mPi _{\mPhi \vtheta }}\mPhi \vtheta)    .
\end{align*}

Assuming that ${\mPhi ^\top}{(\gamma \mP{\mPi _\beta } - \mI)^\top}\mD\mPhi$ is invertible, we obtain the P-BE in~(\ref{eq:pbe_1}). Since a stationary point always exists, a solution to the P-BE also exists, under the assumption that ${\mPhi ^\top}{(\gamma \mP{\mPi_{\beta}} - \mI)^\top}\mD\mPhi$ is invertible at the stationary point. It will admit a unique solution if $\mGamma \gT$ is a contraction. This P-BE can be equivalently written as\begin{align}
\mPhi^{\top}\mD\mR+\gamma\mPhi^{\top}\mD\mP\mPi_{\mPhi\vtheta}\mPhi\vtheta=(\mPhi^{\top}\mD\mPhi)\vtheta. \label{eq:pbe_1_equivalant}
\end{align}
Despite its simple appearance, the P-BE is not guaranteed to have a unique solution, and in some cases may not admit any solution at all~\citep{de2000existence, meyn2024projected}.
If the P-BE does not admit a fixed point, this means that, at any stationary point $\vtheta$, $\beta$ satisfying $0 \in \partial f(\vtheta)$ fails to make ${\mPhi ^\top}{(\gamma \mP{\mPi_{\beta}} - \mI)^\top}\mD\mPhi $ invertible. Moreover, if $\mD=\mI$, then ${\mPhi ^\top}{(\gamma \mP{\mPi_{\beta}} - \mI)^\top}\mD\mPhi $ is always invertible, and hence, the fixed point of the P-BE always exists even if $\mGamma \gT$ is not a contraction. There may exist multiple fixed points of the P-BE.

In summary, if we can find a stationary point of~(\ref{eq:mspbe}), then we obtain a solution to the P-BE, which is referred to as the Bellman residual method~\citep{baird1995residual}. However, directly optimizing~(\ref{eq:mspbe}) is challenging because~(\ref{eq:mspbe}) is a nonconvex and nondifferentiable function; hence, one typically has to resort to subdifferential-based methods~\citep{clarke1981generalized}, which are often not computationally efficient. Moreover, when extending to model-free RL, a double-sampling issue~\citep{baird1995residual} arises. For these reasons, one often instead considers dynamic programming approaches~\citep{bertsekas2012dynamic} such as value iteration. For instance, we can consider the following projected value iteration (P-VI):
\begin{align}
  \mPhi {\vtheta _{k + 1}} = \mGamma \gT\mPhi {\vtheta _k} \label{eq:p-vi}  
\end{align}
which however is not guaranteed to converge unless $\mGamma \gT$ is a contraction. To mitigate these issues, in the next section we introduce RP-VI, which incorporates an additional regularization term.

%% file: contents/08_regularized_projection_operator.tex
\section{Regularized projection operator}

\begin{figure}[H]
    \centering
    \includegraphics[width=0.7\linewidth]{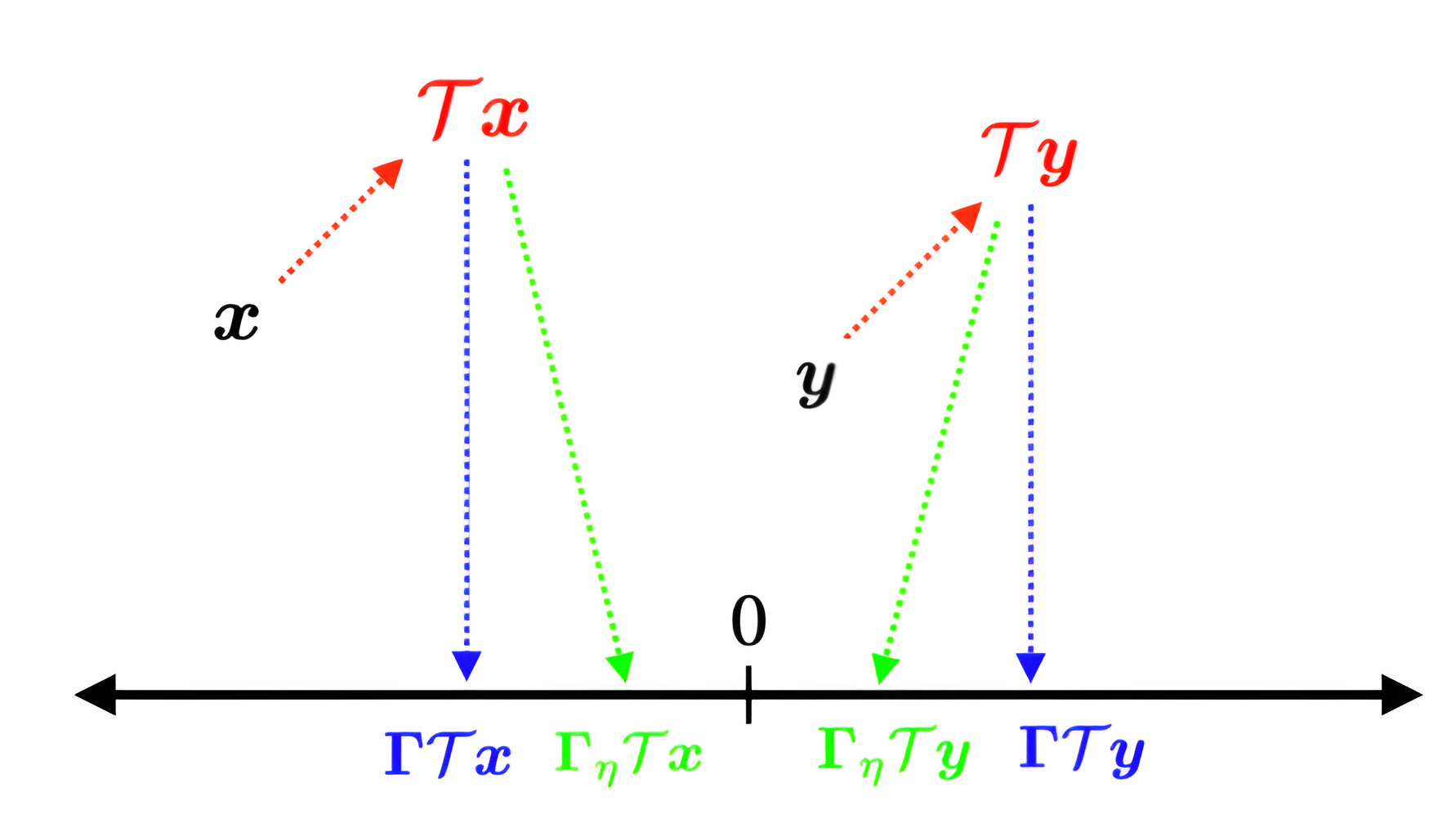}
    \caption{Illustration of the regularized projection. With a proper choice of $\eta$, $\mGamma_{\eta}\gT\vx$ and $\mGamma_{\eta}\gT\vy$ will be close to the origin and $||\mGamma_{\eta}\gT(\vx-\vy)||_2\leq||\mGamma_{}\gT(\vx-\vy)||_{2} $.}
    \label{fig:reg_projection}
\end{figure}

Let us begin with the standard P-VI in~(\ref{eq:p-vi}). P-VI can be equivalently written as the following optimization problem:
\begin{align}
{\vtheta _{k + 1}} = \arg {\min _{\vtheta  \in {\R^h}}}L(\vtheta ,{\vtheta _k}): = \frac{1}{2}\left\| {\mGamma \gT\mPhi {\vtheta _k} - \mPhi \vtheta } \right\|_{\mD}^2. \label{eq:target_loss}    
\end{align}
As mentioned before, this P-VI does not guarantee convergence unless $\mGamma \gT$ is a contraction. To address the potential ill-posedness of solving~(\ref{eq:mspbe}) and the projected Bellman equation (P-BE), we introduce an additional parameter vector $\vtheta^{\prime}$ (called target parameter) to approximate the next state-action value and a regularized formulation. In particular, we modify the objective function in~(\ref{eq:target_loss}) as follows:
\begin{align}
  L_\eta (\vtheta,\vtheta^{\prime}) \!=\! \frac{1}{2}\left\| \mGamma (\mR+\gamma\mP\mPi_{\mPhi\vtheta^{\prime}}\mPhi\vtheta^{\prime})\!-\!\mPhi\vtheta \right\|_{\mD}^2+\frac{\eta}{2}\left\|  \vtheta \right\|^2_2 \label{eq:reg-mspbe}
\end{align}
where $\eta\in [0,\infty)$ is a non-negative constant. The objective in~(\ref{eq:reg-mspbe}) differs from the original formulation in~(\ref{eq:mspbe}) in two key aspects. First, we separate the parameters for estimating the next state-action value and the current state-action value. Optimizing with respect to $\vtheta$ and considering $\vtheta^{\prime}$ as a fixed parameter, we can avoid the problem of non-differentiability from the max-operator in the original formulation in~(\ref{eq:mspbe}). Second, a quadratic regularization term is incorporated to ensure the contraction property of the regularized projection operator, thereby facilitating the convergence.

By taking the derivative of $L_{\eta}(\vtheta,\vtheta^{\prime})$ with respect to $\vtheta$, and using the first-order optimality condition for convex functions, we find that the minimizer of~(\ref{eq:reg-mspbe}) satisfies 
\begin{align}
    \mPhi^{\top}\mD\mR+\gamma\mPhi^{\top}\mD\mP\mPi_{\mPhi\vtheta^{\prime}}\mPhi\vtheta^{\prime}=(\mPhi^{\top}\mD\mPhi+\eta\mI)\vtheta \label{eq:target_rpbe}
\end{align}
Equivalently, multiplying both sides by $\mPhi(\mPhi^{\top}\mD\mPhi+\eta\mI)^{-1}$ yields
\begin{align*}
    \mPhi\vtheta= &\underbrace{\mPhi(\mPhi^{\top}\mD\mPhi+\eta\mI)^{-1}\mPhi^{\top}\mD}_{:=\mGamma_{\eta}}(\mR+\gamma \mP\mPi_{\mPhi\vtheta^{\prime}}\mPhi\vtheta^{\prime})\\ 
    \Leftrightarrow& {\mPhi \vtheta  = \mGamma_\eta \gT\mPhi \vtheta'}
\end{align*}
where $\mGamma_{\eta}$ is referred to as the regularized projection~\cite{limregularized}. We will discuss it in more detail soon. When $\vtheta$ and $\vtheta^{\prime}$ coincide, we recover a variant of P-BE in~(\ref{eq:pbe_1}) with an additional identity term, which corresponds to the RP-BE:
\[\mPhi \vtheta  = {\mGamma _\eta }\gT\mPhi \vtheta \]
which can be equivalently written as
\begin{align}
    \mPhi^{\top}\mD\mR+\gamma\mPhi^{\top}\mD\mP\mPi_{\mPhi\vtheta}\mPhi\vtheta=(\mPhi^{\top}\mD\mPhi+\eta\mI)\vtheta \label{eq:rpbe}
\end{align}
Let us denote the solution to~(\ref{eq:rpbe}) as $\vtheta^*_{\eta}$. Especially,~\citet{zhang2021breaking} consider a solution in a certain ball and~\citet{limregularized} choose a sufficiently large $\eta$ to guarantee the existence and uniqueness of the solution to the above equation in $\R^h$.

We can see that $\mGamma_{\eta}$ plays a central role in characterizing the existence of the solution to~(\ref{eq:rpbe}). Before proceeding further, let us first investigate the limiting behavior of the regularized projection operator:
\begin{lemma}\label{property-of-projection}[Lemma 3.1 in~\citet{limregularized}]
The matrix $\mGamma_\eta$ satisfies the following properties:
 $\mathop {\lim }\limits_{\eta  \to \infty } {\mGamma _\eta } \!=\! 0$ and $\mathop {\lim }\limits_{\eta  \to 0} {\mGamma _\eta } = \mGamma$.
\end{lemma}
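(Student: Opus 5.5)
Both limits follow from the structure of $\mGamma_\eta=\mPhi(\mPhi^{\top}\mD\mPhi+\eta\mI)^{-1}\mPhi^{\top}\mD$: only the middle inverse depends on $\eta$, and the outer factors $\mPhi$ and $\mPhi^{\top}\mD$ are fixed matrices. So it suffices to study $\eta\mapsto(\mPhi^{\top}\mD\mPhi+\eta\mI)^{-1}$ and then multiply by the fixed factors, which preserves limits. Write $\mA:=\mPhi^{\top}\mD\mPhi\in\R^{h\times h}$. It is symmetric and positive semidefinite. Since $\mD$ has nonnegative diagonal entries, $\mA+\eta\mI$ is symmetric positive definite for every $\eta>0$, so $\mGamma_\eta$ is well defined on $(0,\infty)$.

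\textbf{Limit $\eta\to\infty$.} Take the eigendecomposition $\mA=\mathbf{U}\,\mathrm{diag}(\lambda_1,\dots,\lambda_h)\,\mathbf{U}^{\top}$ with $\lambda_i\ge 0$. Then
\[(\mA+\eta\mI)^{-1}=\mathbf{U}\,\mathrm{diag}\big((\lambda_i+\eta)^{-1}\big)\,\mathbf{U}^{\top},\]
so $\|(\mA+\eta\mI)^{-1}\|_2\le 1/\eta\to 0$. By submultiplicativity,
\[\|\mGamma_\eta\|_2\le \|\mPhi\|_2\,\|\mPhi^{\top}\mD\|_2/\eta\to 0,\]
and hence $\lim_{\eta\to\infty}\mGamma_\eta=0$.

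\textbf{Limit $\eta\to 0$.} This is the step that needs an assumption. It requires $\mA$ to be invertible, so that $\mGamma$ as defined in Section~\ref{sec:prelim} makes sense. I will use Assumption~\ref{assmp:feature_matrix}, which says $\mPhi$ has full column rank, together with the standing convention that $d(s,a)>0$ for all $(s,a)$. Then for $\vx\neq 0$ we have $\mPhi\vx\neq 0$, so $\vx^{\top}\mA\vx=\|\mPhi\vx\|_{\mD}^2>0$ and $\mA\succ 0$. Matrix inversion is continuous on the open set of invertible matrices, and $\mA+\eta\mI\to\mA$ as $\eta\to0$. Therefore $(\mA+\eta\mI)^{-1}\to\mA^{-1}$. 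Concretely, in the eigenbasis each $(\lambda_i+\eta)^{-1}\to\lambda_i^{-1}$ with $\lambda_i>0$. Multiplying by the fixed outer factors gives $\mGamma_\eta\to\mPhi\mA^{-1}\mPhi^{\top}\mD=\mGamma$.

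The main (mild) obstacle is exactly the positivity of $\mA$ at $\eta=0$. Without $d>0$ (or at least $\mPhi^{\top}\mD\mPhi\succ0$), $\mGamma$ itself is undefined. In that case the limit would instead involve a pseudo-inverse, so I would state this positivity explicitly as the hypothesis under which $\mGamma$ is defined.
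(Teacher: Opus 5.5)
Your proof is correct and complete. The paper gives no proof of its own and imports the result from \citet{limregularized}; your route is the standard argument, using a $1/\eta$ bound on $\|(\mPhi^{\top}\mD\mPhi+\eta\mI)^{-1}\|_2$ for the first limit and continuity of matrix inversion at the invertible $\mPhi^{\top}\mD\mPhi$ for the second. Your caveat that $\mPhi^{\top}\mD\mPhi\succ 0$ is needed (e.g.\ via $d(s,a)>0$ together with full column rank) is exactly the implicit hypothesis under which the paper defines $\mGamma$ in the first place.
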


In view of this limiting behavior, it follows that with sufficiently large $\eta$, the composition of the regularized projection operator and the Bellman operator becomes a contractive operator. Figure~\ref{fig:reg_projection} provides a geometric illustration of this effect. As $\eta$ increases, the image of $\mGamma_{\eta}$ is concentrated near the origin. Leveraging this observation, the following lemma characterizes conditions under which~(\ref{eq:rpbe}) admits a unique solution, for which the contractivity of the operator $\mGamma_{\eta}\gT(\cdot)$ is sufficient.

\begin{lemma}\label{lem:mGamma<1}[Lemma 3.2 in~\citet{limregularized}]
    The solution of~(\ref{eq:rpbe}) exists and is unique if
    \begin{align}
        \gamma  \left\| \mGamma_{\eta} \mP \right\|_{\infty}<1. \label{ineq:contraction}
    \end{align}
\end{lemma}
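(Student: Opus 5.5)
We recast the RP-BE~(\ref{eq:rpbe}) as a fixed-point problem on the parameter space and apply the Banach fixed-point theorem in $\|\cdot\|_\infty$ on $\R^{|\gS||\gA|}$. Since $\mPhi$ has full column rank (Assumption~\ref{assmp:feature_matrix}) and $\mPhi^{\top}\mD\mPhi+\eta\mI$ is invertible for $\eta\ge 0$ (positive definite, since $\mPhi^{\top}\mD\mPhi\succeq 0$, and for $\eta=0$ we use $\mD\succ 0$), the equation~(\ref{eq:rpbe}) is equivalent to $\mPhi\vtheta=\mGamma_\eta\gT\mPhi\vtheta$. We therefore study the map $F(\mQ):=\mGamma_\eta\gT\mQ=\mGamma_\eta\mR+\gamma\mGamma_\eta\mP\mPi_{\mQ}\mQ$ on $\R^{|\gS||\gA|}$. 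Every fixed point of $F$ lies in the range of $\mGamma_\eta$, which is contained in the range of $\mPhi$. Hence fixed points $\mQ$ of $F$ correspond one-to-one, via $\mQ=\mPhi\vtheta$ and injectivity of $\mPhi$, with solutions $\vtheta$ of~(\ref{eq:rpbe}).

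The key step is the contraction estimate. For any $\mQ,\mQ'$,
\[
\|F(\mQ)-F(\mQ')\|_\infty=\gamma\|\mGamma_\eta\mP(\mPi_{\mQ}\mQ-\mPi_{\mQ'}\mQ')\|_\infty\le\gamma\|\mGamma_\eta\mP\|_\infty\,\|\mPi_{\mQ}\mQ-\mPi_{\mQ'}\mQ'\|_\infty .
\]
The $s$-th entry of $\mPi_{\mQ}\mQ$ is $\max_a \mQ(s,a)$, and the max operator is $1$-Lipschitz in the sup norm:
\[
|\max_a\mQ(s,a)-\max_a\mQ'(s,a)|\le\max_a|\mQ(s,a)-\mQ'(s,a)|.
\]
This gives $\|F(\mQ)-F(\mQ')\|_\infty\le\gamma\|\mGamma_\eta\mP\|_\infty\|\mQ-\mQ'\|_\infty$. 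Under~(\ref{ineq:contraction}), $F$ is therefore a contraction on the complete space $(\R^{|\gS||\gA|},\|\cdot\|_\infty)$, and Banach's theorem gives a unique fixed point $\mQ^*$.

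We then transfer back to parameters. Since $\mQ^*=F(\mQ^*)=\mPhi(\mPhi^\top\mD\mPhi+\eta\mI)^{-1}\mPhi^\top\mD\,\gT\mQ^*$, the vector $\mQ^*$ lies in the range of $\mPhi$, so $\mQ^*=\mPhi\vtheta^*_\eta$ for a unique $\vtheta^*_\eta$. Cancelling $\mPhi$ on the left by injectivity, this $\vtheta^*_\eta$ satisfies~(\ref{eq:rpbe}), which proves existence. For uniqueness, any other solution $\vtheta$ of~(\ref{eq:rpbe}) gives a fixed point $\mPhi\vtheta$ of $F$, so $\mPhi\vtheta=\mQ^*$ and hence $\vtheta=\vtheta^*_\eta$.

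The only delicate point is the nonsmooth term $\mPi_{\mQ}\mQ$, where the greedy policy depends on $\mQ$. We handle it entirely through the entrywise $1$-Lipschitz property of the max, so no differentiability is needed.
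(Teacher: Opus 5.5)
Your proof is correct and takes the standard route. The paper does not reprove this lemma; it only cites Lemma 3.2 of \citet{limregularized}. However, the sup-norm contraction estimate $\|\mGamma_\eta\gT\mQ-\mGamma_\eta\gT\mQ'\|_\infty\le\gamma\|\mGamma_\eta\mP\|_\infty\|\mQ-\mQ'\|_\infty$, obtained from non-expansiveness of the max, is exactly what the paper uses in its proof of Lemma~\ref{lem:dp-contraction}. Combining it with Banach's theorem, the inclusion of the range of $\mGamma_\eta$ in the range of $\mPhi$, and injectivity of $\mPhi$ gives existence and uniqueness as you describe. The only implicit requirement, which you correctly flag, is invertibility of $\mPhi^\top\mD\mPhi+\eta\mI$: this is automatic for $\eta>0$ and needs $d$ to have full support when $\eta=0$.
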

\begin{remark}
    Note that this is only a sufficient condition but not a necessary condition for the existence and uniqueness of~(\ref{eq:rpbe}).
\end{remark}
\begin{remark}\label{lem:eta-condition-ineq:contraction}
        If $\eta>2$ and $\left\|\mPhi\right\|_{\infty}\leq 1$, then~(\ref{ineq:contraction}) is satisfied. The proof is given in Appendix~\ref{subsec:lem:eta-condition-ineq:contraction}. If each element of $\mPhi$ is uniformly sampled from $[0,1]$, then only $\frac{1}{h}$ scaling is sufficient to ensure the condition $\left\|\mPhi\right\|_{\infty}\leq 1$.
\end{remark}

%% file: contents/03_AVI_reg.tex
\section{Regularized projected value iteration}
\label{sec:RAVI}

In this section, we present a theoretical analysis of the behavior of RP-VI, the regularized version of projected value iteration designed to solve~(\ref{eq:rpbe}). While this approach relies on knowledge of the model and reward, it serves as a foundational step toward the development of practical algorithms, which will be discussed in a later section. The RP-VI algorithm for solving~(\ref{eq:rpbe}) is given by
\begin{align}\label{eq:pr_vi}
\mPhi {\vtheta _{k + 1}} = \mGamma_\eta \gT\mPhi {\vtheta _k}    
\end{align}
or equivalently, it can be written as, for $\vtheta_0\in\mathbb{R}^h$,
\begin{equation}
\label{eq:avi}
\begin{split}
\vtheta_{k+1} &\!=\! (\mPhi^{\top}\mD\mPhi+\eta\mI)^{-1}\mPhi^{\top}\mD (\mR+\gamma\mP\mPi_{\mPhi\vtheta_k}{\mPhi\vtheta}_k).
\end{split}
\end{equation}
Note that~\cref{eq:avi} can be expressed as 
\begin{align}
{\vtheta _{k + 1}} = \arg {\min _{\vtheta  \in {\R^h}}}L_{\eta}(\vtheta ,{\vtheta _k}) \label{eq:target_loss_reg}    
\end{align}

which differs from~(\ref{eq:target_loss}) by replacing $L(\cdot,\cdot)$ with $L_{\eta}(\cdot,\cdot)$. This reformulation will be key to our subsequent development of the model-free version of this approach. The convergence of the above update can be characterized as follows: 

\begin{lemma}\label{lem:dp-contraction}
  Suppose that there exists a unique solution $\vtheta^*_{\eta}$ to~(\ref{eq:rpbe}), and consider the update in~(\ref{eq:avi}). We have
    \begin{align*}
        \left\| \mPhi(\vtheta_k-\vtheta^*_{\eta}) \right\|_{\infty}\leq  \left( \gamma \left\| \mGamma_{\eta}\right\|_{\infty}  \right)^{k+1} \left\|\mPhi\vtheta_0-\mPhi\vtheta^*_{\eta}\right\|_{\infty}.
    \end{align*}
\end{lemma}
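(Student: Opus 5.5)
The plan is to show that one step of RP-VI contracts the error in the $\|\cdot\|_\infty$ norm with modulus $\gamma\|\mGamma_\eta\|_\infty$, and then iterate.

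\textbf{Error recursion.} Since $\vtheta^*_\eta$ solves (\ref{eq:rpbe}), multiplying (\ref{eq:rpbe}) by $\mPhi(\mPhi^\top\mD\mPhi+\eta\mI)^{-1}$ gives the fixed-point identity $\mPhi\vtheta^*_\eta=\mGamma_\eta\gT\mPhi\vtheta^*_\eta$. Likewise, (\ref{eq:avi}) gives $\mPhi\vtheta_{k+1}=\mGamma_\eta\gT\mPhi\vtheta_k$. Subtracting, the reward term $\mR$ cancels:
\begin{align*}
\mPhi(\vtheta_{k+1}-\vtheta^*_\eta)=\gamma\mGamma_\eta\bigl(\mP\mPi_{\mPhi\vtheta_k}\mPhi\vtheta_k-\mP\mPi_{\mPhi\vtheta^*_\eta}\mPhi\vtheta^*_\eta\bigr).
\end{align*}

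\textbf{Bounding the bracket.} Submultiplicativity of the induced $\infty$-norm gives
\begin{align*}
\|\mPhi(\vtheta_{k+1}-\vtheta^*_\eta)\|_\infty\le\gamma\|\mGamma_\eta\|_\infty\,\bigl\|\mP(\mPi_{\mQ}\mQ-\mPi_{\mQ'}\mQ')\bigr\|_\infty,
\end{align*}
where $\mQ=\mPhi\vtheta_k$ and $\mQ'=\mPhi\vtheta^*_\eta$. Two facts control the last factor.
\begin{itemize}
\item $\mP$ is row-stochastic, so $\|\mP\|_\infty=1$.
\item The $s$-th entry of $\mPi_{\mQ}\mQ$ is $\max_a Q(s,a)$. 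Hence, by the elementary inequality $|\max_a x_a-\max_a y_a|\le\max_a|x_a-y_a|$, we get $\|\mPi_{\mQ}\mQ-\mPi_{\mQ'}\mQ'\|_\infty\le\|\mQ-\mQ'\|_\infty$.
\end{itemize}
Combining these yields the one-step bound
\begin{align*}
\|\mPhi(\vtheta_{k+1}-\vtheta^*_\eta)\|_\infty\le\gamma\|\mGamma_\eta\|_\infty\,\|\mPhi(\vtheta_k-\vtheta^*_\eta)\|_\infty .
\end{align*}

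\textbf{Iterating.} Induction on $k$ then gives the geometric bound.

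The main obstacle I expect is the bookkeeping of the exponent, not any analytic step. The nonexpansiveness of the max is the only nonlinear ingredient, and it is standard. Induction from $\vtheta_0$ naturally produces the factor $(\gamma\|\mGamma_\eta\|_\infty)^{k}$, i.e., the bound for $\vtheta_k$ carries exponent $k$. The exponent $k+1$ appears exactly when the left-hand side is the iterate $\vtheta_{k+1}$ produced after $k+1$ applications of the map.

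So I would prove the one-step contraction above and iterate it. I would then state the result with the index convention that matches the number of applications of $\mGamma_\eta\gT$: the error after $k+1$ steps is at most $(\gamma\|\mGamma_\eta\|_\infty)^{k+1}\|\mPhi\vtheta_0-\mPhi\vtheta^*_\eta\|_\infty$. When $\gamma\|\mGamma_\eta\|_\infty<1$ this gives linear convergence.

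Note that the argument uses only existence of $\vtheta^*_\eta$ and not its uniqueness. Uniqueness in fact follows a posteriori whenever $\gamma\|\mGamma_\eta\|_\infty<1$, by applying the one-step bound to two fixed points. This is consistent with Lemma~\ref{lem:mGamma<1}, since $\|\mGamma_\eta\mP\|_\infty\le\|\mGamma_\eta\|_\infty$.
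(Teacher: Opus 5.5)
Your proposal is correct and follows the paper's proof. Both subtract the fixed-point identity $\mPhi\vtheta^*_\eta=\mGamma_\eta\gT\mPhi\vtheta^*_\eta$ from the iteration, use nonexpansiveness of the max together with row-stochasticity of $\mP$, and iterate; the only difference is that the paper keeps the slightly sharper modulus $\gamma\|\mGamma_\eta\mP\|_\infty$ and implicitly bounds it by $\gamma\|\mGamma_\eta\|_\infty$ to match the statement. Your index correction is also right: the paper's own chain of inequalities bounds $\|\mPhi(\vtheta_{k+1}-\vtheta^*_\eta)\|_\infty$ by $(\gamma\|\mGamma_\eta\mP\|_\infty)^{k+1}\|\mPhi\vtheta_0-\mPhi\vtheta^*_\eta\|_\infty$, whereas the statement as literally written (exponent $k+1$ for $\vtheta_k$) fails at $k=0$ whenever $\gamma\|\mGamma_\eta\|_\infty<1$ and $\mPhi\vtheta_0\neq\mPhi\vtheta^*_\eta$.
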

The proof is given in Appendix~\ref{subsec:lem:dp-contraction}. From the above lemma, if $\gamma \left\|\mGamma_{\eta}\right\|_{\infty}<1$, then $\mPhi\vtheta_k\to\mPhi\vtheta_\eta^*$ at the rate of $\gamma \left\|\mGamma_{\eta}\right\|_{\infty}$.

%% file: contents/04_PRQ.tex
\section{Periodic regularized Q-learning}

In this section, we present PRQ, our main algorithmic contribution. Conceptually, PRQ can be seen as a stochastic version of RP-VI in~(\ref{eq:avi}). The idea of PRQ is to approximate the RP-VI update in~(\ref{eq:avi}), which cannot be implemented directly in a model-free setting due to the matrix inverse and the requirement for knowledge of system parameters. The key idea for implementing RP-VI in a model-free RL setting is that RP-VI can be reformulated in the optimization form in~(\ref{eq:target_loss_reg}). The optimization in~(\ref{eq:target_loss_reg}) can be solved to an arbitrarily accurate approximate solution via the stochastic gradient descent method. Therefore, we can develop an efficient algorithm based on stochastic gradient descent. The algorithm operates in two stages: the inner loop and the outer loop update. Each loop updates separate learning parameters, the inner loop iterate and the outer loop iterate, respectively. The inner loop involves a stochastic gradient descent method applied to a loss function, while the outer loop update adjusts the second argument in the objective function in~(\ref{eq:target_loss_reg}), which is referred to as the target parameter.

The overall algorithm is summarized in ~\cref{algo:prq}. Let $\vtheta_{t,k}$ denote the parameter vector at the $k$-th step of the inner loop during the $t$-th outer iteration. The objective of the inner loop is to approximate the update in~(\ref{eq:avi}) given $\vtheta_{t,0}$. Specifically, the inner loop aims to approximately solve the optimization problem $\min_{\vtheta\in \R^h} L_{\eta}(\vtheta, \vtheta_{t,0})$; accordingly, after $K$ steps of inner iterations,
\begin{align}
    \vtheta_{t,K} \approx \vtheta^*(\vtheta_{t,0}):= \argmin_{\vtheta\in\R^h} L_{\eta}(\vtheta, \vtheta_{t,0})\label{def:theta^*}
\end{align}
where we define a function $\vtheta^*:\R^h\to\R^h$ for the simplicity of the notation. The stochastic gradient descent method to solve the inner loop minimization problem can be applied in the following manner: upon observing $o=(s,a,s^{\prime})\in\gS\times\gA\times\gS$, where $(s,a)\sim d(\cdot,\cdot), \; s^{\prime}\sim\gP(\cdot\mid s,a)$, we construct the stochastic gradient estimator 
\begin{equation*}
\begin{split}
g(\boldsymbol{\theta}, \boldsymbol{\theta}^{\prime}; o) 
&= -\Bigl( r(s,a,s^{\prime}) + \gamma \max_{a \in \mathcal{A}} \boldsymbol{\phi}(s^{\prime},a)^{\top} \boldsymbol{\theta}^{\prime} \\
&\quad - \boldsymbol{\phi}(s,a)^{\top} \boldsymbol{\theta} \Bigr) \boldsymbol{\phi}(s,a) + \eta \boldsymbol{\theta}
\end{split}
\end{equation*}
which satisfies $\E[g (\vtheta,\vtheta^{\prime};o)]=\nabla_{\vtheta} L_{\eta}(\vtheta,\vtheta^{\prime})$. Therefore, given a step-size $\alpha\in(0,1)$, the inner loop update can be written as follows:
\begin{align}
    \vtheta_{t,k+1}  = \vtheta_{t,k} + \alpha \left( - g (\vtheta_{t,k},\vtheta_{t,0};o)\right). \label{eq:prq-update}
\end{align}

\begin{algorithm}[t]

\caption{Periodic Regularized Q-learning}\label{algo:prq}
\begin{algorithmic}[1]
    \STATE \textbf{Input:} total periods $T$, period length $K$
    \STATE \textbf{Output:} $\vtheta_{T,K}$
    \STATE Initialize $\vtheta_{0,K}\in\R^h$
    \FOR{$t=1$ to $T$}
        \STATE $\vtheta_{t,0}\gets \vtheta_{t-1,K}$
        \FOR{$k=0$ to $K-1$}
            \STATE Observe $(s_{t,k},a_{t,k})\! \sim\! d(\cdot,\cdot)$ \\
            \STATE Observe $s'_{t,k}\!\sim\! \gP(\cdot \!\mid s_{t,k},a_{t,k})$
            \STATE Receive $r_{t,k}\gets r(s_{t,k},a_{t,k},s'_{t,k})$
            \STATE Update $\vtheta_{t,k+1}$ using~(\ref{eq:prq-update})
        \ENDFOR
    \ENDFOR
\end{algorithmic}
\end{algorithm}
After $K$ steps in the inner loop update,  we update the target parameter $\vtheta_{t+1,0} \xleftarrow{}\vtheta_{t,K}$ and then repeat the inner loop procedure. This combined process is an approximation of RP-VI in~(\ref{eq:avi}), with stochastic gradient descent. Consequently, the period length $K$ plays a critical role in controlling approximation error; a sufficiently large $K$ ensures accurate regularized projection, thereby guaranteeing stability and convergence.

%% file: contents/05_main_result.tex
\section{Main theoretical result}

In this section, we present the theoretical analysis of PRQ. We first derive a loop error decomposition and present a key proposition. We then analyze convergence under the independent and identically distributed (i.i.d.) observation model and subsequently extend the results to the Markovian observation model.

\subsection{Outer loop decomposition}
Before proceeding to the error analysis, we establish a structural decomposition of the overall approximation error in the PRQ procedure. One component is the inner loop error, which arises from stochastic gradient descent on the regularized objective. The other component is the outer loop error, which is induced by the RP-VI update.

\begin{proposition}\label{prop:outer-loop-decomposition}
For $t\in\mathbb{N}$ and $\delta > 0$, we have
\begin{equation*}
\begin{aligned}
&\mathbb{E}\left[ \| \boldsymbol{\Phi}\boldsymbol{\theta}_{t,K}- \boldsymbol{\Phi}\boldsymbol{\theta}^*_{\eta} \|^2_{\infty} \right] \\
&\le \tfrac{2(1+\delta)}{\mu_{\eta}}
\mathbb{E} \Bigl[ L_{\eta}(\boldsymbol{\theta}_{t,K},\boldsymbol{\theta}_{t-1,K}) 
- L_{\eta}(\vtheta^*(\vtheta_{t-1,K}),\boldsymbol{\theta}_{t-1,K}) \Bigr] \\
&\quad + \gamma^2  \left\| \mGamma_\eta \right\|^2_\infty (1+\delta^{-1})
\mathbb{E} \left[ \| \boldsymbol{\Phi}\boldsymbol{\theta}_{t-1,K}-\boldsymbol{\Phi}\boldsymbol{\theta}^*_{\eta} \|^2_{\infty} \right].
\end{aligned}
\end{equation*}
\end{proposition}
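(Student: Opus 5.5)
The plan is to split the error at the end of period $t$ by inserting the exact RP-VI image of the previous target. Recall $\vtheta_{t,0}=\vtheta_{t-1,K}$, and write $\vtheta^*(\vtheta_{t-1,K})$ as in~(\ref{def:theta^*}). We have
\[
\mPhi\vtheta_{t,K}-\mPhi\vtheta^*_{\eta}=\bigl(\mPhi\vtheta_{t,K}-\mPhi\vtheta^*(\vtheta_{t-1,K})\bigr)+\bigl(\mPhi\vtheta^*(\vtheta_{t-1,K})-\mPhi\vtheta^*_{\eta}\bigr).
\]
Apply the elementary inequality $\|a+b\|^2\le(1+\delta)\|a\|^2+(1+\delta^{-1})\|b\|^2$ with $\|\cdot\|_\infty$, valid for any $\delta>0$ by Young's inequality. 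This reduces the statement to bounding the two terms separately, pointwise, before taking expectations.

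\textbf{Outer term.} By the derivation of~(\ref{eq:target_rpbe}), we have $\mPhi\vtheta^*(\vtheta')=\mGamma_\eta\gT\mPhi\vtheta'$, and $\mPhi\vtheta^*_\eta=\mGamma_\eta\gT\mPhi\vtheta^*_\eta$. It follows that
\[
\|\mPhi\vtheta^*(\vtheta_{t-1,K})-\mPhi\vtheta^*_\eta\|_\infty\le\|\mGamma_\eta\|_\infty\,\gamma\,\|\mP\mPi_{\mPhi\vtheta_{t-1,K}}\mPhi\vtheta_{t-1,K}-\mP\mPi_{\mPhi\vtheta^*_\eta}\mPhi\vtheta^*_\eta\|_\infty.
\]
The last norm is at most $\|\mPhi\vtheta_{t-1,K}-\mPhi\vtheta^*_\eta\|_\infty$. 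This uses the standard fact that $|\max_a x_a-\max_a y_a|\le\max_a|x_a-y_a|$ together with the fact that $\mP$ is row stochastic. This is the same computation as in the proof of Lemma~\ref{lem:dp-contraction}. Squaring and taking expectations gives the second term of the statement.

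\textbf{Inner term.} For fixed $\vtheta'$, the map $\vtheta\mapsto L_\eta(\vtheta,\vtheta')$ is a quadratic with Hessian $\mPhi^\top\mD\mPhi+\eta\mI$. I take $\mu_\eta$ to be its strong convexity modulus (the minimal eigenvalue), or whatever constant the paper assigns to it. Since $\vtheta^*(\vtheta')$ is the exact minimizer, strong convexity gives
\[
L_\eta(\vtheta,\vtheta')-L_\eta(\vtheta^*(\vtheta'),\vtheta')\ge\tfrac{\mu_\eta}{2}\|\vtheta-\vtheta^*(\vtheta')\|_2^2 .
\]
To pass to $\|\mPhi(\cdot)\|_\infty$, I use $\|\mPhi x\|_\infty\le\|\mPhi\|_\infty\|x\|_\infty\le\|x\|_2$ under Assumption~\ref{assmp:feature_matrix}. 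Applying this with $\vtheta=\vtheta_{t,K}$ and $\vtheta'=\vtheta_{t-1,K}$, then multiplying by $(1+\delta)$ and taking expectations, gives the first term with constant $2(1+\delta)/\mu_\eta$.

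The only real subtlety is this last norm conversion and making sure $\mu_\eta$ matches the paper's definition. If $\mu_\eta$ is instead defined relative to a $\mD$- or $\mPhi$-weighted norm, I would use the identity $L_\eta(\vtheta,\vtheta')-L_\eta(\vtheta^*,\vtheta')=\tfrac12(\vtheta-\vtheta^*)^\top(\mPhi^\top\mD\mPhi+\eta\mI)(\vtheta-\vtheta^*)$. From there I would bound $\|\mPhi x\|_\infty^2\le\|x\|_2^2$ by the smallest eigenvalue. The remaining steps are routine.
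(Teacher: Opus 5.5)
Your proposal is correct and follows essentially the same route as the paper. The paper splits at $\mGamma_\eta\gT\mPhi\vtheta_{t-1,K}$, which equals your $\mPhi\vtheta^*(\vtheta_{t-1,K})$, and applies the same Young-type inequality. It bounds the outer piece by the same max-nonexpansiveness and row-stochasticity contraction used in Lemma~\ref{lem:dp-contraction}, and bounds the inner piece via Corollary~\ref{cor:quadratic-growth}, which is exactly your step of strong convexity with $\mu_\eta=\lambda_{\min}(\mPhi^\top\mD\mPhi)+\eta$ followed by $\|\mPhi x\|_\infty\le\|x\|_2$.
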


\begin{remark}
The proof is provided in Appendix~\ref{app:sec:prop:outer-loop-decomposition}. The first term in the above proposition can be controlled via the inner loop update. The second term captures the contraction effect induced by the outer-loop update under the RP-VI scheme and decays at a rate governed by $\gamma \lVert \mGamma_{\eta} \rVert_{\infty}$. Here, $\mu_{\eta}$ represents the strong convexity constant of $L_{\eta}$, the explicit definition of which is provided in Lemma~\ref{lem:strongconvexandsmoothinmain}. 

The above result is independent of the observation model; in particular, it holds under both the i.i.d. and Markovian observation settings.
\end{remark}

\subsection{i.i.d.\ observation model}

In this section, we present our main theoretical result, showing that the proposed PRQ algorithm achieves an error bound of
$\E[\|\mPhi(\vtheta_{t,K}-\vtheta^*_{\eta})\|^2_{\infty}] \le \epsilon$
under appropriate choices of the step size, the number of inner iterations, and the number of outer updates. The proof follows a standard approach to the analysis of strongly-convex and smooth objectives in the optimization literature~\cite{bottou2018optimization}. 

\begin{lemma}[Strong convexity and smoothness of $L_{\eta}(\vtheta,\vtheta^{\prime})$]
For any fixed $\vtheta^{\prime}$, the function $L_{\eta}(\vtheta,\vtheta^{\prime})$ is
$\mu_{\eta}$-strongly convex and $l_{\eta}$-smooth with respect to $\vtheta$, where $\mu_{\eta} \coloneqq \lambda_{\min}\!\left(\mPhi^{\top}\mD\mPhi\right) + \eta, \;\;
l_{\eta} \coloneqq \lambda_{\max}\!\left(\mPhi^{\top}\mD\mPhi\right) + \eta.$\label{lem:strongconvexandsmoothinmain}

\end{lemma}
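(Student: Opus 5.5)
The plan is to compute the Hessian of $L_{\eta}(\cdot,\vtheta^{\prime})$ directly and observe that it is a constant matrix. With $\vtheta^{\prime}$ held fixed, the vector $\vb(\vtheta^{\prime}) := \mGamma(\mR+\gamma\mP\mPi_{\mPhi\vtheta^{\prime}}\mPhi\vtheta^{\prime})$ does not depend on $\vtheta$. The function
\[
L_{\eta}(\vtheta,\vtheta^{\prime}) = \tfrac{1}{2}\|\vb(\vtheta^{\prime})-\mPhi\vtheta\|_{\mD}^2+\tfrac{\eta}{2}\|\vtheta\|_2^2
\]
is therefore a quadratic in $\vtheta$. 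The nonsmooth greedy-policy factor $\mPi_{\mPhi\vtheta^{\prime}}$ enters only through $\vb(\vtheta^{\prime})$, so it causes no difficulty.

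Expanding the quadratic gives
\[
\nabla_{\vtheta}L_{\eta}(\vtheta,\vtheta^{\prime}) = -\mPhi^{\top}\mD(\vb(\vtheta^{\prime})-\mPhi\vtheta)+\eta\vtheta,
\qquad
\nabla^2_{\vtheta}L_{\eta} = \mPhi^{\top}\mD\mPhi+\eta\mI .
\]
This is consistent with the first-order condition (\ref{eq:target_rpbe}), because $\mPhi^{\top}\mD\mGamma=\mPhi^{\top}\mD$.

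The Hessian $\mPhi^{\top}\mD\mPhi+\eta\mI$ is symmetric and positive semidefinite, and it is the same for every $\vtheta$ and every $\vtheta^{\prime}$. Its eigenvalues therefore lie in $[\lambda_{\min}(\mPhi^{\top}\mD\mPhi)+\eta,\ \lambda_{\max}(\mPhi^{\top}\mD\mPhi)+\eta]$. For any $\vtheta_1,\vtheta_2$, the exact second-order Taylor identity for a quadratic,
\[
L_{\eta}(\vtheta_2,\vtheta^{\prime}) = L_{\eta}(\vtheta_1,\vtheta^{\prime})+\nabla_{\vtheta} L_{\eta}(\vtheta_1,\vtheta^{\prime})^{\top}(\vtheta_2-\vtheta_1)+\tfrac12(\vtheta_2-\vtheta_1)^{\top}(\mPhi^{\top}\mD\mPhi+\eta\mI)(\vtheta_2-\vtheta_1),
\]
combined with the Rayleigh quotient bounds, yields the $\mu_{\eta}$-strong convexity inequality and the $l_{\eta}$-smoothness inequality. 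Equivalently, $\nabla_{\vtheta}L_{\eta}(\vtheta_1,\vtheta^{\prime})-\nabla_{\vtheta}L_{\eta}(\vtheta_2,\vtheta^{\prime})=(\mPhi^{\top}\mD\mPhi+\eta\mI)(\vtheta_1-\vtheta_2)$, whose norm is at most $l_{\eta}\|\vtheta_1-\vtheta_2\|_2$.

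There is no real obstacle in this argument. The one point to state explicitly is that $\mu_{\eta}>0$: we have $\eta\ge 0$, and $\mPhi^{\top}\mD\mPhi$ is positive semidefinite, and positive definite whenever $d$ has full support, given that $\mPhi$ has full column rank by Assumption~\ref{assmp:feature_matrix}. So the strong convexity constant is positive whenever $\eta>0$ or $d>0$.
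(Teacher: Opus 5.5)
Your proposal is correct and takes essentially the same route as the paper, which likewise computes the constant Hessian $\nabla^2_{\vtheta} L_{\eta} = \mPhi^{\top}\mD\mPhi+\eta\mI$ and reads off $\mu_{\eta}$ and $l_{\eta}$ from its extreme eigenvalues. Your exact second-order expansion and your remark on when $\mu_{\eta}>0$ spell out details that the paper leaves implicit.
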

The detailed proofs are deferred to Lemma~\ref{lem:strong_convexity} and~\ref{lem:smooth} in the Appendix.

\begin{theorem}
\label{thm:epsilon-accuracy-main}
Suppose $\alpha\leq \min \left\{ \bar{\alpha}_1, \bar{\alpha}_2,\bar{\alpha}_3, \bar{\alpha}_4\right\}$, which are defined in Appendix~\ref{thm:epsilon-accuracy}. For  $ \E\left[ \left\|\mPhi(\vtheta_{t,K}-\vtheta^*_{\eta}) \right\|^2_{\infty} \right] \leq \eps$
to hold, we need at most the following number of iterations:
\[
K
\!=\!
\gO\!\left(
\frac{
l_\eta\,\|\vtheta^*_{\eta}\|^2_2
}{
\epsilon\mu_\eta^{3}
\bigl(1-\gamma\|\mGamma_\eta\|_\infty\bigr)^{2}
}
\right),\quad
t
\!=\!
\gO\!\left(
\frac{1}{
1-\gamma\|\mGamma_\eta\|_\infty
}
\right).
\]
\end{theorem}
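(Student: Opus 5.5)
The plan is to combine Proposition~\ref{prop:outer-loop-decomposition} with a standard SGD bound for strongly convex, smooth objectives applied to the inner loop, and then unroll the resulting one-step recursion over the outer index $t$. Throughout I write $e_t:=\E[\|\mPhi(\vtheta_{t,K}-\vtheta^*_\eta)\|_\infty^2]$ and $\rho:=\gamma\|\mGamma_\eta\|_\infty<1$.

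\textbf{Step 1: the inner loop.} Fix the target $\vtheta'=\vtheta_{t-1,K}$. The update~(\ref{eq:prq-update}) is SGD on $L_\eta(\cdot,\vtheta')$, which is $\mu_\eta$-strongly convex and $l_\eta$-smooth by Lemma~\ref{lem:strongconvexandsmoothinmain}.
\begin{itemize}
\item First I bound the second moment of $g$. Using $\|\vphi\|\le 1$ and bounded rewards, I expect
\[
\E\|g(\vtheta,\vtheta';o)\|^2\le C_1+C_2\|\vtheta-\vtheta^*(\vtheta')\|^2+C_3\|\vtheta'\|^2 .
\]
The last term appears because the max-term depends on $\vtheta'$.
\item Following \citet{bottou2018optimization}, with $\alpha$ below thresholds $\bar\alpha_i$ of order $\mu_\eta/l_\eta^2$, the optimality gap $\Delta_k:=\E[L_\eta(\vtheta_{t,k},\vtheta')-L_\eta(\vtheta^*(\vtheta'),\vtheta')]$ satisfies
\[
\Delta_{k+1}\le(1-\alpha\mu_\eta)\Delta_k+\tfrac{\alpha^2 l_\eta}{2}\sigma^2(\vtheta').
\]
Here $\sigma^2(\vtheta')$ is the noise level at the optimum.
\item Iterating over $k$ gives $\Delta_K\le (1-\alpha\mu_\eta)^K\Delta_0+\alpha l_\eta\sigma^2/(2\mu_\eta)$.
\item To control $\Delta_0$, I use smoothness and the fact that $\vtheta^*(\cdot)$ is Lipschitz. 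By~(\ref{eq:target_rpbe}), $\|\vtheta^*(\vtheta')-\vtheta^*_\eta\|\lesssim \mu_\eta^{-1}\|\mPhi(\vtheta'-\vtheta^*_\eta)\|_\infty$. Together with $\vtheta^*(\vtheta^*_\eta)=\vtheta^*_\eta$, this bounds both $\Delta_0$ and $\sigma^2(\vtheta')$ by a constant times $\|\vtheta^*_\eta\|_2^2$ plus a constant times $\|\mPhi(\vtheta'-\vtheta^*_\eta)\|_\infty^2$.
\item Full column rank of $\mPhi$ lets me pass between $\|\cdot\|_2$ on parameters and $\|\mPhi\cdot\|_\infty$.
\end{itemize}

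\textbf{Step 2: the outer recursion.} Insert Step 1 into Proposition~\ref{prop:outer-loop-decomposition}, choosing $\delta$ so that $(1+\delta^{-1})\rho^2=\tfrac{1+\rho}{2}\rho$ or similar, i.e.\ $\delta\asymp \rho/(1-\rho)$. This yields
\[
e_t\le\Big(\tfrac{1+\rho}{2}\Big)e_{t-1}+\tfrac{c}{\mu_\eta(1-\rho)}\Big[(1-\alpha\mu_\eta)^K+\tfrac{\alpha l_\eta}{\mu_\eta}\Big]\big(\|\vtheta^*_\eta\|_2^2+e_{t-1}\big).
\]
One of the step-size conditions, say $\bar\alpha_4$, will be chosen so that the coefficient multiplying $e_{t-1}$ inside the bracket is at most $(1-\rho)/4$. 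The total contraction factor then stays at most $1-(1-\rho)/4$.

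\textbf{Step 3: unrolling and choosing parameters.} Unrolling gives
\[
e_t\le\big(1-\tfrac{1-\rho}{4}\big)^t e_0+\tfrac{c'}{\mu_\eta(1-\rho)^2}\Big[(1-\alpha\mu_\eta)^K+\tfrac{\alpha l_\eta}{\mu_\eta}\Big]\|\vtheta^*_\eta\|_2^2 .
\]
\begin{itemize}
\item Take $\alpha\asymp \epsilon\mu_\eta^2(1-\rho)^2/(l_\eta\|\vtheta^*_\eta\|^2)$, so that the variance part is at most $\epsilon/3$.
\item Then take $K\asymp \log(\cdot)/(\alpha\mu_\eta)$. This gives $K=\gO\big(l_\eta\|\vtheta^*_\eta\|_2^2/(\epsilon\mu_\eta^3(1-\rho)^2)\big)$ up to logarithmic factors, which the $\gO$ absorbs or which can be removed by a sharper choice of $K$.
\item Finally, $t=\gO(1/(1-\rho))$ steps (times a logarithmic factor) make the transient term at most $\epsilon/3$.
\end{itemize}

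\textbf{Main obstacle.} The hardest step is Step 2. The inner-loop error must be bounded in terms of the target $\vtheta_{t-1,K}$, since both the noise and the initial gap grow with it, and this feedback must not destroy the outer contraction. This requires carefully controlling $\sigma^2(\vtheta')$, with its max-term over next states, linearly in $\|\mPhi(\vtheta'-\vtheta^*_\eta)\|_\infty^2$. It also requires tuning $\delta$ and $\alpha$ jointly so that the perturbed contraction factor remains strictly below one.
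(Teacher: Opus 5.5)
Your proposal is correct and follows essentially the same route as the paper: an inner-loop descent-lemma-plus-PL recursion with a second-moment bound affine in the target error (Lemmas~\ref{lem:g-bound}, \ref{lem:1stepk}, \ref{lem:k-stepinnerbound}), a bound on the initial gap, insertion into Proposition~\ref{prop:outer-loop-decomposition} with $\delta$ of order $\rho/(1-\rho)$ (the paper takes $\delta=2\rho^2/(1-\rho^2)$ and contracts at $(3+\rho^2)/4$), and finally an $\epsilon$-dependent step size that forces $K\propto 1/(\alpha\mu_\eta)$. The only notable difference is how you bound the initial gap: you use smoothness, the Lipschitz map $\vtheta^*(\cdot)$ (Lemma~\ref{lem:theta-star-lipschitz}) and a full-column-rank norm conversion, whereas the paper bounds it directly via the Bellman residual in Lemma~\ref{lem:init_lossgap}; your route adds $\mPhi$-dependent constants to the $\epsilon$-independent step-size thresholds, but it shows the gap vanishes at $\vtheta'=\vtheta^*_\eta$.
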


The detailed proof of Theorem~\ref{thm:epsilon-accuracy-main} is deferred to Appendix~\ref{thm:epsilon-accuracy}. Table~\ref{tab:onecol} situates our contribution within the literature on Q-learning with target network updates. Early work by \citet{lee2019target} establishes non-asymptotic convergence guarantees, but the analysis is restricted to the tabular setting. Subsequent studies extend the scope to function approximation. In particular, \citet{zhang2021breaking} considers linear function approximation and ensures asymptotic convergence through projection and regularization. \citet{chen2023target} derives non-asymptotic guarantees under linear function approximation by introducing truncation, but convergence is only shown to a bounded set rather than a single point. More recently, \citet{zhang2023convergence} establishes non-asymptotic point convergence for neural network approximation, albeit under restrictive local convexity assumptions. In contrast, our work provides non-asymptotic convergence guarantees under linear function approximation using a single regularization mechanism. This unifies and strengthens existing results by simultaneously achieving finite-time guarantees, non-asymptotic convergence, and broad applicability, without relying on truncation, projection, or strong local convexity assumptions.

Now, let us briefly discuss the sample complexity result. From Theorem~\ref{thm:epsilon-accuracy-main}, the total sample complexity is given by:
\[tK = \gO\!\left(
\frac{l_\eta\ \|\vtheta^*_{\eta}\|^2_2}{\epsilon\mu_\eta^{3}(1-\gamma\left\| \mGamma_\eta \right\|_\infty)^{3}}
\right).\]
Compared with~\citet{lee2020periodic}, which provides a sample complexity bound measured in terms of $\E\!\left[\|\hat{\mQ} - \mQ^*\|_{\infty}\right]$, our result is expressed in terms of the squared error $\E\!\left[\|\mPhi(\vtheta_{t,K}-\vtheta^*_{\eta})\|^2_{\infty}\right]$.
To ensure a fair comparison, we adjust the $\epsilon$–dependence in the complexity result of~\citet{lee2020periodic} accordingly, yielding an equivalent form of the bound
\[
\gO\!\left(
   \frac{|\gS|^3\,|\gA|^3}{\epsilon (1-\gamma)^4}
   \,\right).
\]

\noindent
Under the same measurement, our PRQ analysis in the tabular limit ($\eta\to0$, $\mD=\tfrac{1}{|\gS||\gA|}\mI, \mPhi=\mI$, $\left\| \mGamma_\eta \right\|_\infty \to 1$) yields
\[tK = \gO\!\left(
\frac{|\gS|^2|\gA|^2}{\epsilon(1-\gamma)^{5}}
\right),\]

\noindent
since $||\vtheta^*_{\eta}||=||\mQ^*|| =\gO(1/(1-\gamma))$. More generally, while~\cite{lee2020periodic} focuses on the tabular case, our framework allows linear function approximation.


\begin{table*}[t]
\centering
\caption{Comparison with existing works using Q-learning with target-based update. The symbol \cmark\  indicates that the corresponding attribute is present, whereas \xmark\  indicates its absence.}
\label{tab:onecol}
\begin{tabular}{c c c c c}
\hline
 & Non-asymptotic & Convergence result &  Function approximation & Modification\\
\hline
~\citet{lee2019target} & \cmark & point & tabular & \xmark \\
~\citet{zhang2021breaking} & \xmark & point & linear  & projection and regularization \\
~\citet{chen2023target} & \cmark & bounded set & linear  & truncation \\
~\citet{zhang2023convergence} & \cmark & point & neural network & local convexity\\ 
\rowcolor{lightyellow} Our work & \cmark  & point  & linear & regularization \\
\hline
\end{tabular}
\end{table*}

\subsection{Markovian observation model}\label{sec:markovian_obs_model}

In this subsection, we analyze the behavior of PRQ with a single trajectory generated under a fixed behavior policy $\beta$. We assume that the underlying Markov chain is irreducible. Consequently, for a finite state space, the chain admits a unique stationary distribution $\mu_{\infty}\in\Delta(\gS\times\gA)$ satisfying $\mu_{\infty}(s,a)=\sum_{(\Tilde{s},\Tilde{a})\in\gS\times\gA} P_{\beta}(s,a\mid \Tilde{s},\Tilde{a}) \mu_{\infty}(\Tilde{s},\Tilde{a})$ and $P_{\beta}(\cdot\mid \Tilde{s},\tilde{a})\in \Delta(\gS\times\gA)$ such that $P_{\beta}(s,a\mid\Tilde{s},\Tilde{a})=\beta(a\mid s)P(s\mid \tilde{s},\tilde{a})$. Let us denote the corresponding vector and matrix form of $\mu_{\infty}$ and $P_{\beta}$ as $\vmu_{\infty}$ and $\mP_{\beta}$, respectively. Given a stochastic process $\{(S_k,A_k)\}_{k=0}^{\infty}$ where $(S_k,A_k)$ are random variables induced by the Markov chain, we define the hitting time $\tau(\tilde{s},\tilde{a})=\inf\{ n \geq 1 : (S_n, A_n)=(\tilde{s},\tilde{a}) \}$ for some $(\tilde{s},\tilde{a})\in\gS\times\gA$, and denote $\tau_{\max}:=\max_{(s,a)\in\gS\times\gA}\tau(s,a)$.

Recently,~\citet{haque2024stochastic} utilized Poisson’s equation to analyze stochastic approximation schemes under the Markovian observation model. Building upon their approach and extending the i.i.d. model analysis presented in the previous section, we establish the following result, with the detailed proof provided in Appendix~\ref{app:thm:markovian_obs_model}.

\begin{theorem}\label{thm:markovian_obs_model}
Suppose $\alpha\leq \min \left\{ \bar{\alpha}_1, \bar{\alpha}_2,\bar{\alpha}_3, \bar{\alpha}_4 \right\}$ which are defined in~(\ref{constants:alpha}) in the Appendix. For  $\E\left[ \left\|\mPhi(\vtheta_{t,K}-\vtheta^*_{\eta}) \right\|^2_{\infty} \right] \leq \eps$ to hold, we need at most the following number of iterations:
    \begin{gather*}
    K \!=\! \mathcal{O} \left( \frac{(l_{\eta}+\kappa)\tau_{\max}\eta^2\|\boldsymbol{\theta^*}_{\eta}\|^2_2}{\mu_{\eta}^2(1-\gamma\|\boldsymbol{\Gamma}_{\eta}\|_{\infty})} \right), \; t \!=\! \mathcal{O} \left( \frac{1}{1-\gamma\|\boldsymbol{\Gamma}_{\eta}\|_{\infty}}\right)
\end{gather*}
where $\kappa=l_{\eta}/\mu_{\eta}.$
\end{theorem}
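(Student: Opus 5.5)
Under the Markovian model the samples within an inner loop are correlated, so the plan is to keep the outer-loop structure of the i.i.d.\ proof and rebuild only the inner-loop bound with a Poisson-equation correction, following \citet{haque2024stochastic}.

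\textbf{Outer recursion.} Proposition~\ref{prop:outer-loop-decomposition} does not depend on the observation model, so we take it as given. Fix $\delta$ with $(1+\delta^{-1})\gamma^2\|\mGamma_\eta\|_\infty^2 \le \rho := \tfrac{1+\gamma\|\mGamma_\eta\|_\infty}{2}$; for instance, $\delta$ of order $1/(1-\gamma\|\mGamma_\eta\|_\infty)$. Writing $e_t := \E\|\mPhi(\vtheta_{t,K}-\vtheta^*_\eta)\|_\infty^2$, we get
\[
e_t \le \rho\, e_{t-1} + \tfrac{2(1+\delta)}{\mu_\eta}\,\varepsilon_{\mathrm{in}},
\]
where $\varepsilon_{\mathrm{in}}$ bounds the inner optimality gap $\E[L_\eta(\vtheta_{t,K},\vtheta_{t-1,K})-L_\eta(\vtheta^*(\vtheta_{t-1,K}),\vtheta_{t-1,K})]$. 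Unrolling gives $e_t \le \rho^t e_0 + \tfrac{2(1+\delta)\varepsilon_{\mathrm{in}}}{\mu_\eta(1-\rho)}$. Hence $t=\gO(1/(1-\gamma\|\mGamma_\eta\|_\infty))$ outer iterations make the first term at most $\epsilon/2$. The remaining task is to show that $K$ of the stated order makes the second term at most $\epsilon/2$.

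\textbf{Inner loop under Markovian noise.} Within outer iteration $t$ the target $\vtheta'=\vtheta_{t,0}$ is frozen. The update is then SGD on the $\mu_\eta$-strongly convex, $l_\eta$-smooth function $L_\eta(\cdot,\vtheta')$ (Lemma~\ref{lem:strongconvexandsmoothinmain}). The only change from the i.i.d.\ case is that $g(\vtheta,\vtheta';o_k)-\nabla L_\eta(\vtheta,\vtheta')$ is no longer a martingale difference.

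\begin{enumerate}
\item Write $g-\nabla L_\eta = \hat g(o_k) - (P\hat g)(o_{k-1})$ plus a martingale term, where $\hat g$ solves the Poisson equation $\hat g - P_\beta \hat g = g - \bar g$.
\item Bound $\hat g$ by the hitting-time representation $\hat g(x)=\E_x\sum_{n<\tau}(g-\bar g)$. This gives $\|\hat g(\vtheta,\vtheta';\cdot)\| \lesssim \tau_{\max}(\|\vtheta\|+\|\vtheta'\|+1)$, using $\|\mPhi\|_\infty\le 1$ and $|r|$ bounded. The same argument shows $\hat g$ is Lipschitz in $\vtheta$ with constant of order $\tau_{\max}(1+\eta)$; this is where the factor $\eta^2$ in $K$ enters.
\item Use the decomposition in the one-step expansion of $\|\vtheta_{t,k+1}-\vtheta^*(\vtheta')\|^2$. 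The martingale part vanishes in expectation. The telescoping part is handled by summation by parts: the boundary terms are $\gO(\alpha\tau_{\max})$, and the increments $\hat g(\vtheta_{k+1})-\hat g(\vtheta_k)$ contribute $\gO(\alpha^2\tau_{\max}(l_\eta+\eta))$.
\item Use the step-size conditions $\alpha\le\bar\alpha_i$ so that the second-order terms are absorbed by the contraction factor $1-\alpha\mu_\eta$. This yields a bound of the form
\[
\E\|\vtheta_{t,K}-\vtheta^*\|^2 \le (1-\alpha\mu_\eta)^K C_0 + \gO\!\left(\tfrac{\alpha\tau_{\max}}{\mu_\eta}\right)\left(\|\vtheta^*_\eta\|_2^2+\E\|\vtheta_{t,0}\|^2\right).
\]
\item Convert this to a function-value gap via $l_\eta$-smoothness, which produces the $l_\eta+\kappa$ factor.
\end{enumerate}

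\textbf{Closing and main obstacle.} The variance terms involve $\|\vtheta_{t,0}\|$, so they must be controlled uniformly in $t$. We bound $\|\vtheta_{t,0}\|^2 \le 2\|\vtheta^*_\eta\|^2 + 2\|\vtheta_{t,0}-\vtheta^*_\eta\|^2$, and use full column rank of $\mPhi$ to relate $\|\vtheta-\vtheta^*_\eta\|^2$ to $e_{t-1}$. This turns the recursion into $e_t \le (\rho + c\,\alpha\tau_{\max})e_{t-1} + \dots$, and we require $\alpha$ small enough that $\rho + c\,\alpha\tau_{\max}$ stays below a constant strictly less than one. We then choose $\alpha$ and $K$ so that $(1-\alpha\mu_\eta)^K$ and the $\alpha$-dependent bias are both of order $\epsilon\mu_\eta(1-\rho)/(1+\delta)$, and read off $K$.

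The main obstacle is the Poisson-equation step (items 2 and 3). It requires solution bounds that are explicit in $\tau_{\max}$ and Lipschitz in $\vtheta$, and the resulting cross terms must be absorbed without an extra loss in $\mu_\eta$. The exact powers of $\mu_\eta$ and $(1-\gamma\|\mGamma_\eta\|_\infty)$ in the stated $K$ depend on this bookkeeping, and I would follow the parameter choices in (\ref{constants:alpha}) to match them.
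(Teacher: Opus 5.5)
This is essentially the paper's proof: the same observation-model-free outer decomposition closed with $\delta\sim 1/(1-\gamma\|\mGamma_\eta\|_\infty)$, and a hitting-time Poisson solution whose telescoping part is summed by parts (the paper's $d_{t,k}$), with the leftover terms absorbed by the step-size conditions. The differences are that the paper tracks the function-value gap, poses Poisson's equation on the $(s,a)$-chain for $\bar g-\nabla L_\eta$, and avoids your full-column-rank step (which would add a $1/\lambda_{\min}(\mPhi^\top\mD\mPhi)$-type factor absent from the statement) by routing all target dependence through $\|\mPhi(\vtheta_{t,0}-\vtheta^*_\eta)\|_\infty$ and Lemma~\ref{lem:theta-star-lipschitz}. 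Note also that pushing the $\alpha$-proportional bias below $\epsilon$ forces $\alpha\propto\epsilon$, so your bookkeeping gives $K$ growing like $1/\epsilon$ rather than the $\epsilon$-free order displayed in the statement; the paper's own final bound has the same feature, since its last term carries $1/\epsilon$ because $\bar\alpha_4\propto\epsilon$.
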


\begin{remark}
  In addition to the result of the i.i.d. analysis, we have an additional factor of the hitting time $\tau_{\max}$.


\end{remark}

\begin{algorithm}[H]
\caption{Periodic regularized Q-learning with Markovian observation model}
\label{algo:prq_markovian}
\begin{algorithmic}[1]
\STATE \textbf{Input:} total iterations $T$, period length $K$
\STATE \textbf{Output:} learned parameter $\vtheta_{T,K}$

\STATE Initialize $\vtheta_{0,0}\in\mathbb{R}^h$
\STATE Sample initial state $s_{0,K}$ from an arbitrary initial distribution over the state space
\FOR{$t = 1...T$}
    \STATE $\vtheta_{t,0} \gets \vtheta_{t-1,K}$ and $s_{t,0} \gets s_{t-1,K}$
    \FOR{$k = 0,...,K-1$}
        \STATE Sample $a_{t,k} \sim \beta(\cdot \mid s_{t,k})$
        \STATE Sample $s_{t,k+1} \sim \gP(\cdot \mid s_{t,k}, a_{t,k})$
        \STATE $r_{t,k} \gets r(s_{t,k}, a_{t,k}, s_{t,k+1})$
        \STATE Update $\vtheta_{t,k+1}$ using~(\ref{eq:prq-update})
    \ENDFOR
\ENDFOR
\end{algorithmic}
\end{algorithm}

%% file: contents/06_experiments.tex
\section{Experiments}
In this section, we investigate the behavioral differences between the proposed PRQ and regularized Q-learning (RegQ)~\cite{limregularized}, with a particular focus on the learning trajectories induced under linear function approximation. We consider an MDP that is deliberately chosen so that no solution exists for the P-BE in the unregularized setting. RegQ employs a direct semi-gradient update with $\ell_2$ regularization and does not incorporate any form of target-based or periodic update mechanism. In contrast, PRQ periodically resets the optimization target. Throughout this experiment, we observe that although both RegQ and PRQ can induce solutions to a RP-BE through the use of regularization, their resulting learning trajectories exhibit qualitatively different behaviors. The MDP considered in this experiment is summarized in the example below.

\begin{example}\label{example:1}
Consider the following MDP with $|\gS|=|\gA|=2$ and $h=2$:
\[
\begin{aligned}
\mPhi &\!=\! 
\begin{bmatrix}
0.25 & -0.81\\
0.88 & -0.92\\
1    & -0.93\\
0.03 & -0.19
\end{bmatrix}\!,\;\mP \!=\!
\begin{bmatrix}
0.90 & 0.10\\
0.94 & 0.06\\
0    & 1\\
0.44 & 0.56
\end{bmatrix}\!,\;
\mR \!=\!
\begin{bmatrix}
-0.63\\
0.24\\
0.50\\
0.92
\end{bmatrix}\!.
\end{aligned}
\]
\end{example}
Let $\beta(1\mid 1)=0.13$ and $\beta(1\mid 2)=0.63$. Then, no solution exists for P-BE, which is the case for $\eta=0$. Based on this MDP, we divide our experiments into two main settings: a model-based setting and a sample-based setting. In the model-based setting, full knowledge of the transition dynamics is assumed, allowing updates to be performed using the complete transition matrices without sampling. This setting serves to isolate the intrinsic algorithmic behavior of PRQ and RegQ. The sample-based setting is further divided into an i.i.d.\ sampling regime and a Markovian sampling regime. In the i.i.d.\ regime, state-action pairs are drawn independently from a fixed distribution, whereas in the Markovian regime, samples are generated sequentially along trajectories induced by the predefined policy.

\subsection{Model-based setting}
In a model-based simulation, sampling is skipped and updates are performed using the full transition matrices. For PRQ, this setting can be implemented straightforwardly by directly applying the update rule of RP-VI described in  Section~\ref{sec:RAVI}. In contrast, for RegQ, we reimplement the deterministic, model-based update equation following~\citet{limregularized}.
The resulting update can be expressed in matrix form as
\begin{align*}
    \vtheta_{k+1}
    = \vtheta_k
    + \alpha \mPhi^{\top} \mD
    \bigl(
        \mR
        + \gamma \mP \mPi_{{\mPhi\vtheta_k}} \mPhi \vtheta_k
        - \mPhi \vtheta_k
        - \eta \vtheta_k
    \bigr).
\end{align*}

When $\eta = 0$, the update reduces to the standard model-based Q-learning update under linear function approximation. For $\eta > 0$, the additional term $-\eta \vtheta_k$ acts as an $\ell_2$ regularizer, yielding the regularized Q-learning (RegQ) algorithm. For the MDP presented in Example~\ref{example:1}, we observe that with $\eta = 0.01$, only the model-based version of PRQ in~(\ref{eq:avi}) converges, whereas RegQ exhibits persistent oscillations and fails to converge, as shown in~\cref{fig:model-exp}. Importantly, the RP-BE admits a unique solution in this setting. However, despite the existence and uniqueness of the solution, RegQ fails to converge to it, while PRQ follows a stable and efficient trajectory in the two-dimensional parameter space and successfully converges.

\begin{figure}[t]
    
    \centering
    
    \begin{subfigure}[t]{\linewidth}
        \centering
        \includegraphics[height=3cm]{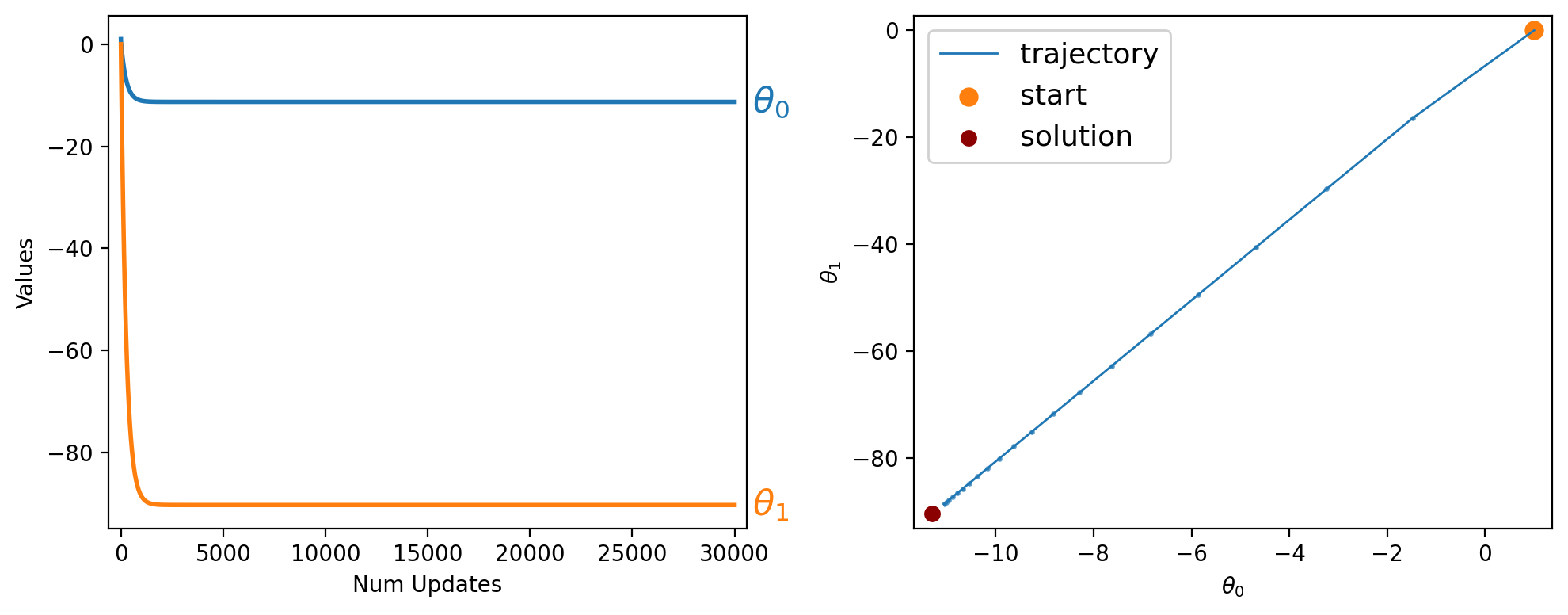}
        
    \end{subfigure}

    \vspace{0.8em} 

    \begin{subfigure}[t]{\linewidth}
        \centering
        \includegraphics[height=3cm]{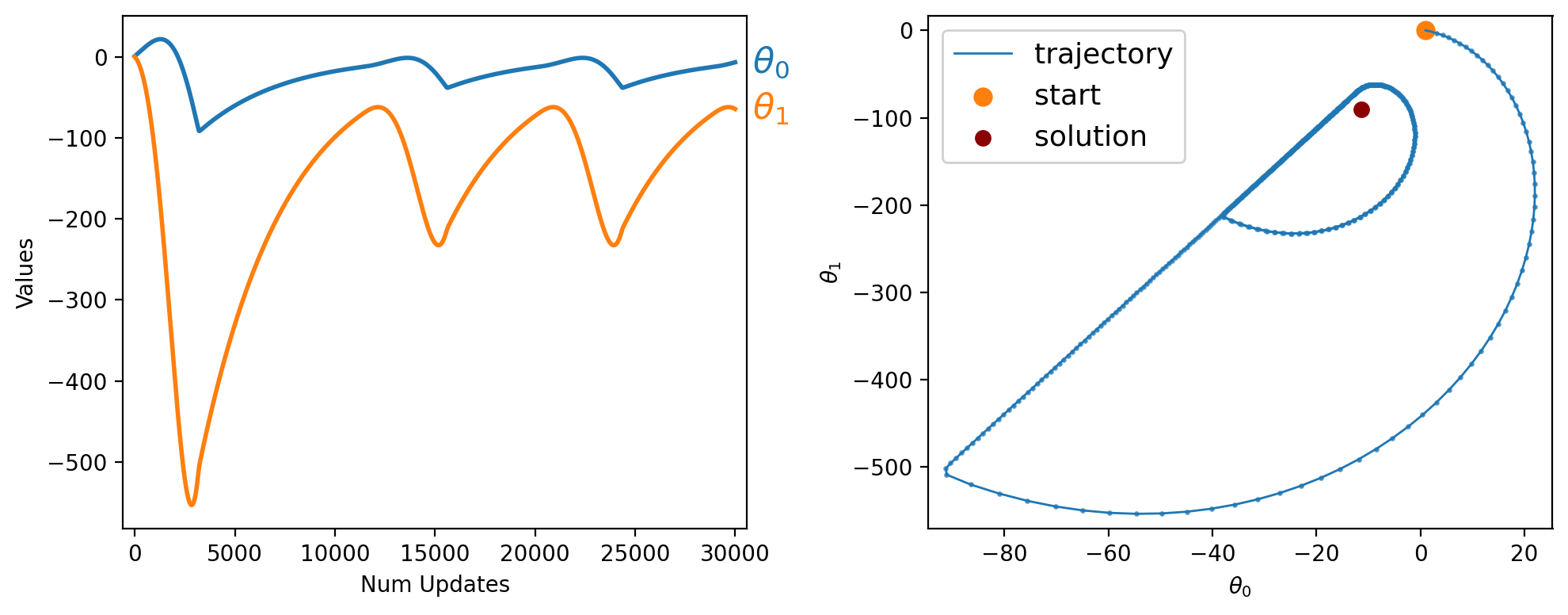}
    \end{subfigure}
    \caption{Comparison of PRQ and RegQ in the model-based setting with $\eta=0.01$. The top row corresponds to PRQ, while the bottom row corresponds to RegQ. In each row, the left subplot shows the temporal evolution of the parameters $\theta_0$ and $\theta_1$ during iterations, and the right subplot shows the corresponding trajectory in the two-dimensional $(\theta_0, \theta_1)$ parameter space, including the initialization point and the RP-BE solution. PRQ exhibits stable convergence toward the solution, whereas RegQ displays periodic behavior and fails to show convergence.}

    \label{fig:model-exp}
\end{figure} 

\subsection{Sample-based setting}
Beyond the model-based setting, which requires full knowledge of the transition dynamics, the sample-based setting assumes that the agent has access only to a single transition sample at each step. In the sample-based setting, the sampling scheme may vary depending on whether the underlying probability distribution is i.i.d.\ or Markovian. Under the i.i.d.\ setting, PRQ is applied directly using the sampling procedure in~\cref{algo:prq}, while RegQ follows the update rule of~\citet{limregularized}. Despite the additional variance induced by stochastic sampling, convergence of both algorithms in the i.i.d.\ setting is theoretically guaranteed if $\eta$ is sufficiently large: convergence for PRQ is established in this paper, and for RegQ in~\citet{limregularized}. For the Markovian setting, the algorithmic structure remains unchanged and only the sampling procedure differs: trajectories are generated by rolling out the transition dynamics under a behavior policy~$\beta$, as in Example~\ref{example:1}. In the Markovian setting, PRQ admits a finite-time convergence guarantee if $\eta$ is sufficiently large (\cref{thm:markovian_obs_model}), whereas no such guarantee is available for RegQ. The experimental results are presented in~\cref{fig:iid-exp} and~\cref{fig:markov-exp}. Despite sharing the same theoretical solution defined by~(\ref{eq:rpbe}), the two algorithms display distinct convergence properties. In particular, PRQ demonstrates a stochastic yet consistent and efficient trajectory toward the solution, remaining in a small neighborhood once it converges. In contrast, RegQ exhibits extreme oscillations in both $\theta_0$ and $\theta_1$, and its trajectory forms large periodic excursions in the parameter space. More specifically, although the RegQ trajectory may occasionally pass near the solution point, it shows a weak tendency to remain in its neighborhood.

\begin{figure}[H]
    \centering

    \begin{subfigure}[t]{\linewidth}
        \centering
        \includegraphics[height=3cm]{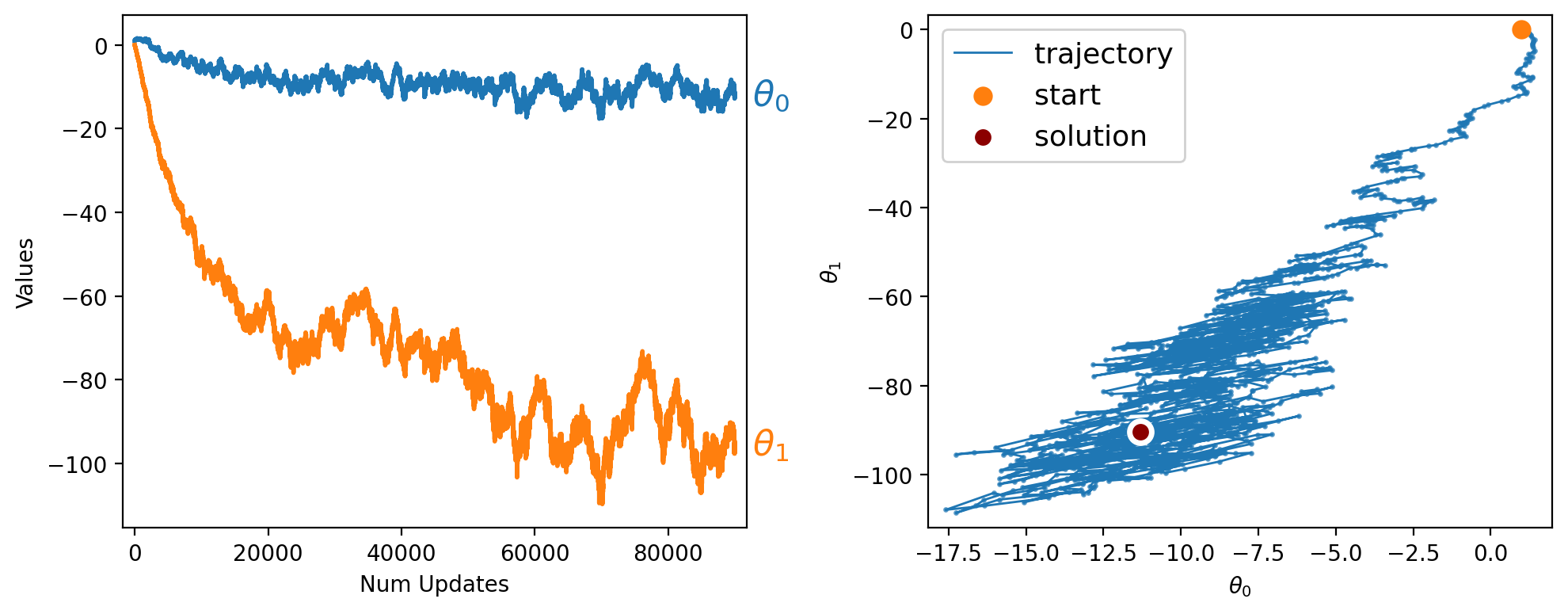}
    \end{subfigure}

    \vspace{0.8em}

    \begin{subfigure}[t]{\linewidth}
        \centering
        \includegraphics[height=3cm]{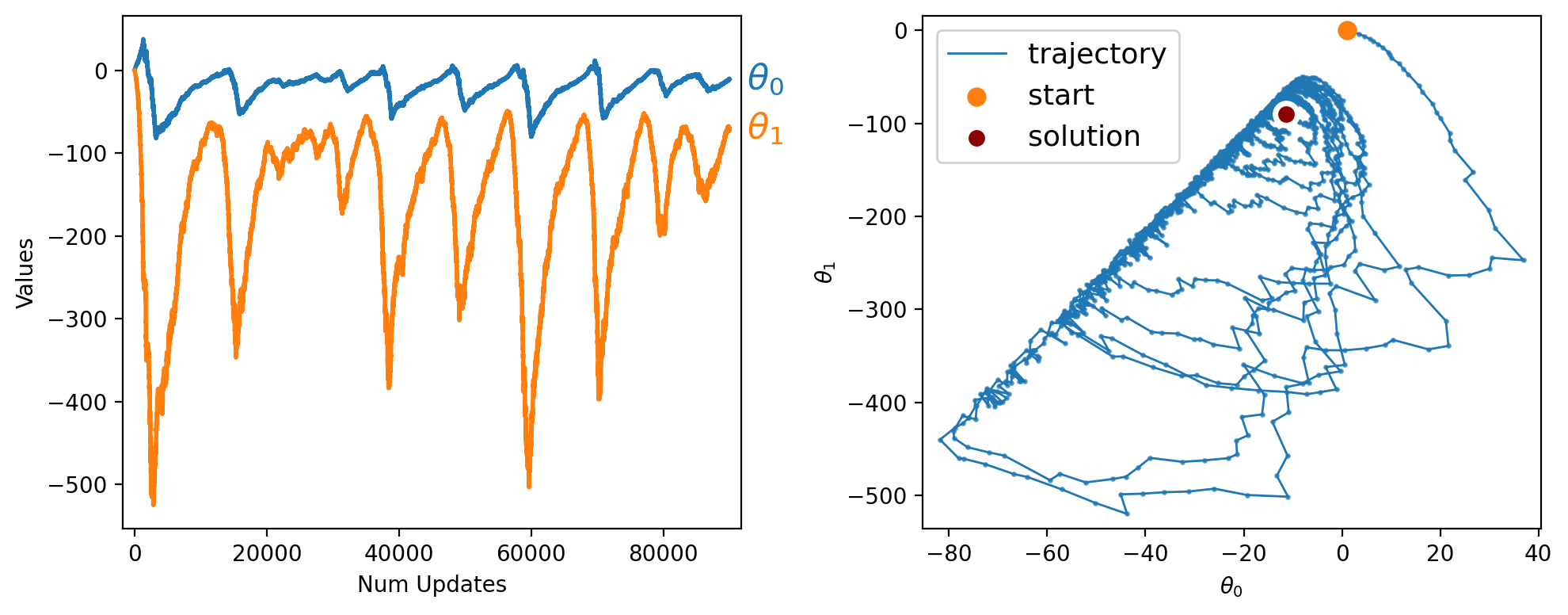}
        
    \end{subfigure}
    \caption{Comparison of PRQ and RegQ in the i.i.d. sample-based setting with $\eta=0.01$. The figure follows the same layout as \cref{fig:model-exp}.}
        \label{fig:iid-exp}
\end{figure}

\begin{figure}[H]
    \centering

    \begin{subfigure}[t]{\linewidth}
        \centering
        \includegraphics[height=3cm]{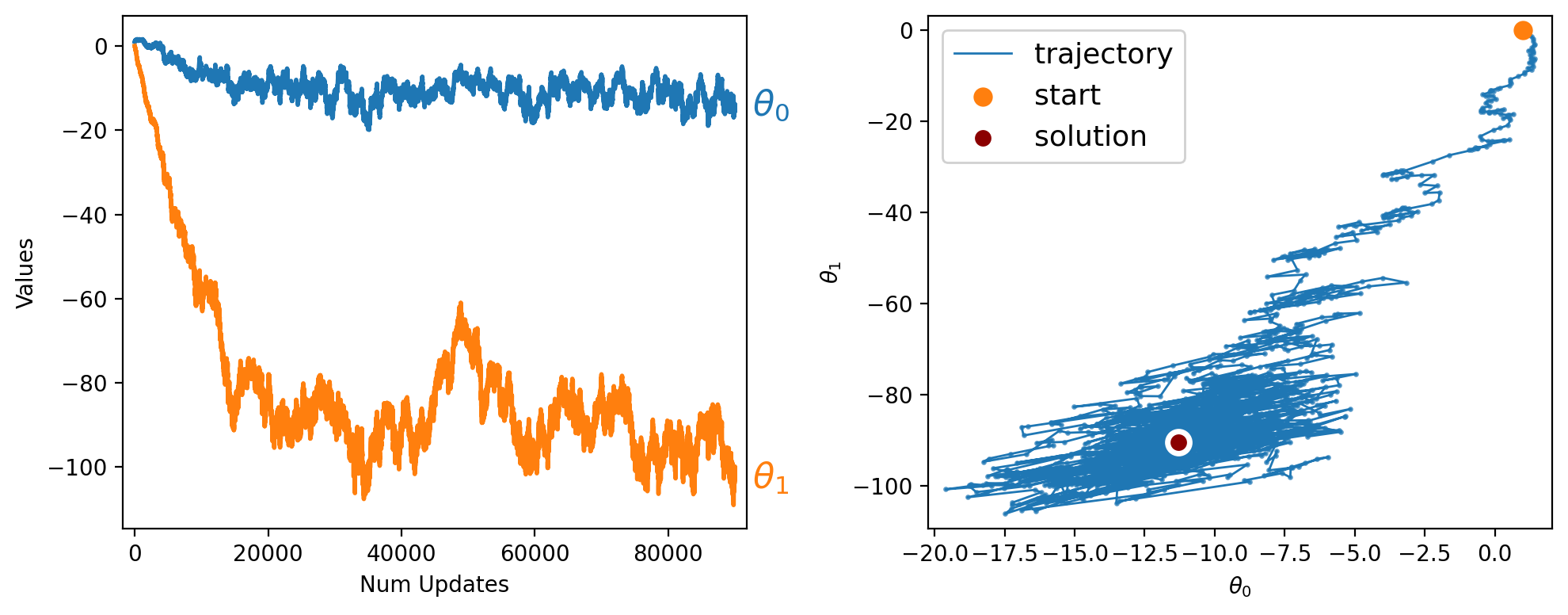}

    \end{subfigure}

    \vspace{0.8em}

    \begin{subfigure}[t]{\linewidth}
        \centering
        \includegraphics[height=3cm]{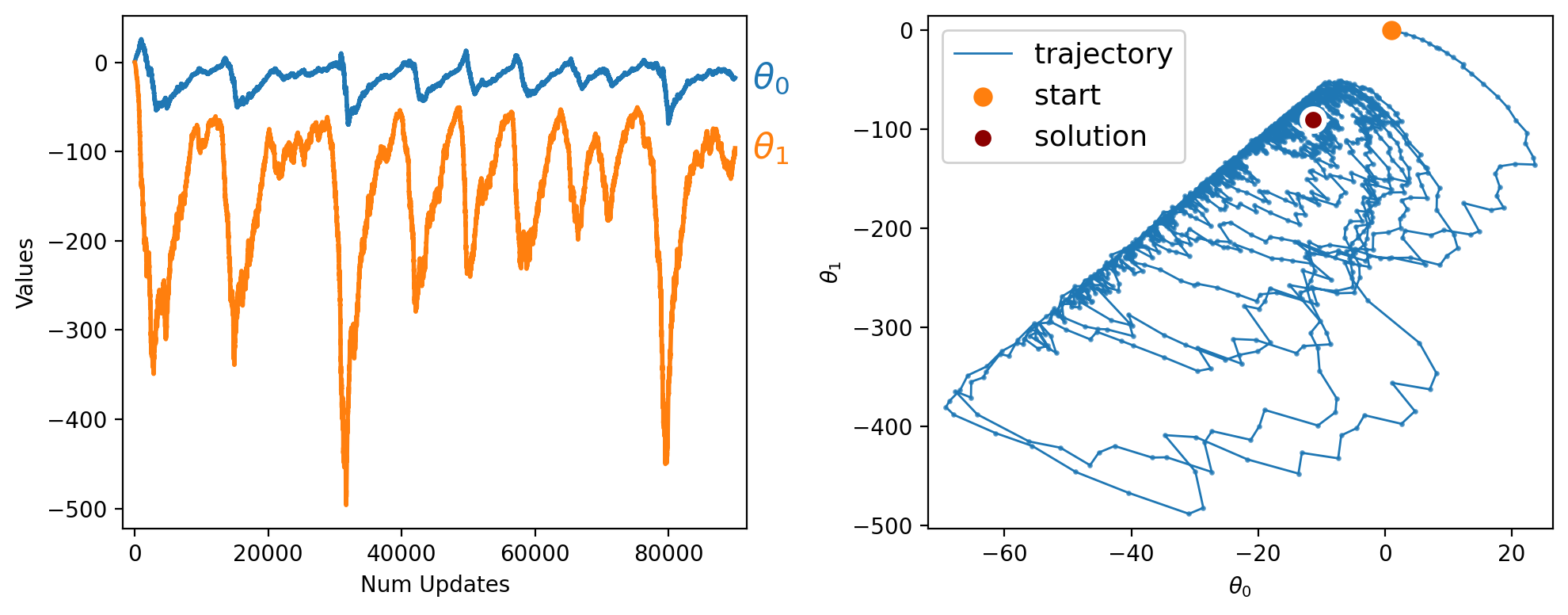}
        
    \end{subfigure}
    \caption{Comparison of PRQ and RegQ under the Markovian sample-based setting with $\eta=0.01$. The figure follows the same layout as \cref{fig:model-exp}.}
    \label{fig:markov-exp}
\end{figure}

%% file: contents/07_conclusion.tex
\section{Conclusion}

In this paper, we theoretically study a regularized projection operator and its contraction property. Building on this analysis, we introduce an RP-VI algorithm and its sample-based extension, PRQ, which features an inner–outer loop structure consisting of an inner convex optimization step and an outer value iteration. Our main theoretical result establishes finite-time, non-asymptotic convergence of PRQ under both i.i.d.\ and Markovian sampling settings. Through empirical evaluations, we demonstrate that both the regularization mechanism and the periodic structure are essential for achieving stable training and convergence in practice.

\newpage

%% file: contents/XX_appendix.tex
\begin{center}
{\Large \bf Appendices}
\end{center}

\section{Notations}

$\R$: set of real numbers; $\R^h$ : set of $h$-dimensional real-valued vectors; $\R^{m\times n}$ : set of $m\times n$ dimensional matrices; $ \mA \preceq \mB $ for $\mA,\mB\in \R^{h\times h}$: $\mB-\mA$ is a positive semi-definite matrix; $[\mA]_{ij}$ for $\mA\in\R^{m\times n}$, $1\leq i\leq m$ and $1\leq j \leq n$ : $i$-th row and $j$-th column element of matrix $\mA$; $[\vv]_i$ for $\vv\in\R^h$ and $1\leq i \leq h$: $i$-th element of $h$-dimensional vector $\vv$; $\left\|\vv\right\|_{\infty}$ for $\vv\in\R^h$ : infinity norm of a vector, i.e., $\max_{i\in[h]}|[\vv]_i|$; $\left\|\mA\right\|_{\infty}$ for $\mA\in\R^{h\times n}$ : infinity norm of a matrix, i.e., $\left\|\mA\right\|_{\infty}=\max_{1\leq i\leq h}\sum_{j=1}^n|[\mA]_{ij}|$.
Moreover, for notational simplicity, we use $\mPi_{\vtheta}$ and $\mPi_{\mPhi\vtheta}$ interchangeably to denote the greedy policy with respect to the value function $\mPhi\vtheta$.

\section{Organization}

The Appendix is organized as follows.
\begin{enumerate}[
  labelsep=0.5em,
  align=left,
  leftmargin=5em,
  widest=Section~\ref{app:sec:omitted_proof}:
]
\item[Section~\ref{app:aux_prelim}:] Auxiliary preliminaries on differential and optimization methods.
  \item[Section~\ref{app:sec:constants}:] Summary of constants used throughout the paper.
  \item[Section~\ref{app:sec:omitted_proof}:] Proofs omitted from the main text.
  \item[Section~\ref{app:sec:property_loss_ftn}:] Properties on the loss function. The derived properties will be used in both the analysis of i.i.d. and Markovian observation model.
  \item[Section~\ref{app:iidproof}:] Proof for i.i.d. observation model.
  \item[Section~\ref{app:sec:markov}:] Proof for Markovian observation model.
\end{enumerate}

\section{Auxiliary preliminaries}\label{app:aux_prelim}

\subsection{Differential methods}
\begin{definition}[Locally Lipschitz function]
 A function $\varphi:\R^h\to\R$ is said to be locally Lipschitz if for a bounded subset $B\subset\R^h$, there exists a positive real number $K$ such that
 \begin{align*}
     |\varphi(\vx_1)-\varphi(\vx_2)|\leq K ||\vx_1-\vx_2||_2,\quad \forall \vx_1,\vx_2\in B.
 \end{align*}
\end{definition}

\begin{definition}[Generalized directional derivative~\cite{clarke1981generalized}]
Let $\varphi:\R^h \to \R$. The generalized directional derivative of $\varphi$ at $\vx\in\R^h$ in direction $\vv\in\R^h$, denoted $\varphi^{\circ}(\vx;\vv)$ is given by 
\begin{align*}
    \varphi^{\circ}(\vx;\vv) =\limsup_{\substack{\vy \to \vx \\ \lambda \downarrow 0}} \frac{\varphi(\vy+\lambda \vv)-\varphi(\vy)}{\lambda}
\end{align*}
\end{definition}

\begin{definition}[Generalized gradient~\cite{clarke1976new}]
Consider a locally Lipschitz function $\varphi:\R^h\to\R$. The generalized gradient of $\varphi$ at $\vx$, denoted $\partial \varphi(\vx)$ is defined to be the subdifferential of the convex function $\varphi^{\circ}(\vx;\cdot)$ at $0$. Thus, an element $\vxi$ of $\R^h$ belongs to $\partial \varphi(\vx)$ if and only if for all $\vv\in\R^h$, 
\begin{align*}
    \varphi^{\circ}(\vx;\vv) \geq \vv^{\top}\vxi.
\end{align*}
\end{definition}

\begin{lemma}[Proposition 1.4 in~\cite{clarke1975generalized}]\label{lem:clarke_convex_hull}
Suppose $\varphi:\R^h\to\R$ is a locally Lipschitz function. Then, the following holds:
\begin{align}
 \partial \varphi(\vx)= \mathrm{conv}\left( \left\{ \lim_{k\to\infty} \nabla \varphi(\vv_k) :  \{\vv_{k}\}_{k=0}^{\infty} \;\text{such that $\vv_{k}\to\vx$, each $\varphi(\vv_k)$ is differentiable and $\lim_{k\to\infty} \nabla \varphi(\vv_k)$ exists.} \right\} \right). \label{subdifferential_convex_hull}   
\end{align}
\end{lemma}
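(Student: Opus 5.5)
The plan is to prove the two inclusions separately, working with support functions. Write
\[
S(\vx):=\Bigl\{\lim_{k\to\infty}\nabla\varphi(\vv_k) : \vv_k\to\vx,\ \varphi \text{ differentiable at } \vv_k,\ \text{the limit exists}\Bigr\}.
\]
Fix a bounded neighbourhood $B$ of $\vx$ on which $\varphi$ is $K$-Lipschitz. By Rademacher's theorem, $\varphi$ is differentiable outside a Lebesgue-null set $N\subset B$. At every differentiability point in $B$ we have $\|\nabla\varphi\|_2\le K$. Hence, by Bolzano--Weierstrass, $S(\vx)$ is nonempty and bounded. It is closed by a diagonal argument, so it is compact, and therefore $\mathrm{conv}\,S(\vx)$ is compact and convex.

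\textbf{Inclusion $\mathrm{conv}\,S(\vx)\subseteq\partial\varphi(\vx)$.} Since $\partial\varphi(\vx)$ is closed and convex (it is the subdifferential of a finite convex function), it suffices to show $S(\vx)\subseteq\partial\varphi(\vx)$.
\begin{itemize}
\item Take $\vv_k\to\vx$ with $\nabla\varphi(\vv_k)\to\vxi$.
\item For each direction $\vv$, differentiability at $\vv_k$ gives $\vv^{\top}\nabla\varphi(\vv_k)=\lim_{\lambda\downarrow0}\bigl(\varphi(\vv_k+\lambda\vv)-\varphi(\vv_k)\bigr)/\lambda\le\varphi^{\circ}(\vv_k;\vv)$.
\item The map $\vy\mapsto\varphi^{\circ}(\vy;\vv)$ is upper semicontinuous, which follows directly from the $\limsup$ in its definition via a diagonal choice of points.
\item Passing to the limit gives $\vv^{\top}\vxi\le\varphi^{\circ}(\vx;\vv)$ for all $\vv$, so $\vxi\in\partial\varphi(\vx)$ by the definition of the generalized gradient.
\end{itemize}

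\textbf{Inclusion $\partial\varphi(\vx)\subseteq\mathrm{conv}\,S(\vx)$.} Both sets are closed and convex, and $\varphi^{\circ}(\vx;\cdot)$ is the support function of $\partial\varphi(\vx)$. By the separation theorem it is therefore enough to prove, for every $\vv$,
\[
\varphi^{\circ}(\vx;\vv)\le\max_{\vxi\in S(\vx)}\vv^{\top}\vxi=:\sigma(\vv).
\]
The proof has three steps.
\begin{itemize}
\item \emph{Localization.} Fix $\epsilon>0$. There is $\delta>0$ such that every differentiability point $\vz$ with $\|\vz-\vx\|_2<\delta$ satisfies $\vv^{\top}\nabla\varphi(\vz)\le\sigma(\vv)+\epsilon$. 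Otherwise there would be a sequence $\vz_k\to\vx$ violating this bound. Its bounded gradients would have a convergent subsequence with limit in $S(\vx)$, which contradicts the definition of $\sigma$.
\item \emph{Integration along lines.} For almost every $\vy$ near $\vx$, the segment $\{\vy+t\vv\}$ meets $N$ in a set of one-dimensional measure zero, by Fubini applied to the indicator of $N$. Along such a segment the function $t\mapsto\varphi(\vy+t\vv)$ is Lipschitz, hence absolutely continuous, with derivative $\vv^{\top}\nabla\varphi(\vy+t\vv)$ for a.e.\ $t$. This yields
\[
\varphi(\vy+\lambda\vv)-\varphi(\vy)=\int_0^{\lambda}\vv^{\top}\nabla\varphi(\vy+t\vv)\,dt\le\lambda\bigl(\sigma(\vv)+\epsilon\bigr)
\]
whenever $\|\vy-\vx\|_2$ and $\lambda$ are small enough that the whole segment stays in the $\delta$-ball.
\item \emph{Extension and conclusion.} Both sides are continuous in $\vy$, and the good $\vy$ form a set of full measure, hence a dense set. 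So the inequality extends to all $\vy$ in the neighbourhood. Taking the $\limsup$ over $\vy\to\vx$, $\lambda\downarrow0$ and then letting $\epsilon\to0$ gives the claim.
\end{itemize}

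\textbf{Main obstacle.} I expect the delicate step to be this reverse inclusion, specifically the Fubini argument that lets one integrate the gradient along almost every line segment despite the nondifferentiability set $N$. Combined with it is the compactness-based $\epsilon$–$\delta$ localization of the gradients near $\vx$. The first inclusion and the separation argument are routine.
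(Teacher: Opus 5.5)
Your proof is correct and complete. The paper gives no argument of its own for this lemma (it only cites Clarke 1975), and what you wrote is essentially the standard proof from that source (cf.\ Theorem 2.5.1 of Clarke's 1983 monograph): upper semicontinuity of $\varphi^{\circ}$ gives $\mathrm{conv}\,S(\vx)\subseteq\partial\varphi(\vx)$, and the compactness-based localization of gradients, followed by Fubini and integration along almost every segment, gives the reverse inclusion. The appeal to $\varphi^{\circ}(\vx;\cdot)$ being the support function of $\partial\varphi(\vx)$ is not needed, since strictly separating a point $\zeta\in\partial\varphi(\vx)\setminus\mathrm{conv}\,S(\vx)$ and using the defining inequality $\vv^{\top}\zeta\le\varphi^{\circ}(\vx;\vv)$ already gives the contradiction.
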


\begin{lemma}\label{lem:subdifferential_f_cvx_hull_derivation}
Consider the function $f$ in~(\ref{eq:mspbe}). The subdifferential of $f$ at $\vtheta\in\R^h$ can be expressed as
\begin{align*}
    \partial \! f(\vtheta)\!  \subseteq & \mathrm{conv}\{ \!(\gamma\mP\mPi_{\beta}\mPhi \!-\!\mPhi)^{\top}\!\mD\mGamma(\gT\mPhi\vtheta \!-\!\mPhi\vtheta) \! \mid \! \beta \!\in\! \Lambda(\vtheta)\! \}
\end{align*}
\end{lemma}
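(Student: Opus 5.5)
Using the characterization of Lemma~\ref{lem:clarke_convex_hull}, I will bound every limit of gradients of $f$ taken along sequences $\vv_k\to\vtheta$ at which $f$ is differentiable. The key structural fact is that $f$ is piecewise quadratic. For each deterministic policy $\pi\in\Omega$, define
\[
f_\pi(\vtheta)=\tfrac12\|\mGamma(\mR+\gamma\mP\mPi_\pi\mPhi\vtheta)-\mPhi\vtheta\|_{\mD}^2 .
\]
Then $f(\vtheta)=f_{\pi}(\vtheta)$ for every $\pi\in\Lambda(\vtheta)$, since $\mPi_{\mPhi\vtheta}\mPhi\vtheta=\mPi_\pi\mPhi\vtheta$ for any greedy selection $\pi$. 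Each $f_\pi$ is a smooth quadratic, and $f$ is continuous and locally Lipschitz, being the squared norm of a piecewise-linear continuous map. A direct differentiation, using $\mGamma^\top\mD\mGamma=\mD\mGamma$ (indeed $\mD\mGamma$ is symmetric and $\mGamma^2=\mGamma$) and $\mGamma\mPhi=\mPhi$, gives
\[
\nabla f_\pi(\vtheta)=(\gamma\mP\mPi_\pi\mPhi-\mPhi)^\top\mD\mGamma(\mR+\gamma\mP\mPi_\pi\mPhi\vtheta-\mPhi\vtheta).
\]
When $\pi\in\Lambda(\vtheta)$, this expression equals $(\gamma\mP\mPi_\pi\mPhi-\mPhi)^\top\mD\mGamma(\gT\mPhi\vtheta-\mPhi\vtheta)$.

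The main step is to show that at any differentiability point $\vv$ of $f$, $\nabla f(\vv)=\nabla f_\pi(\vv)$ for some $\pi\in\Lambda(\vv)$. The plan is to use the finite set $\Omega$. Near $\vv$, the greedy set can only shrink: there is a neighborhood $U$ of $\vv$ with $\Lambda(\vy)\subseteq\Lambda(\vv)$ for all $\vy\in U$, because strict inequalities between action values persist under small perturbations. So on $U$, $f$ coincides pointwise with one of the finitely many smooth functions $f_\pi$, $\pi\in\Lambda(\vv)$, all of which agree with $f$ at $\vv$. Along any ray $\vv+\lambda\vd$ with small $\lambda>0$, the greedy selection is eventually constant. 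This holds because the comparisons $\vphi(s,a)^\top(\vv+\lambda\vd)$ versus $\vphi(s,b)^\top(\vv+\lambda\vd)$ among tied actions are decided by the sign of $(\vphi(s,a)-\vphi(s,b))^\top\vd$, with further ties staying equal along the ray. Hence some $\pi_{\vd}\in\Lambda(\vv)$ satisfies $f=f_{\pi_{\vd}}$ on the ray, and the directional derivative equals $\nabla f_{\pi_{\vd}}(\vv)^\top\vd$.

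To pick out a single selection, I will use an open cone of directions on which one fixed $\pi$ is selected, for instance the cone on which $(\vphi(s,a)-\vphi(s,b))^\top \vd\neq0$ for all tied pairs with a fixed sign pattern. On such a cone, $\nabla f(\vv)^\top\vd=\nabla f_\pi(\vv)^\top\vd$. Since an open cone spans $\R^h$, this yields $\nabla f(\vv)=\nabla f_\pi(\vv)$ with $\pi\in\Lambda(\vv)$. If the tied feature differences vanish, $f_\pi$ is the same function for every tie-breaking choice, so the claim is trivial there. I expect this step to be the main obstacle and will handle it carefully.

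Finally, take a sequence $\vv_k\to\vtheta$ of differentiability points with $\nabla f(\vv_k)$ convergent. For each $k$ write $\nabla f(\vv_k)=\nabla f_{\pi_k}(\vv_k)$ with $\pi_k\in\Lambda(\vv_k)$. For large $k$, $\vv_k\in U(\vtheta)$, so $\pi_k\in\Lambda(\vtheta)$. Since $\Omega$ is finite, we can pass to a subsequence along which $\pi_k\equiv\pi\in\Lambda(\vtheta)$. Continuity of $\nabla f_\pi$ then gives the limit
\[
\nabla f_\pi(\vtheta)=(\gamma\mP\mPi_\pi\mPhi-\mPhi)^\top\mD\mGamma(\gT\mPhi\vtheta-\mPhi\vtheta).
\]
Taking convex hulls in Lemma~\ref{lem:clarke_convex_hull} yields the stated inclusion.
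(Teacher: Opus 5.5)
Your proof is correct, but it differs from the paper's at the key step.

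\textbf{The paper's route.} It restricts the Clarke convex-hull formula to sequences $\vv_k$ lying in the open regions $S_\beta$ where the greedy action is unique at every state. There $f$ coincides locally with the smooth quadratic $f_\beta$, and the chain rule gives the gradient directly. All other points are dismissed by a measure-zero remark. Strictly, what must be discarded is the tie set, a finite union of hyperplanes on which $f$ may still be differentiable. Discarding it needs Clarke's stronger theorem that the generalized gradient is unchanged when any Lebesgue-null set is excluded; the formula as quoted in Lemma~\ref{lem:clarke_convex_hull} does not provide this. The paper also leaves two things implicit or uncovered:
\begin{enumerate}
\item that the eventually-constant $\beta$ actually lies in $\Lambda(\vtheta)$;
\item the case where two actions at some state share a feature vector, in which the regions $S_\beta$ are empty.
\end{enumerate}

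\textbf{Your route.} You keep every differentiability point, including those on tie hyperplanes. Using one-sided directional derivatives over an open cone of directions with a fixed greedy selection, you show that $\nabla f(\vv)=\nabla f_\pi(\vv)$ for some $\pi\in\Lambda(\vv)$. You then combine this with the upper semicontinuity $\Lambda(\vy)\subseteq\Lambda(\vtheta)$ for $\vy$ near $\vtheta$ and the finiteness of $\Omega$ to extract a constant active selection in the limit. This needs only the basic formula, handles the identical-feature case, and makes the closedness step explicit.

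\textbf{Trade-off.} The paper's argument is shorter once the null-set invariance theorem is granted. Yours is self-contained relative to the lemma actually stated in the paper.
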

\begin{proof}
Let us check that $f(\vtheta)$ is a locally Lipschitz function to apply Lemma~\ref{lem:clarke_convex_hull}. Observe that the function $f(\vtheta)$ can be written as a composition of weighted squared norm $||\cdot||^2_{\mD}$ and the map $\vtheta\mapsto\mGamma\gT\mPhi\vtheta-\mPhi\vtheta$. Both functions are Lipschitz, and therefore the objective function $f(\vtheta)$ becomes a locally Lipschitz function. Now, we can express the subdifferential of $f(\vtheta)$ as a convex hull of gradients as in~(\ref{subdifferential_convex_hull}). The possible choice of sequences $\{\vv_k\}_{k=0}^{\infty}$ such that $\vv_k\to\vtheta$ and $\lim_{k\to\infty} \nabla  f(\vv_k)$ exists is to choose $\vv_k\in S_{\beta}$ where $S_{\beta}=\{ \vx \in \R^h : |\arg\max_{a\in \gA} \vphi(s,a)^{\top} \vx|=1 , \quad \beta(s) = \arg\max_{a\in \gA} \vphi(s,a)^{\top} \vx  \}$ for $k\geq N$ and some $N\in\sN$ and $\vv_k\neq \vtheta$. The result follows by applying the chain rule at points of differentiability of the Lipschitz function. Since a Lipschitz function is differentiable almost everywhere, the set of points where the derivative fails to exist has Lebesgue measure zero and can therefore be excluded~\cite{clarke1975generalized}.
\end{proof}

\subsection{Optimization methods}\label{app:sec:optimization}

\begin{definition}[\cite{nesterov2018lectures}]\label{def:strongly_cvx_smooth}
The continuously differentiable function $\varphi:\R^h\to\R$ is $\mu$-strongly convex if there exists a constant $\mu>0$ such that
\begin{align*}
    \varphi(\vtheta^{\prime}) \geq \varphi(\vtheta)+\nabla \varphi(\vtheta)^{\top}(\vtheta^{\prime}-\vtheta)+\frac{\mu}{2}\|\vtheta-\vtheta^{\prime}\|^2_2.
\end{align*}
$\varphi$ is said to be $l$-smooth if 
\begin{align*}
    \left\|\nabla \varphi(\vtheta)-\nabla \varphi(\vtheta^{\prime}) \right\|_2 \leq l \left\|\vtheta-\vtheta^{\prime} \right\|_2.
\end{align*}
\end{definition}

For a twice continuously differentiable function $\varphi$ that is
$\mu$-strongly convex and $l$-smooth, the Hessian satisfies
\begin{align*}
\mu \mI \;\preceq\; \nabla^2 \varphi(\vtheta) \;\preceq\; l \mI,
\quad \forall \vtheta \in \R^h,    
\end{align*}
and consequently, all eigenvalues of $\nabla^2 \varphi(\vtheta)$ are lower bounded
by $\mu$ and upper bounded by $l$.

\section{Constants used throughout the proof}\label{app:sec:constants}

Before proceeding, we introduce several constants to simplify the notation:

    \begin{align}
        l_{V_1} :=& 2\tau_{\max}(1+\eta),\quad l_{V_2}:=\max\{2\tau_{\max} \gamma,1\}, \quad l_{V_3}:=  \tau_{\max} \left( R_{\max}+(1+\gamma+\eta)\left\| \vtheta^*_{\eta} \right\|_2 \right), \label{constant:l_V}\\
            D_1:=& \left( l_{\eta}(6l_{V_1}+l_{V_3})+\kappa l_{V_1}\gamma \right) ,\quad D_2 :=  \kappa \left(4l_{V_1}(1+l_{\eta})  \right), \quad D_3  :=\kappa l_{V_1}+l_{\eta}l_{V_2}. \label{constant:D}\\
    g_{1,\eta} :=& 16+16\eta, \quad g_{2,\eta} :=  (42 +32\eta)\gamma^2 ,\quad g_{3,\eta} :=  32(1+\eta)\gamma^2\left\|\mPhi\vtheta^*_{\eta} \right\|^2_{\infty}+(16+16\eta)R_{\max}^2+8\sigma_{\eta}^2, \label{constant:g}\\
        \gE_1 := &     \left( D_1+\frac{l_{\eta}}{2} \right) g_{2,\eta} + D_3, \quad 
        \gE_2 :=  \left( D_1+\frac{l_{\eta}}{2} \right) g_{3,\eta}+ 2  l_{\eta}l_{V_3} \label{constant:gE}\\
        \sigma_{\eta}^2:=& \max_{(s,a)\in\gS\times\gA} \left( \E\left[\left\| \left(r(s,a,s^{\prime})+\gamma\max_{u\in\gA}  \vphi(s^{\prime},u)^{\top}\vtheta^*_{\eta} -\E\left[(r(s,a,\tilde{s})+\gamma \max_{u\in\gA}\vphi(\tilde{s},u  )^{\top}\vtheta^*_{\eta})\right]\right)\vphi(s,a) \right\|_2^2 \middle|s,a \right] \right) \label{def:sigma_eta}
    \end{align}
The constants introduced in~(\ref{constant:l_V}) are utilized in Lemma~\ref{lem:poisson-property}, whereas those defined in~(\ref{constant:D}) appear in Lemma~\ref{lem:cross-term-poisson}. The constant specified in~(\ref{constant:g}) and~(\ref{def:sigma_eta}) are used in Lemma~\ref{lem:g-bound}, and the constants in~(\ref{constant:gE}) are employed in Proposition~\ref{prop:descent-lemma-final} in the Appendix.

\section{Omitted proofs in the main manuscript}\label{app:sec:omitted_proof}

\subsection{Proof of Remark~\ref{lem:eta-condition-ineq:contraction}}\label{subsec:lem:eta-condition-ineq:contraction}

\begin{lemma}[Lemma 3.3 in~\citet{limregularized}]\label{lem:eta-projection-condition}
 For $ \eta > \gamma \|\mPhi^{\top} \mD\|_{\infty}\|\mPhi\|_{\infty}+\|\mPhi^{\top}\mD\mPhi\|_{\infty} $, we have $\gamma\left\|\mGamma_{\eta}\right\|_{\infty}<1$.
\end{lemma}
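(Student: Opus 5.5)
The plan is to bound $\|\mGamma_\eta\|_\infty$ by splitting the product $\mGamma_\eta=\mPhi(\mPhi^{\top}\mD\mPhi+\eta\mI)^{-1}\mPhi^{\top}\mD$ using submultiplicativity of the induced $\infty$-norm. Writing $\mA:=\mPhi^{\top}\mD\mPhi$, this gives
\begin{align*}
\gamma\|\mGamma_\eta\|_\infty \le \gamma\|\mPhi\|_\infty\,\|(\mA+\eta\mI)^{-1}\|_\infty\,\|\mPhi^{\top}\mD\|_\infty .
\end{align*}
Everything then reduces to an $\infty$-norm bound on the regularized inverse $(\mA+\eta\mI)^{-1}$ in terms of $\eta$ and $\|\mA\|_\infty$.

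For that middle factor I would use a Neumann series. The hypothesis on $\eta$ implies $\eta>\|\mA\|_\infty$, because the term $\gamma\|\mPhi^{\top}\mD\|_\infty\|\mPhi\|_\infty$ is nonnegative. Hence $\|\mA/\eta\|_\infty<1$, and we can write
\begin{align*}
(\mA+\eta\mI)^{-1}=\eta^{-1}(\mI+\mA/\eta)^{-1}=\eta^{-1}\sum_{j\ge 0}(-\mA/\eta)^j .
\end{align*}
The series converges absolutely in the induced $\infty$-norm. Bounding it term by term by a geometric series gives
\begin{align*}
\|(\mA+\eta\mI)^{-1}\|_\infty\le \frac{1}{\eta}\cdot\frac{1}{1-\|\mA\|_\infty/\eta}=\frac{1}{\eta-\|\mA\|_\infty}.
\end{align*}

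Combining the two displays yields
\begin{align*}
\gamma\|\mGamma_\eta\|_\infty\le \frac{\gamma\|\mPhi\|_\infty\|\mPhi^{\top}\mD\|_\infty}{\eta-\|\mPhi^{\top}\mD\mPhi\|_\infty}.
\end{align*}
The right-hand side is strictly less than $1$ exactly when $\eta>\gamma\|\mPhi^{\top}\mD\|_\infty\|\mPhi\|_\infty+\|\mPhi^{\top}\mD\mPhi\|_\infty$, which is the stated condition. The only place needing care is the inverse bound. In particular, we must check that the positivity of $\eta-\|\mA\|_\infty$ follows from the hypothesis, so that the Neumann series is legitimate and the division preserves the inequality direction. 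The remaining steps are routine norm manipulations.
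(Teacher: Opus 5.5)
Your argument is correct and complete. Submultiplicativity of the max-row-sum norm holds for the rectangular factors. The hypothesis gives $\eta>\|\mA\|_\infty\ge 0$, so the Neumann-series bound $\|(\mA+\eta\mI)^{-1}\|_\infty\le 1/(\eta-\|\mA\|_\infty)$ is valid. The final rearrangement then reproduces exactly the stated threshold.

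The paper does not prove this lemma itself. It imports it from \citet{limregularized}, and the proof environment that follows it in the appendix actually proves the Remark ($\eta>2$ suffices when $\|\mPhi\|_\infty\le 1$). It does this by showing $\|\mPhi^{\top}\mD\|_\infty\le 1$ and invoking the lemma. So your proposal supplies the step the paper takes on citation. Combined with that observation, it also yields the Remark directly, since then $\|\mA\|_\infty\le\|\mPhi^{\top}\mD\|_\infty\|\mPhi\|_\infty\le 1$ and $\gamma<1$.

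If you wanted a slightly weaker requirement on $\eta$, you could replace the Neumann bound by a row-diagonal-dominance (Varah-type) bound, $\|(\mA+\eta\mI)^{-1}\|_\infty\le 1/\min_i\bigl(\eta+[\mA]_{ii}-\sum_{j\neq i}|[\mA]_{ij}|\bigr)$. This is never worse, because $[\mA]_{ii}\ge 0$ gives $\eta+[\mA]_{ii}-\sum_{j\neq i}|[\mA]_{ij}|\ge \eta-\|\mA\|_\infty$. Your Neumann route is simpler and lands exactly on the form in the statement.
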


\begin{proof}
If $\|\mPhi\|_{\infty}\leq 1$, then $\left\|\vphi(s,a)\right\|_{\infty}\leq 1$ for all $(s,a)\in\gS\times\gA$. Then,
\begin{align*}
    \left\|\mPhi^{\top}\mD\right\|_{\infty} = \left\| \begin{bmatrix}
       d(1,1) \vphi(1,1) &  \cdots  &d(|\gS|,|\gA|) \vphi(|\gS|,|\gA|) 
    \end{bmatrix} \right\|_{\infty} =\max_{i\in[h]} \sum_{(s,a)\in\gS\times\gA}d(s,a)|[\vphi(s,a)]_i| \leq 1.
\end{align*}    
Therefore, from Lemma~\ref{lem:eta-projection-condition} in the Appendix, $\eta>2$ is enough.
\end{proof}

\subsection{Proof of Lemma~\ref{lem:dp-contraction}}\label{subsec:lem:dp-contraction}

\begin{proof}
We have 
\begin{align*}
    \mPhi\vtheta_{k+1} =& \mGamma_{\eta} \gT \mPhi\vtheta_k =  \mGamma_{\eta} (\mR+\gamma \mP\mPi_{\mPhi\vtheta_k}\mPhi\vtheta_k). 
\end{align*}
The above equation can be re-written noting that $\vtheta^*_{\eta}$ is the solution of~(\ref{eq:rpbe}):
\begin{align*}
    \mPhi\vtheta_{k+1}-\mPhi\vtheta^*_{\eta} = \gamma\mGamma_{\eta}\mP(\mPi_{\mPhi\vtheta_k}\mPhi\vtheta_k-\mPi_{\mPhi\vtheta^*_{\eta}}\mPhi\vtheta^*_{\eta}) 
\end{align*}
Taking the infinity norm on both sides,
\begin{align*}
    \left\| \mPhi\vtheta_{k+1}-\mPhi\vtheta^*_{\eta} \right\|_{\infty} \leq & \gamma \left\| \mGamma_{\eta}\mP\right\|_{\infty} \left\| \mPi_{\vtheta_k}\mPhi\vtheta_k-\mPi_{\vtheta^*_{\eta}}\mPhi\vtheta^*_{\eta} \right\|_{\infty}\\
    \leq & \gamma \left\| \mGamma_{\eta}\mP\right\|_{\infty} \left\|\mPhi\vtheta_k-\mPhi\vtheta^*_{\eta}\right\|_{\infty}\\
    \leq & \left( \gamma \left\| \mGamma_{\eta}\mP\right\|_{\infty}  \right)^{k+1} \left\|\mPhi\vtheta_0-\mPhi\vtheta^*_{\eta}\right\|_{\infty}
\end{align*}

This gives the desired result.

\end{proof}

\subsection{Proof of Proposition~\ref{prop:outer-loop-decomposition}}\label{app:sec:prop:outer-loop-decomposition}

\begin{proof}
    We have
    \begin{equation*}
    \begin{aligned}
        &\E\left[\left\| \mPhi\vtheta_{t,K}- \mPhi\vtheta^*_{\eta} \right\|^2_{\infty} \right] \\
        \leq & (1+\delta) \E\left[ \left\|\mPhi \vtheta_{t,K}- \mGamma_{\eta}\gT\mPhi \vtheta_{t-1,K}\right\|^2_{\infty} \right]\\& + (1+\delta^{-1})\E\left[  \left\|\mGamma_{\eta}\gT\mPhi \vtheta_{t-1,K}-\mPhi\vtheta^*_{\eta} \right\|^2_{\infty}\right]\\
        =&  (1+\delta) \E\left[ \left\|\mPhi \vtheta_{t,K}- \mGamma_{\eta}\gT\mPhi \vtheta_{t-1,K}\right\|^2_{\infty} \right]\\& + (1+\delta^{-1})\E\left[ \left\|\mGamma_{\eta}\gT\mPhi \vtheta_{t-1,K}-\mGamma_{\eta}\gT\mPhi\vtheta^*_{\eta} \right\|^2_{\infty}\right] \\
        \leq &  (1+\delta)\E\left[\left\| \mPhi\vtheta_{t,K}-\mGamma_{\eta}\gT\mPhi\vtheta_{t-1,K} \right\|_{\infty}^2\right]\\& + \gamma^2 \left\| \mGamma_\eta \right\|^2_\infty (1+\delta^{-1}) \E\left[\left\| \mPhi\vtheta_{t-1,K}-\mPhi\vtheta^*_{\eta} \right\|^2_{\infty}  \right]\\
        \leq & (1+\delta)\E\left[\left\|\mPhi\vtheta_{t,K}-\mPhi(\mPhi^{\top}\mD\mPhi+\eta\mI)^{-1}\mPhi^{\top}\mD\gT\mPhi\vtheta_{t-1,K}\right\|^2_{\infty} \right] \\
        &+ \gamma^2 \left\| \mGamma_\eta \right\|^2_\infty (1+\delta^{-1}) \E\left[\left\| \mPhi\vtheta_{t-1,K}-\mPhi\vtheta^*_{\eta} \right\|^2_{\infty}  \right]\\
        \leq & \frac{2 (1+\delta)}{\mu_{\eta}}\left(\E\left[ L_{\eta}(\vtheta_{t,K},\vtheta_{t-1,K})-L_{\eta}(\vtheta^*(\vtheta_{t-1,K}),\vtheta_{t-1,K}) \right] \right) \\& + \gamma^2  \left\| \mGamma_\eta \right\|^2_\infty (1+\delta^{-1})  \E\left[\left\| \mPhi\vtheta_{t-1,K}-\mPhi\vtheta^*_{\eta} \right\|^2_{\infty}  \right].
    \end{aligned}
    \end{equation*}

        The first inequality follows from the relation $(a+b)^2\leq (1+\delta)a^2+(1+\delta^{-1})b^2$. The first equality follows from the fact that $\vtheta^*_\eta$ is the unique fixed point of (\ref{eq:rpbe}). The second inequality follows from Lemma~\ref{lem:mGamma<1}. The last inequality follows from Corollary~\ref{cor:quadratic-growth}. This concludes the proof.
\end{proof}

Next, let us define the following set:
\begin{align*}
    \gF_{t,k}:= \left\{ (s_{t,j},a_{t,j})_{j=0}^k , \vtheta_{t,0} \right\}.
\end{align*}

\begin{lemma} \label{lem:g-bound}
For $t\in\sN$ and $1\leq k \leq K$, we have
 \begin{equation*}
    \E\left[\left\| g(\vtheta_{t,k},\vtheta_{t-1,K};o_{t,k}) \right\|^2_2 \middle|\gF_{t,k} \right] \leq 10\gamma^2\left\| \mPhi(\vtheta_{t-1,K}-\vtheta^*_{\eta}) \right\|^2_{\infty}+(16+16\eta)L_{\eta}(\vtheta_{t,k},\vtheta_{t-1,K})+8\sigma_{\eta}^2.
\end{equation*}
and
 \begin{align*}
     \E\left[\left\| g(\vtheta_{t,k},\vtheta_{t-1,K};o_{t,k}) \right\|^2_2 \middle|\gF_{t,k} \right] \leq &g_{1,\eta} (L_{\eta}(\vtheta_{t,k},\vtheta_{t-1,K})-L(\vtheta^*(\vtheta_{t-1,K}),\vtheta_{t-1,K} ))\\
 &+ g_{2,\eta}  \left\| \mPhi(\vtheta_{t-1,K}-\vtheta^*_{\eta}) \right\|^2_{\infty}\\
 &+g_{3,\eta}.
\end{align*}
\end{lemma}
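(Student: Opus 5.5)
The plan is to split the stochastic gradient at $\vtheta^*_{\eta}$ and bound each piece separately. Write $\vtheta=\vtheta_{t,k}$, $\vtheta'=\vtheta_{t-1,K}$, and $o=(s,a,s')$. Define the centered noise at the fixed point,
\[
\vxi(o):=\Bigl(r(s,a,s')+\gamma\max_{u}\vphi(s',u)^{\top}\vtheta^*_{\eta}-\E\bigl[r+\gamma\max_u\vphi(\tilde s,u)^{\top}\vtheta^*_{\eta}\bigm| s,a\bigr]\Bigr)\vphi(s,a).
\]
Its conditional second moment is at most $\sigma_\eta^2$ by~(\ref{def:sigma_eta}). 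We then decompose
\[
g(\vtheta,\vtheta';o)=-\vxi(o)-\gamma\Bigl(\max_u\vphi(s',u)^{\top}\vtheta'-\max_u\vphi(s',u)^{\top}\vtheta^*_{\eta}\Bigr)\vphi(s,a)+\bigl(\text{a term depending only on }(s,a)\text{ and }\vtheta,\vtheta'\bigr).
\]
The last term is $-(\bar y(s,a)-\vphi(s,a)^{\top}\vtheta)\vphi(s,a)+\eta\vtheta$, where $\bar y(s,a)=\mR(s,a)+\gamma[\mP\mPi_{\vtheta^*_\eta}\mPhi\vtheta^*_\eta](s,a)$. Applying $\|\sum_{i=1}^m x_i\|^2\le m\sum\|x_i\|^2$ separates these pieces.

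\textbf{The max-difference term.} Since $\|\vphi\|_\infty\le1$ and the max is nonexpansive, this term is bounded by $\gamma^2\|\mPhi(\vtheta'-\vtheta^*_\eta)\|_\infty^2$ times a small constant. This gives the $10\gamma^2$ piece once constants are collected. There is one technical point: Assumption~\ref{assmp:feature_matrix} gives $\|\vphi(s,a)\|_\infty\le1$, not $\|\vphi(s,a)\|_2\le1$. I would need the row $\ell_1$ norm, which equals $\|\mPhi\|_\infty\le 1$, so $\|\vphi\|_2\le\|\vphi\|_1\le1$, and this suffices.

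\textbf{The $(s,a)$-dependent term.} I will bound it by a multiple of $L_\eta(\vtheta,\vtheta')$. First, re-express $\bar y$ through $\vtheta'$: add and subtract $\gamma[\mP\mPi_{\vtheta'}\mPhi\vtheta'](s,a)$. The difference is again controlled by $\gamma\|\mPhi(\vtheta'-\vtheta^*_\eta)\|_\infty$. What remains is $\E_{(s,a)\sim d}[(\gT\mPhi\vtheta'-\mPhi\vtheta)(s,a)^2]$ together with $\eta^2\|\vtheta\|^2$.

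This is the main obstacle. $L_\eta$ contains $\|\mGamma(\gT\mPhi\vtheta')-\mPhi\vtheta\|_\D^2$, with the projection, rather than $\|\gT\mPhi\vtheta'-\mPhi\vtheta\|_\D^2$. By Pythagoras in $\|\cdot\|_\D$, the unprojected quantity equals the projected one plus $\|(\mI-\mGamma)\gT\mPhi\vtheta'\|_\D^2$. That extra residual is not obviously controlled by the right-hand side, so the approach would be to absorb it into the constants. Since $\|\cdot\|_{\D}\le\|\cdot\|_\infty$, the residual can be bounded through $\|\gT\mPhi\vtheta'\|_\infty$, which brings in $R_{\max}$ and $\|\mPhi\vtheta^*_\eta\|_\infty$ terms. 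These appear in $g_{3,\eta}$, so I expect the second inequality of the lemma to absorb them. For the first inequality I would try to argue that the residual at $\vtheta^*_\eta$ is contained in the $\sigma_\eta^2$-type constant. For the regularizer, $\eta^2\|\vtheta\|^2\le 2\eta\cdot\frac{\eta}{2}\|\vtheta\|^2\le 2\eta L_\eta$. The data-fit part is bounded by $2L_\eta$ in the same way. Collecting constants gives the factor $(16+16\eta)L_\eta$.

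\textbf{Second inequality.} I would write $L_\eta(\vtheta,\vtheta')=[L_\eta(\vtheta,\vtheta')-L_\eta(\vtheta^*(\vtheta'),\vtheta')]+L_\eta(\vtheta^*(\vtheta'),\vtheta')$. The optimal value is at most $L_\eta(\vtheta^*_\eta,\vtheta')$, since $\vtheta^*(\vtheta')$ is the minimizer. Next, $L_\eta(\vtheta^*_\eta,\vtheta')\le\|\mGamma\gT\mPhi\vtheta'-\mGamma\gT\mPhi\vtheta^*_\eta\|_\D^2+\|\mGamma\gT\mPhi\vtheta^*_\eta-\mPhi\vtheta^*_\eta\|_\D^2+\eta\|\vtheta^*_\eta\|^2$. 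The first piece is at most $\gamma^2\|\mPhi(\vtheta'-\vtheta^*_\eta)\|_\infty^2$ by nonexpansiveness of $\mGamma$ in $\|\cdot\|_\D$. The remaining pieces are constants bounded via $R_{\max}$ and $\|\mPhi\vtheta^*_\eta\|_\infty$. Substituting this into the first inequality yields $g_{1,\eta}=16+16\eta$, a $\gamma^2$ coefficient of size at most $(42+32\eta)\gamma^2$, and a constant matching $g_{3,\eta}$. I expect some slack in the coefficients, which would need to be tracked carefully against~(\ref{constant:g}).
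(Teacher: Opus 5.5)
You were right to flag that $\|\gT\mPhi\vtheta'-\mPhi\vtheta\|_{\mD}^2$ is not controlled by $L_\eta$ as written in~(\ref{eq:reg-mspbe}). However, your remedy for the first inequality fails. You propose to absorb the projection residual $\|(\mI-\mGamma)\gT\mPhi\vtheta'\|_{\mD}^2$ into the $\sigma_\eta^2$ term. But $\sigma_\eta^2$ is a conditional variance over $s'$ and vanishes when transitions and rewards are deterministic, whereas the projection residual is a deterministic approximation error. In fact, with the projected loss the first inequality is false. Take $|\gS|=1$, $|\gA|=2$, $h=1$, $\mPhi=(1,1)^\top$, uniform $d$, and deterministic rewards $r(1,1,1)=1$, $r(1,2,1)=-1$. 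Then $\vtheta^*_\eta=0$ is the unique solution (indeed $\gamma\|\mGamma_\eta\mP\|_\infty=\gamma/(1+\eta)<1$), $\sigma_\eta^2=0$, and $\mGamma\gT\mPhi\vtheta'=\mGamma\mR=\bm{0}$ at $\vtheta'=0$. Hence $L_\eta(0,0)=0$ and the right-hand side vanishes at $\vtheta_{t,k}=\vtheta_{t-1,K}=0$, while $\|g(0,0;o)\|_2^2=r^2=1$. By continuity the inequality also fails for all $\vtheta_{t,k}$ near $0$.

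The missing observation is that $\mPhi^\top\mD\mGamma=\mPhi^\top\mD$. Consequently the projected loss and the unprojected loss $\tilde L_\eta(\vtheta,\vtheta')=\frac12\|\gT\mPhi\vtheta'-\mPhi\vtheta\|_{\mD}^2+\frac{\eta}{2}\|\vtheta\|_2^2$ differ only by the $\vtheta$-independent term $\frac12\|(\mI-\mGamma)\gT\mPhi\vtheta'\|_{\mD}^2$. They therefore have the same gradient, the same minimizer $\vtheta^*(\vtheta')$, and the same optimality gap. The paper's proof silently works with $\tilde L_\eta$. That is the only reading under which its step ``$\le(4+4\eta)L_\eta$ by the definition of $L_\eta$'' is valid, and Lemma~\ref{lem:minL-bound} likewise writes $L_\eta(\bm{0},\vtheta')=\frac12\|\mR+\gamma\mP\mPi_{\vtheta'}\mPhi\vtheta'\|_{\mD}^2$. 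So instead of absorbing anything, state the first inequality for $\tilde L_\eta$, where it is immediate: $2\|\gT\mPhi\vtheta'-\mPhi\vtheta\|_{\mD}^2+2\eta^2\|\vtheta\|_2^2\le(4+4\eta)\tilde L_\eta$. Then note that the second inequality involves only the gap, so it holds under either convention.

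Two smaller points also need repair to reach the stated constants. First, for the optimal value, compare with $\vtheta=\bm{0}$ as in Lemma~\ref{lem:minL-bound} rather than with $\vtheta^*_\eta$. Your comparator produces $\frac{\eta}{2}\|\vtheta^*_\eta\|_2^2$ and $\|\mGamma\gT\mPhi\vtheta^*_\eta-\mPhi\vtheta^*_\eta\|_{\mD}^2$, which do not fit into $g_{3,\eta}$: it has no $\|\vtheta^*_\eta\|_2^2$ term, and its $\|\mPhi\vtheta^*_\eta\|_\infty^2$ term carries a factor $\gamma^2$. Second, your three-way split with $m=3$, followed by two further halvings, puts $(24+24\eta)$ rather than $(16+16\eta)$ in front of $L_\eta$. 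Use the paper's nested grouping instead: the max-difference term versus the rest, then the $(s,a)$-measurable part built from the expected target at $\vtheta'$ versus the remainder, then noise versus bias. This gives exactly $10\gamma^2$, $16+16\eta$ and $8$.
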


\begin{proof}
For simplicity of the proof, let us denote $r_{t,k}=r(s_{t,k},a_{t,k},s^{\prime}_{t,k})$ and $\vphi_{t,k}=\vphi(s_{t,k},a_{t,k})$. We have
\begin{align}
  &\E\left[ \left\| g(\vtheta_{t,k};\vtheta_{t-1,K};o_{t,k}) \right\|_2^2 \middle|\gF_{t,k}\right] \nonumber\\
  =& \E\left[\left\|\left( r_{t,k}+\gamma\max_{a\in\gA}\vphi(s^{\prime},a)^{\top}\vtheta_{t-1,K}-\vphi_{t,k}^{\top}\vtheta_{t,k} \right)(-\vphi_{t,k}) + \eta \vtheta_{t,k}\right\|_2^2 \middle|\gF_{t,k}\right] \nonumber \\
\leq & 2 \E \left[ \left\|\gamma \left(\max_{u\in\gA}\vphi(s^{\prime}_{t,k},u)^{\top}\vtheta_{t-1,K}-\max_{u\in\gA}\vphi(s^{\prime}_{t,k},u)^{\top}\vtheta^*_{\eta}\right)\vphi_{t,k} \right\|^2_2 \middle|\gF_{t,k} \right] \label{ineq:var-1} \\
&+2\E\left[ \left\| \left( r_{t,k}+\gamma\max_{a\in\gA}\vphi(s^{\prime}_{t,k},a)^{\top}\vtheta^*_{\eta}-\vphi_{t,k})^{\top}\vtheta_{t,k} \right) (-\vphi_{t,k})+\eta\vtheta_{t,k}    \right\|^2_2 \middle|\gF_{t,k}\right] .\label{ineq:var-2} 
\end{align}
The first inequality follows from the relation $||\va+\vb||^2_2\leq 2||\va||^2_2+2||\vb||^2_2$ for any $\va,\vb\in\R^{d}$. We will bound each term in~(\ref{ineq:var-1}) and~(\ref{ineq:var-2}). 

Let us first bound the term in~(\ref{ineq:var-1}):
\begin{align}
& \E\left[\left\|\gamma \left(\max_{u\in\gA}\vphi(s^{\prime}_{t,k},u)^{\top}\vtheta_{t-1,K}-\max_{u\in\gA}\vphi(s^{\prime}_{t,k},u)^{\top}\vtheta^*_{\eta}\right)\vphi_{t,k}\right\|^2_2  \middle | \gF_{t,k} \right]   \nonumber\\
\leq & \gamma^2 \E\left[\left|\max_{u\in\gA}\vphi(s^{\prime}_{t,k},u)^{\top}\vtheta_{t-1,K}-\max_{u\in\gA}\vphi(s^{\prime}_{t,k},u)^{\top}\vtheta^*_{\eta} \right|^2 \left\|\vphi_{t,k} \right\|^2_2 \middle | \gF_{t,k} \right] \nonumber\\
   \leq & \gamma^2 \E\left[\left(  \max_{u\in\gA}\left|\vphi(s^{\prime}_{t,k},u)^{\top}(\vtheta_{t-1,K}-\vtheta^*_{\eta}) \right|\right)^2\left\| \vphi_{t,k} \right\|^2_2 \middle | \gF_{t,k}\right] \nonumber\\
   \leq & \gamma^2 \E\left[ \left\|\mPhi(\vtheta_{t-1,K}-\vtheta^*_{\eta} )\right\|^2_{\infty} \left\|\vphi_{t,k}\right\|^2_2 \middle | \gF_{t,k} \right] \nonumber \\
    \leq & \gamma^2  \left\|\mPhi(\vtheta_{t-1,K}-\vtheta^*_{\eta} )\right\|^2_{\infty}.\label{ineq:var:max-contraction}
\end{align}
The second inequality follows from the non-expansiveness of the max-operator. The third inequality follows from $\max_{u\in\gA}|\vphi(s^{\prime}_{t,k},u)^{\top}\vtheta|\leq \left\| \mPhi\vtheta\right\|_{\infty}=\max_{(s,u)\in\gS\times \gA}|\vphi(s,u)^{\top}\vtheta|$ for any $\vtheta\in\R^d$.

Now, the term in~(\ref{ineq:var-2}) can be bounded as follows:
\begin{align}
 &\E\left[ \left\| \left( r_{t,k}+\gamma\max_{u\in\gA}\vphi(s^{\prime}_{t,k},u)^{\top}\vtheta^*_{\eta}-\vphi_{t,k}^{\top}\vtheta_{t,k} \right) (-\vphi_{t,k})+\eta\vtheta_{t,k}    \right\|^2_2 \middle|\gF_{t,k}\right]  \nonumber \\
 \leq & 2\E\left[ \left\| \left( \E\left[(r(s_{t,k},a_{t,k},\tilde{s})+\gamma\max_{u\in\gA}\vphi(\tilde{s},u)^{\top}\vtheta_{t-1,K})\middle|\gF_{t,k}\right] - \vphi_{t,k}^{\top}\vtheta_{t,k} \right)(-\vphi_{t,k})+\eta\vtheta_{t,k} \right\|^2_2  \middle|\gF_{t,k}\right] \label{ineq:var-dcomposition-1}\\
 &+2 \E\left[\left\| \left(r_{t,k}+\gamma\max_{u\in\gA}  \vphi(s^{\prime}_{t,k},u)^{\top}\vtheta^*_{\eta}-\E\left[(r(s_{t,k},a_{t,k},\tilde{s})+\gamma \max_{u\in\gA}\vphi(\tilde{s},u  )^{\top}\vtheta_{t-1,K})\right]\right)\vphi_{t,k} \right\|_2^2\middle|\gF_{t,k} \right] \nonumber 
\end{align}
 The first inequality again follows from the relation $||\va+\vb||^2_2\leq 2||\va||^2_2+2||\vb||^2_2$.

We note that the term in~(\ref{ineq:var-dcomposition-1}) can be bounded as follows:
\begin{align*}
    &  \E\left[ \left\| \left( \E\left[(r(s_{t,k},a_{t,k},\tilde{s})+\gamma\max_{u\in\gA}\vphi(\tilde{s},u)^{\top}\vtheta_{t-1,K})\right] - \vphi_{t,k}^{\top}\vtheta_{t,k} \right)(-\vphi_{t,k})+\eta\vtheta_{t,k} \right\|^2_2 \middle|\gF_{t,k}\right] \\
    \leq & 2\E\left[ \left\| \E\left[(r(s_{t,k},a_{t,k},\tilde{s})+\gamma\max_{u\in\gA}\vphi(\tilde{s},u)^{\top}\vtheta_{t-1,K})\right] - \vphi_{t,k}^{\top}\vtheta_{t,k} \right\|^2_2 + \eta^2 \left\| \vtheta_{t,k}\right\|^2_2 \middle|\gF_{t,k}\right]\\
    \leq & (4+4\eta)L_{\eta}(\vtheta_{t,k},\vtheta_{t-1,K}).
\end{align*}
The last inequality follows from the definition of $L_{\eta}(\cdot,\cdot)$ in~(\ref{eq:reg-mspbe}). Now, applying this result to~(\ref{ineq:var-dcomposition-1}), we get
\begin{align}
    &\E\left[ \left\| \left( r_{t,k}+\gamma\max_{u\in\gA}\vphi(s^{\prime}_{t,k},u)^{\top}\vtheta^*_{\eta}-\vphi_{t,k}^{\top}\vtheta_{t,k} \right) (-\vphi_{t,k})+\eta\vtheta_{t,k}    \right\|^2_2 \middle|\gF_{t,k}\right]  \nonumber \\
    \leq & (8+8\eta)L_{\eta}(\vtheta_{t,k},\vtheta_{t-1,K}) \nonumber\\
    &+2 \E\left[\left\| \left(r_{t,k}+\gamma\max_{u\in\gA}  \vphi(s^{\prime}_{t,k},u)^{\top}\vtheta^*_{\eta}-\E\left[(r(s_{t,k},a_{t,k},\tilde{s})+\gamma \max_{u\in\gA}\vphi(\tilde{s},u  )^{\top}\vtheta_{t-1,K})\right]\right)\vphi_{t,k} \right\|_2^2\middle|\gF_{t,k}\right] \nonumber\\
    \leq & (8+8\eta)L_{\eta}(\vtheta_{t,k},\vtheta_{t-1,K}) \nonumber \\
    &+4\E\left[\left\|  \left(r_{t,k}-\E\left[r(s_{t,k},a_{t,k},\tilde{s}) \right] +\gamma\max_{u\in\gA} \vphi(s_{t,k}^{\prime},u)^{\top} \vtheta^*_{\eta}-\gamma \E\left[\max_{u\in\gA}\vphi(\tilde{s},u)^{\top}\vtheta^*_{\eta}\right]\right)\vphi_{t,k}   \right\|_2^2 \middle|\gF_{t,k} \right] \nonumber \\
 &+4\E\left[  \left\|\left(\gamma \E\left[ \max_{u\in\gA}\vphi(s^{\prime}_{t,k},u)^{\top}\vtheta^*_{\eta}-\gamma\max_{u\in\gA}\vphi(\tilde{s},u)^{\top}\vtheta_{t-1,K}\right]\right)\vphi_{t,k} \right\|_2^2\middle|\gF_{t,k}\right] \nonumber\\
 \leq &(8+8\eta)L_{\eta}(\vtheta_{t,k},\vtheta_{t-1,K})+4\sigma_{\eta}^2+4\gamma^2\left\| \mPhi(\vtheta_{t-1,K}-\vtheta^*_{\eta})\right\|_{\infty}^2. \label{ineq:var-L}
\end{align}
The second inequality follows from the definition of $L_{\eta}(\vtheta_{t,k},\vtheta_{t-1,K})$ in~(\ref{eq:reg-mspbe}). The last inequality follows from the same logic in~(\ref{ineq:var:max-contraction}).

Now applying the bounds in~(\ref{ineq:var:max-contraction}) and~(\ref{ineq:var-L})
 to~(\ref{ineq:var-1}) and~(\ref{ineq:var-2}), respectively, we get
\begin{align*}
    \E\left[\left\| g(\vtheta_{t,k},\vtheta_{t-1,K};o_{t,k}) \right\|^2_2 \middle|\gF_{t,k} \right] \leq &10\gamma^2\left\| \mPhi(\vtheta_{t-1,K}-\vtheta^*_{\eta}) \right\|^2_{\infty}\\
    &+(16+16\eta)L_{\eta}(\vtheta_{t,k},\vtheta_{t-1,K})+8\sigma_{\eta}^2.
\end{align*}
This completes the proof of the first statement.

The second statement follows from simple decomposition:
\begin{align*}
        \E\left[\left\| g(\vtheta_{t,k},\vtheta_{t-1,K};o_{t,k})\right\|^2_2 \middle|\gF_{t,k} \right] \leq &10\gamma^2\left\| \mPhi(\vtheta_{t-1,K}-\vtheta^*_{\eta}) \right\|^2_{\infty}
    +(16+16\eta)L_{\eta}(\vtheta_{t,k},\vtheta_{t-1,K})+8\sigma_{\eta}^2\\
    \leq & 10\gamma^2\left\| \mPhi(\vtheta_{t-1,K}-\vtheta^*_{\eta}) \right\|^2_{\infty}\\
    &+(16+16\eta)(L_{\eta}(\vtheta_{t,k},\vtheta_{t-1,K})-L_{\eta}(\vtheta^*(\vtheta_{t-1,K}),\vtheta_{t-1,K} ))\\
    &+(16+16\eta)L_{\eta}(\vtheta^*(\vtheta_{t-1,K}),\vtheta_{t-1,K})+8\sigma_{\eta}^2\\
    \leq & 10\gamma^2\left\| \mPhi(\vtheta_{t-1,K}-\vtheta^*_{\eta}) \right\|^2_{\infty}\\
    &+(16+16\eta)(L_{\eta}(\vtheta_{t,k},\vtheta_{t-1,K})-L_{\eta}(\vtheta^*(\vtheta_{t-1,K}),\vtheta_{t-1,K} ))\\
    &+(16+16\eta)\left( R_{\max}^2+2\gamma^2\left\| \mPhi(\vtheta_{t-1,K}-\vtheta^*_{\eta}) \right\|^2_{\infty}+2\gamma^2\left\| \mPhi\vtheta^*_{\eta}\right\|^2_{\infty} \right)+8\sigma_{\eta}^2\\
 =& (16+16\eta)(L_{\eta}(\vtheta_{t,k},\vtheta_{t-1,K})-L_{\eta}(\vtheta^*(\vtheta_{t-1,K}),\vtheta_{t-1,K} ))\\
 &+  (42 +32\eta)\gamma^2  \left\| \mPhi(\vtheta_{t-1,K}-\vtheta^*_{\eta}) \right\|^2_{\infty}\\
 &+ 32(1+\eta)\gamma^2\left\|\mPhi\vtheta^*_{\eta} \right\|^2_{\infty}+(16+16\eta)R_{\max}^2+8\sigma_{\eta}^2.
\end{align*}
The last inequality follows from Lemma~\ref{lem:minL-bound}.
\end{proof}

The following lemma bounds the inner loop loss in terms of the error of previous final iterate:

\begin{lemma}\label{lem:minL-bound}
         For any $\vtheta^{\prime}\in\R^{h}$ the following holds:
    \begin{align*}
        L_{\eta} ( \vtheta^*(\vtheta^{\prime}),\vtheta^{\prime})\leq   R_{\max}^2 + 2\gamma^2 \left\|\mPhi(\vtheta^{\prime}-\vtheta^*_{\eta}) \right\|^2_{\infty} + 2\gamma^2\left\| \mPhi\vtheta^*_{\eta} \right\|^2_{\infty}
    \end{align*}
\end{lemma}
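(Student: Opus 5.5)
The bound will come from comparing the minimum value with the value of $L_\eta$ at one explicit point. The natural choice is $\vtheta=\bm{0}$. Since $\vtheta^*(\vtheta^{\prime})$ minimizes $L_\eta(\cdot,\vtheta^{\prime})$ by (\ref{def:theta^*}), we have
\begin{align*}
L_\eta(\vtheta^*(\vtheta^{\prime}),\vtheta^{\prime}) \le L_\eta(\bm{0},\vtheta^{\prime}) = \tfrac12\left\|\mGamma(\mR+\gamma\mP\mPi_{\mPhi\vtheta^{\prime}}\mPhi\vtheta^{\prime})\right\|_{\mD}^2 .
\end{align*}
The regularization term vanishes at the origin, so what remains is to bound a $\mD$-weighted norm of a projected Bellman target.

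The key step is to remove the projection. Because $\mGamma$ is the $\mD$-orthogonal projection, it is nonexpansive in $\|\cdot\|_{\mD}$, so $\|\mGamma\vx\|_{\mD}\le\|\vx\|_{\mD}$. Since $\mD$ is a probability-weighted diagonal matrix, $\|\vx\|_{\mD}^2=\sum_{s,a}d(s,a)[\vx]_{(s,a)}^2\le\|\vx\|_\infty^2$. Together these give
\begin{align*}
L_\eta(\bm{0},\vtheta^{\prime})\le \tfrac12\left\|\mR+\gamma\mP\mPi_{\mPhi\vtheta^{\prime}}\mPhi\vtheta^{\prime}\right\|_\infty^2 .
\end{align*}

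Next I bound this sup-norm using $(a+b)^2\le 2a^2+2b^2$:
\begin{align*}
\tfrac12\|\mR+\gamma\mP\mPi\mPhi\vtheta^{\prime}\|_\infty^2 \le \|\mR\|_\infty^2+\gamma^2\|\mP\mPi_{\mPhi\vtheta^{\prime}}\mPhi\vtheta^{\prime}\|_\infty^2 \le R_{\max}^2+\gamma^2\|\mPhi\vtheta^{\prime}\|_\infty^2 .
\end{align*}
Here $\|\mR\|_\infty\le R_{\max}$ holds under the paper's bounded-reward convention. The second term uses that $\mP\mPi$ is row-stochastic, so $\|\mP\mPi\vx\|_\infty\le\|\vx\|_\infty$. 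Finally, writing $\mPhi\vtheta^{\prime}=\mPhi(\vtheta^{\prime}-\vtheta^*_\eta)+\mPhi\vtheta^*_\eta$ and using $(a+b)^2\le2a^2+2b^2$ once more gives
\begin{align*}
\gamma^2\|\mPhi\vtheta^{\prime}\|_\infty^2\le 2\gamma^2\|\mPhi(\vtheta^{\prime}-\vtheta^*_\eta)\|_\infty^2+2\gamma^2\|\mPhi\vtheta^*_\eta\|_\infty^2 .
\end{align*}
Chaining these inequalities yields exactly the claimed bound.

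The only delicate point is the constant on $R_{\max}^2$. The factor $\tfrac12$ from $L_\eta$ exactly cancels the factor $2$ produced by the first splitting, which is why the bound has $R_{\max}^2$ with coefficient one. For the same reason I keep the $\tfrac12$ in place until after that first split, rather than discarding it early.
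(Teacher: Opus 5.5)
Your proposal is correct and follows essentially the paper's own argument: compare with $L_\eta(\bm{0},\vtheta^{\prime})$, split off $\mR$ so that the factor $\tfrac12$ cancels the $2$, and then split around $\vtheta^*_\eta$ to get the two $2\gamma^2$ terms. The differences are cosmetic. You explicitly remove $\mGamma$ using its $\mD$-nonexpansiveness, whereas the paper writes $L_\eta(\bm{0},\vtheta^{\prime})$ without the projection. You also split $\mPhi\vtheta^{\prime}$ after using row-stochasticity of $\mP\mPi$, while the paper adds and subtracts $\mP\mPi_{\vtheta^*_\eta}\mPhi\vtheta^*_\eta$ and invokes non-expansiveness of the max.
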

\begin{proof}
By the definition of $\vtheta^*(\vtheta^{\prime})$ as the minimizer of $L_{\eta}(\cdot, \vtheta^{\prime})$, we have $L_{\eta}(\vtheta^*(\vtheta^{\prime}), \vtheta^{\prime}) \leq L_{\eta}(\bm{0}, \vtheta^{\prime})$.
Plugging in the zero vector, we have
    \begin{align*}
          L_{\eta}(\bm{0},\vtheta^{\prime}) =& \frac{1}{2}\left\|  \mR+\gamma \mP\mPi_{\vtheta^{\prime}}\mPhi\vtheta^{\prime} \right\|^2_{\mD}  \\
          \leq & R_{\max}^2+\gamma^2\left\|\mP\mPi_{\vtheta^{\prime}}\mPhi\vtheta^{\prime}-\mP\mPi_{\vtheta^*_{\eta}}\mPhi\vtheta^*_{\eta} + \mP\mPi_{\vtheta^*_{\eta}}\mPhi\vtheta^*_{\eta} \right\|^2_{\mD} \\
          \leq & R_{\max}^2 + 2\gamma^2 \left\|\mPhi(\vtheta^{\prime}-\vtheta^*_{\eta}) \right\|^2_{\infty} + 2\gamma^2\left\| \mPhi\vtheta^*_{\eta} \right\|^2_{\infty}
    \end{align*}
    This completes the proof.
\end{proof}

\begin{corollary}\label{cor:quadratic-growth}
    We have
    \begin{align*}
       L_{\eta}(\vtheta,\vtheta^{\prime})-L_{\eta}(\vtheta^*(\vtheta^{\prime}),\vtheta^{\prime}) \geq \frac{\mu_{\eta}}{2}\left\|\mGamma_{\eta}\gT\mPhi\vtheta^{\prime}-\mPhi\vtheta \right\|^2_{\infty}.
    \end{align*}
\end{corollary}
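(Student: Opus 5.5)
The plan is to exploit the fact that, for fixed $\vtheta^{\prime}$, $L_{\eta}(\cdot,\vtheta^{\prime})$ is an exact quadratic in $\vtheta$. The strong-convexity gap is then an explicit quadratic form, and the $\infty$-norm on the right-hand side can be recovered from the $2$-norm using Assumption~\ref{assmp:feature_matrix}.

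\textbf{Step 1 (identify the minimizer).} Write $\vy:=\mR+\gamma\mP\mPi_{\mPhi\vtheta^{\prime}}\mPhi\vtheta^{\prime}=\gT\mPhi\vtheta^{\prime}$, which is a constant vector once $\vtheta^{\prime}$ is fixed. Then
\[
L_{\eta}(\vtheta,\vtheta^{\prime})=\tfrac12\|\mGamma\vy-\mPhi\vtheta\|_{\mD}^2+\tfrac{\eta}{2}\|\vtheta\|_2^2 .
\]
Its gradient is $(\mPhi^{\top}\mD\mPhi+\eta\mI)\vtheta-\mPhi^{\top}\mD\mGamma\vy$. Using $\mPhi^{\top}\mD\mGamma=\mPhi^{\top}\mD$, this equals $(\mPhi^{\top}\mD\mPhi+\eta\mI)\vtheta-\mPhi^{\top}\mD\vy$. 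The Hessian is $\mathbf{H}:=\mPhi^{\top}\mD\mPhi+\eta\mI$, which is positive definite. The first-order condition~(\ref{eq:target_rpbe}) therefore gives
\[
\vtheta^*(\vtheta^{\prime})=\mathbf{H}^{-1}\mPhi^{\top}\mD\vy,
\qquad
\mPhi\vtheta^*(\vtheta^{\prime})=\mGamma_{\eta}\gT\mPhi\vtheta^{\prime}.
\]

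\textbf{Step 2 (exact gap).} Since the function is quadratic and its gradient vanishes at $\vtheta^*(\vtheta^{\prime})$, a Taylor expansion is exact:
\[
L_{\eta}(\vtheta,\vtheta^{\prime})-L_{\eta}(\vtheta^*(\vtheta^{\prime}),\vtheta^{\prime})=\tfrac12(\vtheta-\vtheta^*(\vtheta^{\prime}))^{\top}\mathbf{H}(\vtheta-\vtheta^*(\vtheta^{\prime})).
\]
Because $\lambda_{\min}(\mathbf{H})=\mu_{\eta}$ (Lemma~\ref{lem:strongconvexandsmoothinmain}), the gap is at least $\frac{\mu_{\eta}}{2}\|\vtheta-\vtheta^*(\vtheta^{\prime})\|_2^2$.

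\textbf{Step 3 (switch norms).} We have
\[
\|\mGamma_{\eta}\gT\mPhi\vtheta^{\prime}-\mPhi\vtheta\|_{\infty}=\|\mPhi(\vtheta^*(\vtheta^{\prime})-\vtheta)\|_{\infty}\le\|\mPhi\|_{\infty}\|\vtheta^*(\vtheta^{\prime})-\vtheta\|_{\infty}\le\|\vtheta^*(\vtheta^{\prime})-\vtheta\|_2 .
\]
The last inequality uses $\|\mPhi\|_{\infty}\le1$ from Assumption~\ref{assmp:feature_matrix} and $\|\cdot\|_{\infty}\le\|\cdot\|_2$. Squaring and combining with Step 2 yields the claim.

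The only point needing care is Step 1. One must make sure the projection $\mGamma$ inside $L_{\eta}$ does not change the normal equations; the identity $\mPhi^{\top}\mD\mGamma=\mPhi^{\top}\mD$ handles this. It is also what identifies $\mPhi\vtheta^*(\vtheta^{\prime})$ with $\mGamma_{\eta}\gT\mPhi\vtheta^{\prime}$.
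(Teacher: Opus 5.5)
Your proof is correct and follows the same route as the paper's: the strong-convexity lower bound $\frac{\mu_\eta}{2}\|\vtheta-\vtheta^*(\vtheta')\|_2^2$, the closed form $\vtheta^*(\vtheta')=(\mPhi^\top\mD\mPhi+\eta\mI)^{-1}\mPhi^\top\mD\gT\mPhi\vtheta'$, and then $\|\cdot\|_\infty\le\|\cdot\|_2$ together with $\|\mPhi\|_\infty\le 1$. Your Steps 1 and 2 (the identity $\mPhi^\top\mD\mGamma=\mPhi^\top\mD$ and the exact Hessian expansion) only make explicit what the paper asserts; the paper cites Lemma~\ref{quadraticgrowth} for the lower bound, but that lemma states the $l_\eta$ upper bound, so your $\lambda_{\min}(\mPhi^\top\mD\mPhi+\eta\mI)=\mu_\eta$ argument is the right justification.
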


\begin{proof}
The quadratic growth condition in Lemma~\ref{quadraticgrowth} implies that 
\begin{align*}
   L_{\eta}(\vtheta,\vtheta^{\prime})-L_{\eta}(\vtheta^*(\vtheta^{\prime}),\vtheta^{\prime}) \geq & \frac{\mu_{\eta}}{2}\left\|\vtheta^*(\vtheta^{\prime})-\vtheta\right\|^2_2 \\
=& \frac{\mu_{\eta}}{2}\left\|(\mPhi^{\top}\mD\mPhi+\eta\mI)^{-1}\mPhi^{\top}\mD\gT\mPhi\vtheta^{\prime}-\vtheta \right\|^2_2\\
\geq & \frac{\mu_{\eta}}{2}\left\|(\mPhi^{\top}\mD\mPhi+\eta\mI)^{-1}\mPhi^{\top}\mD\gT\mPhi\vtheta^{\prime}-\vtheta \right\|^2_{\infty}\\
\geq & \frac{\mu_{\eta}}{2}\left\|\mPhi(\mPhi^{\top}\mD\mPhi+\eta\mI)^{-1}\mPhi^{\top}\mD\gT\mPhi\vtheta^{\prime}-\mPhi\vtheta \right\|^2_{\infty}
\end{align*}
The first equality follows from the definition of $\vtheta^*(\vtheta^{\prime})$ in~(\ref{def:theta^*}). The second inequality follows from the vector norm inequality $\|\cdot\|_{\infty}\leq \|\cdot\|_2$, and the last inequality follows from Assumption~\ref{assmp:feature_matrix}. 
\end{proof}

\begin{lemma}\label{lem:matrix-boudn-1}
For $\vtheta\in\R^h$, we have
    \begin{align*} 
        \left\|\gamma\mP\mPi_{\vtheta}-\mI \right\|_{\infty}\leq 2.
    \end{align*}
\end{lemma}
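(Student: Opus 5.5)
The plan is to apply the triangle inequality for the induced $\infty$-norm, $\left\|\gamma\mP\mPi_{\vtheta}-\mI\right\|_{\infty}\leq \gamma\left\|\mP\mPi_{\vtheta}\right\|_{\infty}+\left\|\mI\right\|_{\infty}$, and then bound each piece separately. Clearly $\left\|\mI\right\|_{\infty}=1$, since every row of the identity has a single entry equal to one. It therefore suffices to show $\left\|\mP\mPi_{\vtheta}\right\|_{\infty}\leq 1$; together with $\gamma\in(0,1)$ this gives the bound $\gamma+1\leq 2$.

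To bound $\left\|\mP\mPi_{\vtheta}\right\|_{\infty}$, I use the submultiplicativity of the induced norm, $\left\|\mP\mPi_{\vtheta}\right\|_{\infty}\leq\left\|\mP\right\|_{\infty}\left\|\mPi_{\vtheta}\right\|_{\infty}$, and read off both factors from the definitions in Section~\ref{sec:prelim}.
\begin{itemize}
\item The row of $\mP$ indexed by $(s,a)$ is the probability vector $\gP(\cdot\mid s,a)$. Its entries are nonnegative and sum to one, so $\left\|\mP\right\|_{\infty}=1$.
\item The $s$-th row of $\mPi_{\vtheta}$ is $(\ve_s\otimes\ve_{\pi_{\mPhi\vtheta}(s)})^{\top}$, which has exactly one entry equal to one and all others zero. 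Hence $\left\|\mPi_{\vtheta}\right\|_{\infty}=1$.
\end{itemize}

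Alternatively, one can note directly that $\mP\mPi_{\vtheta}$ is a row-stochastic matrix of size $|\gS||\gA|\times|\gS||\gA|$: its $((s,a),(s',a'))$ entry equals $\gP(s'\mid s,a)$ when $a'=\pi_{\mPhi\vtheta}(s')$ and zero otherwise.

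There is no real obstacle here. The only point requiring care is to use the operator (max row-sum) norm consistently, which is the convention fixed in the Notations section, and to observe that the bound holds uniformly in $\vtheta$. This uniformity holds because the argument depends only on the stochastic and selection structure of $\mP$ and $\mPi_{\vtheta}$, not on which greedy policy is selected.
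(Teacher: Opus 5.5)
Your proof is correct and follows essentially the same idea as the paper: both rest on $\mP\mPi_{\vtheta}$ being row-stochastic. The paper bounds each row sum of $\gamma\mP\mPi_{\vtheta}-\mI$ entrywise as $1-\gamma[\mP\mPi_{\vtheta}]_{ii}+\gamma\sum_{j\neq i}|[\mP\mPi_{\vtheta}]_{ij}|\leq 2$, whereas you get $\gamma+1\leq 2$ via the triangle inequality and submultiplicativity of the induced $\infty$-norm.
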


\begin{proof}
Note that we have
    
    \begin{align*}
    1-\gamma [\mP\mPi_{\vtheta}]_{ii} +\gamma\sum_{j\neq i} |[\mP\mPi_{\vtheta}]_{ij}|\leq 2.
    \end{align*}
    This completes the proof.
\end{proof}

\begin{lemma}\label{lem:phi-lipschitz}
    For $\vx, \vy,\vtheta,\vtheta^{\prime}\in\R^h$ and $(s,a)\in\gS\times\gA$, we have
    \begin{align*}
       \left\|  \bar{g}(\vx,\vtheta^{\prime};s,a)-\bar{g}(\vy,\vtheta^{\prime};s,a) \right\|_2 \leq  \left( 1+ \eta\right) \left\| \vx - \vy  \right\|_2.
    \end{align*}
    Moreover, we have
    \begin{align*}
               \left\|  \bar{g}(\vx,\vtheta;s,a)-\bar{g}(\vx,\vtheta^{\prime};s,a) \right\|_2 \leq &   \gamma \left\| \mPhi\vtheta-\mPhi\vtheta^{\prime} \right\|_{\infty}, \\
            \left\|  \nabla L_{\eta}(\vx,\vtheta)-\nabla L_{\eta}(\vx,\vtheta^{\prime}) \right\|_2 \leq &   \gamma \left\| \mPhi\vtheta-\mPhi\vtheta^{\prime} \right\|_{\infty}.
    \end{align*}
\end{lemma}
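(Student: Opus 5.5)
The plan is to compute each difference explicitly from the definitions, using two facts throughout. The first is that every feature vector has small Euclidean norm. Assumption~\ref{assmp:feature_matrix} gives $\|\mPhi\|_{\infty}\le 1$, i.e.\ each row satisfies $\|\vphi(s,a)\|_1\le 1$, and hence $\|\vphi(s,a)\|_2\le\|\vphi(s,a)\|_1\le 1$. The second is the non-expansiveness of the max operator, already used in~(\ref{ineq:var:max-contraction}).

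Here I take $\bar g(\vtheta,\vtheta';s,a)=\E[g(\vtheta,\vtheta';o)\mid s,a]$, the conditional mean of the stochastic gradient over $s'\sim\gP(\cdot\mid s,a)$, so that
\[
\bar g(\vtheta,\vtheta';s,a)=-\Bigl(\E\bigl[r(s,a,s')+\gamma\max_{u\in\gA}\vphi(s',u)^{\top}\vtheta'\,\big|\,s,a\bigr]-\vphi(s,a)^{\top}\vtheta\Bigr)\vphi(s,a)+\eta\vtheta .
\]

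For the first claim, the reward and target terms cancel because $\vtheta'$ is the same in both arguments. This leaves
\[
\bar g(\vx,\vtheta';s,a)-\bar g(\vy,\vtheta';s,a)=\bigl(\vphi(s,a)\vphi(s,a)^{\top}+\eta\mI\bigr)(\vx-\vy).
\]
The operator norm of this matrix is $\|\vphi(s,a)\|_2^2+\eta\le 1+\eta$, which gives the first bound.

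For the second claim, only the target term survives:
\[
\bar g(\vx,\vtheta;s,a)-\bar g(\vx,\vtheta';s,a)=-\gamma\,\E\Bigl[\max_{u}\vphi(s',u)^{\top}\vtheta-\max_{u}\vphi(s',u)^{\top}\vtheta'\,\Big|\,s,a\Bigr]\vphi(s,a).
\]
Apply $|\max_u a_u-\max_u b_u|\le\max_u|a_u-b_u|$ inside the expectation. Each resulting term is bounded by $\|\mPhi(\vtheta-\vtheta')\|_{\infty}$. Then use $\|\vphi(s,a)\|_2\le 1$.

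For the third claim, I will first verify that $\nabla_{\vx}L_{\eta}(\vx,\vtheta)=\sum_{(s,a)}d(s,a)\,\bar g(\vx,\vtheta;s,a)$. Differentiating~(\ref{eq:reg-mspbe}) gives $-\mPhi^{\top}\mD(\mGamma\gT\mPhi\vtheta-\mPhi\vx)+\eta\vx$. Then the identity $\mPhi^{\top}\mD\mGamma=\mPhi^{\top}\mD\mPhi(\mPhi^{\top}\mD\mPhi)^{-1}\mPhi^{\top}\mD=\mPhi^{\top}\mD$ removes the projection, leaving $-\mPhi^{\top}\mD(\gT\mPhi\vtheta-\mPhi\vx)+\eta\vx$. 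This is exactly the $d$-weighted average of $\bar g$, consistent with $\E[g]=\nabla L_{\eta}$ stated in the main text. The triangle inequality together with $\sum d(s,a)=1$ then transfers the second bound directly.

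The only step needing care is this gradient identity, specifically the cancellation of $\mGamma$ via $\mPhi^{\top}\mD\mGamma=\mPhi^{\top}\mD$. Everything else is routine norm bookkeeping.
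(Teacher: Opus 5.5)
Your proposal is correct and follows essentially the same route as the paper: expand $\bar{g}$ from its definition, use $\|\vphi(s,a)\|_2\le 1$ and the non-expansiveness of the max for the first two bounds, then average over $d$ with the triangle inequality for the gradient bound. Your explicit check that $\nabla L_{\eta}(\vx,\vtheta)=\sum_{(s,a)}d(s,a)\,\bar{g}(\vx,\vtheta;s,a)$ via $\mPhi^{\top}\mD\mGamma=\mPhi^{\top}\mD$ spells out an identity the paper only uses implicitly. Your sign-correct form $(\vphi\vphi^{\top}+\eta\mI)(\vx-\vy)$ also fixes a harmless sign slip in the paper's first display.
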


\begin{proof}
    From the definition of $\bar{g}(\cdot)$ in~(\ref{def:bar_g}), we have
    \begin{align*}
        \left\|  \bar{g}(\vx,\vtheta^{\prime};s,a)-\bar{g}(\vy,\vtheta^{\prime};s,a) \right\|_2 =&  \left\| -\vphi(s,a)^{\top}(\vx-\vy)\vphi(s,a)+\eta(\vx-\vy)  \right\|_2\\
        \leq & \left( 1+ \eta\right) \left\| \vx - \vy  \right\|_2
    \end{align*}
    The last line follows from the boundedness of the feature vector.
    
    Now, the second statement follows from 
    \begin{align}
                &\left\|  \bar{g}(\vx,\vtheta^{\prime};s,a)-\bar{g}(\vx,\vtheta;s,a) \right\|_2 \nonumber\\
                =& \left\| \gamma  \vphi(s,a)\sum_{s^{\prime}\in\gS}\gP(s^{\prime}\mid s,a) \left(\max_{u\in\gA} \vphi(s^{\prime},u)^{\top}\vtheta-\max_{u\in\gA}\vphi(s^{\prime},u)^{\top}\vtheta^{\prime}\right) \right\|_2 \nonumber\\
                \leq & \gamma \sum_{s^{\prime}\in\gS}\gP(s^{\prime}\mid s,a)\left|\max_{u\in\gA}\vphi(s^{\prime},u)^{\top}\vtheta-\max_{u\in\gA}\vphi(s^{\prime},u)^{\top}\vtheta^{\prime} \right| \nonumber\\
                \leq & \gamma \sum_{s^{\prime}\in\gS}\gP(s^{\prime}\mid s,a)\left| \max_{u\in\gA}\vphi(s^{\prime},u)^{\top}(\vtheta-\vtheta^{\prime}) \right| \nonumber \\
                \leq & \gamma \left\| \mPhi\vtheta-\mPhi\vtheta^{\prime} \right\|_{\infty}. \label{ineq:barg-lipschitz}
    \end{align}    
    The first inequality follows from the non-expansiveness of the max-operator. The last inequality follows from the definition of the infinity norm. 
    
    The same logic holds for the Lipschitzness of $\nabla L_{\eta}$ with respect to its second argument:
    \begin{align*}
        \left\| \nabla L_{\eta}( \vx,\vtheta)-\nabla L_{\eta}(\vx,\vtheta^{\prime})\right\|_2 =& \left\| \sum_{(s,a)\in\gS\times\gA}d(s,a) \left(\bar{g}(\vx,\vtheta^{\prime};s,a)-\bar{g}(\vx,\vtheta;s,a) \right)\right\|_2\\
        \leq & \sum_{(s,a)\in \gS\times\gA}d(s,a)\left\| \bar{g}(\vx,\vtheta^{\prime};s,a)-\bar{g}(\vx,\vtheta;s,a) \right\|_2\\
        \leq & \gamma \left\|\mPhi\vtheta-\mPhi\vtheta^{\prime}\right\|_{\infty}.
    \end{align*}
    The last line follows from~(\ref{ineq:barg-lipschitz}). This completes the proof.
\end{proof}

\section{Geometry of the Inner-Loop Objective}\label{app:sec:property_loss_ftn}

This section provides properties on the geometry of the inner-loop objective. We adopt the standard optimization framework~\cite{nesterov2018lectures}.

\begin{lemma}[Strong convexity and smoothness]\label{lem:strong_convexity}
For fixed $\vtheta'\in\mathbb{R}^h$, the function $L_{\eta}(\vtheta,\vtheta')$ is $\mu_\eta$-strongly convex in $\vtheta$, where $\mu_\eta=\lambda_{\min}(\mPhi^\top \mD \mPhi)+\eta$ and $l_{\eta}$-smooth where $l_{\eta}=\lambda_{\max}(\mPhi^\top \mD \mPhi)+\eta$.
\end{lemma}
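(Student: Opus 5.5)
The plan is to compute the Hessian of $L_{\eta}(\cdot,\vtheta')$ explicitly and read off both constants from its spectrum. Since $\vtheta'$ is held fixed, write $\vb:=\mGamma(\mR+\gamma\mP\mPi_{\mPhi\vtheta'}\mPhi\vtheta')$. This is a constant vector in $\R^{|\gS||\gA|}$, and the possible non-differentiability caused by the greedy policy disappears because it only enters through $\vtheta'$. Then
\begin{align*}
L_{\eta}(\vtheta,\vtheta')=\tfrac{1}{2}(\vb-\mPhi\vtheta)^{\top}\mD(\vb-\mPhi\vtheta)+\tfrac{\eta}{2}\vtheta^{\top}\vtheta ,
\end{align*}
which is a quadratic in $\vtheta$ and in particular twice continuously differentiable. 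Differentiating gives
\begin{align*}
\nabla_{\vtheta}L_{\eta}(\vtheta,\vtheta')=(\mPhi^{\top}\mD\mPhi+\eta\mI)\vtheta-\mPhi^{\top}\mD\vb,\qquad \nabla^2_{\vtheta}L_{\eta}=\mPhi^{\top}\mD\mPhi+\eta\mI .
\end{align*}
This matches the first-order condition~(\ref{eq:target_rpbe}), because $\mPhi^{\top}\mD\mGamma=\mPhi^{\top}\mD$.

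Next I would bound the spectrum of the Hessian. The matrix $\mPhi^{\top}\mD\mPhi$ is symmetric positive semidefinite, so $\lambda_{\min}(\mPhi^{\top}\mD\mPhi)\mI\preceq\mPhi^{\top}\mD\mPhi\preceq\lambda_{\max}(\mPhi^{\top}\mD\mPhi)\mI$. Adding $\eta\mI$ gives $\mu_{\eta}\mI\preceq\nabla^2L_{\eta}\preceq l_{\eta}\mI$.

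From these bounds I would verify the two defining inequalities of Definition~\ref{def:strongly_cvx_smooth} directly.
\begin{itemize}[leftmargin=*]
\item \textbf{Strong convexity.} A quadratic equals its exact second-order Taylor expansion, so
\begin{align*}
L_{\eta}(\vtheta_2,\vtheta')=L_{\eta}(\vtheta_1,\vtheta')+\nabla L_{\eta}(\vtheta_1,\vtheta')^{\top}(\vtheta_2-\vtheta_1)+\tfrac12(\vtheta_2-\vtheta_1)^{\top}(\mPhi^{\top}\mD\mPhi+\eta\mI)(\vtheta_2-\vtheta_1).
\end{align*}
The last term is at least $\tfrac{\mu_{\eta}}{2}\|\vtheta_2-\vtheta_1\|_2^2$.
\item \textbf{Smoothness.} We have $\nabla L_{\eta}(\vtheta_1,\vtheta')-\nabla L_{\eta}(\vtheta_2,\vtheta')=(\mPhi^{\top}\mD\mPhi+\eta\mI)(\vtheta_1-\vtheta_2)$. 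The spectral norm of this symmetric PSD matrix is its largest eigenvalue $l_{\eta}$, which gives the Lipschitz bound.
\end{itemize}

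The only point requiring care is whether $\mu_{\eta}>0$. If $d$ has full support, then Assumption~\ref{assmp:feature_matrix} (full column rank) makes $\lambda_{\min}(\mPhi^{\top}\mD\mPhi)>0$. Otherwise, $\eta>0$ already guarantees $\mu_{\eta}>0$. Beyond this, the argument is routine and I expect no real obstacle.
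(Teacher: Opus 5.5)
Your proposal is correct and follows the paper's proof: compute the Hessian $\mPhi^{\top}\mD\mPhi+\eta\mI$ of $L_{\eta}(\cdot,\vtheta')$, which is quadratic because the greedy policy enters only through $\vtheta'$, and read off $\mu_{\eta}$ and $l_{\eta}$ from its extreme eigenvalues. Your exact Taylor-expansion and spectral-norm checks spell out what the paper leaves implicit, and your case split for $\mu_{\eta}>0$ is not needed: $\mGamma$ is defined only when $\mPhi^{\top}\mD\mPhi$ is invertible, which already forces $\lambda_{\min}(\mPhi^{\top}\mD\mPhi)>0$.
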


\begin{proof}
The derivative of $L_{\eta}(\vtheta,\vtheta')$ with respect to $\vtheta$ is
\begin{align*}
\nabla L_{\eta}(\vtheta,\vtheta') 
&= \mathbb{E}_{s,a}\Big[ \Big(\mathbb{E}_{s'}\big[r(s,a,s')+\gamma\max_{u\in\mathcal{A}}\vphi(s',u)^\top\vtheta' - \vphi(s,a)^\top\vtheta\big]\Big)(-\vphi(s,a)) + \eta \vtheta \Big].
\end{align*}
The second-order derivative is
\begin{align*}
\nabla^2 L_{\eta}(\vtheta,\vtheta')
= \sum_{(s,a)\in\mathcal{S}\times\mathcal{A}} d(s,a)\big(\vphi(s,a)\vphi(s,a)^\top+\eta \mI\big)
= \mPhi^\top \mD \mPhi + \eta \mI.
\end{align*}

Since $\mPhi^\top \mD \mPhi$ is positive semidefinite, all eigenvalues of $\nabla^2 L_\eta$ are bounded below by $\lambda_{\min}(\mPhi^\top \mD\mPhi)+\eta=\mu_\eta>0$. Hence $L_{\eta}(\cdot,\theta')$ is $\mu_\eta$-strongly convex. The smoothness also follows from the definition in Definition~\ref{def:strongly_cvx_smooth}.
\end{proof}

\begin{lemma}[Descent lemma]\label{lem:smooth}
Fix $\vtheta'\in\mathbb{R}^h$ and let $l_\eta=\lambda_{\max}(\mPhi^\top \mD \mPhi)+\eta$.
Then for any $\vtheta,\vDelta\in\mathbb{R}^h$ and $\alpha > 0$:

\[
   L_{\eta}(\vtheta-\alpha \vDelta,\vtheta')
   \le
   L_{\eta}(\vtheta,\vtheta') - \alpha\,\nabla L_{\eta}(\vtheta,\vtheta')^\top \vDelta
   + \frac{l_\eta}{2}\,\alpha^2\|\vDelta\|^2.
\]
\end{lemma}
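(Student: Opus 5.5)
Fix $\vtheta'$ and write $\varphi(\vtheta):=L_{\eta}(\vtheta,\vtheta')$. The plan is the standard smoothness argument: control $\varphi$ along the segment from $\vtheta$ to $\vtheta-\alpha\vDelta$ using the fundamental theorem of calculus. Since $\varphi$ is a quadratic in $\vtheta$ for fixed $\vtheta'$, the target term $\mGamma(\mR+\gamma\mP\mPi_{\mPhi\vtheta'}\mPhi\vtheta')$ is a constant vector and there is no nonsmoothness. In particular $\varphi$ is twice continuously differentiable, with the constant Hessian $\nabla^2\varphi=\mPhi^\top\mD\mPhi+\eta\mI$ computed in Lemma~\ref{lem:strong_convexity}.

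The key steps are as follows.

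\textbf{Step 1.} Because $\varphi$ is exactly quadratic, a second-order Taylor expansion has no remainder:
\begin{align*}
\varphi(\vtheta-\alpha\vDelta)=\varphi(\vtheta)-\alpha\nabla\varphi(\vtheta)^\top\vDelta+\frac{\alpha^2}{2}\vDelta^\top(\mPhi^\top\mD\mPhi+\eta\mI)\vDelta .
\end{align*}
Alternatively, without using the quadratic structure, one writes $\varphi(\vtheta-\alpha\vDelta)-\varphi(\vtheta)=\int_0^1\nabla\varphi(\vtheta-s\alpha\vDelta)^\top(-\alpha\vDelta)\,ds$. One then adds and subtracts $\nabla\varphi(\vtheta)$ and applies Cauchy--Schwarz together with the $l_\eta$-Lipschitz gradient property from Lemma~\ref{lem:strong_convexity}. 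This gives the bound $\int_0^1 l_\eta s\alpha^2\|\vDelta\|^2ds=\frac{l_\eta}{2}\alpha^2\|\vDelta\|^2$.

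\textbf{Step 2.} Bound the quadratic form by the Rayleigh quotient:
\begin{align*}
\vDelta^\top(\mPhi^\top\mD\mPhi+\eta\mI)\vDelta\le(\lambda_{\max}(\mPhi^\top\mD\mPhi)+\eta)\|\vDelta\|_2^2=l_\eta\|\vDelta\|_2^2 .
\end{align*}
This uses the fact that $\mPhi^\top\mD\mPhi$ is symmetric positive semidefinite, so adding $\eta\mI$ shifts every eigenvalue by exactly $\eta$. Substituting this bound into Step 1 gives the claimed inequality, valid for every $\alpha>0$ and every $\vDelta$.

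There is no real obstacle. The only point needing care is confirming that the Hessian identity from Lemma~\ref{lem:strong_convexity} is legitimate, i.e.\ that the max-operator term depends only on the fixed $\vtheta'$ and therefore contributes nothing to derivatives in $\vtheta$. Once this is noted, the expansion is exact and the eigenvalue bound finishes the argument.
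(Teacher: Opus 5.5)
Your proof is correct and is essentially the paper's argument. The paper invokes the standard descent inequality that follows from the $l_\eta$-Lipschitz gradient of Lemma~\ref{lem:strong_convexity}, which is exactly your alternative integral/Cauchy--Schwarz route; you actually supply the derivation the paper glosses as following ``from the definition,'' and your primary route (exact second-order expansion with the constant Hessian $\mPhi^\top\mD\mPhi+\eta\mI$, then the Rayleigh-quotient bound) is an equally valid, more self-contained check that shows directly where the constant $l_\eta$ comes from.
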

\begin{proof}
Since $L_\eta(\cdot,\vtheta')$ is $l_\eta$-smooth in $\vtheta$, its gradient is $l_\eta$-Lipschitz. From the definition of smoothness in Definition~\ref{def:strongly_cvx_smooth}, for any $\vtheta, \vDelta $ and any $\alpha >0$,
\[
  L_{\eta}(\vtheta-\alpha \vDelta,\vtheta')
  \le
  L_{\eta}(\vtheta,\vtheta') + \nabla L_{\eta}(\vtheta,\vtheta')^\top\big((\vtheta-\alpha \vDelta)-\vtheta\big)
  + \frac{l_\eta}{2}\,\|\vtheta-\alpha \vDelta-\vtheta\|^2,
\]
which simplifies to
\[
  L_{\eta}(\vtheta-\alpha \vDelta,\vtheta')
  \le
  L_{\eta}(\vtheta,\vtheta') - \alpha\,\nabla L_{\eta}(\vtheta,\vtheta')^\top \vDelta
  + \frac{l_\eta}{2}\,\alpha^2\|\vDelta\|^2.
\]

This completes the proof.
\end{proof}

The definitions of strong convexity and smoothness are provided in Section~\ref{app:sec:optimization} of the Appendix. The following properties will be useful throughout the paper:

\begin{lemma}[Theorem 2 in~\citet{karimi2016linear}]\label{lem:gdgap}
For fixed $\vtheta'\in\mathbb{R}^h$ , $\vtheta^*(\vtheta') \;=\; \arg\min_{\vtheta\in\mathbb{R}^h} L_{\eta}(\vtheta,\vtheta')$ and any $\vtheta\in\mathbb{R}^h$, 
\[
   \|\nabla L_{\eta}(\vtheta,\vtheta')\|^2 \ge 2\mu_\eta\big(L_{\eta}(\vtheta,\vtheta')-L_{\eta}(\vtheta^*(\vtheta'),\vtheta')\big).
\]
\end{lemma}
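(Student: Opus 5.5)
The statement is the Polyak--Łojasiewicz (gradient-domination) inequality for the $\mu_\eta$-strongly convex function $\varphi(\cdot):=L_{\eta}(\cdot,\vtheta')$. The plan is to derive it directly from the strong convexity inequality in Definition~\ref{def:strongly_cvx_smooth}, using Lemma~\ref{lem:strong_convexity} for the constant. Strong convexity with $\mu_\eta>0$ also makes $\vtheta^*(\vtheta')$ the unique minimizer, so the left-hand side of the gap is well defined.

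Fix $\vtheta$ and apply the strong convexity inequality with base point $\vtheta$ and arbitrary $\vy\in\R^h$:
\begin{align*}
\varphi(\vy)\geq \varphi(\vtheta)+\nabla\varphi(\vtheta)^{\top}(\vy-\vtheta)+\frac{\mu_\eta}{2}\|\vy-\vtheta\|_2^2 .
\end{align*}
Take the minimum over $\vy$ on both sides. The left side becomes $\varphi(\vtheta^*(\vtheta'))$. The right side is a strongly convex quadratic in $\vy$, minimized at $\vy=\vtheta-\mu_\eta^{-1}\nabla\varphi(\vtheta)$, with minimum value $\varphi(\vtheta)-\frac{1}{2\mu_\eta}\|\nabla\varphi(\vtheta)\|_2^2$. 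Since the inequality holds for every $\vy$, it also holds for the minima: the minimum of the left side is at least the minimum of the right side. Rearranging gives $\|\nabla\varphi(\vtheta)\|_2^2\geq 2\mu_\eta(\varphi(\vtheta)-\varphi(\vtheta^*(\vtheta')))$, which is the claim.

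As an alternative, because $L_\eta(\cdot,\vtheta')$ is quadratic with Hessian $\mH:=\mPhi^\top\mD\mPhi+\eta\mI$, one can write $\varphi(\vtheta)-\varphi(\vtheta^*)=\frac12(\vtheta-\vtheta^*)^\top\mH(\vtheta-\vtheta^*)$ and $\nabla\varphi(\vtheta)=\mH(\vtheta-\vtheta^*)$. The inequality then reduces to $\vv^\top\mH^2\vv\geq\mu_\eta\,\vv^\top\mH\vv$, which follows from $\mH\succeq\mu_\eta\mI$ via the eigendecomposition of $\mH$. There is no real obstacle in either route. The only points to check are that the minimization step reverses correctly and that $\mu_\eta>0$, which holds by full column rank of $\mPhi$ (Assumption~\ref{assmp:feature_matrix}) or by $\eta>0$.
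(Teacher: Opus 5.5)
Your proof is correct. The paper gives no argument of its own here. It simply cites Theorem~2 of \citet{karimi2016linear}, which, for functions with Lipschitz gradient, establishes a chain of implications running from strong convexity to the Polyak--{\L}ojasiewicz (PL) inequality.

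Your first route is the standard direct proof of the one implication actually needed: minimize both sides of the strong-convexity inequality over the free point. It therefore makes the lemma self-contained, using only Definition~\ref{def:strongly_cvx_smooth} and Lemma~\ref{lem:strong_convexity}, and it does not even use smoothness.

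Your second route uses the fact that $L_\eta(\cdot,\vtheta')$ is quadratic with constant Hessian $\mPhi^\top\mD\mPhi+\eta\mI$. It is more special, but it also shows the constant $2\mu_\eta$ is sharp. Equality holds when $\vtheta-\vtheta^*(\vtheta')$ is an eigenvector for the smallest eigenvalue.

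One small correction to your closing remark. Full column rank of $\mPhi$ gives $\mPhi^\top\mD\mPhi\succ 0$ only if $d(s,a)>0$ for every pair. The paper does not explicitly assume this for the i.i.d.\ model, so in general $\mu_\eta>0$ should be attributed to $\eta>0$. The inequality itself is unaffected: it is trivial when $\mu_\eta=0$, because $L_\eta(\cdot,\vtheta')\ge 0$ is a convex quadratic and so attains its minimum.
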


\begin{lemma}
\label{quadraticgrowth}
For fixed $\vtheta'\in\mathbb{R}^h$ and any $\vtheta\in\mathbb{R}^h$, 
\[
   L_{\eta}(\vtheta,\vtheta')-L_{\eta}(\vtheta^*(\vtheta'),\vtheta') \le \frac{l_\eta}{2}\,\|\vtheta-\vtheta^*(\vtheta')\|^2.
\]
\end{lemma}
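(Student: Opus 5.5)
The statement to prove is Lemma~\ref{quadraticgrowth}, the upper quadratic bound $L_{\eta}(\vtheta,\vtheta')-L_{\eta}(\vtheta^*(\vtheta'),\vtheta')\le \frac{l_\eta}{2}\|\vtheta-\vtheta^*(\vtheta')\|_2^2$. I will get it directly from the $l_\eta$-smoothness of $L_\eta(\cdot,\vtheta')$ established in Lemma~\ref{lem:strong_convexity}, combined with first-order optimality at the minimizer.

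\textbf{Step 1 (the minimizer is stationary).} Fix $\vtheta'$. By Lemma~\ref{lem:strong_convexity}, $L_\eta(\cdot,\vtheta')$ is $\mu_\eta$-strongly convex with $\mu_\eta>0$. It is also a differentiable quadratic. Hence the minimizer $\vtheta^*(\vtheta')$ exists and is unique; indeed it is given explicitly by~(\ref{eq:target_rpbe}). It satisfies $\nabla L_\eta(\vtheta^*(\vtheta'),\vtheta')=\bm{0}$.

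\textbf{Step 2 (apply the descent lemma).} Invoke Lemma~\ref{lem:smooth} at the base point $\vtheta^*(\vtheta')$, with $\alpha=1$ and $\vDelta=\vtheta^*(\vtheta')-\vtheta$, so that $\vtheta^*(\vtheta')-\alpha\vDelta=\vtheta$. This gives
\[
L_\eta(\vtheta,\vtheta')\le L_\eta(\vtheta^*(\vtheta'),\vtheta')-\nabla L_\eta(\vtheta^*(\vtheta'),\vtheta')^\top\vDelta+\frac{l_\eta}{2}\|\vtheta-\vtheta^*(\vtheta')\|_2^2 .
\]

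\textbf{Step 3 (conclude).} The linear term vanishes by Step 1. Rearranging yields the claim.

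An alternative route avoids Lemma~\ref{lem:smooth} and uses an exact expansion instead. Since the Hessian is the constant matrix $\mPhi^\top\mD\mPhi+\eta\mI$, a second-order Taylor expansion around $\vtheta^*(\vtheta')$ is exact:
\[
L_\eta(\vtheta,\vtheta')-L_\eta(\vtheta^*(\vtheta'),\vtheta')=\tfrac12(\vtheta-\vtheta^*(\vtheta'))^\top(\mPhi^\top\mD\mPhi+\eta\mI)(\vtheta-\vtheta^*(\vtheta')).
\]
The right-hand side is at most $\frac{l_\eta}{2}\|\vtheta-\vtheta^*(\vtheta')\|_2^2$ because $\lambda_{\max}(\mPhi^\top\mD\mPhi+\eta\mI)=l_\eta$.

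No step is a real obstacle. The only point needing care is that the gradient vanishes at $\vtheta^*(\vtheta')$, and this holds because the problem is unconstrained over $\R^h$ and the objective is differentiable.
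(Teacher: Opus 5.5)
Your argument is correct and is the same as the paper's proof: you apply the descent inequality of Lemma~\ref{lem:smooth} at the base point $\vtheta^*(\vtheta')$ with target $\vtheta$, and the linear term drops out because $\nabla L_\eta(\vtheta^*(\vtheta'),\vtheta')=\bm{0}$. Your alternative route is also valid: $L_\eta(\cdot,\vtheta')$ is quadratic with constant Hessian $\mPhi^\top\mD\mPhi+\eta\mI$, whose largest eigenvalue is $l_\eta$, so the second-order expansion around $\vtheta^*(\vtheta')$ is exact and gives the bound directly.
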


\begin{proof}
By Lemma~\ref{lem:smooth} (the $l_\eta$-smoothness of $L_{\eta}(\cdot,\vtheta')$), for any $\vx,\vy\in\mathbb{R}^h$,
\[
L_{\eta}(\vy,\vtheta') \le L_{\eta}(\vx,\vtheta') + \nabla L_{\eta}(\vx,\vtheta')^\top (\vy-\vx) + \frac{l_\eta}{2}\,\|\vy-\vx\|^2.
\]
Apply this with $\vx=\vtheta^*(\vtheta')$ and $\vy=\vtheta$. Since $\vtheta^*(\vtheta')$ minimizes $L_{\eta}(\cdot,\vtheta')$, we have $\nabla L_{\eta}(\vtheta^*(\vtheta'),\vtheta')=0$, hence
\[
L_{\eta}(\vtheta,\vtheta') - L_{\eta}(\vtheta^*(\vtheta'),\vtheta')
\le \frac{l_\eta}{2}\,\|\vtheta-\vtheta^*(\vtheta')\|^2.
\]
\end{proof}

\begin{lemma}[Lipschitz property]\label{lem:theta-star-lipschitz}
For any $\vtheta\in\mathbb{R}^h$, 
\[
   \|\vtheta^*(\vtheta)-\vtheta^*_\eta\|_2 \le \frac{\gamma}{\mu_\eta}\,\|\mPhi(\vtheta-\vtheta^*_\eta)\|_\infty.
\]
\end{lemma}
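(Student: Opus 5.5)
The key observation is that $\vtheta^*_\eta$ is itself the inner-loop minimizer for its own target, $\vtheta^*(\vtheta^*_\eta)=\vtheta^*_\eta$. This holds because (\ref{eq:rpbe}) is exactly the first-order optimality condition (\ref{eq:target_rpbe}) with $\vtheta'=\vtheta^*_\eta$, and the minimizer of $L_\eta(\cdot,\vtheta^*_\eta)$ is unique by the strong convexity in Lemma~\ref{lem:strong_convexity}. The claim therefore reduces to showing that the minimizer map $\vtheta'\mapsto\vtheta^*(\vtheta')$ is Lipschitz, with the target measured in the norm $\|\mPhi\cdot\|_\infty$. I would derive this from strong convexity in the first argument together with Lipschitzness of the gradient in the second argument, using the last inequality of Lemma~\ref{lem:phi-lipschitz}.

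Concretely, fix $\vtheta$ and write $\vx:=\vtheta^*(\vtheta)$, $\vy:=\vtheta^*_\eta=\vtheta^*(\vtheta^*_\eta)$. First-order optimality gives $\nabla L_\eta(\vx,\vtheta)=\bm{0}$ and $\nabla L_\eta(\vy,\vtheta^*_\eta)=\bm{0}$. Since $L_\eta(\cdot,\vtheta)$ is $\mu_\eta$-strongly convex, its gradient is $\mu_\eta$-strongly monotone:
\begin{align*}
\mu_\eta\|\vx-\vy\|_2^2\le(\nabla L_\eta(\vx,\vtheta)-\nabla L_\eta(\vy,\vtheta))^\top(\vx-\vy).
\end{align*}
On the right-hand side, $\nabla L_\eta(\vx,\vtheta)=\bm{0}=\nabla L_\eta(\vy,\vtheta^*_\eta)$. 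So the difference of gradients equals $\nabla L_\eta(\vy,\vtheta^*_\eta)-\nabla L_\eta(\vy,\vtheta)$. By Cauchy--Schwarz and Lemma~\ref{lem:phi-lipschitz}, this is bounded by $\gamma\|\mPhi(\vtheta-\vtheta^*_\eta)\|_\infty\|\vx-\vy\|_2$. Dividing by $\mu_\eta\|\vx-\vy\|_2$ (the case $\vx=\vy$ is trivial) gives the claim.

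As an alternative I could use the closed form $\vtheta^*(\vtheta')=(\mPhi^\top\mD\mPhi+\eta\mI)^{-1}\mPhi^\top\mD\gT\mPhi\vtheta'$. Then $\vx-\vy=\gamma(\mPhi^\top\mD\mPhi+\eta\mI)^{-1}\mPhi^\top\mD\mP(\mPi_{\vtheta}\mPhi\vtheta-\mPi_{\vtheta^*_\eta}\mPhi\vtheta^*_\eta)$, with the inverse bounded by $1/\mu_\eta$ in spectral norm. The remaining factor would be controlled exactly as in the proof of Lemma~\ref{lem:phi-lipschitz}.

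The main obstacle is the gradient-Lipschitz step in the second argument, specifically obtaining the $\ell_2$ norm of the $\mD$-weighted average of $\vphi(s,a)\,\E_{s'}[\max_u\vphi(s',u)^\top\vtheta-\max_u\vphi(s',u)^\top\vtheta^*_\eta]$. This requires $\|\vphi(s,a)\|_2\le1$ together with non-expansiveness of the max. Lemma~\ref{lem:phi-lipschitz} already records this bound, so I would invoke it directly rather than re-derive it; everything else is the standard strongly-monotone perturbation argument.
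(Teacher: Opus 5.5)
Your proof is correct. Your alternative route is the paper's proof: it writes $\vtheta^*(\vtheta)-\vtheta^*_\eta$ via the closed form $(\mPhi^\top\mD\mPhi+\eta\mI)^{-1}\mPhi^\top\mD\gT\mPhi\vtheta$ and subtracts the RP-BE for $\vtheta^*_\eta$, which silently uses $\vtheta^*(\vtheta^*_\eta)=\vtheta^*_\eta$. It then bounds $\|(\mPhi^\top\mD\mPhi+\eta\mI)^{-1}\|_2\le 1/\mu_\eta$ and redoes by hand the triangle-inequality and max-non-expansiveness estimate on $\|\mPhi^\top\mD\mP(\mPi_{\vtheta}\mPhi\vtheta-\mPi_{\vtheta^*_\eta}\mPhi\vtheta^*_\eta)\|_2$. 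Your primary argument is instead the generic perturbation bound for minimizers of a strongly convex function: $\mu_\eta$-strong monotonicity of $\nabla L_\eta(\cdot,\vtheta)$ combined with the second-argument Lipschitz bound already recorded in Lemma~\ref{lem:phi-lipschitz}. For this quadratic loss the two are the same computation, since $\nabla L_\eta(\vx,\vtheta)-\nabla L_\eta(\vy,\vtheta)=(\mPhi^\top\mD\mPhi+\eta\mI)(\vx-\vy)$, so strong monotonicity is exactly the eigenvalue bound on the inverse. Your version is more modular and general. It never needs the explicit minimizer, it reuses an existing lemma instead of repeating its estimate, and it would go through unchanged for any $\mu_\eta$-strongly convex inner objective whose gradient is $\gamma$-Lipschitz in the target, measured in $\|\mPhi\,\cdot\,\|_\infty$. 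The paper's version is a self-contained direct computation. One small point you leave implicit, as does the paper: $\|\vphi(s,a)\|_2\le 1$ follows from Assumption~\ref{assmp:feature_matrix}, because $\|\mPhi\|_\infty$ is the maximum row $\ell_1$-norm, which dominates each row's $\ell_2$-norm.
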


\begin{proof}
    We have
    \begin{align*}
    &\left\| \vtheta^*(\vtheta)-\vtheta^*_{\eta} \right\|_2 \\=& \left\| \gamma(\mPhi^{\top}\mD\mPhi+\eta\mI)^{-1} \mPhi^{\top}\mD\mP \left( \mPi_{\vtheta}\mPhi\vtheta - \mPi_{\vtheta^*_\eta}\mPhi\vtheta^*_{\eta} \right) \right\|_2\\
    \leq & \gamma \left\| (\mPhi^{\top}\mD\mPhi+\eta\mI)^{-1} \right\|_2 \left\| \sum_{(s,a)\in\gS\times\gA}d(s,a)\vphi(s,a) \left(\sum_{s^{\prime}\in\gS} \gP(s^{\prime}\mid s,a) \left(\max_{u\in\gA}\vphi(s^{\prime},u)^{\top}\vtheta-\max_{u\in\gA}\vphi(s^{\prime},u)^{\top}\vtheta^*_{\eta}\right)  \right)\right\|_2\\
    \leq & \frac{\gamma}{\mu_{\eta}} \left( \sum_{(s,a)\in\gS\times\gA} d(s,a) \left\|\vphi(s,a) \right\|_2 \sum_{s^{\prime}\in\gS} \gP(s^{\prime}\mid s,a)\left| \max_{u\in\gA}\vphi(s^{\prime},u)^{\top}\vtheta-\max_{u\in\gA}\vphi(s^{\prime},u)^{\top}\vtheta^*_{\eta}   \right|\right)\\
    \leq & \frac{\gamma}{\mu_{\eta}} \left( \sum_{(s,a)\in\gS\times\gA} d(s,a)\sum_{s^{\prime}\in\gS} \gP(s^{\prime}\mid s,a) \left\| \mPhi\vtheta-\mPhi\vtheta^*_{\eta} \right\|_{\infty}\right)\\
    \leq &\frac{\gamma}{\mu_{\eta}} \left\| \mPhi\vtheta-\mPhi\vtheta^*_{\eta} \right\|_{\infty}.
\end{align*}
This completes the proof.
\end{proof}

\section{Analysis and proof for i.i.d observation model}
\label{app:iidproof}


Our goal is to establish an $\epsilon$–accurate error guarantee of the form in i.i.d. observation model,
\[
\mathbb{E}\big[\|\mPhi(\vtheta_{t,K}-\vtheta^*_{\eta})\|_{\infty}^2\big]\le \epsilon.
\]

To that end, we analyze the geometry of the inner-loop objective $L_{\eta}(\vtheta,\vtheta')$, collecting strong convexity, smoothness, gradient–gap, and related Lipschitz properties that will serve as our basic tools (Section~\ref{app:sec:property_loss_ftn}). We then derive a finite-time bound by viewing the inner loop as stochastic gradient descent on a strongly convex and smooth objective under the i.i.d.\ sampling assumption (Section~\ref{subsec:inner-iid}). This analysis yields a single linear recursion, whose solution leads to our main result (Theorem~\ref{thm:epsilon-accuracy-main}), showing that, with appropriate choices of the step size, inner-loop length, and number of outer iterations, the desired $\epsilon$-accuracy is achieved.

\subsection{Finite Time Error Analysis (i.i.d)}
\label{subsec:inner-iid}

\begin{lemma}
\label{lem:1stepk}
Suppose the step size $\alpha \le \frac{3\mu_\eta}{l_\eta g_{1,\eta}}$. Then for each inner iteration $k$,
\begin{align*}
\mathbb{E}\!\left[
L_\eta(\vtheta_{t,k+1},\vtheta_{t-1,K}) - L_\eta(\vtheta^*(\vtheta_{t-1,K}),\vtheta_{t-1,K})
\,\right]
&\le
\left(1 - \tfrac{\mu_\eta}{2}\alpha\right) \\[-2pt]
&\qquad\times
\mathbb{E}\!\left[L_\eta(\vtheta_{t,k},\vtheta_{t-1,K})
- L_\eta(\vtheta^*(\vtheta_{t-1,K}),\vtheta_{t-1,K})\right]\\[2pt]
&\quad + \frac{l_\eta}{2}\alpha^2
\Big[
g_{2,\eta}\mathbb{E}\!\left[\big\|\mPhi(\vtheta_{t-1,K}-\vtheta^*_\eta)\big\|_\infty^2\right]
+ g_{3,\eta}
\Big].
\end{align*}

\end{lemma}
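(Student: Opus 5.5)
This is the standard one-step SGD analysis for a strongly convex, smooth objective; throughout, the target $\vtheta' := \vtheta_{t-1,K} = \vtheta_{t,0}$ is fixed within period $t$. The update (\ref{eq:prq-update}) reads $\vtheta_{t,k+1} = \vtheta_{t,k} - \alpha\, g(\vtheta_{t,k},\vtheta';o_{t,k})$. We apply the descent inequality of Lemma~\ref{lem:smooth} with $\vDelta = g(\vtheta_{t,k},\vtheta';o_{t,k})$:
\begin{align*}
L_\eta(\vtheta_{t,k+1},\vtheta') \le L_\eta(\vtheta_{t,k},\vtheta') - \alpha \nabla L_\eta(\vtheta_{t,k},\vtheta')^\top g(\vtheta_{t,k},\vtheta';o_{t,k}) + \tfrac{l_\eta}{2}\alpha^2 \|g(\vtheta_{t,k},\vtheta';o_{t,k})\|_2^2 .
\end{align*}
Next we condition on $\gF_{t,k}$. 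Under the i.i.d.\ model, the fresh observation $o_{t,k}$ is independent of $\gF_{t,k}$ given $\vtheta_{t,k}$ and $\vtheta'$. Unbiasedness $\E[g(\vtheta,\vtheta';o)] = \nabla_{\vtheta} L_\eta(\vtheta,\vtheta')$ then turns the cross term into $-\alpha\|\nabla L_\eta(\vtheta_{t,k},\vtheta')\|_2^2$.

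We now bound the two remaining terms.
\begin{enumerate}
\item For the gradient term, Lemma~\ref{lem:gdgap} gives $-\alpha\|\nabla L_\eta\|^2 \le -2\mu_\eta\alpha\,(L_\eta(\vtheta_{t,k},\vtheta') - L_\eta^*)$, where $L_\eta^* := L_\eta(\vtheta^*(\vtheta'),\vtheta')$.
\item For the second moment, we use the second bound of Lemma~\ref{lem:g-bound}, which yields
\[
g_{1,\eta}(L_\eta(\vtheta_{t,k},\vtheta') - L_\eta^*) + g_{2,\eta}\|\mPhi(\vtheta'-\vtheta^*_\eta)\|_\infty^2 + g_{3,\eta}.
\]
\end{enumerate}
Subtracting $L_\eta^*$ from both sides (it is $\gF_{t,k}$-measurable), we obtain
\[
\E[L_\eta(\vtheta_{t,k+1},\vtheta') - L_\eta^* \mid \gF_{t,k}] \le \bigl(1 - 2\mu_\eta\alpha + \tfrac{l_\eta}{2}g_{1,\eta}\alpha^2\bigr)(L_\eta(\vtheta_{t,k},\vtheta') - L_\eta^*) + \tfrac{l_\eta}{2}\alpha^2\bigl[g_{2,\eta}\|\mPhi(\vtheta'-\vtheta^*_\eta)\|_\infty^2 + g_{3,\eta}\bigr].
\]

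It remains to absorb the $\alpha^2$ term in the contraction factor. The step-size condition $\alpha \le 3\mu_\eta/(l_\eta g_{1,\eta})$ gives $\tfrac{l_\eta}{2}g_{1,\eta}\alpha^2 \le \tfrac{3}{2}\mu_\eta\alpha$. Hence the coefficient is at most $1 - \tfrac{\mu_\eta}{2}\alpha$, and this is exactly where the constant $3$ in the hypothesis comes from. Since the optimality gap is nonnegative, replacing the coefficient by this upper bound is valid. Taking total expectation via the tower property then gives the claimed inequality.

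The only delicate point is the conditioning. Lemma~\ref{lem:g-bound} is stated conditionally on $\gF_{t,k}$, which contains $\vtheta_{t,0}$ and the past samples, so $\vtheta_{t,k}$ and $\vtheta'$ are both measurable with respect to it. The i.i.d.\ assumption is what makes $\E[g\mid\gF_{t,k}]$ equal to the true gradient. This exact unbiasedness is the step that fails in the Markovian case, and it must be stated explicitly. Everything else is algebra.
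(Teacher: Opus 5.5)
Your proposal is correct and follows the paper's proof essentially step for step. It applies the descent lemma with $\vDelta = g$, uses unbiasedness of $g$, invokes the second bound of Lemma~\ref{lem:g-bound} and the gradient-domination inequality of Lemma~\ref{lem:gdgap}, and absorbs $\tfrac{l_\eta}{2}g_{1,\eta}\alpha^2 \le \tfrac{3}{2}\mu_\eta\alpha$ into the rate $1-\tfrac{\mu_\eta}{2}\alpha$.

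The only difference is that the paper conditions on $(\vtheta_{t,k},\vtheta_{t-1,K})$, whereas you condition on $\gF_{t,k}$. As the paper literally defines it, $\gF_{t,k}$ contains $(s_{t,k},a_{t,k})$, and conditioning on that would give $\E[g\mid\gF_{t,k}] = \bar g(\vtheta_{t,k},\vtheta';s_{t,k},a_{t,k})$ rather than the true gradient. Your conditioning therefore has to be read, as you intend, on the strict past that excludes $o_{t,k}$.
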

\begin{proof}
Fix $t$ and $k$, and condition on $(\vtheta_{t,k},\vtheta_{t-1,K})$.  
Apply Lemma~\ref{lem:smooth} with $g=g(\vtheta_{t,k},\vtheta_{t-1,K};o_{t,k})$ and stepsize $\alpha > 0$:
\[
L_\eta(\vtheta_{t,k+1},\vtheta_{t-1,K})
\le
L_{\eta}(\vtheta_{t,k},\vtheta_{t-1,K})
-\alpha\,\nabla L_{\eta}(\vtheta_{t,k},\vtheta_{t-1,K})^\top g(\vtheta_{t,k},\vtheta_{t-1,K};o_{t,k})
+\frac{l_\eta}{2}\alpha^2\|g(\vtheta_{t,k},\vtheta_{t-1,K};o_{t,k})\|^2.
\]

Taking conditional expectation given $(\vtheta_{t,k},\vtheta_{t-1,K})$ and using Lemma~\ref{lem:g-bound}, we have,
\[
\begin{aligned}
\mathbb{E}\!\left[g(\vtheta_{t,k},\vtheta_{t-1,K};o_{t,k})
   \,\middle|\,\vtheta_{t,k},\vtheta_{t-1,K}\right]
   &= \nabla L_{\eta}(\vtheta_{t,k},\vtheta_{t-1,K}), \\[6pt]
\mathbb{E}\!\left[\|g(\vtheta_{t,k},\vtheta_{t-1,K};o_{t,k})\|^2
   \,\middle|\,\vtheta_{t,k},\vtheta_{t-1,K}\right]
   &\le g_{1,\eta} (L_{\eta}(\vtheta_{t,k},\vtheta_{t-1,K})-L_\eta(\vtheta^*(\vtheta_{t-1,K}),\vtheta_{t-1,K} ))\\
 &+ g_{2,\eta}  \left\| \mPhi(\vtheta_{t-1,K}-\vtheta^*_{\eta}) \right\|^2_{\infty}\\
 &+g_{3,\eta}.
\end{aligned}
\]

\noindent Thus, we obtain

\begin{align*}
\mathbb{E}\!\left[L_\eta(\vtheta_{t,k+1},\vtheta_{t-1,K})
\,\middle|\,\vtheta_{t,k},\vtheta_{t-1,K}\right]
&\le
L_{\eta}(\vtheta_{t,k},\vtheta_{t-1,K})
-\alpha\,\|\nabla L_{\eta}(\vtheta_{t,k},\vtheta_{t-1,K})\|^2 \\[4pt]
&\quad
+\frac{l_\eta}{2}\alpha^2
\Big[
g_{1,\eta}\big(L_{\eta}(\vtheta_{t,k},\vtheta_{t-1,K})
- L_\eta(\vtheta^*(\vtheta_{t-1,K}),\vtheta_{t-1,K})\big) \\[2pt]
&\qquad\quad
+\, g_{2,\eta}\big\|\mPhi(\vtheta_{t-1,K}-\vtheta^*_\eta)\big\|_\infty^2
+ g_{3,\eta}
\Big].
\end{align*}

\noindent Using Lemma~\ref{lem:gdgap}
\[
\|\nabla L_{\eta}(\vtheta_{t,k},\vtheta_{t-1,K})\|^2
\ge 2\mu_\eta\!\left(L_{\eta}(\vtheta_{t,k},\vtheta_{t-1,K})
- L_\eta(\vtheta^*(\vtheta_{t-1,K}),\vtheta_{t-1,K})\right),
\]
we obtain
\begin{align*}
\mathbb{E}\!\left[
L_\eta(\vtheta_{t,k+1},\vtheta_{t-1,K}) - L_\eta(\vtheta^*(\vtheta_{t-1,K}),\vtheta_{t-1,K})
\,\middle|\,\vtheta_{t,k},\vtheta_{t-1,K}\right]
&\le
\Big(1-2\mu_\eta\alpha+\tfrac{l_\eta}{2}\alpha^2 g_{1,\eta}\Big) \\[-2pt]
&\qquad\times
\big(L_{\eta}(\vtheta_{t,k},\vtheta_{t-1,K})
- L_\eta(\vtheta^*(\vtheta_{t-1,K}),\vtheta_{t-1,K})\big) \\[2pt]
&\quad + \frac{l_\eta}{2}\alpha^2
\Big[
g_{2,\eta}\big\|\mPhi(\vtheta_{t-1,K}-\vtheta^*_\eta)\big\|_\infty^2
+ g_{3,\eta}
\Big].
\end{align*}

\noindent For $0 < \alpha \le  \frac{3\mu_\eta}{l_\eta g_{1,\eta}},$ one can replace the quadratic rate term by a linear bound:
\[
1 - 2\mu_\eta \alpha + \frac{l_\eta}{2} g_{1,\eta} \alpha^2
\;\le\;
1 - \frac{\mu_\eta}{2}\alpha.
\]

\noindent Thus,
\begin{align*}
\mathbb{E}\!\left[
L_\eta(\vtheta_{t,k+1},\vtheta_{t-1,K}) - L_\eta(\vtheta^*(\vtheta_{t-1,K}),\vtheta_{t-1,K})
\,\middle|\,\vtheta_{t,k},\vtheta_{t-1,K}\right]
&\le
\left(1 - \tfrac{\mu_\eta}{2}\alpha\right) \\[-2pt]
&\qquad\times
\big(L_{\eta}(\vtheta_{t,k},\vtheta_{t-1,K})
- L_\eta(\vtheta^*(\vtheta_{t-1,K}),\vtheta_{t-1,K})\big) \\[2pt]
&\quad + \frac{l_\eta}{2}\alpha^2
\Big[
g_{2,\eta}\big\|\mPhi(\vtheta_{t-1,K}-\vtheta^*_\eta)\big\|_\infty^2
+ g_{3,\eta}
\Big].
\end{align*}

\noindent Finally, take total expectation on $(\vtheta_{t,k},\vtheta_{t-1,K})$ to conclude the claim.
\end{proof}

\begin{lemma}
\label{lem:k-stepinnerbound}
Suppose
the one-step recursion (Lemma~\ref{lem:1stepk} with $ \alpha \le  \frac{3\mu_\eta}{l_\eta g_{1,\eta}}$) holds. Then
\begin{align}
   \E\!\Bigl[L_{\eta}(\vtheta_{t,k},\vtheta_{t-1,K})
   - L_\eta(\vtheta^*(\vtheta_{t-1,K}),\vtheta_{t-1,K})\Bigr]
   \le&
   \Bigl(1-\tfrac{\mu_\eta}{2}\alpha\Bigr)^k
   \E\!\Bigl[L_\eta(\vtheta_{t,0},\vtheta_{t-1,K})
   - L_\eta(\vtheta^*(\vtheta_{t-1,K}),\vtheta_{t-1,K})\Bigr] \nonumber\\
   &+ \frac{l_\eta}{\mu_\eta}\,\alpha\,
\Big[
g_{2,\eta}\,\E\!\left[\big\|\mPhi(\vtheta_{t-1,K}-\vtheta^*_\eta)\big\|_\infty^2\right]
+ g_{3,\eta}
\Big].
\end{align}
\end{lemma}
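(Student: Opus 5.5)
The plan is to unroll the one-step recursion of Lemma~\ref{lem:1stepk} over the inner iterations $j=0,1,\dots,k-1$ of period $t$. For brevity write
\[
e_j := \E\!\Bigl[L_\eta(\vtheta_{t,j},\vtheta_{t-1,K}) - L_\eta(\vtheta^*(\vtheta_{t-1,K}),\vtheta_{t-1,K})\Bigr],\quad
\rho := 1-\tfrac{\mu_\eta}{2}\alpha,
\]
\[
b := \tfrac{l_\eta}{2}\alpha^2\Bigl[g_{2,\eta}\,\E\!\left[\|\mPhi(\vtheta_{t-1,K}-\vtheta^*_\eta)\|_\infty^2\right]+g_{3,\eta}\Bigr].
\]
The key observation is that $b$ does not depend on the inner index $j$. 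It involves only the target parameter $\vtheta_{t-1,K}=\vtheta_{t,0}$, which stays frozen throughout the $t$-th inner loop. Lemma~\ref{lem:1stepk} then reads $e_{j+1}\le \rho\, e_j + b$ for every $j$, a scalar linear recursion with a constant forcing term.

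I will first confirm that $0\le\rho<1$, so that the geometric sum converges. Positivity needs $\alpha\le 2/\mu_\eta$. This follows from $\alpha\le 3\mu_\eta/(l_\eta g_{1,\eta})$, because $g_{1,\eta}=16+16\eta\ge 16$ and $\mu_\eta\le l_\eta$ give $\alpha\le 3/16\cdot(1/l_\eta)\le 2/\mu_\eta$. Nonnegativity of $e_j$ is immediate from the minimality of $\vtheta^*(\vtheta_{t-1,K})$.

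Next I will induct on $k$ to get
\[
e_k\le \rho^k e_0 + b\sum_{i=0}^{k-1}\rho^i \le \rho^k e_0 + \frac{b}{1-\rho}.
\]
Since $1-\rho=\tfrac{\mu_\eta}{2}\alpha$, we have $b/(1-\rho)=\tfrac{l_\eta}{\mu_\eta}\alpha\,[g_{2,\eta}\E\|\cdot\|_\infty^2+g_{3,\eta}]$, which is exactly the second term claimed. The first term $\rho^k e_0$ matches the claimed expression with $\vtheta_{t,0}$.

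The only subtle point is the legitimacy of iterating: the expectations in Lemma~\ref{lem:1stepk} must be total expectations, and the forcing term must be the same quantity at each $j$. Both hold because $\vtheta_{t-1,K}$ is fixed across the inner loop. Everything else is routine geometric-series bookkeeping.
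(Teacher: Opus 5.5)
Your proposal is correct and matches the paper's proof essentially line for line: both unroll the recursion of Lemma~\ref{lem:1stepk} with the target $\vtheta_{t-1,K}$ frozen, so the forcing term is constant, and both bound the finite geometric sum by $\sum_{i\ge 0}(1-\tfrac{\mu_\eta}{2}\alpha)^i=\tfrac{2}{\mu_\eta\alpha}$. Your extra check that $\rho\ge 0$ is worthwhile, since the paper skips it here and only imposes $\alpha\le 2/\mu_\eta$ later as $\bar\alpha_1$; however, your step $\alpha\le \tfrac{3}{16 l_\eta}$ does not follow from $g_{1,\eta}\ge 16$ alone. It needs $g_{1,\eta}=16(1+\eta)\ge 16\mu_\eta$, i.e.\ $\lambda_{\min}(\mPhi^\top\mD\mPhi)\le 1$, which holds by Assumption~\ref{assmp:feature_matrix}.
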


\begin{proof} Let 
\[
x_k := \E\!\left[L_{\eta}(\vtheta_{t,k},\vtheta_{t-1,K}) - L_\eta(\vtheta^*(\vtheta_{t-1,K}),\vtheta_{t-1,K})\right].
\]
From Lemma~\ref{lem:1stepk}, with the stepsize condition \(\alpha \le  \tfrac{3\mu_\eta}{l_\eta g_{1,\eta}} \),
the recursion becomes
\[
x_{k+1}
\;\le\;
\Bigl(1-\tfrac{\mu_\eta}{2}\alpha\Bigr)\,x_k
\;+\;
\frac{l_\eta}{2}\alpha^2\left[
g_{2,\eta}\,\E\!\left[\big\|\mPhi(\vtheta_{t-1,K}-\vtheta^*_\eta)\big\|_\infty^2\right]
+ g_{3,\eta}
\right].
\]

By induction, this yields
\[
x_k
\;\le\;
\Bigl(1-\tfrac{\mu_\eta}{2}\alpha\Bigr)^{k} x_0
+
\frac{l_\eta}{2}\alpha^2
\left[
g_{2,\eta}\,\E\!\left[\big\|\mPhi(\vtheta_{t-1,K}-\vtheta^*_\eta)\big\|_\infty^2\right]
+ g_{3,\eta}
\right]
\sum_{i=0}^{k-1}
\Bigl(1-\tfrac{\mu_\eta}{2}\alpha\Bigr)^{i}.
\]

Since the sum is geometric and bounded by the infinite series,
\[
\sum_{i=0}^{k-1}\Bigl(1-\tfrac{\mu_\eta}{2}\alpha\Bigr)^{i}
\;\le\;
\sum_{i=0}^{\infty}\Bigl(1-\tfrac{\mu_\eta}{2}\alpha\Bigr)^{i}
= \frac{1}{\tfrac{\mu_\eta}{2}\alpha}
= \frac{2}{\mu_\eta\,\alpha},
\]
we obtain the bound
\[
x_k
\;\le\;
\Bigl(1-\tfrac{\mu_\eta}{2}\alpha\Bigr)^{k} x_0
\;+\;
\frac{l_\eta}{\mu_\eta}\,\alpha\,
\Big[
g_{2,\eta}\,\E\!\left[\big\|\mPhi(\vtheta_{t-1,K}-\vtheta^*_\eta)\big\|_\infty^2\right]
+ g_{3,\eta}
\Big].
\]

\end{proof}

\begin{lemma}
\label{lem:init_lossgap}
The following lemma holds.
    \[
   L_\eta(\vtheta_{t,0},\vtheta_{t-1,K})
   - L_\eta(\vtheta^*(\vtheta_{t-1,K}),\vtheta_{t-1,K})
   \;\leq\;
   R_{\max}^2+8\left\|\mPhi(\vtheta_{t-1,K}-\vtheta^*_{\eta}) \right\|^2_{\infty}+8 \left\| \mPhi\vtheta^*_{\eta}\right\|^2_{\infty}
    \]
\end{lemma}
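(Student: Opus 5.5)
Since the subtracted term $L_\eta(\vtheta^*(\vtheta_{t-1,K}),\vtheta_{t-1,K})$ is nonnegative, the left-hand side is at most $L_\eta(\vtheta_{t,0},\vtheta_{t-1,K})$. Because $\vtheta_{t,0}=\vtheta_{t-1,K}$ by the update rule of Algorithm~\ref{algo:prq}, it suffices to bound $L_\eta(\vtheta',\vtheta')$ with $\vtheta':=\vtheta_{t-1,K}$, i.e., the loss evaluated with first and second arguments equal. Unlike Lemma~\ref{lem:minL-bound}, where $\bm{0}$ is plugged in, here the first argument is fixed, so we expand it directly.

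Write $L_\eta(\vtheta',\vtheta')=\tfrac12\|\mGamma(\mR+\gamma\mP\mPi_{\vtheta'}\mPhi\vtheta')-\mPhi\vtheta'\|_{\mD}^2+\tfrac{\eta}{2}\|\vtheta'\|_2^2$. I would split the residual into $\mGamma\mR$, the term $\gamma\mGamma\mP\mPi_{\vtheta'}\mPhi\vtheta'$, and $\mPhi\vtheta'$. Then I would use $\|\mGamma\vx\|_{\mD}\le\|\vx\|_{\mD}\le\|\vx\|_\infty$, since $\mGamma$ is the $\mD$-orthogonal projection and $\mD$ is a probability weighting. Next, center each term at $\vtheta^*_\eta$, writing $\mPhi\vtheta'=\mPhi(\vtheta'-\vtheta^*_\eta)+\mPhi\vtheta^*_\eta$, and similarly for $\mP\mPi_{\vtheta'}\mPhi\vtheta'$. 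For the latter, the non-expansiveness of the max operator (as in Lemma~\ref{lem:phi-lipschitz}) gives $\|\mP\mPi_{\vtheta'}\mPhi\vtheta'-\mP\mPi_{\vtheta^*_\eta}\mPhi\vtheta^*_\eta\|_\infty\le\|\mPhi(\vtheta'-\vtheta^*_\eta)\|_\infty$. Applying $(a+b+\cdots)^2\le n(a^2+\cdots)$ then collects everything into $\|\mR\|_\infty^2\le R_{\max}^2$, a multiple of $\|\mPhi(\vtheta'-\vtheta^*_\eta)\|_\infty^2$, and a multiple of $\|\mPhi\vtheta^*_\eta\|_\infty^2$. With $\gamma<1$ and the factor $\tfrac12$, the constants should fit within $8$.

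The main obstacle is the regularization term $\tfrac{\eta}{2}\|\vtheta'\|_2^2$, which is not naturally controlled by $\infty$-norms of $\mPhi\vtheta'$. My first attempt would be to avoid bounding it separately. Instead I would compare with $\vtheta^*(\vtheta')$: by strong convexity and quadratic growth (Lemma~\ref{quadraticgrowth}), the gap is at most $\tfrac{l_\eta}{2}\|\vtheta'-\vtheta^*(\vtheta')\|_2^2$. Splitting via $\vtheta^*_\eta$ and using Lemma~\ref{lem:theta-star-lipschitz} turns this into quantities in $\|\mPhi(\vtheta'-\vtheta^*_\eta)\|_\infty$. This route, however, introduces $\ell_2$ norms and constants depending on $l_\eta/\mu_\eta$, which the stated bound lacks.

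So the intended argument presumably bounds the unregularized residual as above, absorbing numeric constants by crude splitting (using $\gamma\le1$ and loose factors giving $8$). The $\eta$ term would be handled via the feature normalization of Assumption~\ref{assmp:feature_matrix}, which lets $\|\vtheta'\|$ be compared to $\|\mPhi\vtheta'\|_\infty$ in the same loose way as the last step of Corollary~\ref{cor:quadratic-growth}, then split about $\vtheta^*_\eta$. I expect this $\eta$-term step to need the most care, and I would check it first.
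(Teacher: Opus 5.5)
Your bound on the residual term $\tfrac12\|\mGamma\gT\mPhi\vtheta'-\mPhi\vtheta'\|_{\mD}^2$ is fine. It reproduces $R_{\max}^2+8\|\mPhi(\vtheta'-\vtheta^*_\eta)\|_\infty^2+8\|\mPhi\vtheta^*_\eta\|_\infty^2$. The step you flagged, however, fails. Assumption~\ref{assmp:feature_matrix} gives $\|\mPhi\vtheta\|_\infty\le\|\vtheta\|_\infty\le\|\vtheta\|_2$, which is the direction used at the end of Corollary~\ref{cor:quadratic-growth}. To absorb $\tfrac{\eta}{2}\|\vtheta'\|_2^2$ you need the reverse inequality, and that requires the conditioning of $\mPhi^\top\mD\mPhi$, e.g.\ $\|\vtheta\|_2^2\le\|\mPhi\vtheta\|_{\mD}^2/\lambda_{\min}(\mPhi^\top\mD\mPhi)\le\|\mPhi\vtheta\|_\infty^2/\lambda_{\min}(\mPhi^\top\mD\mPhi)$. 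This leaves a factor $\eta/\lambda_{\min}(\mPhi^\top\mD\mPhi)$ that cannot be hidden in the constant $8$.

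No argument can close this gap, because the stated inequality is false for large $\eta$. Take $|\gS|=|\gA|=h=1$, $\mPhi=\mD=\mP=[1]$ and $r\equiv 0$, so $R_{\max}=0$. Then $L_\eta(\theta,\theta')=\tfrac12(\gamma\theta'-\theta)^2+\tfrac{\eta}{2}\theta^2$, $\gamma\|\mGamma_\eta\|_\infty=\gamma/(1+\eta)<1$, $\theta^*_\eta=0$ and $\theta^*(\theta')=\gamma\theta'/(1+\eta)$. By exact quadratic expansion,
\[
L_\eta(\theta',\theta')-L_\eta(\theta^*(\theta'),\theta')=\frac{1+\eta}{2}\bigl(\theta'-\theta^*(\theta')\bigr)^2=\frac{(1+\eta-\gamma)^2}{2(1+\eta)}\,(\theta')^2,
\]
while the right-hand side equals $8(\theta')^2$. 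For $\eta=20$ and an initialization $\theta'=\theta_{0,K}\neq 0$, the left side exceeds $\tfrac{400}{42}(\theta')^2>9(\theta')^2$ already at $t=1$. The paper's own proof has exactly this defect. Its first inequality bounds the gap by $\tfrac12\|\mR+\gamma\mP\mPi_{\vtheta'}\mPhi\vtheta'-\mPhi\vtheta'\|_{\mD}^2$ ``by the definition of $L_\eta$'', silently discarding $\tfrac{\eta}{2}\|\vtheta'\|_2^2$. From there it proceeds essentially as your residual bound does.

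What does hold is an $\eta$-dependent version, and the route you abandoned gives it cleanly, assuming $\lambda_{\min}(\mPhi^\top\mD\mPhi)>0$. Combine Lemma~\ref{quadraticgrowth}, the triangle inequality through $\vtheta^*_\eta$, Lemma~\ref{lem:theta-star-lipschitz}, and the reverse norm inequality above:
\[
L_\eta(\vtheta',\vtheta')-L_\eta(\vtheta^*(\vtheta'),\vtheta')\le\frac{l_\eta}{2}\Bigl(\frac{1}{\sqrt{\lambda_{\min}(\mPhi^\top\mD\mPhi)}}+\frac{\gamma}{\mu_\eta}\Bigr)^2\bigl\|\mPhi(\vtheta'-\vtheta^*_\eta)\bigr\|_\infty^2 .
\]
This bound needs no $R_{\max}$ or $\|\mPhi\vtheta^*_\eta\|_\infty$ terms. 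Alternatively, keep your residual bound and add $\tfrac{\eta}{\lambda_{\min}(\mPhi^\top\mD\mPhi)}\bigl(\|\mPhi(\vtheta'-\vtheta^*_\eta)\|_\infty^2+\|\mPhi\vtheta^*_\eta\|_\infty^2\bigr)$. Either way the constant must depend on $\eta$ and on the conditioning of $\mPhi^\top\mD\mPhi$. Downstream, this only enlarges the constant multiplying $(1-\tfrac{\mu_\eta}{2}\alpha)^K$ in Lemma~\ref{lem:main_recur}, so it changes the requirement on $K$ only through the logarithm.
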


\begin{proof}
    We have
    \begin{align*}
        & L_\eta(\vtheta_{t-1,K},\vtheta_{t-1,K})-L_\eta(\vtheta^*(\vtheta_{t-1,K}),\vtheta_{t-1,K})  \\
        \leq & \frac{1}{2} \left\|\mR+\gamma\mP\mPi_{\vtheta_{t-1,K}}\mPhi\vtheta_{t-1,K} -\mPhi\vtheta_{t-1,K}\right\|^2_{\mD}\\
        \leq & R_{\max}^2 +\left\| \gamma\mP\mPi_{\vtheta_{t-1,K}}\mPhi\vtheta_{t-1,K}-\mPhi\vtheta_{t-1,K} \right\|^2_{\infty}\\
        \leq &R_{\max}^2+2\left\| \gamma \mP\mPi_{\vtheta_{t-1,K}}\mPhi(\vtheta_{t-1,K}-\vtheta^*_{\eta})-\mPhi(\vtheta_{t-1,K}-\vtheta^*_{\eta})\right\|^2_{\infty}+2\left\| \gamma\mP\mPi_{\vtheta_{t-1,K}}\mPhi\vtheta_{\eta}^*-\mPhi\vtheta^*_{\eta} \right\|^2_{\infty}\\
        \leq & R_{\max}^2+8\left\|\mPhi(\vtheta_{t-1,K}-\vtheta^*_{\eta}) \right\|^2_{\infty}+8 \left\| \mPhi\vtheta^*_{\eta}\right\|^2_{\infty}
    \end{align*}
    The first inequality follows from the definition of $L_\eta(\cdot,\cdot)$ in~(\ref{eq:reg-mspbe}). The second and third inequalities follow from the relation $||\va+\vb||^2_\infty \leq 2||\va||^2_\infty+2||\vb||^2_\infty$ for $\va,\vb\in\R^d$. The last line follows from Lemma~\ref{lem:matrix-boudn-1}. This completes the proof.
\end{proof}

\begin{lemma}[Main recursion]
\label{lem:main_recur}
Let 
\[
y_t := \E\!\left[\|\mPhi\vtheta_{t,K}-\mPhi\vtheta^*_\eta\|_\infty^2\right],
\qquad 
y_{t-1} := \E\!\left[\|\mPhi\vtheta_{t-1,K}-\mPhi\vtheta^*_\eta\|_\infty^2\right].
\]
Under the step size condition 
$\bigg( 0 < \alpha \le \min\!\left\{\frac{3\mu_\eta}{l_\eta g_{1,\eta}},\,\frac{2}{\mu_\eta}\right\} \bigg)$,
the following inequality holds:
\begin{align*}
y_t 
&\le
\Bigg[
\frac{16(1+\gamma^{2}\left\| \mGamma_\eta \right\|^2_\infty)}{\mu_{\eta}(1-\gamma^{2}\left\| \mGamma_\eta \right\|^2_\infty)}
\Bigl(1-\tfrac{\mu_\eta}{2}\alpha\Bigr)^{K}
\;+\;
\frac{2(1+\gamma^{2}\left\| \mGamma_\eta \right\|^2_\infty)\,l_\eta}{\mu_\eta^{2}(1-\gamma^{2}\left\| \mGamma_\eta \right\|^2_\infty)}\,\alpha\,g_{2,\eta}
\;+\;
\frac{1+\gamma^{2}\left\| \mGamma_\eta \right\|^2_\infty}{2}
\Bigg] y_{t-1} \nonumber\\
&\quad+\;
\frac{2(1+\gamma^{2}\left\| \mGamma_\eta \right\|^2_\infty)}{\mu_{\eta}(1-\gamma^{2}\left\| \mGamma_\eta \right\|^2_\infty)}
\Bigl(1-\tfrac{\mu_\eta}{2}\alpha\Bigr)^{K} R_{\max}^2
\;+\;
\frac{16(1+\gamma^{2}\left\| \mGamma_\eta \right\|^2_\infty)}{\mu_{\eta}(1-\gamma^{2}\left\| \mGamma_\eta \right\|^2_\infty)}
\Bigl(1-\tfrac{\mu_\eta}{2}\alpha\Bigr)^{K}
\big\|\mPhi\vtheta^*_\eta\big\|_\infty^2\nonumber\\
&\quad+\;
\frac{2(1+\gamma^{2}\left\| \mGamma_\eta \right\|^2_\infty)\,l_\eta}{\mu_\eta^{2}(1-\gamma^{2}\left\| \mGamma_\eta \right\|^2_\infty)}\,\alpha\,g_{3,\eta}.
\end{align*}
\end{lemma}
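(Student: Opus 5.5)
The plan is to chain three earlier results: Proposition~\ref{prop:outer-loop-decomposition}, then Lemma~\ref{lem:k-stepinnerbound} at $k=K$, then Lemma~\ref{lem:init_lossgap}. The only real choice is the free parameter $\delta$ in the proposition.

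Write $c:=\gamma^2\|\mGamma_\eta\|_\infty^2$. We assume $c<1$, which is the contraction regime in which $\vtheta^*_\eta$ is well defined, e.g.\ via Lemma~\ref{lem:eta-projection-condition}. We choose $\delta$ so that the outer-loop coefficient becomes the midpoint between $c$ and $1$:
\[
(1+\delta^{-1})c=\tfrac{1+c}{2}.
\]
This gives $\delta^{-1}=\tfrac{1-c}{2c}$, i.e.\ $\delta=\tfrac{2c}{1-c}>0$, and hence $1+\delta=\tfrac{1+c}{1-c}$. With this choice, Proposition~\ref{prop:outer-loop-decomposition} reads
\[
y_t\le \frac{2(1+c)}{\mu_\eta(1-c)}\,x_K+\frac{1+c}{2}\,y_{t-1},
\]
where $x_K:=\E[L_\eta(\vtheta_{t,K},\vtheta_{t-1,K})-L_\eta(\vtheta^*(\vtheta_{t-1,K}),\vtheta_{t-1,K})]$. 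The second term is already the last summand inside the bracket multiplying $y_{t-1}$ in the claim.

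Next, we bound $x_K$ with Lemma~\ref{lem:k-stepinnerbound} at $k=K$; this is valid under $\alpha\le 3\mu_\eta/(l_\eta g_{1,\eta})$. It gives
\[
x_K\le\bigl(1-\tfrac{\mu_\eta}{2}\alpha\bigr)^K x_0+\tfrac{l_\eta}{\mu_\eta}\alpha\,(g_{2,\eta}y_{t-1}+g_{3,\eta}).
\]
The extra condition $\alpha\le 2/\mu_\eta$ makes $1-\tfrac{\mu_\eta}{2}\alpha\ge 0$. This keeps the power $(1-\tfrac{\mu_\eta}{2}\alpha)^K$ nonnegative, so substituting an upper bound for $x_0$ preserves the inequality.

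Since $\vtheta_{t,0}=\vtheta_{t-1,K}$, we take expectations in Lemma~\ref{lem:init_lossgap} to get
\[
x_0\le R_{\max}^2+8y_{t-1}+8\|\mPhi\vtheta^*_\eta\|_\infty^2.
\]
Substituting this and multiplying through by $\tfrac{2(1+c)}{\mu_\eta(1-c)}$ produces exactly the stated terms:
\begin{itemize}
\item $\tfrac{16(1+c)}{\mu_\eta(1-c)}(1-\tfrac{\mu_\eta}{2}\alpha)^K$ multiplying both $y_{t-1}$ and $\|\mPhi\vtheta^*_\eta\|_\infty^2$;
\item $\tfrac{2(1+c)}{\mu_\eta(1-c)}(1-\tfrac{\mu_\eta}{2}\alpha)^K R_{\max}^2$;
\item $\tfrac{2(1+c)l_\eta}{\mu_\eta^2(1-c)}\alpha$ multiplying $g_{2,\eta}y_{t-1}$ and $g_{3,\eta}$.
\end{itemize}
Collecting the $y_{t-1}$ coefficients gives the bracket in the statement.

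The argument is bookkeeping; the one step needing care is the choice of $\delta$. It must be positive, which requires $c<1$, and it must make $1+\delta=(1+c)/(1-c)$ so that the prefactors match the statement. Beyond that, we only need to confirm that the step-size conditions let the earlier lemmas be applied, with the nonnegativity of $1-\tfrac{\mu_\eta}{2}\alpha$ used when substituting the bound on $x_0$.
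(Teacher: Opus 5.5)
Your proposal is correct and follows the paper's proof essentially step for step. You use the same choice $\delta = 2\gamma^2\|\mGamma_\eta\|_\infty^2/(1-\gamma^2\|\mGamma_\eta\|_\infty^2)$ in Proposition~\ref{prop:outer-loop-decomposition}, then Lemma~\ref{lem:k-stepinnerbound} at $k=K$, then Lemma~\ref{lem:init_lossgap} with $\vtheta_{t,0}=\vtheta_{t-1,K}$; your remark that $\alpha\le 2/\mu_\eta$ keeps $(1-\tfrac{\mu_\eta}{2}\alpha)^K\ge 0$, which justifies substituting the upper bound on the initial loss gap, is a detail the paper leaves implicit.
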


\begin{proof}
Let 
\[
y_t := \E\!\left[\|\mPhi\vtheta_{t,K}-\mPhi\vtheta^*_\eta\|_\infty^2\right],
\qquad 
y_{t-1} := \E\!\left[\|\mPhi\vtheta_{t-1,K}-\mPhi\vtheta^*_\eta\|_\infty^2\right].
\]

\noindent From Proposition~\ref{prop:outer-loop-decomposition},

\begin{align*}
y_t 
&\le \frac{2 (1+\delta)}{\mu_{\eta}}\left(\E\left[ L_{\eta}(\vtheta_{t,K},\vtheta_{t-1,K})-L_\eta(\vtheta^*(\vtheta_{t-1,K}),\vtheta_{t-1,K}) \right] \right)+ \gamma^2\left\| \mGamma_\eta \right\|^2_\infty (1+\delta^{-1})y_{t-1}.
\end{align*}

\noindent With \(\displaystyle \delta=\frac{2\gamma^{2}\left\| \mGamma_\eta \right\|^2_\infty}{1-\gamma^{2}\left\| \mGamma_\eta \right\|^2_\infty}\), we have
\[
1+\delta=\frac{1+\gamma^{2}\left\| \mGamma_\eta \right\|^2_\infty}{1-\gamma^{2}\left\| \mGamma_\eta \right\|^2_\infty},
\qquad
\gamma^{2}\left\| \mGamma_\eta \right\|^2_\infty\bigl(1+\delta^{-1}\bigr)=\frac{1+\gamma^{2}\left\| \mGamma_\eta \right\|^2_\infty}{2}.
\]
Hence,
\[
y_t 
\;\le\;
\frac{2(1+\gamma^{2}\left\| \mGamma_\eta \right\|^2_\infty)}{\mu_{\eta}(1-\gamma^{2}\left\| \mGamma_\eta \right\|^2_\infty)}\,
\E\!\left[ L_{\eta}(\vtheta_{t,k},\vtheta_{t-1,K})-L_\eta(\vtheta^*(\vtheta_{t-1,K}),\vtheta_{t-1,K}) \right]
\;+\;
\frac{1+\gamma^{2}\left\| \mGamma_\eta \right\|^2_\infty}{2}\,y_{t-1}.
\]

\noindent Using Lemma~\ref{lem:k-stepinnerbound}, replacing 
\(\E\!\Bigl[L_{\eta}(\vtheta_{t,K},\vtheta_{t-1,K})
   - L_\eta(\vtheta^*(\vtheta_{t-1,K}),\vtheta_{t-1,K})\Bigr]\) gives
\begin{align}
y_t 
&\le
\frac{2(1+\gamma^{2}\left\| \mGamma_\eta \right\|^2_\infty)}{\mu_{\eta}(1-\gamma^{2}\left\| \mGamma_\eta \right\|^2_\infty)}
\Bigg(
\Bigl(1-\tfrac{\mu_\eta}{2}\alpha\Bigr)^{K}
\E\!\Bigl[
L_\eta(\vtheta_{t,0},\vtheta_{t-1,K})
- L_\eta(\vtheta^*(\vtheta_{t-1,K}),\vtheta_{t-1,K})
\Bigr] 
+
\frac{l_\eta}{\mu_\eta}\,\alpha
\Big[
g_{2,\eta}\,y_{t-1}
+ g_{3,\eta}
\Big]
\Bigg)\nonumber\\
&+
\frac{1+\gamma^{2}\left\| \mGamma_\eta \right\|^2_\infty}{2}\,y_{t-1}.
\label{eq:yt-after-kstep}
\end{align}

\noindent Next, applying Lemma~\ref{lem:init_lossgap} yields
\begin{align}
y_t 
&\le
\frac{2(1+\gamma^{2}\left\| \mGamma_\eta \right\|^2_\infty)}{\mu_{\eta}(1-\gamma^{2}\left\| \mGamma_\eta \right\|^2_\infty)}
\Bigg(
\Bigl(1-\tfrac{\mu_\eta}{2}\alpha\Bigr)^{K}
\Big[
8\,y_{t-1}
+ R_{\max}^2
+ 8\,\big\|\mPhi\vtheta^*_\eta\big\|_\infty^2
\Big]
+
\frac{l_\eta}{\mu_\eta}\,\alpha
\Big[
g_{2,\eta}\,y_{t-1}
+ g_{3,\eta}
\Big]
\Bigg)
+
\frac{1+\gamma^{2}\left\| \mGamma_\eta \right\|^2_\infty}{2}\,y_{t-1} \nonumber\\[6pt]
&=
\Bigg[
\frac{16(1+\gamma^{2}\left\| \mGamma_\eta \right\|^2_\infty)}{\mu_{\eta}(1-\gamma^{2}\left\| \mGamma_\eta \right\|^2_\infty)}
\Bigl(1-\tfrac{\mu_\eta}{2}\alpha\Bigr)^{K}
\;+\;
\frac{2(1+\gamma^{2}\left\| \mGamma_\eta \right\|^2_\infty)\,l_\eta}{\mu_\eta^{2}(1-\gamma^{2}\left\| \mGamma_\eta \right\|^2_\infty)}\,\alpha\,g_{2,\eta}
\;+\;
\frac{1+\gamma^{2}\left\| \mGamma_\eta \right\|^2_\infty}{2}
\Bigg] y_{t-1} \nonumber\\
&\quad+\;
\frac{2(1+\gamma^{2}\left\| \mGamma_\eta \right\|^2_\infty)}{\mu_{\eta}(1-\gamma^{2}\left\| \mGamma_\eta \right\|^2_\infty)}
\Bigl(1-\tfrac{\mu_\eta}{2}\alpha\Bigr)^{K} R_{\max}^2
\;+\;
\frac{16(1+\gamma^{2}\left\| \mGamma_\eta \right\|^2_\infty)}{\mu_{\eta}(1-\gamma^{2}\left\| \mGamma_\eta \right\|^2_\infty)}
\Bigl(1-\tfrac{\mu_\eta}{2}\alpha\Bigr)^{K}
\big\|\mPhi\vtheta^*_\eta\big\|_\infty^2 \nonumber\\
&\quad+\; \frac{2(1+\gamma^{2}\left\| \mGamma_\eta \right\|^2_\infty)\,l_\eta}{\mu_\eta^{2}(1-\gamma^{2}\left\| \mGamma_\eta \right\|^2_\infty)}\,\alpha\,g_{3,\eta}.
\label{eq:yt-full-chain}
\end{align}

\noindent This concludes the proof and establishes the desired result.
\end{proof}

\subsection{Proof of Theorem~\ref{thm:epsilon-accuracy-main}}
\label{thm:epsilon-accuracy}

\begin{proof} Let 
\[
y_t := \E\!\left[\|\mPhi\vtheta_{t,K}-\mPhi\vtheta^*_\eta\|_\infty^2\right],
\qquad 
y_{t-1} := \E\!\left[\|\mPhi\vtheta_{t-1,K}-\mPhi\vtheta^*_\eta\|_\infty^2\right].
\]
Fix $\displaystyle \delta=\frac{2\gamma^{2}\left\| \mGamma_\eta \right\|^2_\infty}{1-\gamma^{2}\left\| \mGamma_\eta \right\|^2_\infty}$
and assume
$\bigg( 0 < \alpha \le \min\!\left\{\frac{3\mu_\eta}{l_\eta g_{1,\eta}},\,\frac{2}{\mu_\eta}\right\} \bigg)$.

\noindent From Lemma~\ref{lem:main_recur}, we have
\begin{align}
\label{eq:mainrecur}
y_t 
&\le
\Bigg[
\frac{16(1+\gamma^{2}\left\| \mGamma_\eta \right\|^2_\infty)}{\mu_{\eta}(1-\gamma^{2}\left\| \mGamma_\eta \right\|^2_\infty)}
\Bigl(1-\tfrac{\mu_\eta}{2}\alpha\Bigr)^{K}
+
\frac{2(1+\gamma^{2}\left\| \mGamma_\eta \right\|^2_\infty)\,l_\eta}{\mu_\eta^{2}(1-\gamma^{2}\left\| \mGamma_\eta \right\|^2_\infty)}\,\alpha\,g_{2,\eta}
+
\frac{1+\gamma^{2}\left\| \mGamma_\eta \right\|^2_\infty}{2}
\Bigg] y_{t-1} \nonumber\\
&\quad+\;
\frac{2(1+\gamma^{2}\left\| \mGamma_\eta \right\|^2_\infty)}{\mu_{\eta}(1-\gamma^{2}\left\| \mGamma_\eta \right\|^2_\infty)}
\Bigl(1-\tfrac{\mu_\eta}{2}\alpha\Bigr)^{K} R_{\max}^2
+\frac{16(1+\gamma^{2}\left\| \mGamma_\eta \right\|^2_\infty)}{\mu_{\eta}(1-\gamma^{2}\left\| \mGamma_\eta \right\|^2_\infty)}
\Bigl(1-\tfrac{\mu_\eta}{2}\alpha\Bigr)^{K}
\|\mPhi\vtheta^*_\eta\|_\infty^2 \nonumber \\
&\quad+\;\frac{2(1+\gamma^{2}\left\| \mGamma_\eta \right\|^2_\infty)\,l_\eta}{\mu_\eta^{2}(1-\gamma^{2}\left\| \mGamma_\eta \right\|^2_\infty)}\,\alpha\,g_{3,\eta}.
\end{align}

Let us define, for convenience,
\begin{equation}
\label{eq:residue}
\gE_{K,\alpha}
:=
\frac{2(1+\gamma^{2}\left\| \mGamma_\eta \right\|^2_\infty)}{\mu_{\eta}(1-\gamma^{2}\left\| \mGamma_\eta \right\|^2_\infty)}
\Bigl(1-\tfrac{\mu_\eta}{2}\alpha\Bigr)^{K}
\!\Big(R_{\max}^2 + 8\|\mPhi\vtheta^*_\eta\|_\infty^2\Big)
+\frac{2(1+\gamma^{2}\left\| \mGamma_\eta \right\|^2_\infty)\,l_\eta}{\mu_\eta^{2}(1-\gamma^{2}\left\| \mGamma_\eta \right\|^2_\infty)}\,\alpha\,g_{3,\eta}.
\end{equation}

\noindent so that the recursion can be compactly written as
\begin{equation}
\label{eq:contraction-y}
y_t \;\le\;
\Bigg[
\frac{16(1+\gamma^{2}\left\| \mGamma_\eta \right\|^2_\infty)}{\mu_{\eta}(1-\gamma^{2}\left\| \mGamma_\eta \right\|^2_\infty)}
\Bigl(1-\tfrac{\mu_\eta}{2}\alpha\Bigr)^{K}
+
\frac{2(1+\gamma^{2}\left\| \mGamma_\eta \right\|^2_\infty)\,l_\eta}{\mu_\eta^{2}(1-\gamma^{2}\left\| \mGamma_\eta \right\|^2_\infty)}\,\alpha\,g_{2,\eta}
+
\frac{1+\gamma^{2}\left\| \mGamma_\eta \right\|^2_\infty}{2}
\Bigg] y_{t-1}
\;+\;
\gE_{K,\alpha}.
\end{equation}

We make the coefficient of $y_{t-1}$ in (\ref{eq:mainrecur}) strictly smaller than $1$ by choosing $K$ large enough. It suffices to ensure

\[
\frac{16(1+\gamma^{2}\left\| \mGamma_\eta \right\|^2_\infty)}{\mu_{\eta}(1-\gamma^{2}\left\| \mGamma_\eta \right\|^2_\infty)}
\Bigl(1-\tfrac{\mu_\eta}{2}\alpha\Bigr)^{K}
\;+\;
\frac{2(1+\gamma^{2}\left\| \mGamma_\eta \right\|^2_\infty)\,l_\eta}{\mu_\eta^{2}(1-\gamma^{2}\left\| \mGamma_\eta \right\|^2_\infty)}\,\alpha\,g_{2,\eta}
\;+\;
\frac{1+\gamma^{2}\left\| \mGamma_\eta \right\|^2_\infty}{2}
\;\le\;
1-\frac{1-\gamma^{2}\left\| \mGamma_\eta \right\|^2_\infty}{4}  = \frac{3+\gamma^{2}\left\| \mGamma_\eta \right\|^2_\infty}{4}.
\]

To guarantee this bound, it suffices to allocate half of the available margin 
$\tfrac{1-\gamma^2 \left\| \mGamma_\eta \right\|^2_\infty}{4}$ to each of the first two terms,
\[
\begin{cases}
\displaystyle 
\frac{16(1+\gamma^{2}\left\| \mGamma_\eta \right\|^2_\infty)}{\mu_{\eta}(1-\gamma^{2}\left\| \mGamma_\eta \right\|^2_\infty)}
\Bigl(1-\tfrac{\mu_\eta}{2}\alpha\Bigr)^{K}
\;\le\; \frac{1-\gamma^2\left\| \mGamma_\eta \right\|^2_\infty}{8}, \\[10pt]
\displaystyle 
\frac{2(1+\gamma^{2}\left\| \mGamma_\eta \right\|^2_\infty)\,l_\eta}{\mu_\eta^{2}(1-\gamma^{2}\left\| \mGamma_\eta \right\|^2_\infty)}\,\alpha\,g_{2,\eta}
\;\le\; \frac{1-\gamma^2\left\| \mGamma_\eta \right\|^2_\infty}{8}.
\end{cases}
\]
The second condition yields an explicit upper bound on $\alpha$,
\begin{equation}
\alpha \;\le\;
\frac{\mu_\eta^{2}(1-\gamma^{2}\left\| \mGamma_\eta \right\|^2_\infty)^{2}}
{16(1+\gamma^{2}\left\| \mGamma_\eta \right\|^2_\infty)\,l_\eta\,g_{2,\eta}}.\label{alphabound2}
\end{equation}

\noindent Substituting this into the first inequality then specifies the required 
lower bound on $K$,
\[
K \;\ge\;
\frac{\ln\!\Big(\tfrac{128(1+\gamma^{2}\left\| \mGamma_\eta \right\|^2_\infty)}{\mu_\eta(1-\gamma^{2}\left\| \mGamma_\eta \right\|^2_\infty)^{2}}\Big)}
{-\ln\!\Big(1-\tfrac{\mu_\eta}{2}\alpha\Big)}.
\]
These two design constraints ensure the desired contraction condition.

\noindent Using the inequality $-\ln(1-x)\ge x$ for $x\in(0,1)$, we further obtain
\begin{equation}
K
\;\ge\;
\frac{2}{\mu_\eta\,\alpha}\,
\ln\!\Big(\frac{128(1+\gamma^{2}\left\| \mGamma_\eta \right\|^2_\infty)}{\mu_\eta(1-\gamma^{2}\left\| \mGamma_\eta \right\|^2_\infty)^{2}}\Big).
\label{kbound1}
\end{equation}

\noindent Substituting the contraction condition derived above into the recursion in~(\ref{eq:contraction-y}), 
we obtain
\begin{equation}
y_t \;\le\; \Bigl(1-\tfrac{1-\gamma^{2}\left\| \mGamma_\eta \right\|^2_\infty}{4}\Bigr)\,y_{t-1} \;+\; \gE_{K,\alpha}.
\label{eq:one-step}
\end{equation}
Let \(a:=1-\tfrac{1-\gamma^{2}\left\| \mGamma_\eta \right\|^2_\infty}{4}=\tfrac{3+\gamma^{2}\left\| \mGamma_\eta \right\|^2_\infty}{4}\in(0,1)\).
Iterating (\ref{eq:one-step}) yields
\begin{align*}
y_t
\:\le\; \left(\frac{3+\gamma^2\left\| \mGamma_\eta \right\|^2_\infty}{4}\right)^t y_0 \;+\; \frac{1}{1-a}\,\gE_{K,\alpha}.
\end{align*}
Since \(1-a=\tfrac{1-\gamma^2\left\| \mGamma_\eta \right\|^2_\infty}{4}\), we conclude that
\begin{equation}
y_t
\;\le\;
\left(\frac{3+\gamma^2\left\| \mGamma_\eta \right\|^2_\infty}{4}\right)^t y_0
\;+\;
\frac{4}{1-\gamma^2\left\| \mGamma_\eta \right\|^2_\infty}\,\gE_{K,\alpha},
\label{eq:geom-bound}
\end{equation}

\noindent It remains to make the geometric term at most $\epsilon/2$, i.e.,
\[
\left(\frac{3+\gamma^2\left\| \mGamma_\eta \right\|^2_\infty}{4}\right)^t y_0 \;\le\; \frac{\epsilon}{2}.
\]
Taking logarithms gives
\[
t \;\ge\; \frac{\ln(2y_0/\epsilon)}{-\ln a}.
\]
Since $-\ln a \;\ge\; 1-a = \tfrac{1-\gamma^2\left\| \mGamma_\eta \right\|^2_\infty}{4}$, we have
\[
\frac{1}{-\ln a} \;\le\; \frac{4}{1-\gamma^2\left\| \mGamma_\eta \right\|^2_\infty}.
\]
Therefore, a sufficient condition is
\begin{equation}
t \;\ge\; \frac{4}{1-\gamma^2\left\| \mGamma_\eta \right\|^2_\infty}\,\ln\!\Bigg(\frac{2y_0}{\epsilon}\Bigg).
\label{eq:t-lb}
\end{equation}
In addition, to ensure the steady-state residue is at most $\epsilon/2$, it suffices to require
\begin{equation}
\frac{4}{1-\gamma^2\left\| \mGamma_\eta \right\|^2_\infty}\,\gE_{K,\alpha} \;\le\; \frac{\epsilon}{2}
\quad\Longleftrightarrow\quad
\gE_{K,\alpha} \;\le\; \frac{1-\gamma^2\left\| \mGamma_\eta \right\|^2_\infty}{8}\,\epsilon,
\label{eq:residue-lb}
\end{equation}
where $\gE_{K,\alpha}$ is defined in~(\ref{eq:residue}).

From (\ref{eq:residue-lb}), it suffices to make each term in (\ref{eq:residue}) smaller than \(\tfrac{1-\gamma^2\left\| \mGamma_\eta \right\|^2_\infty}{16}\epsilon\):
\[
\frac{2(1+\gamma^{2}\left\| \mGamma_\eta \right\|^2_\infty)}{\mu_{\eta}(1-\gamma^{2}\left\| \mGamma_\eta \right\|^2_\infty)}
\Bigl(1-\tfrac{\mu_\eta}{2}\alpha\Bigr)^{K}
\!\Big(R_{\max}^2 + 8\|\mPhi\vtheta^*_\eta\|_\infty^2\Big)
\;\le\;
\frac{1-\gamma^2\left\| \mGamma_\eta \right\|^2_\infty}{16}\,\epsilon,
\qquad
\]
\[
\frac{2(1+\gamma^{2}\left\| \mGamma_\eta \right\|^2_\infty)\,l_\eta}{\mu_\eta^{2}(1-\gamma^{2}\left\| \mGamma_\eta \right\|^2_\infty)}\,\alpha\,g_{3,\eta}
\;\le\;
\frac{1-\gamma^2\left\| \mGamma_\eta \right\|^2_\infty}{16}\,\epsilon.
\]
This allocation is sufficient to guarantee \(\gE_{K,\alpha}\le\tfrac{1-\gamma^2\left\| \mGamma_\eta \right\|^2_\infty}{8}\epsilon.\)

\noindent From the two sufficient inequalities above, we can derive explicit complexity bounds for \(K\) and \(\alpha\).

\medskip
\noindent\textbf{(a) Bound on \(K\).}
From the first inequality,
\[
\frac{2(1+\gamma^{2}\left\| \mGamma_\eta \right\|^2_\infty)}{\mu_{\eta}(1-\gamma^{2}\left\| \mGamma_\eta \right\|^2_\infty)}
\Bigl(1-\tfrac{\mu_\eta}{2}\alpha\Bigr)^{K}
\!\Big(R_{\max}^2 + 8\|\mPhi\vtheta^*_\eta\|_\infty^2\Big)
\;\le\;
\frac{1-\gamma^2\left\| \mGamma_\eta \right\|^2_\infty}{16}\,\epsilon.
\]
Rearranging gives
\[
\Bigl(1-\tfrac{\mu_\eta}{2}\alpha\Bigr)^{K}
\;\le\;
\frac{\mu_\eta(1-\gamma^{2}\left\| \mGamma_\eta \right\|^2_\infty)^{2}}{32(1+\gamma^{2}\left\| \mGamma_\eta \right\|^2_\infty)\big(R_{\max}^2 + 8\|\mPhi\vtheta^*_\eta\|_\infty^2\big)}\,\epsilon.
\]
Taking logarithms on both sides yields
\[
K
\;\ge\;
\frac{\ln\!\Big(\tfrac{32(1+\gamma^{2}\left\| \mGamma_\eta \right\|^2_\infty)\big(R_{\max}^2 + 8\|\mPhi\vtheta^*_\eta\|_\infty^2\big)}{\mu_\eta(1-\gamma^{2}\left\| \mGamma_\eta \right\|^2_\infty)^{2}\epsilon}\Big)}
{-\ln\!\big(1-\tfrac{\mu_\eta}{2}\alpha\big)}.
\]

Using the inequality \(-\ln(1-x)\ge x\) for \(x\in(0,1)\), we further have
\begin{equation}
\label{eq:K-bound-residue}
K
\;\ge\;
\frac{2}{\mu_\eta\,\alpha}
\ln\!\Big(
\frac{32(1+\gamma^{2}\left\| \mGamma_\eta \right\|^2_\infty)\big(R_{\max}^2 + 8\|\mPhi\vtheta^*_\eta\|_\infty^2\big)}{\mu_\eta(1-\gamma^{2}\left\| \mGamma_\eta \right\|^2_\infty)^{2}\epsilon}
\Big).
\end{equation}

\medskip
\noindent\textbf{(b) Bound on \(\alpha\).}
From the second inequality,
\[
\frac{2(1+\gamma^{2}\left\| \mGamma_\eta \right\|^2_\infty)\,l_\eta}{\mu_\eta^{2}(1-\gamma^{2}\left\| \mGamma_\eta \right\|^2_\infty)}\,\alpha\,g_{3,\eta}
\;\le\;
\frac{1-\gamma^2\left\| \mGamma_\eta \right\|^2_\infty}{16}\,\epsilon,
\]
which directly gives
\begin{equation}
\alpha
\;\le\;
\frac{\mu_\eta^{2}(1-\gamma^{2}\left\| \mGamma_\eta \right\|^2_\infty)^{2}}{32(1+\gamma^{2}\left\| \mGamma_\eta \right\|^2_\infty)\,l_\eta\,g_{3,\eta}}\,\epsilon.
\label{eq:alpha-bound-residue}
\end{equation}

\medskip
Combining (\ref{eq:K-bound-residue}) and (\ref{eq:alpha-bound-residue}), one obtains the sufficient conditions on \((\alpha,K)\) ensuring
\(\gE_{K,\alpha}\le\tfrac{1-\gamma^{2}\left\| \mGamma_\eta \right\|^2_\infty}{8}\epsilon\),
and consequently
\(y_t\le\epsilon\)
for \(t\) satisfying (\ref{eq:t-lb}).

Collecting the step-size conditions from Lemma~\ref{lem:1stepk}, (\ref{alphabound2}), and (\ref{eq:alpha-bound-residue}), define
\[
\bar\alpha_1:=\frac{2}{\mu_\eta},\qquad
\bar\alpha_2:=\frac{3\mu_\eta}{l_\eta g_{1,\eta}},\qquad
\bar\alpha_3:=\frac{\mu_\eta^{2}(1-\gamma^{2}\left\| \mGamma_\eta \right\|^2_\infty)^{2}}{16(1+\gamma^{2}\left\| \mGamma_\eta \right\|^2_\infty)\,l_\eta\,g_{2,\eta}},\qquad
\bar\alpha_4:=\frac{\mu_\eta^{2}(1-\gamma^{2}\left\| \mGamma_\eta \right\|^2_\infty)^{2}}{32(1+\gamma^{2}\left\| \mGamma_\eta \right\|^2_\infty)\,l_\eta\,g_{3,\eta}}\,\epsilon,
\]
and set
\[
{\bar\alpha} := \;\min\{\bar\alpha_1,\bar\alpha_2,\bar\alpha_3,\bar\alpha_4\}.
\]
Then it suffices to choose \(0<\alpha\le {\bar\alpha}\).
These four components correspond precisely to \(\bar\alpha_1,\bar\alpha_2,\bar\alpha_3,\bar\alpha_4\).

\medskip

\noindent Similarly, gathering the bounds on $K$ from~(\ref{kbound1}),~(\ref{eq:K-bound-residue}),
we obtain
\begin{align*}
    K
    \ge
    \max\!\Bigg\{
    \frac{2}{\mu_\eta\,\alpha}\,
    \ln\!\Bigg(
        \frac{32(1+\gamma^{2}\left\| \mGamma_\eta \right\|^2_\infty)
        \big(R_{\max}^2 + 8\|\mPhi\vtheta^*_\eta\|_\infty^2\big)}
        {\mu_\eta(1-\gamma^{2}\left\| \mGamma_\eta \right\|^2_\infty)^{2}\epsilon}     
    \Bigg), 
    \frac{2}{\mu_\eta\,\alpha}\,
    \ln\!\Bigg(\frac{128(1+\gamma^{2}\left\| \mGamma_\eta \right\|^2_\infty)}{\mu_\eta(1-\gamma^{2}\left\| \mGamma_\eta \right\|^2_\infty)^{2}}\Bigg)
    \Bigg\}
\end{align*}

Replacing $\alpha$ with its asymptotically minimal bound
$\alpha \asymp \frac{\mu_\eta^{2}(1-\gamma^{2}\left\| \mGamma_\eta \right\|^2_\infty)^{2}}{(1+\gamma^{2}\left\| \mGamma_\eta \right\|^2_\infty)\,l_\eta\,g_{3,\eta}}\;\epsilon$
gives the $\alpha$–free form
\[
K
\ge
\frac{2(1+\gamma^{2}\left\| \mGamma_\eta \right\|^2_\infty)\,l_\eta\,g_{3,\eta}}
     {\mu_\eta^{3}(1-\gamma^{2}\left\| \mGamma_\eta \right\|^2_\infty)^{2}\,\epsilon}\;
\max\!\left\{
\ln\!\frac{32(1+\gamma^{2}\left\| \mGamma_\eta \right\|^2_\infty)\big(R_{\max}^2+8\|\mPhi\vtheta^*_\eta\|_\infty^2\big)}
         {\mu_\eta(1-\gamma^{2}\left\| \mGamma_\eta \right\|^2_\infty)^{2}\epsilon},
\;
\ln\!\frac{128(1+\gamma^{2}\left\| \mGamma_\eta \right\|^2_\infty)}{\mu_\eta(1-\gamma^{2}\left\| \mGamma_\eta \right\|^2_\infty)^{2}}
\right\}
.
\]

Since $g_{3,\eta}
= 32(1+\eta)\gamma^2\|\mPhi\vtheta^*_{\eta}\|_\infty^2+(16+16\eta)R_{\max}^2+8\sigma_{\eta}^2$
depends on $\|\vtheta^*_\eta\|^2_{\infty}$, absorbing these constants into the complexity,
there exists a choice of iteration numbers of the form
\[
K \;=\; \gO\!\left(\frac{l_\eta\,\|\vtheta^*_{\eta}\|^2_2}
{\epsilon\mu_\eta^{3}(1-\gamma\left\| \mGamma_\eta \right\|_\infty)^{2}}\right),
\qquad
t \;=\; \gO\!\left(\frac{1}{1-\gamma\left\| \mGamma_\eta \right\|_\infty}\right),
\]
for which the desired accuracy guarantee holds.

\end{proof}


\section{Analysis under the Markovian observation model}\label{app:sec:markov}

In this section, we present a detailed analysis and establish the convergence rate under the Markovian observation model introduced in Section~\ref{sec:markovian_obs_model}.

\subsection{Markov chain and Poisson Equation}

For the analysis of the Markovian observation model in Section~\ref{sec:markovian_obs_model}, we introduce the so-called Poisson's equation. The Poisson equation~\citep{glynn1996liapounov} serves as a fundamental tool in the study of Markov chains and has been utilized in various works, including~\citet{haque2024stochastic}, for the analysis of stochastic approximation schemes. Following the approach of~\citet{haque2024stochastic}, we leverage this framework to establish our results.

Let $\{(S_k,A_k)\}_{k=0}^{\infty}$ be a sequence of random variables induced by the irreducible Markov chain with behavior policy $\beta$ in Section~\ref{sec:markovian_obs_model}. Then, for some functions $\varphi,\psi:\gS\times\gA\to\R$, the Poisson's equation is defined as
\begin{align*}
    \E\left[ \psi(S_{1},A_{1})  \middle | (S_0,A_0)=(s,a)\right] - \psi(s,a) = -  \varphi(s,a)
\end{align*}
 Given $\varphi$, a candidate solution for $\psi$ is $\E\left[ \sum^{\tau(\tilde{s},\tilde{a})-1}_{k=0} \varphi(S_k,A_k)\middle | (S_0,A_0)=(s,a)\right]$  where $\tau(\tilde{s},\tilde{a})=\inf\{ n \geq 1 : (S_n, A_n)=(\tilde{s},\tilde{a}) \}$ is a hitting time for some $(\tilde{s},\tilde{a})\in\gS\times\gA$.

\subsection{Main Analysis}

First, we define two key quantities used throughout the analysis.
First, let
\begin{align}
    \Bar{g}(\vtheta,\vtheta^{\prime};s,a) :=\sum_{s^{\prime}\in\gS} \gP(s^{\prime} \mid s,a )  g(\vtheta,\vtheta^{\prime};s,a,s^{\prime}), \label{def:bar_g}
\end{align}
and
\begin{equation}\label{def:poisson-sol}
    V(\vtheta,\vtheta^{\prime},s,a) = \E\left[  \sum^{\tau(\tilde{s},\tilde{a})-1}_{k=0} 
    \bar{g}(\vtheta,\vtheta^{\prime};S_{k},A_{k})-\nabla L_{\eta}(\vtheta,\vtheta^{\prime})\middle| (S_0,A_0)=(s,a)\right].
\end{equation}
 For simplicity, let us denote $\tau=\tau(\tilde{s},\tilde{a})$. With a slight abuse of notation, we define $L_{\eta}$ in~(\ref{eq:reg-mspbe}) by taking $d$ to be the stationary distribution $\mu_{\infty}$.

\begin{lemma}\label{lem:poisson-eq}
Consider the sequence of random variables $\{(S_k,A_k)\}_{k=0}^{\infty}$ induced by the Markov chain. Then, for $\vtheta,\vtheta^{\prime}\in\R^h$, the following equation holds : 
    \begin{align*}
         & V(\vtheta,\vtheta^{\prime},s,a)-\E\left[ V(\vtheta,\vtheta^{\prime},S_{1},A_{1})  \middle | (S_0,A_0)=(s,a) \right] =\bar{g}(\vtheta,\vtheta^{\prime};s,a)- \nabla L_{\eta}(\vtheta,\vtheta^{\prime}).
    \end{align*}
\end{lemma}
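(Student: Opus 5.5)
Fix $\vtheta,\vtheta^{\prime}$ and write $f(s,a):=\bar{g}(\vtheta,\vtheta^{\prime};s,a)-\nabla L_{\eta}(\vtheta,\vtheta^{\prime})$. Then $V(\vtheta,\vtheta^{\prime},s,a)=\E[\sum_{k=0}^{\tau-1} f(S_k,A_k)\mid (S_0,A_0)=(s,a)]$ with $\tau=\tau(\tilde{s},\tilde{a})$. The plan is the standard regenerative (first-step) argument for the Poisson equation.

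\textbf{Step 1: finiteness.} The chain is finite and irreducible, so $\E[\tau\mid (S_0,A_0)=(s,a)]<\infty$ for every starting pair. Since $f$ takes finitely many values, it is bounded, and hence $V$ is finite.

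\textbf{Step 2: the key identity.} Since $\tau\ge 1$, I split off the $k=0$ term:
\[
V(s,a)=f(s,a)+\E\Bigl[\sum_{k=1}^{\tau-1} f(S_k,A_k)\Bigm| (S_0,A_0)=(s,a)\Bigr].
\]
I then condition on $(S_1,A_1)$ and use the Markov property. If $(S_1,A_1)\neq(\tilde{s},\tilde{a})$, the remaining hitting time from time $1$ is exactly the hitting time $\tau$ of the shifted chain, so the conditional expectation of the tail sum equals $V(S_1,A_1)$. If $(S_1,A_1)=(\tilde{s},\tilde{a})$, then $\tau=1$ and the tail sum is empty, whereas $V(\tilde{s},\tilde{a})$ is the expected sum over one full excursion starting at $(\tilde{s},\tilde{a})$. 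This gives
\[
V(s,a)=f(s,a)+\E[V(S_1,A_1)\mid s,a]-P_{\beta}(\tilde{s},\tilde{a}\mid s,a)\,V(\tilde{s},\tilde{a}).
\]

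\textbf{Step 3: the correction term vanishes.} This is the main obstacle. I would invoke the regenerative (Kac) identity: for a positive recurrent chain,
\[
\E\Bigl[\sum_{k=0}^{\tau-1} f(S_k,A_k)\Bigm| (S_0,A_0)=(\tilde{s},\tilde{a})\Bigr]=\E[\tau\mid \tilde{s},\tilde{a}]\sum_{(s,a)}\mu_{\infty}(s,a)f(s,a).
\]
By the definition of $\nabla L_{\eta}$ with $d=\mu_{\infty}$, we have $\sum_{(s,a)}\mu_{\infty}(s,a)\bar{g}(\vtheta,\vtheta^{\prime};s,a)=\nabla L_{\eta}(\vtheta,\vtheta^{\prime})$, which is the paper's convention for $L_\eta$ in the Markovian section. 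Hence the stationary mean of $f$ is zero, so $V(\tilde{s},\tilde{a})=0$. To justify Kac's identity, I would show that $\nu(s,a):=\E[\sum_{k<\tau}\1\{(S_k,A_k)=(s,a)\}\mid \tilde{s},\tilde{a}]$ is an invariant measure with total mass $\E[\tau]$. By uniqueness of the stationary distribution, $\nu/\E[\tau]=\mu_{\infty}$.

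\textbf{Step 4: conclusion.} Substituting $V(\tilde{s},\tilde{a})=0$ into the identity of Step 2 and rearranging gives $V-\E[V(S_1,A_1)\mid s,a]=\bar{g}-\nabla L_{\eta}$, which is the claimed equation. The interchange of sums and expectations throughout is justified by boundedness of $f$ and $\E[\tau]<\infty$.
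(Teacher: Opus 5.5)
Your proposal is correct and follows the same first-step (regenerative) argument as the paper: split off the $k=0$ term, identify the tail sum on $\{\tau\ge 2\}$ with $V(S_1,A_1)$ via the Markov property applied to the shifted chain, and remove the leftover contribution at $(\tilde{s},\tilde{a})$ using $V(\vtheta,\vtheta^{\prime},\tilde{s},\tilde{a})=0$. The only difference is that you justify this last fact through Kac's identity together with the convention $d=\mu_{\infty}$ in $\nabla L_{\eta}$, whereas the paper asserts it without proof.
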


\begin{proof}
    From the definition of $V$ in~(\ref{def:poisson-sol}), we have
\begin{align*}
&V(\vtheta,\vtheta^{\prime},s,a)-\E\left[ V(\vtheta,\vtheta^{\prime},S_{1},A_{1})  \middle |(S_0,A_0)=(s,a) \right]\\
=&  \bar{g}(\vtheta,\vtheta^{\prime};s,a)- \nabla L_{\eta}(\vtheta,\vtheta^{\prime})+\E\left[ \bm{1}\{\tau\geq2\}\left(  \sum^{\tau-1}_{k=1} 
    \bar{g}(\vtheta,\vtheta^{\prime};S_{k},A_{k})-\nabla L_{\eta}(\vtheta,\vtheta^{\prime})\right)\middle| (S_0,A_0)=(s,a)\right]\\
    &-\E\left[ \E\left[\sum^{\tilde{\tau}-1}_{k=0}\bar{g}(\vtheta,\vtheta^{\prime};\tilde{S}_{k},\tilde{A}_{k})-\nabla L_{\eta}(\vtheta, \vtheta^{\prime})  \middle |(\tilde{S}_0,\tilde{A}_0)=(S_{1},A_{1}) \right] \middle | (S_0,A_0)=(s,a)\right] \\
    =& \bar{g}(\vtheta,\vtheta^{\prime};s,a)- \nabla L_{\eta}(\vtheta,\vtheta^{\prime}).
\end{align*}
where $\tilde{\tau}$ is the hitting time defined by a sequence of random variables $\{(\tilde{S}_k, \tilde{A}_k)\}_{k=0}^{\infty}$ induced by the Markov chain. The second equality follows from the fact that conditioned on $(\tilde{S}_0,\tilde{A}_0)=(S_1,A_1)$, $\tilde{\tau}$ follows the same law of distribution of $\tau$ for $\tau\geq 2$ and $V(\vtheta,\vtheta^{\prime},\tilde{s},\tilde{a})=0$.

\end{proof}

Now, let us provide several useful properties related to the solution of Poisson's equation, $V$:

\begin{lemma}\label{lem:V-theta-eta-bound}
    For $(s,a)\in\gS\times\gA$, we have
    \begin{align*}
        \left\| V(\vtheta^*_{\eta},\vtheta^*_{\eta},s,a) \right\|_2 \leq  \tau_{\max} \left( R_{\max}+(1+\gamma+\eta)\left\| \vtheta^*_{\eta} \right\|_2 \right).
    \end{align*}
\end{lemma}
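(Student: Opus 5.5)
Since $V$ is defined in~(\ref{def:poisson-sol}) as the expected sum over the excursion up to the hitting time, the plan is to bound the summand uniformly and multiply by the excursion length. Fix $(s,a)$ and write
\begin{align*}
\left\| V(\vtheta^*_{\eta},\vtheta^*_{\eta},s,a)\right\|_2 \le \E\left[ \sum_{k=0}^{\tau-1} \left\| \bar{g}(\vtheta^*_{\eta},\vtheta^*_{\eta};S_k,A_k)-\nabla L_{\eta}(\vtheta^*_{\eta},\vtheta^*_{\eta})\right\|_2 \middle| (S_0,A_0)=(s,a)\right],
\end{align*}
using Jensen's inequality and the triangle inequality inside the expectation.

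The key observation is that $\nabla L_{\eta}(\vtheta^*_{\eta},\vtheta^*_{\eta})=\bm{0}$. Indeed, $\vtheta^*_{\eta}$ solves~(\ref{eq:rpbe}), which is exactly the first-order optimality condition~(\ref{eq:target_rpbe}) with $\vtheta=\vtheta^{\prime}=\vtheta^*_{\eta}$, where $d$ is taken to be the stationary distribution $\mu_{\infty}$ as stipulated. Hence only $\bar{g}$ remains.

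Next I bound $\bar g$ pointwise. By~(\ref{def:bar_g}),
\begin{align*}
\bar{g}(\vtheta^*_{\eta},\vtheta^*_{\eta};s,a)=-\sum_{s^{\prime}}\gP(s^{\prime}\mid s,a)\Bigl(r(s,a,s^{\prime})+\gamma\max_{u}\vphi(s^{\prime},u)^{\top}\vtheta^*_{\eta}-\vphi(s,a)^{\top}\vtheta^*_{\eta}\Bigr)\vphi(s,a)+\eta\vtheta^*_{\eta}.
\end{align*}
Using $|r|\le R_{\max}$ and the feature bound $\|\vphi(s,a)\|_2\le 1$ (the same normalization used in Lemma~\ref{lem:phi-lipschitz} and Lemma~\ref{lem:theta-star-lipschitz}), Cauchy--Schwarz gives $|\vphi^{\top}\vtheta^*_{\eta}|\le\|\vtheta^*_{\eta}\|_2$, and the max term is bounded likewise. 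Therefore
\begin{align*}
\|\bar g\|_2\le R_{\max}+\gamma\|\vtheta^*_\eta\|_2+\|\vtheta^*_\eta\|_2+\eta\|\vtheta^*_\eta\|_2.
\end{align*}

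Finally, since the summand is bounded by this constant, $\|V\|_2\le \E[\tau\mid (S_0,A_0)=(s,a)]\,\bigl(R_{\max}+(1+\gamma+\eta)\|\vtheta^*_\eta\|_2\bigr)$. I then bound $\E[\tau]\le\tau_{\max}$, interpreting $\tau_{\max}$ as the paper does, namely as a bound on the (expected) hitting time from any starting pair. Irreducibility on a finite space guarantees this is finite.

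The only delicate points are this interpretation of $\tau_{\max}$ and the feature-norm convention $\|\vphi\|_2\le1$. Neither requires real work beyond matching the paper's conventions.
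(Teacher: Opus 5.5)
Your proof is correct and follows the same route as the paper. The paper also drops $\nabla L_{\eta}(\vtheta^*_{\eta},\vtheta^*_{\eta})$, without comment; it vanishes because $\vtheta^*_{\eta}$ solves~(\ref{eq:rpbe}) with $d=\mu_{\infty}$. It then bounds $\bar g(\vtheta^*_{\eta},\vtheta^*_{\eta};\cdot,\cdot)$ uniformly by $R_{\max}+(1+\gamma+\eta)\|\vtheta^*_{\eta}\|_2$ and multiplies by the expected excursion length, which is bounded by $\tau_{\max}$.

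Neither of the two points you flagged as delicate is a problem. The feature bound is not merely a convention: $\|\vphi(s,a)\|_2\le\|\vphi(s,a)\|_1\le\|\mPhi\|_{\infty}\le 1$ follows directly from Assumption~\ref{assmp:feature_matrix}. Reading $\tau_{\max}$ as a uniform bound on expected hitting times is also exactly how the paper uses it, for example in Lemma~\ref{lem:poisson-property}.
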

\begin{proof}
 We have
 \begin{align*}
             \left\| V(\vtheta^*_{\eta},\vtheta^*_{\eta},s,a) \right\|_2 =& \left\| \E\left[ \sum^{\tau-1}_{k=0} \bar{g}(\vtheta^*_{\eta},\vtheta^*_{\eta};S_k,A_k) \middle | (S_0,A_0)=(s,a) \right]\right\|_2\\
             =& \E\left[ \sum^{\tau-1}_{k=0}(R_{\max}+ (1+\gamma) \left\|\vtheta^*_{\eta} \right\|_2)+ \eta\left\| \vtheta^*_{\eta} \right\|_2 \middle | (S_0,A_0)=(s,a)\right] \\
             \leq & \tau_{\max} \left( R_{\max}+(1+\gamma+\eta)\left\| \vtheta^*_{\eta} \right\|_2 \right).
 \end{align*}
The second equality follows from the definition of $\bar{g}$ in~(\ref{def:bar_g}). This completes the proof.
\end{proof}

\begin{lemma}[Properties of $V$]\label{lem:poisson-property}
For $\vx,\vy,\vtheta^{\prime}\in\R^h$ and $(s,a)\in\gS\times\gA$, we have
    \begin{align*}
       \left\| V(\vx,\vtheta^{\prime},s,a)-V(\vy,\vtheta^{\prime},s,a)  \right\|_2 \leq &  l_{V_1} \left\| \vx-\vy  \right\|_2,\\
               \left\| V(\vx,\vtheta^{\prime},s,a)-V(\vx,\vtheta,s,a)  \right\|_2 \leq & l_{V_2} \left\| \mPhi\vtheta-\mPhi\vtheta^{\prime}\right\|_{\infty},\\
                   \left\|V(\vtheta,\vtheta^{\prime},s,a) \right\|_2\leq & l_{V_1}\left\|\vtheta-\vtheta^*_{\eta} \right\|_2+l_{V_2}\left\| \mPhi\vtheta^{\prime}-\mPhi\vtheta^*_{\eta} \right\|_{\infty} +l_{V_3}.
    \end{align*}
\end{lemma}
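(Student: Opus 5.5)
We will argue directly from the definition of $V$ in~(\ref{def:poisson-sol}). Write $V$ as a conditional expectation of a random sum of length $\tau$, and bound the summand pointwise. Then use $\E[\tau\mid (S_0,A_0)=(s,a)]\le\tau_{\max}$, interpreting $\tau_{\max}$ as a bound on the expected hitting time, as the paper implicitly does in Lemma~\ref{lem:V-theta-eta-bound}. The key observation is that, for fixed $(S_k,A_k)$, the summand $\bar g(\vtheta,\vtheta';S_k,A_k)-\nabla L_\eta(\vtheta,\vtheta')$ depends on $(\vtheta,\vtheta')$ in a controlled Lipschitz way. Also, $\nabla L_\eta(\vtheta,\vtheta')=\sum_{(s,a)}\mu_\infty(s,a)\bar g(\vtheta,\vtheta';s,a)$, so it is an average of $\bar g$ over $(s,a)$.

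\textbf{First inequality.} Since $V(\vx,\vtheta',s,a)-V(\vy,\vtheta',s,a)$ equals
\[
\E\Big[\sum_{k=0}^{\tau-1}\big(\bar g(\vx,\vtheta';S_k,A_k)-\bar g(\vy,\vtheta';S_k,A_k)\big)-\big(\nabla L_\eta(\vx,\vtheta')-\nabla L_\eta(\vy,\vtheta')\big)\Big],
\]
we apply the triangle inequality. Lemma~\ref{lem:phi-lipschitz} bounds the first difference by $(1+\eta)\|\vx-\vy\|_2$. The gradient difference is an average of the same $\bar g$-differences, so it obeys the same bound. Each summand is therefore at most $2(1+\eta)\|\vx-\vy\|_2$, and summing over $\tau$ terms gives $l_{V_1}=2\tau_{\max}(1+\eta)$.

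\textbf{Second inequality.} Repeat the argument in the second argument. The $\bar g$ bound and the $\nabla L_\eta$ bound of Lemma~\ref{lem:phi-lipschitz} each give $\gamma\|\mPhi(\vtheta-\vtheta')\|_\infty$. Each summand is then bounded by $2\gamma\|\mPhi(\vtheta-\vtheta')\|_\infty$, yielding $2\tau_{\max}\gamma\le l_{V_2}$. The $\max\{\cdot,1\}$ in $l_{V_2}$ is harmless slack.

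\textbf{Third inequality.} Telescope via the triangle inequality:
\[
\|V(\vtheta,\vtheta',s,a)\|_2\le \|V(\vtheta,\vtheta')-V(\vtheta^*_\eta,\vtheta')\|_2+\|V(\vtheta^*_\eta,\vtheta')-V(\vtheta^*_\eta,\vtheta^*_\eta)\|_2+\|V(\vtheta^*_\eta,\vtheta^*_\eta)\|_2 .
\]
Here the $(s,a)$ arguments are suppressed. The first two terms are handled by the inequalities just proved. The last term is bounded by Lemma~\ref{lem:V-theta-eta-bound}, which gives exactly $l_{V_3}$.

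\textbf{Main obstacle.} The only delicate point is justifying that the random-length sum's expectation is bounded by $\tau_{\max}$ times the pointwise bound. This uses Wald-type reasoning: the summand bounds are deterministic, so $\E[\sum_{k<\tau} c]=c\,\E[\tau]$. It also uses finiteness of $\E[\tau]$, which follows from irreducibility of the finite chain. Everything else is direct application of Lemma~\ref{lem:phi-lipschitz}.
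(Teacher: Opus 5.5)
Your proposal is correct and follows essentially the same route as the paper: you bound the summand of the Poisson solution pointwise via Lemma~\ref{lem:phi-lipschitz}, multiply by $\E[\tau]\le\tau_{\max}$ (read, as you note, as a bound on the expected hitting time), and telescope through $(\vtheta^*_{\eta},\vtheta^*_{\eta})$ using Lemma~\ref{lem:V-theta-eta-bound}. The differences are cosmetic: the paper controls the $\nabla L_{\eta}$ difference by $l_{\eta}$-smoothness (implicitly using $l_{\eta}\le 1+\eta$), whereas you use the averaging identity $\nabla L_{\eta}=\sum_{(s,a)}\mu_{\infty}(s,a)\bar{g}(\cdot;s,a)$; and your intermediate point $V(\vtheta^*_{\eta},\vtheta^{\prime})$ yields the stated constants directly, while the paper's final line swaps the roles of $\vtheta$ and $\vtheta^{\prime}$, which appears to be a typo.
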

\begin{proof}
        The definition of Poisson solution in~(\ref{def:poisson-sol}) yields
    \begin{align*}
&V(\vx,\vtheta^{\prime},s,a)-V(\vy,\vtheta^{\prime},s,a)\\
       =&\left\|  \E\left[ \sum^{\tau-1}_{k=0}\bar{g}(\vx,\vtheta^{\prime};S_k,A_k)- \bar{g}(\vy,\vtheta^{\prime};S_k,A_k)  -\nabla L_{\eta}(\vx,\vtheta^{\prime})+\nabla L_{\eta}(\vy,\vtheta^{\prime}) \middle| (S_0,A_0)=(s,a) \right] \right\|_2\\
      \leq & \E\left[ \left\| \sum^{\tau-1}_{k=0}\bar{g}(\vx,\vtheta^{\prime};S_k,A_k)- \bar{g}(\vy,\vtheta^{\prime};S_k,A_k)  -\nabla L_{\eta}(\vx,\vtheta^{\prime})+\nabla L_{\eta}(\vy,\vtheta^{\prime}) \right\|_2 \middle| (S_0,A_0)=(s,a)  \right]\\
      \leq &  \E\left[ \sum^{\tau-1}_{k=0}\left\|\bar{g}(\vx,\vtheta^{\prime};S_k,A_k)- \bar{g}(\vy,\vtheta^{\prime};S_k,A_k) \right\|_2 +\left\|\nabla L_{\eta}(\vx,\vtheta^{\prime})-\nabla L_{\eta}(\vy,\vtheta^{\prime}) \right\|_2 \middle| (S_0,A_0)=(s,a)  \right]\\
      \leq & \E\left[\tau\right] (1+\eta) \left\| \vx-\vy \right\|_2 + \E[\tau](\lambda_{\max}(\mPhi^{\top}\mD\mPhi)+\eta) \left\| \vx-\vy  \right\|_2.
    \end{align*}
    The last inequality follows from Lemma~\ref{lem:phi-lipschitz} and Lemma~\ref{lem:strong_convexity}. 
    
    The second statement follows by the same reasoning as in the preceding proof:
    \begin{align*}
            & \left\| V(\vx,\vtheta^{\prime},s,a)-V(\vx,\vtheta,s,a)  \right\|_2 \\=&  \left\| \E\left[ \sum^{\tau-1}_{k=0}\bar{g}(\vx,\vtheta^{\prime};S_k,A_k)- \bar{g}(\vx,\vtheta;S_k,A_k)  -\nabla L_{\eta}(\vx,\vtheta^{\prime})+\nabla L_{\eta}(\vx,\vtheta) \middle| (S_0,A_0)=(s,a) \right] \right\|_2\\
            \leq & \E\left[ \sum^{\tau-1}_{k=0} \left\|\bar{g}(\vx,\vtheta^{\prime};S_k,A_k)- \bar{g}(\vx,\vtheta;S_k,A_k) \right\|_2  +\left\|\nabla L_{\eta}(\vx,\vtheta^{\prime})-\nabla L_{\eta}(\vx,\vtheta) \right\|_2 \middle| (S_0,A_0)=(s,a) \right] \\
            \leq & 2\tau_{\max} \gamma \left\| \mPhi\vtheta-\mPhi\vtheta^{\prime}\right\|_{\infty}.
    \end{align*}

The last inequality follows from Lemma~\ref{lem:phi-lipschitz} in the Appendix.
    
    The last statement follows from the following:
    \begin{align*}
        \left\| V(\vtheta,\vtheta^{\prime},s,a) \right\|_2 \leq & \left\| V(\vtheta,\vtheta^{\prime},s,a)-V(\vtheta^*_{\eta},\vtheta^{*}_{\eta},s,a)\right\|_2+\left\| V(\vtheta^*_{\eta},\vtheta^{*}_{\eta},s,a) \right\|_2\\
        \leq & \left\| V(\vtheta,\vtheta^{\prime},s,a)-V(\vtheta,\vtheta^{*}_{\eta},s,a)\right\|_2+\left\| V(\vtheta,\vtheta^{*}_{\eta},s,a)-V(\vtheta^*_{\eta},\vtheta^{*}_{\eta},s,a)\right\|_2\\
        &+\left\| V(\vtheta^*_{\eta},\vtheta^{*}_{\eta},s,a) \right\|_2\\
        \leq & l_{V_1}\left\|\vtheta^{\prime}-\vtheta^*_{\eta} \right\|_2+l_{V_2}\left\| \mPhi\vtheta-\mPhi\vtheta^*_{\eta} \right\|_{\infty} +l_{V_3}.
    \end{align*}
    The first and second inequality follows from simple algebraic decomposition and triangle inequality.  The last inequality follows from the previous two results, and applying Lemma~\ref{lem:V-theta-eta-bound}. 
\end{proof}

Now, we present the descent lemma version for the Markoviain observation model:

\begin{proposition}\label{lem:descent-lemma-poisson}
For $t\in\sN$ and $1\leq k \leq K-1$, we have
    \begin{align}
    &  \E\left[ L_{\eta}(\vtheta_{t,{k+1}},\vtheta_{t-1,K} )  \middle | \gF_{t,k}\right] -L_{\eta}(\vtheta_{t,k} ,\vtheta_{t-1,K}) \nonumber \\
    \leq & -\alpha_k \nabla L_{\eta}(\vtheta_{t,k},\vtheta_{t-1,K})^{\top}\left(V(\vtheta_{t,k},\vtheta_{t,0},s_{t,k},a_{t,k})-\E\left[ V(\vtheta_{t,k},\vtheta_{t,0},S_{1},A_{1})\middle|(S_0,A_0)=(s_{t,k},a_{t,k})\right] \right)  \nonumber\\
    &-\alpha_k2\mu_{\eta}\left(L_{\eta}(\vtheta_{t,k},\vtheta_{t-1,K})- L_{\eta}(\vtheta^*(\vtheta_{t-1,K}),\vtheta_{t-1,K})\right)+\frac{1}{2}\alpha_k^2l_{\eta}\E\left[  \left\| g(\vtheta_{t,k},\vtheta_{t-1,K};o_{t,k}) \right\|^2_2\middle|\gF_{t,k} \right], \label{ineq:1}
    \end{align}
    where
    \begin{align*}
        \gF_{t,k}:=\left\{ \vtheta_{0,0}, \{ (s_{i,j},a_{i,j}) :  1 \leq i \leq t, 1 \leq j \leq k \} \right\} .
    \end{align*}
\end{proposition}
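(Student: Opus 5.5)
The plan is to apply the descent lemma (Lemma~\ref{lem:smooth}) to the inner-loop update~(\ref{eq:prq-update}) and then rewrite the stochastic gradient through Poisson's equation (Lemma~\ref{lem:poisson-eq}).

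Throughout, $\vtheta_{t,0}=\vtheta_{t-1,K}$ by construction, so both notations in~(\ref{ineq:1}) refer to the same fixed target parameter. Apply Lemma~\ref{lem:smooth} with $\vtheta=\vtheta_{t,k}$, $\vtheta'=\vtheta_{t-1,K}$, $\vDelta=g(\vtheta_{t,k},\vtheta_{t-1,K};o_{t,k})$ and step $\alpha_k$. This gives
\[
L_{\eta}(\vtheta_{t,k+1},\vtheta_{t-1,K})\le L_{\eta}(\vtheta_{t,k},\vtheta_{t-1,K})-\alpha_k\nabla L_{\eta}(\vtheta_{t,k},\vtheta_{t-1,K})^{\top}g(\vtheta_{t,k},\vtheta_{t-1,K};o_{t,k})+\tfrac{l_\eta}{2}\alpha_k^2\|g(\vtheta_{t,k},\vtheta_{t-1,K};o_{t,k})\|_2^2 .
\]

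Next, take $\E[\cdot\mid\gF_{t,k}]$. Both $\vtheta_{t,k}$ and $(s_{t,k},a_{t,k})$ are $\gF_{t,k}$-measurable. The next state $s'_{t,k}=s_{t,k+1}$ is drawn from $\gP(\cdot\mid s_{t,k},a_{t,k})$, so by~(\ref{def:bar_g}) the conditional mean of $g$ is $\bar g(\vtheta_{t,k},\vtheta_{t-1,K};s_{t,k},a_{t,k})$. The quadratic term stays as the conditional second moment, which is exactly the last term of~(\ref{ineq:1}).

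For the linear term, write
\[
\bar g=\nabla L_{\eta}+(\bar g-\nabla L_{\eta}),
\]
all evaluated at $(\vtheta_{t,k},\vtheta_{t-1,K})$. Lemma~\ref{lem:poisson-eq} with $(s,a)=(s_{t,k},a_{t,k})$ identifies $\bar g-\nabla L_\eta$ with $V(\vtheta_{t,k},\vtheta_{t,0},s_{t,k},a_{t,k})-\E[V(\vtheta_{t,k},\vtheta_{t,0},S_1,A_1)\mid (S_0,A_0)=(s_{t,k},a_{t,k})]$. Here $L_\eta$ is taken with $d=\mu_\infty$, as stipulated. This produces the first term on the right of~(\ref{ineq:1}). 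The remaining piece is $-\alpha_k\|\nabla L_\eta(\vtheta_{t,k},\vtheta_{t-1,K})\|_2^2$. The PL inequality of Lemma~\ref{lem:gdgap} bounds it by $-2\mu_\eta\alpha_k\bigl(L_\eta(\vtheta_{t,k},\vtheta_{t-1,K})-L_\eta(\vtheta^*(\vtheta_{t-1,K}),\vtheta_{t-1,K})\bigr)$.

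The main point needing care is the conditioning step. One must check that the conditional law of $s_{t,k+1}$ given $\gF_{t,k}$ is exactly $\gP(\cdot\mid s_{t,k},a_{t,k})$, so that the Markov property justifies replacing $g$ by $\bar g$. One must also check that $\nabla L_\eta$, which is built with the stationary $\mu_\infty$, matches the centering used in the definition~(\ref{def:poisson-sol}) of $V$. Once these hold, the rest is direct substitution.
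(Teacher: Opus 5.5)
Your proposal is correct and follows essentially the same argument as the paper. You apply the descent lemma, replace $g$ by $\bar g$ under $\E[\cdot\mid\gF_{t,k}]$ (the paper instead says the term $\nabla L_\eta^{\top}(g-\bar g)$ has zero conditional mean), rewrite $\bar g-\nabla L_\eta$ via the Poisson identity of Lemma~\ref{lem:poisson-eq} using $\vtheta_{t,0}=\vtheta_{t-1,K}$, and bound $-\alpha_k\|\nabla L_\eta\|_2^2$ with the PL inequality of Lemma~\ref{lem:gdgap}; the two checks you flag (the conditional law of $s_{t,k+1}$ and the stationary centering in $V$) are exactly what the paper relies on, the latter already built into Lemma~\ref{lem:poisson-eq}.
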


\begin{proof}
    We will bound the term the cross term $\nabla L_{\eta}(\vtheta_{t,k},\vtheta_{t-1,K})^{\top}g(\vtheta_{t,k},\vtheta_{t-1,K};o_{t,k} )$ in Lemma~\ref{lem:smooth} using the Poisson equation in Lemma~\ref{lem:poisson-eq}. Let us first observe the following simple decomposition of the cross term:
\begin{align*}
    &\nabla L_{\eta}(\vtheta_{t,k},\vtheta_{t-1,K})^{\top}g(\vtheta_{t,k},\vtheta_{t-1,K};o_{t,k} ) \\
    =& \underbrace{\nabla L_{\eta}(\vtheta_{t,k},\vtheta_{t-1,K})^{\top}\bar{g}(\vtheta_{t,k},\vtheta_{t-1,K};s_{t,k},a_{t,k})}_{I_1}\\
    &+\underbrace{\nabla L_{\eta}(\vtheta_{t,k},\vtheta_{t-1,K})^{\top}(g(\vtheta_{t,k},\vtheta_{t-1,K};o_{t,k})-\bar{g}(\vtheta_{t,k},\vtheta_{t-1,K};s_{t,k},a_{t,k}))}_{I_2}
\end{align*}

The term in $I_2$ disappears if we take the expectation with respect to $s_{t,k+1}$, therefore, our interest is to bound $I_1$. The term $I_1$ can be re-written using the Poisson equation in Lemma~\ref{lem:poisson-eq}:
\begin{align*}
&\nabla L_{\eta}(\vtheta_{t,k},\vtheta_{t-1,K})^{\top}\bar{g}(\vtheta_{t,k},\vtheta_{t-1,K};s_{t,k},a_{t,k}) \\
=&  \nabla L_{\eta}(\vtheta_{t,k},\vtheta_{t-1,K})^{\top} \left(\bar{g}(\vtheta_{t,k},\vtheta_{t-1,K};s_{t,k},a_{t,k}) - \nabla L_{\eta}(\vtheta_{t,k},\vtheta_{t,0}) \right) +  \left\| \nabla L_{\eta}(\vtheta_{t,k},\vtheta_{t-1,K})\right\|_2^2 \\
     =&\nabla L_{\eta}(\vtheta_{t,k},\vtheta_{t-1,K})^{\top} \left(V(\vtheta_{t,k},\vtheta_{t,0},s_{t,k},a_{t,k})-\E\left[ V(\vtheta_{t,k},\vtheta_{t,0},S_1,A_1)\middle| (S_0,A_0)=s_{t,k},a_{t,k} \right] \right)\\
     &+\left\| \nabla L_{\eta}(\vtheta_{t,k},\vtheta_{t-1,K})\right\|_2^2 .
\end{align*}
The first equality follows from using simple algebraic decomposition. 

Now, plugging in $I_1$ and $I_2$, the inequality in Lemma~\ref{lem:smooth} becomes:
\begin{align*}
     &L_{\eta}(\vtheta_{t,{k+1}},\vtheta_{t-1,K} )  -L_{\eta}(\vtheta_{t,k} ,\vtheta_{t-1,K}) \nonumber  \\
\leq & - \alpha_k  \underbrace{\nabla L_{\eta}(\vtheta_{t,k},\vtheta_{t-1,K})^{\top} \left(V(\vtheta_{t,k},\vtheta_{t,0},s_{t,k},a_{t,k})-\E\left[ V(\vtheta_{t,k},\vtheta_{t,0},S_1,A_1)\middle|(S_0,A_0)=s_{t,k},a_{t,k}\right] \right)}_{:=\gE }\\
     &-\alpha_k\left\| \nabla L_{\eta}(\vtheta_{t,k},\vtheta_{t-1,K})\right\|_2^2 +\frac{1}{2}\alpha_k^2l_{\eta}\left\| g(\vtheta_{t,k},\vtheta_{t-1,K};o_{t,k}) \right\|^2_2\\
     &+\nabla L_{\eta}(\vtheta_{t,k},\vtheta_{t-1,K})^{\top}(g(\vtheta_{t,k},\vtheta_{t-1,K};o_{t,k})-\bar{g}(\vtheta_{t,k},\vtheta_{t-1,K};s_{t,k},a_{t,k}))
\end{align*}

Taking conditional expectation, we get 
\begin{align}
    &  \E\left[ L_{\eta}(\vtheta_{t,{k+1}},\vtheta_{t-1,K} )  \middle | \gF_{t,k}\right] -L_{\eta}(\vtheta_{t,k} ,\vtheta_{t-1,K}) \nonumber \\
    \leq & -\alpha_k \E\left[ \gE \middle | \gF_{t,k} \right] -\alpha_k\left\| \nabla L_{\eta}(\vtheta_{t,k},\vtheta_{t-1,K}) \right\|_2^2+\frac{1}{2}\alpha_k^2l_{\eta}\E\left[  \left\| g(\vtheta_{t,k},\vtheta_{t-1,K};o_{t,k}) \right\|^2_2\middle|\gF_{t,k} \right]. \label{ineq:1}
\end{align}
This is because
\begin{align*}
    &\E\left[\nabla L_{\eta}(\vtheta_{t,k},\vtheta_{t-1,K})^{\top}(g(\vtheta_{t,k},\vtheta_{t-1,K};o_{t,k})-\bar{g}(\vtheta_{t,k},\vtheta_{t-1,K};s_{t,k},a_{t,k})) \middle | \gF_{t,k} \right] \\
    =& \nabla L_{\eta}(\vtheta_{t,k},\vtheta_{t-1,K})^{\top} \E\left[(g(\vtheta_{t,k},\vtheta_{t-1,K};o_{t,k})-\bar{g}(\vtheta_{t,k},\vtheta_{t-1,K};s_{t,k},a_{t,k})) \middle | \gF_{t,k} \right]\\
    =& 0.
\end{align*}
Now, bounding $\left\| \nabla L_{\eta}(\vtheta_{t,k},\vtheta_{t-1,K}) \right\|^2_2$ with $L_{\eta}(\vtheta_{t,{k}},\vtheta_{t-1,K} ) - L_{\eta}(\vtheta^*(\vtheta_{t-1,K}),\vtheta_{t-1,K})$ from Lemma~\ref{lem:gdgap} completes the proof.
\end{proof}
From the above Proposition, we need to bound the following term in~(\ref{ineq:1}):
\begin{align*}\nabla L_{\eta}(\vtheta_{t,k},\vtheta_{t-1,K})^{\top}\left(V(\vtheta_{t,k},\vtheta_{t,0},s_{t,k},a_{t,k})-\E\left[ V(\vtheta_{t,k},\vtheta_{t,0},S_{1},A_{1})\middle|(S_0,A_0)=(s_{t,k},a_{t,k})\right] \right).
\end{align*} 
To derive this bound, we introduce the following auxiliary term:
    \begin{align}
        d_{t,k} = \nabla L_{\eta}(\vtheta_{t,k},\vtheta_{t-1,K})^{\top} V(\vtheta_{t,k},\vtheta_{t,0},s_{t,k},a_{t,k}). \label{eq:d_tk}
    \end{align}

\begin{lemma}\label{lem:cross-term-poisson}
For $t\in\sN$ and $1\leq k \leq K-1$, we have  
    \begin{align*}
        & -\nabla L_{\eta}(\vtheta_{t,k},\vtheta_{t-1,K})^{\top}\left(V(\vtheta_{t,k},\vtheta_{t,0},s_{t,k},a_{t,k})-\E\left[ V(\vtheta_{t,k},\vtheta_{t,0},S_{1},A_{1})\middle|(S_0,A_0)=(s_{t,k},a_{t,k})\right] \right)  \\
        \leq & -d_{t,k} + \E\left[ d_{t,k+1} \middle| \gF_{t,k} \right]\\
        &+ \alpha_k D_1 \E\left[ \left\| g(\vtheta_{t,k},\vtheta_{t-1,K};o_{t,k}) \right\|^2_2 \middle| \gF_{t,k} \right]\\
        &+ \alpha_k D_2 \left( L (\vtheta_{t,k},\vtheta_{t-1,K})- L(\vtheta^*(\vtheta_{t-1,K}),\vtheta_{t-1,K}) \right) \\
        &+ 2\alpha_kD_3\left\| \mPhi\vtheta_{t,0}-\mPhi\vtheta^*_{\eta}\right\|^2_{\infty} + 2\alpha_k l_{\eta}l_{V_3}.
    \end{align*}
\end{lemma}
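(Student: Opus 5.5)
We follow the standard Poisson-equation telescoping argument of \citet{haque2024stochastic}. Write $\nabla_k:=\nabla L_{\eta}(\vtheta_{t,k},\vtheta_{t-1,K})$ and $V_k(s,a):=V(\vtheta_{t,k},\vtheta_{t,0},s,a)$, and recall that $\vtheta_{t,0}=\vtheta_{t-1,K}$. Because $(s_{t,k+1},a_{t,k+1})$ given $\gF_{t,k}$ is one step of the chain started at $(s_{t,k},a_{t,k})$, we have
\[
\E\left[ V_k(S_1,A_1)\middle|(S_0,A_0)=(s_{t,k},a_{t,k})\right]=\E\left[V_k(s_{t,k+1},a_{t,k+1})\middle|\gF_{t,k}\right].
\]
The left-hand side of the lemma is therefore $-d_{t,k}+\E[\nabla_k^{\top}V_k(s_{t,k+1},a_{t,k+1})\mid\gF_{t,k}]$. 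We add and subtract $\E[d_{t,k+1}\mid\gF_{t,k}]$, which gives
\[
-d_{t,k}+\E[d_{t,k+1}\mid\gF_{t,k}]+\E\left[\nabla_k^{\top}V_k(s_{t,k+1},a_{t,k+1})-\nabla_{k+1}^{\top}V_{k+1}(s_{t,k+1},a_{t,k+1})\middle|\gF_{t,k}\right].
\]
It then suffices to bound the last difference by the remaining terms. Here $\nabla_{k+1}$ and $V_{k+1}$ are evaluated at $\vtheta_{t,k+1}=\vtheta_{t,k}-\alpha_k g(\vtheta_{t,k},\vtheta_{t-1,K};o_{t,k})$, with the second arguments unchanged.

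We split the difference as
\[
(\nabla_k-\nabla_{k+1})^{\top}V_k(s_{t,k+1},a_{t,k+1})+\nabla_{k+1}^{\top}\bigl(V_k-V_{k+1}\bigr)(s_{t,k+1},a_{t,k+1}).
\]
For the first piece, $l_\eta$-smoothness (Lemma~\ref{lem:strong_convexity}) gives $\|\nabla_k-\nabla_{k+1}\|_2\le l_\eta\alpha_k\|g\|_2$. The third statement of Lemma~\ref{lem:poisson-property} gives
\[
\|V_k\|_2\le l_{V_1}\|\vtheta_{t,k}-\vtheta^*_\eta\|_2+l_{V_2}\|\mPhi\vtheta_{t,0}-\mPhi\vtheta^*_\eta\|_\infty+l_{V_3}.
\]
We then apply Young's inequality $ab\le \tfrac12(a^2+b^2)$ to each product. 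This yields $l_\eta\alpha_k\|g\|_2^2$-type terms, an $\alpha_k l_\eta l_{V_2}\|\mPhi\vtheta_{t,0}-\mPhi\vtheta^*_\eta\|_\infty^2$ term (part of $D_3$), and the constant $2\alpha_k l_\eta l_{V_3}$, where we keep the $l_{V_3}\|g\|$ cross term in $D_1$ via $l_{V_3}\|g\|\le l_{V_3}(1+\|g\|^2)$.

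For the second piece we use the first Lipschitz bound of Lemma~\ref{lem:poisson-property}, $\|V_k-V_{k+1}\|_2\le l_{V_1}\alpha_k\|g\|_2$. We also write $\|\nabla_{k+1}\|_2\le\|\nabla_k\|_2+l_\eta\alpha_k\|g\|_2$, which is where the $l_\eta$ inside $D_2$ comes from. The main obstacle is handling the remaining factor $\|\vtheta_{t,k}-\vtheta^*_\eta\|_2$ and $\|\nabla_k\|_2$ so that they become a loss gap plus an outer error. We would write
\[
\|\vtheta_{t,k}-\vtheta^*_\eta\|_2\le\|\vtheta_{t,k}-\vtheta^*(\vtheta_{t-1,K})\|_2+\|\vtheta^*(\vtheta_{t-1,K})-\vtheta^*_\eta\|_2 .
\]
We bound the first term by strong convexity (quadratic growth), squared and at most $\tfrac{2}{\mu_\eta}$ times the loss gap. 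We bound the second term by Lemma~\ref{lem:theta-star-lipschitz}, which gives at most $\tfrac{\gamma}{\mu_\eta}\|\mPhi(\vtheta_{t,0}-\vtheta^*_\eta)\|_\infty$. Similarly, $\|\nabla_k\|_2\le l_\eta\|\vtheta_{t,k}-\vtheta^*(\vtheta_{t-1,K})\|_2$. After Young's inequality these produce factors of $\kappa=l_\eta/\mu_\eta$ multiplying $l_{V_1}$, which matches the shape of $D_2$ and of the $\kappa l_{V_1}$, $\kappa l_{V_1}\gamma$ parts of $D_3$ and $D_1$.

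Finally, we collect all $\alpha_k\|g\|_2^2$ terms into $D_1$, all loss-gap terms into $D_2$, and all $\|\mPhi\vtheta_{t,0}-\mPhi\vtheta^*_\eta\|^2_\infty$ terms into $2D_3$, absorbing numerical constants generously (for example, any $\alpha_k^2$ factors via $\alpha_k\le1$). Taking $\E[\cdot\mid\gF_{t,k}]$ then completes the bound. We expect only bookkeeping of constants beyond the decomposition step above.
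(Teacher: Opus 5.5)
Your proposal is correct and follows essentially the paper's argument. It uses the same Poisson-equation telescoping into $-d_{t,k}+\E[d_{t,k+1}\mid\gF_{t,k}]$ plus a parameter-drift remainder, and it controls that remainder with the same tools: $l_\eta$-smoothness, the Lipschitz and growth bounds on $V$ from Lemma~\ref{lem:poisson-property}, Lemma~\ref{lem:theta-star-lipschitz}, strong convexity, and Young's inequality.

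The differences are cosmetic. You invoke the one-step Markov property up front, instead of introducing the zero-mean term $T_4$ and discarding it after conditioning. You also pair the gradient increment with $V$ at $\vtheta_{t,k}$, so you bound $\|\nabla L_\eta(\vtheta_{t,k+1},\vtheta_{t-1,K})\|_2$ by a triangle inequality; the paper instead pairs it with $V$ at $\vtheta_{t,k+1}$ and handles that through Lemma~\ref{lem:V-tk-bound}.
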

\begin{proof}
A simple algebraic decomposition yields
\begin{align*}
     & \nabla L_{\eta}(\vtheta_{t,k},\vtheta_{t-1,K})^{\top} \left(V(\vtheta_{t,k},\vtheta_{t,0},s_{t,k},a_{t,k})-\E\left[ V(\vtheta_{t,k},\vtheta_{t,0},S_{1},A_{1})\middle|(S_0,A_0)=s_{t,k},a_{t,k} \right] \right)\\
    =& \nabla L_{\eta}(\vtheta_{t,k},\vtheta_{t-1,K})^{\top} (V(\vtheta_{t,k},\vtheta_{t,0},s_{t,k},a_{t,k}) -V(\vtheta_{t,k},\vtheta_{t,0},s_{t,k+1},a_{t,k+1}))\\
    &+ \underbrace{\nabla L_{\eta}(\vtheta_{t,k},\vtheta_{t-1,K})^{\top} \left(V(\vtheta_{t,k},\vtheta_{t,0},s_{t,k+1},a_{t,k+1}  ) - \E\left[ V(\vtheta_{t,k},\vtheta_{t,0},S_{1},A_{1})\middle| (S_0,A_0)=s_{t,k},a_{t,k} \right]  \right)}_{T_4}\\
    =&  \nabla L_{\eta}(\vtheta_{t,k},\vtheta_{t-1,K})^{\top} (V(\vtheta_{t,k},\vtheta_{t,0},s_{t,k},a_{t,k}) -V(\vtheta_{t,k+1},\vtheta_{t,0},s_{t,k+1},a_{t,k+1}  ))\\
    &+ \underbrace{\nabla L_{\eta}(\vtheta_{t,k},\vtheta_{t-1,K})^{\top} \left(V(\vtheta_{t,k+1},\vtheta_{t,0},s_{t,k+1},a_{t,k+1}  ) - V(\vtheta_{t,k},\vtheta_{t,0},s_{t,k+1},a_{t,k+1})  \right)}_{T_3}\\
    &+T_4\\
    =& \underbrace{\nabla L_{\eta}(\vtheta_{t,k},\vtheta_{t-1,K})^{\top} V(\vtheta_{t,k},\vtheta_{t,0},s_{t,k},a_{t,k}) - \nabla L_{\eta}(\vtheta_{t,k+1},\vtheta_{t-1,K})^{\top}V(\vtheta_{t,k+1},\vtheta_{t,0},s_{t,k+1},a_{t,k+1}  )}_{T_1}\\
    &+ \underbrace{(\nabla L_{\eta}(\vtheta_{t,k},\vtheta_{t-1,K})- \nabla L_{\eta}(\vtheta_{t+1,k},\vtheta_{t-1,K}))^{\top}V(\vtheta_{t,k+1},\vtheta_{t,0},s_{t,k+1},a_{t,k+1}  )}_{T_2}\\
    &+T_3+T_4.
\end{align*}

Then, we have
\begin{align}
    -& \nabla L_{\eta}(\vtheta_{t,k},\vtheta_{t-1,K})^{\top} \left(V(\vtheta_{t,k},\vtheta_{t,0},s_{t,k},a_{t,k})-\E\left[ V(\vtheta_{t,k},\vtheta_{t,0},S_{1},A_{1})\middle|(S_0,A_0)=s_{t,k},a_{t,k} \right] \right) \nonumber\\
    \leq &  -T_1+ |T_2|+|T_3|- T_4. \label{ineq:T-sum}
\end{align}

Let us bound the terms $T_2$ and $T_3$. First, observe the following:
\begin{align*}
 & |T_2|\\
     =& |(\nabla L_{\eta}(\vtheta_{t,k},\vtheta_{t-1,K})-\nabla L_{\eta}(\vtheta_{t,k+1},\vtheta_{t-1,K})^{\top} V(\vtheta_{t,k+1},\vtheta_{t,0},s_{t,k+1},a_{t,k+1})|\\
    \leq  & l_{\eta} \left\| \vtheta_{t,k}-\vtheta_{t,k+1} \right\|_2 \left\| V(\vtheta_{t,k+1},\vtheta_{t,0},s_{t,k+1},a_{t,k+1}) \right\|_2 \\
    = & l_{\eta}\alpha_k \left\| g(\vtheta_{t,k},\vtheta_{t-1,K};o_{t,k}) \right\|_2 \left\| V(\vtheta_{t,k+1},\vtheta_{t,0},s_{t,k+1},a_{t,k+1}) \right\|_2\\
    \leq & \alpha_k^2l_{\eta}l_{V_1}\left\| g(\vtheta_{t,k},\vtheta_{t-1,K};o_{t,k}) \right\|_2^2+\alpha_k l_{\eta}l_{V_1} \left\| g(\vtheta_{t,k},\vtheta_{t-1,K};o_{t,k} \right\|_2 \left\| \vtheta_{t,k}-\vtheta^*(\vtheta_{t,0}) \right\|_2\\
    &+ \alpha_k l_{\eta}\left( \frac{l_{V_1}\gamma}{\mu_{\eta}}+l_{V_2}\right)\left\| g(\vtheta_{t,k},\vtheta_{t-1,K};o_{t,k})\right\|_2\left\| \mPhi(\vtheta_{t,0}-\vtheta^*_{\eta})\right\|_{\infty}+\alpha_kl_{\eta}l_{V_3}\left\|  g(\vtheta_{t,k},\vtheta_{t-1,K};o_{t,k}) \right\|_2\\
    \leq &   \left( l_{\eta}(4l_{V_1}+l_{V_3})+\kappa l_{V_1}\gamma \right)\left\| g(\vtheta_{t,k},\vtheta_{t-1,K};o_{t,k}) \right\|_2^2\\
    &+2\alpha_kl_{\eta}l_{V_1}\left\| \vtheta_{t,k}-\vtheta^*(\vtheta_{t,0}) \right\|_2^2\\
    &+ 2\alpha_k \left( \kappa l_{V_1}+l_{\eta}l_{V_2}\right)\left\| \mPhi(\vtheta_{t,0}-\vtheta^*_{\eta})\right\|^2_{\infty} + 2\alpha_k l_{\eta}l_{V_3}\\
    \leq &  \left( l_{\eta}(4l_{V_1}+l_{V_3})+\kappa l_{V_1}\gamma \right)\left\| g(\vtheta_{t,k},\vtheta_{t-1,K};o_{t,k}) \right\|_2^2\\
    &+4\alpha_kl_{V_1}\kappa \left( L_{\eta}(\vtheta_{t,k},\vtheta_{t-1,K})-L_{\eta}(\vtheta^*(\vtheta_{t-1,K}),\vtheta_{t-1,K}) \right)\\
    &+ 2\alpha_k\left(  \kappa l_{V_1}+l_{\eta}l_{V_2} \right)\left\| \mPhi(\vtheta_{t,0}-\vtheta^*_{\eta})\right\|^2_{\infty} + 2\alpha_k l_{\eta}l_{V_3}.
\end{align*}
The first inequality follows smoothness of $L_{\eta}(\cdot)$ in Lemma~\ref{lem:smooth}. The bound on the term $\left\| V(\vtheta_{t,k+1},\vtheta_{t,0},s_{t,k+1},a_{t,k+1}) \right\|_2$ comes from Lemma~\ref{lem:V-tk-bound} in the Appendix. The last inequality comes from the quadratic growth condition in Lemma~\ref{quadraticgrowth} in the Appendix.

Next, we will bound $T_3$. From the Lipschitzness of $V(\cdot)$ in Lemma~\ref{lem:poisson-property}, we have
\begin{align*}
        &|T_3|\\
      =&|\nabla L_{\eta}(\vtheta_{t,k},\vtheta_{t-1,K})^{\top} \left( V(\vtheta_{t,k},\vtheta_{t,0},s_{t,k+1},a_{t,k+1} ) - V(\vtheta_{t,k+1},\vtheta_{t,0},s_{t,k+1},a_{t,k+1} ) \right) |\\
     \leq & l_{V_1}\left\| \nabla L_{\eta}(\vtheta_{t,k},\vtheta_{t-1,K}) \right\|_2 \left\| \vtheta_{t,k}-\vtheta_{t,k+1}\right\|_2\\
     \leq & 2 \alpha_k  l_{V_1} \left( \left\| \nabla L_{\eta}(\vtheta_{t,k},\vtheta_{t-1,K})\right\|^2_2 + \left\| g(\vtheta_{t,k},\vtheta_{t,0};o_{t,k})\right\|^2_2\right)\\
     \leq &  \alpha_k  \frac{4l_{V_1}l_{\eta}^2}{\mu_{\eta}}\left( L_{\eta} (\vtheta_{t,k},\vtheta_{t-1,K})- L_{\eta}(\vtheta^*(\vtheta_{t-1,K}),\vtheta_{t-1,K}) \right) \\
     &+ 2\alpha_k  l_{V_1}\left\| g(\vtheta_{t,k},\vtheta_{t,0};o_{t,k})\right\|^2_2.
\end{align*}

The second inequality follows from the Cauchy-Schwarz inequality. The last inequality follows from Lemma~\ref{quadraticgrowth} in the Appendix.

Now, collecting the bound on $T_2$ and $T_3$, from~(\ref{ineq:T-sum}), we get
\begin{align*}
     & -\nabla L_{\eta}(\vtheta_{t,k},\vtheta_{t-1,K})^{\top}\left(V(\vtheta_{t,k},\vtheta_{t,0},s_{t,k},a_{t,k})-\E\left[ V(\vtheta_{t,k},\vtheta_{t,0},S_{1},A_{1})\middle|(S_0,A_0)=(s_{t,k},a_{t,k})\right] \right)  \\
        \leq & -d_{t,k}+d_{t,k+1}\\
        &+ \alpha_k\left( \kappa \left( \mu_{\eta}(6l_{V_1}+l_{V_3})+l_{V_1}\gamma \right) \right) \left\| g(\vtheta_{t,k},\vtheta_{t-1,K};o_{t,k}) \right\|^2_2\\
        &+ \alpha_k\kappa \left(4l_{V_1}(1+l_{\eta})  \right) \left( L_{\eta} (\vtheta_{t,k},\vtheta_{t-1,K})- L_{\eta}(\vtheta^*(\vtheta_{t-1,K}),\vtheta_{t-1,K}) \right) \\
        &+ 2\alpha_k\left( \kappa l_{V_1}+ l_{\eta}l_{V_2} \right)\left\| \mPhi\vtheta_{t,0}-\mPhi\vtheta^*_{\eta}\right\|^2_{\infty} + 2\alpha_k l_{\eta}l_{V_3}\\
        &-T_4.
\end{align*}

Taking the conditional expectation, noting that $\E\left[  T_4 \middle| \gF_{t,k}\right]=0$, we get the desired result.



\end{proof}

The above lemma allows us to bound the cross term in Lemma~\ref{lem:descent-lemma-poisson}. Now, applying the bound on $\E\left[\left\| g(\vtheta_{t,k},\vtheta_{t-1,K};o_{t,k}) \right\|^2_2 \middle| \gF_{t,k} \right]$, we obtain the following result:

\begin{proposition}[Descent-lemma for inner loop]\label{prop:descent-lemma-final}
    For $\alpha_k \leq \frac{\mu_{\eta}}{  \left( D_1+\frac{l_{\eta}}{2} \right)g_{1,\eta}+ 2D_2  }   $, we have
    \begin{align*}
   \E\left[ L_{\eta}(\vtheta_{t,{k+1}},\vtheta_{t-1,K} )  \middle | \gF_{t,k}\right] -L_{\eta}(\vtheta_{t,k} ,\vtheta_{t-1,K}) 
        \leq &  -\alpha_k (d_{t,k}-\E\left[ d_{t,k+1} \middle | \gF_{t,k}\right])\\
        &-\mu_{\eta}\alpha_k \left( L_{\eta} (\vtheta_{t,k},\vtheta_{t-1,K})- L_{\eta}(\vtheta^*(\vtheta_{t-1,K}),\vtheta_{t-1,K})  \right) \\
        &+\alpha_k^2 \left( \gE_1\left\| \mPhi(\vtheta_{t-1,K}-\vtheta^*_{\eta})\right\|^2_{\infty}+\gE_2 \right)
        \end{align*}
\end{proposition}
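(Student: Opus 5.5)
Starting from Proposition~\ref{lem:descent-lemma-poisson}, we need to control two terms on its right-hand side: the Poisson cross term and the second-moment term. We first replace the cross term using Lemma~\ref{lem:cross-term-poisson}. The Poisson cross term then becomes
\[
-d_{t,k}+\E[d_{t,k+1}\mid\gF_{t,k}]+\alpha_k D_1\E\bigl[\|g\|_2^2\mid\gF_{t,k}\bigr]+\alpha_k D_2\,\Delta_{t,k}+2\alpha_k D_3\|\mPhi(\vtheta_{t,0}-\vtheta^*_\eta)\|_\infty^2+2\alpha_k l_\eta l_{V_3},
\]
where $\Delta_{t,k}:=L_{\eta}(\vtheta_{t,k},\vtheta_{t-1,K})-L_{\eta}(\vtheta^*(\vtheta_{t-1,K}),\vtheta_{t-1,K})$ is the optimality gap. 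After multiplying by $\alpha_k$, the one-step change in $L_\eta$ is bounded by
\[
-\alpha_k(d_{t,k}-\E[d_{t,k+1}\mid\gF_{t,k}]) - 2\mu_\eta\alpha_k\Delta_{t,k} + \alpha_k^2\Bigl(D_1+\tfrac{l_\eta}{2}\Bigr)\E[\|g\|_2^2\mid\gF_{t,k}] + \alpha_k^2 D_2\Delta_{t,k} + 2\alpha_k^2 D_3\|\mPhi(\vtheta_{t,0}-\vtheta^*_\eta)\|^2_\infty + 2\alpha_k^2 l_\eta l_{V_3}.
\]
The factor $D_1+\frac{l_\eta}{2}$ arises because the $\frac12\alpha_k^2 l_\eta$ second-moment term from the descent lemma merges with the $\alpha_k^2 D_1$ term.

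Next we bound the second moment with the second statement of Lemma~\ref{lem:g-bound}:
\[
\E[\|g\|_2^2\mid\gF_{t,k}]\le g_{1,\eta}\Delta_{t,k}+g_{2,\eta}\|\mPhi(\vtheta_{t-1,K}-\vtheta^*_\eta)\|_\infty^2+g_{3,\eta}.
\]
This lemma is stated for the i.i.d.\ conditioning. Its proof, however, only uses that $s'_{t,k}\sim\gP(\cdot\mid s_{t,k},a_{t,k})$ given $\gF_{t,k}$, together with bounds uniform in $(s,a)$. The one exception is the step producing the factor $L_\eta$, which we would redo with the $\max_{(s,a)}$ form. We would check that it transfers, possibly at the cost of constants.

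We then collect coefficients, using $\vtheta_{t,0}=\vtheta_{t-1,K}$.
\begin{itemize}[leftmargin=*]
\item The coefficient of $\Delta_{t,k}$ is $-2\mu_\eta\alpha_k+\alpha_k^2\bigl[(D_1+\frac{l_\eta}{2})g_{1,\eta}+D_2\bigr]$. The step-size condition $\alpha_k\le \mu_\eta/\bigl((D_1+\frac{l_\eta}{2})g_{1,\eta}+2D_2\bigr)$ makes the $\alpha_k^2$ part at most $\mu_\eta\alpha_k$, so the total is at most $-\mu_\eta\alpha_k$.
\item The coefficient of $\|\mPhi(\vtheta_{t-1,K}-\vtheta^*_\eta)\|^2_\infty$ is $\alpha_k^2\bigl[(D_1+\frac{l_\eta}{2})g_{2,\eta}+2D_3\bigr]$.
\item The constant term is $\alpha_k^2\bigl[(D_1+\frac{l_\eta}{2})g_{3,\eta}+2l_\eta l_{V_3}\bigr]$.
\end{itemize}
The last two match $\gE_1$ and $\gE_2$, except that $\gE_1$ has $D_3$ where the computation gives $2D_3$. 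We would either absorb the factor $2$ into $D_3$ or track it; it is a constant-level bookkeeping issue.

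\textbf{Main obstacle: bookkeeping of the $\alpha_k$ factors in Lemma~\ref{lem:cross-term-poisson}.} As displayed, $-d_{t,k}+\E[d_{t,k+1}]$ carries no $\alpha_k$, while the other terms carry one $\alpha_k$, and the whole cross term is multiplied by $\alpha_k$ in Proposition~\ref{lem:descent-lemma-poisson}. This yields $-\alpha_k(d_{t,k}-\E d_{t,k+1})$ plus $\alpha_k^2$ remainders, consistent with the claim. One must also verify that $|T_2|$ genuinely carries an $\alpha_k$ factor on its $\|g\|^2$ term, since the displayed bound dropped it. We would re-derive that line from $\|\vtheta_{t,k+1}-\vtheta_{t,k}\|=\alpha_k\|g\|$ together with Young's inequality to confirm the $\alpha_k^2$ order. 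Everything else is substitution.
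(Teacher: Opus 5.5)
Your proposal is the paper's proof. You substitute Lemma~\ref{lem:cross-term-poisson} into Proposition~\ref{lem:descent-lemma-poisson}, bound the second moment with the second statement of Lemma~\ref{lem:g-bound}, identify $\vtheta_{t,0}=\vtheta_{t-1,K}$, and invoke the step size (whose $2D_2$ leaves slack over the $D_2$ you correctly obtain) to reach $-\mu_\eta\alpha_k\Delta_{t,k}$.

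The issues you flag are real slips in the paper rather than gaps in your plan. The paper likewise carries $2\alpha_k^2D_3$ and then records only $D_3$ in $\gE_1$. It drops the $\alpha_k$ on the $\|g\|_2^2$ term of $|T_2|$, though only in the display. It also cites Lemma~\ref{lem:g-bound} under Markovian conditioning without change. For your re-check, write $g(\vtheta_{t,k})-g(\vtheta^*(\vtheta_{t-1,K}))=(\vphi_{t,k}\vphi_{t,k}^\top+\eta\mI)(\vtheta_{t,k}-\vtheta^*(\vtheta_{t-1,K}))$ (target and sample suppressed) and use $\|\vtheta_{t,k}-\vtheta^*(\vtheta_{t-1,K})\|_2^2\le\frac{2}{\mu_\eta}\Delta_{t,k}$. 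This keeps the form of the bound but changes $g_{1,\eta},g_{2,\eta},g_{3,\eta}$.
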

\begin{proof}
Applying the result of Lemma~\ref{lem:cross-term-poisson} to Lemma~\ref{lem:descent-lemma-poisson},
        \begin{align*}
               &  \E\left[ L_{\eta}(\vtheta_{t,{k+1}},\vtheta_{t-1,K} )  \middle | \gF_{t,k}\right] -L_{\eta}(\vtheta_{t,k} ,\vtheta_{t-1,K}) \nonumber \\
        \leq &  -\alpha_k (d_{t,k}-\E\left[ d_{t,k+1} \middle | \gF_{t,k}\right])\\
        & +\alpha_k^2\left(  D_1 +   \frac{l_{\eta}}{2} \right) \E\left[  \left\| g(\vtheta_{t,k},,\vtheta_{t-1,K};o_{t,k}) \right\|^2_2 \middle | \gF_{t,k}\right] \\
        &+ 2\alpha_k^2 D_3 \left\| \mPhi\vtheta_{t,0}-\mPhi\vtheta^*_{\eta}\right\|^2_{\infty} + 2\alpha_k^2l_{\eta}l_{V_3}\\
        &+  \left(  \alpha_k^2 D_2 - 2  \mu_{\eta}\alpha_k  \right)\left(L_{\eta} (\vtheta_{t,k},\vtheta_{t-1,K})- L_{\eta}(\vtheta^*(\vtheta_{t-1,K}),\vtheta_{t-1,K})  \right)\\
        \leq & -\alpha_k ( d_{t,k}-\E\left[ d_{t,k+1}\middle|\gF_{t,k} \right])\\
        &+\alpha_k^2  \left(  D_1 +   \frac{l_{\eta}}{2} \right)  g_{1,\eta} (L_{\eta}(\vtheta_{t,k},\vtheta_{t-1,K})-L_{\eta}(\vtheta^*(\vtheta_{t-1,K}),\vtheta_{t-1,K} ))\\
 &+ \alpha_k^2  \left(  D_1 +   \frac{l_{\eta}}{2} \right)  g_{2,\eta} \left\| \mPhi(\vtheta_{t-1,K}-\vtheta^*_{\eta}) \right\|^2_{\infty}\\
 &+ \alpha_k^2 \left(  D_1 +   \frac{l_{\eta}}{2} \right)  g_{3,\eta} \\
  &+ 2\alpha_k^2D_3\left\| \mPhi\vtheta_{t,0}-\mPhi\vtheta^*_{\eta}\right\|^2_{\infty} + 2\alpha_k^2l_{\eta}l_{V_3}\\
        &-\left(- D_2 \alpha_k^2 + 2\alpha_k \mu_{\eta}  \right)\left( L_{\eta} (\vtheta_{t,k},\vtheta_{t-1,K})- L_{\eta}(\vtheta^*(\vtheta_{t-1,K}),\vtheta_{t-1,K})  \right)\\
        \leq & -\alpha_k ( d_{t,k}-\E\left[ d_{t,k+1}\middle|\gF_{t,k} \right])\\
        &+  \left(   \alpha_k^2 \left( \left( D_1+\frac{l_{\eta}}{2} \right)g_{1,\eta}+ 2D_2 \right) - \alpha_k 2\mu_{\eta}  \right)(L_{\eta}(\vtheta_{t,k},\vtheta_{t-1,K})-L_{\eta}(\vtheta^*(\vtheta_{t-1,K}),\vtheta_{t-1,K} ))\\
        &+\alpha_k^2 \left(      \left( D_1+\frac{l_{\eta}}{2} \right) g_{2,\eta} + D_3\right)  \left\|  \mPhi( \vtheta_{t-1,K}-\vtheta^*_{\eta})\right\|^2_{\infty}\\
        &+ \alpha_k^2 \left( D_1+\frac{l_{\eta}}{2} \right) g_{3,\eta}+ 2\alpha_k^2 l_{\eta}l_{V_3} .
        \end{align*}
        The second inequality follows from the bound on $\E\left[  \left\| g(\vtheta_{t,k},,\vtheta_{t-1,K};o_{t,k}) \right\|^2_2 \middle | \gF_{t,k}\right]$ in Lemma~\ref{lem:g-bound}. The step-size condition
        \begin{align*}
       &\alpha_k \leq \frac{\mu_{\eta}}{  \left( D_1+\frac{l_{\eta}}{2} \right)g_{1,\eta}+ 2D_2  } \\
       \Rightarrow & \alpha_k^2 \left( \left( D_1+\frac{l_{\eta}}{2} \right)g_{1,\eta}+ 2D_2 \right) - \alpha_k 2\mu_{\eta}  \leq - \alpha_k \mu_{\eta}
        \end{align*}
        yields the desired result.
\end{proof}

Before proceeding, we introduce the constants that determine the step-size:
\begin{equation}\label{constants:alpha}
\begin{aligned}
    \bar{\alpha}_1 =& \frac{\mu_{\eta}}{  \left( D_1+\frac{l_{\eta}}{2} \right)g_{1,\eta}+ 2D_2  },  \\
    \bar{\alpha}_2 =&  \frac{\mu_{\eta}}{  16l_{\eta}(6l_{V_1}+l_{V_3})(1+\eta) + 24\kappa l_{V_1}(1+\eta) }, \\
    \bar{\alpha}_3 =& \frac{\mu_{\eta}^2(\gamma||\mGamma_{\eta}||_{\infty})^2(1-(\gamma||\mGamma_{\eta}||_{\infty})^2)}{8(1+(\gamma||\mGamma_{\eta}||_{\infty})^2)(2\mu_{\eta}\gE_1+ 4\mu_{\eta}\left( l_{V_1}\gamma\kappa + l_{\eta}l_{V_2} \right) )},\\
    \bar{\alpha}_4=& \frac{2\mu_{\eta}^2\eps\gamma^2(1-(\gamma||\mGamma_{\eta}||_{\infty})^2)}{4(1+(\gamma||\mGamma_{\eta}||_{\infty})^2)\left( \gE_2+3l_{\eta}l_{V_3}\mu_{\eta}\right)}.    
\end{aligned}
\end{equation}

Now, using the above descent lemma for the inner loop, we are ready to derive the convergence rate result of the inner-loop iteration:

\begin{proposition}\label{prop:inner-loop-final}
        For $\alpha \leq \min \left\{  \bar{\alpha}_1 ,  \bar{\alpha}_2 \right\}$, which is defined in~(\ref{constants:alpha}),  we have
\begin{align*}
 &   \E \left[L_{\eta}(\vtheta_{t,k},\vtheta_{t-1,K})-L_{\eta}(\vtheta^*(\vtheta_{t-1,K}),\vtheta_{t-1,K}) \right] \\
 \leq & 2\left( 1-\frac{\mu_{\eta}}{2} \right)^k  \E\left[  L_{\eta}(\vtheta_{t,0},\vtheta_{t-1,K})-L_{\eta}(\vtheta^*(\vtheta_{t-1,K}),\vtheta_{t-1,K})\right] \\
    &+ 2\alpha  \left( \left(\frac{2}{\mu_{\eta}}\gE_1+4\left( l_{V_1}\gamma\kappa + l_{\eta}l_{V_2} \right) \right) \E\left[\left\| \mPhi(\vtheta_{t,K-1}-\vtheta^*_{\eta})\right\|^2_{\infty}\middle| \gF_{t,0}\right]+\frac{2}{\mu_{\eta}}\gE_2+ 6l_{\eta}l_{V_3} \right).
\end{align*}
\end{proposition}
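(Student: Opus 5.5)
The plan is to unroll the one-step inequality of Proposition~\ref{prop:descent-lemma-final} with constant step $\alpha_k\equiv\alpha$, treating the Poisson correction $-\alpha(d_{t,k}-\E[d_{t,k+1}\mid\gF_{t,k}])$ as a telescoping perturbation. Write $e_k:=\E[L_{\eta}(\vtheta_{t,k},\vtheta_{t-1,K})-L_{\eta}(\vtheta^*(\vtheta_{t-1,K}),\vtheta_{t-1,K})]$ and $B:=\gE_1\E\|\mPhi(\vtheta_{t-1,K}-\vtheta^*_\eta)\|^2_\infty+\gE_2$. Since $L_\eta(\cdot,\vtheta_{t-1,K})$ is frozen during the inner loop and the minimum value does not change with $k$, subtracting it from both sides and taking total expectation gives
\begin{align*}
e_{k+1}\le(1-\mu_\eta\alpha)e_k-\alpha\,\E[d_{t,k}-d_{t,k+1}]+\alpha^2 B .
\end{align*}
The Poisson correction is what separates this from the i.i.d.\ case.

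Rather than unrolling with weights $(1-\mu_\eta\alpha)^{k-1-j}$, which would leave a weighted (non-telescoping) sum of the $d$'s, I will absorb the correction into a Lyapunov-type quantity $\tilde e_k:=e_k-\alpha\E[d_{t,k}]$. With this definition,
\begin{align*}
\tilde e_{k+1}\le(1-\mu_\eta\alpha)\tilde e_k-\mu_\eta\alpha^2\E[d_{t,k}]+\alpha^2B .
\end{align*}
The leftover term $-\mu_\eta\alpha^2\E[d_{t,k}]$ is of second order and is bounded like a noise term.

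\textbf{Bounding $d_{t,k}$ (main obstacle).} I expect this step to be the hardest. Starting from~(\ref{eq:d_tk}), Cauchy--Schwarz together with $\|\nabla L_\eta\|_2\le l_\eta\|\vtheta_{t,k}-\vtheta^*(\vtheta_{t,0})\|_2$ and the bound on $\|V\|_2$ from Lemma~\ref{lem:poisson-property} gives a product of $\|\vtheta_{t,k}-\vtheta^*(\vtheta_{t,0})\|_2$ with $l_{V_1}\|\vtheta_{t,k}-\vtheta^*_\eta\|_2+l_{V_2}\|\mPhi(\vtheta_{t,0}-\vtheta^*_\eta)\|_\infty+l_{V_3}$. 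I then apply three results in turn:
\begin{enumerate}
\item the triangle inequality through $\vtheta^*(\vtheta_{t,0})$, combined with Lemma~\ref{lem:theta-star-lipschitz}, to handle $\|\vtheta_{t,k}-\vtheta^*_\eta\|_2$;
\item Young's inequality to split the products into squares;
\item the quadratic growth in Corollary~\ref{cor:quadratic-growth}, i.e.\ strong convexity, in the form $\|\vtheta-\vtheta^*(\cdot)\|^2_2\le\frac{2}{\mu_\eta}(L_\eta-L_\eta^*)$.
\end{enumerate}
This yields
\begin{align*}
\alpha|d_{t,k}|\le c_1\alpha\,e_k+\alpha\bigl(2(l_{V_1}\gamma\kappa+l_\eta l_{V_2})\|\mPhi(\vtheta_{t,0}-\vtheta^*_\eta)\|^2_\infty+3l_\eta l_{V_3}\bigr)
\end{align*}
with $c_1=O(\kappa l_{V_1}+l_\eta l_{V_3}\mu_\eta^{-1})$. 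The condition $\alpha\le\bar\alpha_2$ is designed exactly so that $c_1\alpha\le\frac12$. Two consequences follow:
\begin{enumerate}
\item $\tilde e_k\ge\frac12 e_k-\alpha(\cdots)$ and $\tilde e_0\le\frac32 e_0+\alpha(\cdots)$, which is where the factor $2$ multiplying both terms of the claim comes from;
\item the extra $\mu_\eta\alpha^2 c_1 e_k$ can be absorbed into the contraction, degrading $1-\mu_\eta\alpha$ to at worst $1-\frac{\mu_\eta}{2}\alpha$.
\end{enumerate}

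\textbf{Conclusion.} Iterating the $\tilde e$ recursion gives $\tilde e_k\le(1-\frac{\mu_\eta}{2}\alpha)^k\tilde e_0+\frac{2}{\mu_\eta}\alpha\,(B+\text{const})$, via the geometric sum $\sum_i(1-\frac{\mu_\eta}{2}\alpha)^i\le\frac{2}{\mu_\eta\alpha}$ exactly as in Lemma~\ref{lem:k-stepinnerbound}. Converting back from $\tilde e_k$ to $e_k$ using the two-sided bound above, and collecting coefficients, produces the stated constants $\frac{2}{\mu_\eta}\gE_1+4(l_{V_1}\gamma\kappa+l_\eta l_{V_2})$ and $\frac{2}{\mu_\eta}\gE_2+6l_\eta l_{V_3}$, with the target error measured at $\vtheta_{t,0}=\vtheta_{t-1,K}$. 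I read the contraction factor $(1-\frac{\mu_\eta}{2})^k$ in the statement as $(1-\frac{\mu_\eta}{2}\alpha)^k$. The condition $\alpha\le\bar\alpha_1$ is used only to invoke Proposition~\ref{prop:descent-lemma-final}.
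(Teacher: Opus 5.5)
Your route is the paper's: $e_k-\alpha d_{t,k}$ is exactly the quantity the paper's recursion contracts at rate $1-\frac{\mu_\eta}{2}\alpha$, and your bound on $|d_{t,k}|$ is, up to constants, Lemma~\ref{lem:d-bound}. The factor $2$ comes, as in the paper, from moving $\alpha c_1 e_k$ to the left under $\alpha c_1\le\frac12$, and reading $(1-\frac{\mu_\eta}{2})^k$ as $(1-\frac{\mu_\eta}{2}\alpha)^k$ is right. Two bookkeeping steps, however, do not deliver what you claim.

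\textbf{The absorption step.} In your $\tilde e$-recursion, the extra term $\mu_\eta\alpha^2c_1e_k$ must be converted back into $\tilde e_k$ through $e_k\le(\tilde e_k+\alpha C)/(1-\alpha c_1)$. This uses up $\mu_\eta\alpha\cdot\frac{\alpha c_1}{1-\alpha c_1}$ of the contraction. That cost is at most $\frac{\mu_\eta}{2}\alpha$ only when $\alpha c_1\le\frac13$; at $\alpha c_1=\frac12$ the rate degrades to $1$. The fix is the paper's split, which lets the contraction act on the nonnegative $e_k$:
\[
(1-\mu_\eta\alpha)\tilde e_k-\mu_\eta\alpha^2 d_{t,k}=\bigl(1-\tfrac{\mu_\eta}{2}\alpha\bigr)\tilde e_k-\tfrac{\mu_\eta}{2}\alpha\,e_k-\tfrac{\mu_\eta}{2}\alpha^2 d_{t,k}.
\]
After this split, $\alpha c_1\le 1$ already suffices to cancel $\frac{\mu_\eta}{2}\alpha^2c_1e_k$.

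\textbf{The constants.} Your two consequences, $e_k\le 2\tilde e_k+2\alpha C$ and $\tilde e_0\le\frac32e_0+\alpha C$, combine to give $3(1-\frac{\mu_\eta}{2}\alpha)^k e_0$, not $2(\cdots)$. The extra $\alpha C$ terms also raise the coefficient $4(l_{V_1}\gamma\kappa+l_\eta l_{V_2})$ to $6(\cdots)$ with the constants of Lemma~\ref{lem:d-bound}. The paper gets exactly $2$ and $4$ only because, when unrolling, it silently replaces the initial Lyapunov value $x_0-\alpha d_{t,0}$ by $x_0$, although $d_{t,0}$ has no sign. Your treatment of the initial term is therefore the more careful one, but it proves the proposition with $3$ in place of $2$ (harmless for Theorem~\ref{thm:markovian_obs_model}). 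Obtaining $2$ honestly requires $\frac{1+\alpha c_1}{1-\alpha c_1}\le2$, i.e.\ $\alpha c_1\le\frac13$, which $\bar\alpha_2$ does not guarantee in general.
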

\begin{proof}
For simplicity of the proof, let $x_k = \E\left[ L_{\eta}(\vtheta_{t,k},\vtheta_{t-1,K})-L_{\eta}(\vtheta^*(\vtheta_{t-1,K}),\vtheta_{t-1,K}) \middle | \gF_{t,0} \right]$. Then, taking the conditional expectation on $\gF_{t,0}$ to the result of Proposition~\ref{prop:descent-lemma-final}, we have
\begin{align*}
    &x_{k+1}\\
    \leq &  \left( 1-\mu_{\eta}\alpha_k\right)x_k-\alpha_k\E\left[ d_{t,k}-d_{t,k+1} \middle | \gF_{t,0} \right]+ \alpha_k^2 \left( \gE_1 \E\left[\left\|\mPhi(\vtheta_{t,K-1}-\vtheta^*_{\eta} ) \right\|^2_{\infty} \middle|\gF_{t,0}\right]+\gE_2 \right)\\
    =&\left(1-\mu_{\eta}\alpha_k \right)x_k - \left(1-\frac{\mu_{\eta}}{2}\alpha_k \right)\alpha_k  \E\left[d_{t,k} \middle|\gF_{t,0} \right]-\frac{\mu_{\eta}}{2}\alpha_k^2 \E\left[d_{t,k}\middle|\gF_{t,0}  \right]+\alpha\E\left[d_{t,k+1} \middle|\gF_{t,0}\right]\\
    &+\alpha_k^2 \left( \gE_1 \E\left[\left\|\mPhi(\vtheta_{t,K-1}-\vtheta^*_{\eta} ) \right\|^2_{\infty} \middle| \gF_{t,0}  \right]+\gE_2 \right)\\
    \leq & \left(1-\mu_{\eta}\alpha_k \right)x_k - \left(1-\frac{\mu_{\eta}}{2}\alpha_k \right)\alpha_k \E\left[d_{t,k}\middle| \gF_{t,0} \right]+\alpha  \E\left[d_{t,k+1} \middle|\gF_{t,0}\right] +\alpha_k^2 \left( \gE_1 \E\left[\left\|\mPhi(\vtheta_{t,K-1}-\vtheta^*_{\eta} ) \right\|^2_{\infty} \middle| \gF_{t,0} \right]+\gE_2 \right)\\
    &+\alpha_k^2 \left(\left( l_{\eta}l_{V_1}+4l_{\eta}l_{V_3} + 2\left( \kappa l_{V_1}\gamma +l_{\eta}l_{V_2}\right)\right) x_k+ \mu_{\eta}\left( l_{V_1}\gamma\kappa + l_{\eta}l_{V_2} \right)\E\left[\left\|\mPhi(\vtheta_{t,0}-\vtheta^*_{\eta}) \right\|^2_{\infty} \middle|\gF_{t,0}\right]+\mu_{\eta}l_{\eta}l_{V_3}\right)\\
    =&  \left(1-\mu_{\eta}\alpha_k+\left( l_{\eta}l_{V_1}+4l_{\eta}l_{V_3} + 2\left( \kappa l_{V_1}\gamma +l_{\eta}l_{V_2}\right)\right)\alpha_k^2 \right)x_k -\left( 1-\frac{\mu_{\eta}}{2}\alpha_k \right)\alpha_k \E\left[d_{t,k}\middle| \gF_{t,0} \right]+\alpha \E\left[d_{t,k+1} \middle|\gF_{t,0}\right] \\
    &+ \alpha_k^2 \left( \left(\gE_1+ \mu_{\eta}\left( l_{V_1}\gamma\kappa + l_{\eta}l_{V_2} \right) \right) \E\left[\left\| \mPhi(\vtheta_{t,K-1}-\vtheta^*_{\eta})\right\|^2_{\infty}\middle| \gF_{t,0}\right]+\gE_2+ 2\mu_{\eta}l_{\eta}l_{V_3} \right).
\end{align*}
where the first equality follows from simple algebraic decomposition. The last inequality follows from bounding $d_{t,k}$ from Lemma~\ref{lem:d-bound} in the Appendix Section~\ref{app:sec:aux-leemma-for-inner-loop}.

Since $\frac{\mu_{\eta}}{  16l_{\eta}(6l_{V_1}+l_{V_3})(1+\eta) + 24\kappa l_{V_1}(1+\eta) }  \leq \frac{\mu_{\eta}}{2( l_{\eta}l_{V_1}+4l_{\eta}l_{V_3} + 2\left( \kappa l_{V_1}\gamma +l_{\eta}l_{V_2}\right)  )} $, the step-size condition
\begin{align*}
 -\mu_{\eta}\alpha_k + (l_{\eta}l_{V_1}+4l_{\eta}l_{V_3} + 2\left( \kappa l_{V_1}\gamma +l_{\eta}l_{V_2}\right))\alpha^2 \leq -\frac{\mu_{\eta}\alpha}{2}
\end{align*}
yields the following:
\begin{align*}
    x_{k+1}  \leq& \left(  1 - \frac{\mu_{\eta}}{2}\alpha\right) x_k- \left( 1-\frac{\mu_{\eta}}{2}\alpha_k \right) \alpha_k d_{t,k}+\alpha d_{t,k+1}\\
    &+ \alpha_k^2 \left( \left(\gE_1+ \mu_{\eta}\left( l_{V_1}\gamma\kappa + l_{\eta}l_{V_2} \right) \right) \E\left[\left\| \mPhi(\vtheta_{t,K-1}-\vtheta^*_{\eta})\right\|^2_{\infty}\middle| \gF_{t,0}\right]+\gE_2+ 2\mu_{\eta}l_{\eta}l_{V_3} \right).
\end{align*}

Recursively expanding the terms, we get
\begin{align*}
    x_{k+1} \leq &  \left( 1-\frac{\mu_{\eta}}{2}\alpha \right)^kx_0 +\alpha d_{t,k+1} \\
    &+ \frac{2}{\mu_{\eta}}\alpha_k  \left( \left(\gE_1+ \mu_{\eta}\left( l_{V_1}\gamma\kappa + l_{\eta}l_{V_2} \right) \right) \E\left[\left\| \mPhi(\vtheta_{t,K-1}-\vtheta^*_{\eta})\right\|^2_{\infty}\middle| \gF_{t,0}\right]+\gE_2+ 2\mu_{\eta}l_{\eta}l_{V_3} \right)\\
    \leq & \left( 1-\frac{\mu_{\eta}}{2}\alpha  \right)^kx_0+\alpha \frac{2}{\mu_{\eta}} \left( l_{\eta}l_{V_1}+4l_{\eta}l_{V_3} + 2\left( \kappa l_{V_1}\gamma +l_{\eta}l_{V_2}\right)\right) x_{k+1}\\
    &+ \alpha \left( \left(\frac{2}{\mu_{\eta}}\gE_1+4\left( l_{V_1}\gamma\kappa + l_{\eta}l_{V_2} \right) \right) \E\left[\left\| \mPhi(\vtheta_{t,K-1}-\vtheta^*_{\eta})\right\|^2_{\infty}\middle| \gF_{t,0}\right]+\frac{2}{\mu_{\eta}}\gE_2+ 6l_{\eta}l_{V_3} \right).
\end{align*}

The last inequality follows from the bounding $d_{t,k+1}$ of Lemma~\ref{lem:d-bound} in Appendix Section~\ref{app:sec:aux-leemma-for-inner-loop}.\

Noting that $\frac{\mu_{\eta}}{  16l_{\eta}(6l_{V_1}+l_{V_3})(1+\eta) + 24\kappa l_{V_1}(1+\eta) }  \leq \frac{\mu_{\eta}}{4\left( l_{\eta}l_{V_1}+4l_{\eta}l_{V_3} + 2\left( \kappa l_{V_1}\gamma +l_{\eta}l_{V_2}\right)\right)}$,
we have
\begin{align*}
    x_{k+1} \leq & 2\left( 1-\frac{\mu_{\eta}}{2} \right)^kx_0 \\
    &+ 2\alpha  \left( \left(\frac{2}{\mu_{\eta}}\gE_1+4\left( l_{V_1}\gamma\kappa + l_{\eta}l_{V_2} \right) \right) \E\left[\left\| \mPhi(\vtheta_{t,K-1}-\vtheta^*_{\eta})\right\|^2_{\infty}\middle| \gF_{t,0}\right]+\frac{2}{\mu_{\eta}}\gE_2+ 6l_{\eta}l_{V_3} \right).
\end{align*}

Taking the total expectation, we have the desired results. 
\end{proof}

We are now ready to present the main result in the proof of Theorem~\ref{thm:markovian_obs_model}. By applying the result of the inner iteration analysis to the outer iteration decomposition established in Proposition~\ref{prop:outer-loop-decomposition}, we obtain the desired conclusion.

\subsection{Proof of Theorem~\ref{thm:markovian_obs_model}}\label{app:thm:markovian_obs_model}
\begin{proof}
    For simplicity of the proof, let $y_t =\E\left[ \left\|\mPhi(\vtheta_{t,K}-\vtheta^*_{\eta}) \right\|^2_{\infty} \right] $.
    
    Applying the result in Proposition~\ref{prop:inner-loop-final} to the bound in Proposition~\ref{prop:outer-loop-decomposition} with $\delta=\frac{2(\gamma||\mGamma_{\eta}||_{\infty})^2}{1-(\gamma||\mGamma_{\eta}||_{\infty})^2}$, we have
    \begin{align}
             & y_t  \nonumber\\
        \leq &\left( \frac{1+(\gamma||\mGamma_{\eta}||_{\infty})^2}{\mu_{\eta}(\gamma||\mGamma_{\eta}||_{\infty})^2}   \left(  16\left(1-\frac{\mu_{\eta}}{2}\alpha_0\right)^K+2\alpha\left(\frac{2}{\mu_{\eta}}\gE_1+4\left( l_{V_1}\gamma\kappa + l_{\eta}l_{V_2} \right) \right)\right) +\frac{1+(\gamma||\mGamma_{\eta}||_{\infty})^2}{2}\right)y_{t-1} \label{inqe:final-1}\\
        &+\underbrace{\frac{1+(\gamma||\mGamma_{\eta}||_{\infty})^2}{\mu_{\eta}(\gamma||\mGamma_{\eta}||_{\infty})^2} \left( 2 \left(1-\frac{\mu_{\eta}}{2}\alpha_0 \right)^K(R_{\max}^2+8\left\| \mPhi\vtheta^*_{\eta} \right\|^2_{\infty}) +2 \alpha  \left( \frac{2}{\mu_{\eta}}\gE_2+ 6l_{\eta}l_{V_3} \right) \right)}_{:=\gE_{K,\alpha_0}}. \nonumber
    \end{align}

    Let us first bound the coefficient of $y_t$ with $\frac{3+(\gamma||\mGamma_{\eta}||_{\infty})^2}{4}$. Then, it is enough to bound the coefficient in~(\ref{inqe:final-1}) with $\frac{1-(\gamma||\mGamma_{\eta}||_{\infty})^2}{4}$, i.e., we require
    \begin{align*}
        \frac{1+(\gamma||\mGamma_{\eta}||_{\infty})^2}{\mu_{\eta}(\gamma||\mGamma_{\eta}||_{\infty})^2}  \left( 16\left(1-\frac{\mu_{\eta}}{2}\alpha_0\right)^K+ 2\alpha\left(\frac{2}{\mu_{\eta}}\gE_1+4\left( l_{V_1}\gamma\kappa + l_{\eta}l_{V_2} \right)\right)\right)  \leq \frac{1-(\gamma||\mGamma_{\eta}||_{\infty})^2}{4}.
    \end{align*}
    The above condition is satisfied if
    \begin{align*}
        16\left( 1-\frac{\mu_{\eta}}{2}\alpha_0\right)^K\leq &  \frac{\mu_{\eta}(\gamma||\mGamma_{\eta}||_{\infty})^2(1-(\gamma||\mGamma_{\eta}||_{\infty})^2)}{8(1+(\gamma||\mGamma_{\eta}||_{\infty})^2)} , \\
       2\alpha\left(\frac{2}{\mu_{\eta}}\gE_1+4\left( l_{V_1}\gamma\kappa + l_{\eta}l_{V_2} \right) \right)\leq &  \frac{\mu_{\eta}(\gamma||\mGamma_{\eta}||_{\infty})^2(1-(\gamma||\mGamma_{\eta}||_{\infty})^2)}{8(1+(\gamma||\mGamma_{\eta}||_{\infty})^2)}.
    \end{align*}
   These inequalities are, in turn, ensured by choosing $K$ and $\alpha_0$ such that
    \begin{align}
        K \geq \frac{2}{\mu_{\eta}\alpha_0} \ln \left(\frac{\mu_{\eta}(1-(\gamma||\mGamma_{\eta}||_{\infty})^2)}{128(1+(\gamma||\mGamma_{\eta}||_{\infty})^2)}\right) ,\quad \alpha_0 \leq  \bar{\alpha}_3 \label{K-alpha-condiiton-1}
    \end{align}
    Applying this result to~(\ref{inqe:final-1}), we get
    \begin{align*}
       y_t\leq & \frac{3+(\gamma||\mGamma_{\eta}||_{\infty})^2}{4} y_{t-1}+ \gE_{K,\alpha_0}\\
        \leq & \left(\frac{3+(\gamma||\mGamma_{\eta}||_{\infty})^2}{4} \right)^2 y_{t-2} + \sum_{j=t-1}^t\left( \frac{3+(\gamma||\mGamma_{\eta}||_{\infty})^2}{4} \right)^{t-j} \gE_{K,\alpha_0} , \\
        \leq & \left(\frac{3+(\gamma||\mGamma_{\eta}||_{\infty})^2}{4} \right)^t y_0 + \frac{4}{1-(\gamma||\mGamma_{\eta}||_{\infty})^2} \gE_{K,\alpha_0}.
    \end{align*}

For the above bound to be smaller than $\eps$, a sufficient condition is to make each terms smaller than $\frac{\eps}{2}$:
\begin{align*}
&    \left(\frac{3+(\gamma||\mGamma_{\eta}||_{\infty})^2}{4} \right)^t \E\left[ \left\| \mPhi\vtheta_{0,K}-\mPhi\vtheta^*_\eta \right\|^2_{\infty} \right] \leq \frac{\eps}{2},
\end{align*}
which is satisfied if we choose $t$ as follows:
\begin{align}
    t \geq \frac{4}{1-(\gamma||\mGamma_{\eta}||_{\infty})^2}\ln\left(\frac{2\left\|\mPhi(\vtheta_{0,K}-\vtheta^*_\eta) \right\|^2_{\infty}}{\eps}\right).
\end{align}

To bound the remaining term, $\frac{1}{1-(\gamma||\mGamma_{\eta}||_{\infty})^2}\gE_{K,\alpha_0}$, with $\frac{\eps}{2}$, we require
\begin{align*}
&    \frac{1+(\gamma||\mGamma_{\eta}||_{\infty})^2}{\mu_{\eta}(\gamma||\mGamma_{\eta}||_{\infty})^2} \left( 2 \left(1-\frac{\mu_{\eta}}{2}\alpha_0 \right)^K(R_{\max}^2+8\left\| \mPhi\vtheta^*_{\eta} \right\|^2_{\infty}) +2 \alpha  \left( \frac{2}{\mu_{\eta}}\gE_2+ 6l_{\eta}l_{V_3} \right) \right)\leq \frac{(1-(\gamma||\mGamma_{\eta}||_{\infty})^2)\eps}{2}.
\end{align*}

Now, bound each terms with $\frac{(1-(\gamma||\mGamma_{\eta}||_{\infty})^2)\eps}{4}$, we need 
\begin{align}
&    \exp(-K\mu_{\eta}\alpha_0/2)(R^2_{\max}+8\left\|\mPhi\vtheta^*_{\eta}\right\|^2_{\infty}) \leq \frac{\eps(\gamma||\mGamma_{\eta}||_{\infty})^2 \mu_{\eta}(1-(\gamma||\mGamma_{\eta}||_{\infty})^2)}{4(1+(\gamma||\mGamma_{\eta}||_{\infty})^2)} \nonumber\\
\iff & K \geq \frac{2}{\mu_{\eta}\alpha_0} \ln \left(\frac{1}{R^2_{\max}+8\left\|\mPhi\vtheta^*_{\eta} \right\|^2_{\infty}}\frac{4(1+(\gamma||\mGamma_{\eta}||_{\infty})^2)}{\eps(\gamma||\mGamma_{\eta}||_{\infty})^2 \mu_{\eta}(1-(\gamma||\mGamma_{\eta}||_{\infty})^2)} \right)\label{cond:K-2}
\end{align}
Likewise, bounding the remaining term with $\frac{(1-\gamma||\mGamma_{\eta}||_{\infty})^2\eps}{4}$, we require
\begin{align}
& \alpha_0 \left( \gE_2+3l_{\eta}l_{V_3}\mu_{\eta}\right) \leq \frac{\eps\gamma^2 \mu_{\eta}(1-(\gamma||\mGamma_{\eta}||_{\infty})^2)}{4(1+(\gamma||\mGamma_{\eta}||_{\infty})^2)} \nonumber \\
\iff & \alpha_0 \leq \bar{\alpha}_4. \label{cond:alpha-2}
\end{align}

Now, collecting the conditions on $\alpha$ in~(\ref{K-alpha-condiiton-1}) and~(\ref{cond:alpha-2}), we need
\begin{align*}
     &\alpha_0 \\
    =&  \min\left\{ \frac{\mu_{\eta}}{  l_{\eta}(6l_{V_1}+l_{V_3})(1+\eta) + \kappa l_{V_1}(1+\eta) } , \bar{\alpha}_2, \bar{\alpha}_3\right\}
\end{align*}
Moreover, collecting the bound on $K$ in~(\ref{K-alpha-condiiton-1}) and~(\ref{cond:K-2}), we have
\begin{align*}
    K =  \gO \left( \max\left\{ \frac{ l_{\eta}(6l_{V_1}+l_{V_3})(1+\eta) + \kappa l_{V_1}(1+\eta)}{\mu_{\eta}^2} , \frac{2\mu_{\eta}\gE_1+ 4\mu_{\eta}\left( l_{V_1}\kappa + l_{\eta}l_{V_2} \right)}{\mu_{\eta}^3(1-\gamma||\mGamma_{\eta}||_{\infty})}, \frac{\gE_2+\mu_{\eta}l_{\eta}l_{V_3}}{\eps\mu_{\eta}^2(1-\gamma||\mGamma_{\eta}||_{\infty})}\right\}\right).
\end{align*}

    This completes the proof.
\end{proof}





\subsection{Auxiliary Lemmas for Markovian Observation Model Analysis}\label{app:sec:aux-leemma-for-inner-loop}

\begin{lemma}\label{lem:V-tk-bound}
    We have for $t\in\sN$ and $1\leq k \leq K-1$,
    \begin{align*}
   \left\| V(\vtheta_{t,k+1},\vtheta_{t,0},s_{t,k+1},a_{t,k+1}) \right\|_2 \leq &      l_{V_1}\alpha_k \left\| g(\vtheta_{t,k},\vtheta_{t-1,K};o_{t,k}) \right\|_2 + l_{V_1} \left\| \vtheta_{t,k}-\vtheta^*(\vtheta_{t,0}) \right\|_2\\
    &+\left( \frac{l_{V_1 }\gamma }{\mu_{\eta}}+l_{V_2} \right)\left\| \mPhi(\vtheta_{t,0}-\vtheta^*_{\eta})\right\|_{\infty}+l_{V_3}.
    \end{align*}
\end{lemma}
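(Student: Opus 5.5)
The plan is to apply the third bound of Lemma~\ref{lem:poisson-property} at the point $(\vtheta_{t,k+1},\vtheta_{t,0})$ and then unpack the resulting distances term by term. Concretely, that lemma gives
\begin{align*}
\left\| V(\vtheta_{t,k+1},\vtheta_{t,0},s_{t,k+1},a_{t,k+1}) \right\|_2 \leq l_{V_1}\left\|\vtheta_{t,k+1}-\vtheta^*_{\eta}\right\|_2 + l_{V_2}\left\|\mPhi(\vtheta_{t,0}-\vtheta^*_{\eta})\right\|_{\infty} + l_{V_3}.
\end{align*}
The proof of that lemma swaps the roles of the Lipschitz constants in its final line. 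It is cleaner to rederive this inequality directly through the intermediate point $V(\vtheta_{t,k+1},\vtheta^*_\eta,\cdot)$. We use the first two Lipschitz statements of Lemma~\ref{lem:poisson-property} together with Lemma~\ref{lem:V-theta-eta-bound}. This yields the displayed form, with $l_{V_1}$ multiplying the first-argument distance and $l_{V_2}$ multiplying the $\mPhi$-weighted target distance.

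Next, the first-argument distance is split with the triangle inequality through two intermediate points, $\vtheta_{t,k}$ and $\vtheta^*(\vtheta_{t,0})$:
\begin{align*}
\|\vtheta_{t,k+1}-\vtheta^*_\eta\|_2 \leq \|\vtheta_{t,k+1}-\vtheta_{t,k}\|_2 + \|\vtheta_{t,k}-\vtheta^*(\vtheta_{t,0})\|_2 + \|\vtheta^*(\vtheta_{t,0})-\vtheta^*_\eta\|_2 .
\end{align*}
The first piece equals $\alpha_k\|g(\vtheta_{t,k},\vtheta_{t-1,K};o_{t,k})\|_2$ by the update rule~(\ref{eq:prq-update}), since $\vtheta_{t,0}=\vtheta_{t-1,K}$. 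The second piece is kept as is. The third piece is bounded by $\frac{\gamma}{\mu_\eta}\|\mPhi(\vtheta_{t,0}-\vtheta^*_\eta)\|_\infty$ using Lemma~\ref{lem:theta-star-lipschitz}.

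Multiplying by $l_{V_1}$ and collecting the two $\|\mPhi(\vtheta_{t,0}-\vtheta^*_\eta)\|_\infty$ contributions gives the coefficient $\frac{l_{V_1}\gamma}{\mu_\eta}+l_{V_2}$, plus the constant $l_{V_3}$. This is exactly the stated bound.

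The only delicate step is the bookkeeping in the first step: the correct constant must be paired with each argument, and $\vtheta_{t,0}$ must be identified with $\vtheta_{t-1,K}$. Everything else is a direct triangle-inequality argument.
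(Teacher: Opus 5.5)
Your proof is correct and is essentially the paper's argument. The paper first inserts the intermediate value $V(\vtheta^*(\vtheta_{t,0}),\vtheta_{t,0},s_{t,k+1},a_{t,k+1})$ and applies the third bound of Lemma~\ref{lem:poisson-property} there, while you apply that bound at $(\vtheta_{t,k+1},\vtheta_{t,0})$ and split in parameter space; the ingredients (first-argument Lipschitzness of $V$, the update rule with $\vtheta_{t,0}=\vtheta_{t-1,K}$, Lemma~\ref{lem:theta-star-lipschitz}) and the resulting constants are identical. You are also right that the last line of the proof of Lemma~\ref{lem:poisson-property} swaps $l_{V_1}$ and $l_{V_2}$ relative to its correct statement, and your rederivation through $V(\vtheta_{t,k+1},\vtheta^*_\eta,\cdot)$ repairs it.
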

\begin{proof}
    We have
    \begin{align*}
    &\left\| V(\vtheta_{t,k+1},\vtheta_{t,0},s_{t,k+1},a_{t,k+1}) \right\|_2 \\
    \leq &  \left\| V(\vtheta_{t,k+1},\vtheta_{t,0},s_{t,k+1},a_{t,k+1})- V(\vtheta^*(\vtheta_{t,0}),\vtheta_{t,0},s_{t,k+1},a_{t,k+1})  \right\|_2 \\
    &+ \left\| V(\vtheta^*(\vtheta_{t,0}),\vtheta_{t,0},s_{t,k+1},a_{t,k+1})  \right\|_2\\
    \leq & l_{V_1} \left\| \vtheta_{t,k+1}-\vtheta^*(\vtheta_{t,0}) \right\|_2+ \left\| V(\vtheta^*(\vtheta_{t,0}),\vtheta_{t,0},s_{t,k+1},a_{t,k+1})  \right\|_2\\
    \leq &  l_{V_1}\left\| \vtheta_{t,k+1}-\vtheta_{t,k} \right\|_2 + l_{V_1} \left\| \vtheta_{t,k}-\vtheta^*(\vtheta_{t,0}) \right\|_2\\
    &+l_{V_1}\left\| \vtheta^*(\vtheta_{t,0})-\vtheta^*_{\eta} \right\|_2+l_{V_2}\left\| \mPhi(\vtheta_{t,0}-\vtheta^*_{\eta})\right\|_{\infty}+l_{V_3}\\
    \leq & l_{V_1}\alpha_k \left\| g(\vtheta_{t,k},\vtheta_{t-1,K};o_{t,k}) \right\|_2 + l_{V_1} \left\| \vtheta_{t,k}-\vtheta^*(\vtheta_{t,0}) \right\|_2\\
    &+\left( \frac{l_{V_1 }\gamma }{\mu_{\eta}}+l_{V_2} \right)\left\|\mPhi( \vtheta_{t,0}-\vtheta^*_{\eta})\right\|_{\infty}+l_{V_3}.
\end{align*}
The first equality follows from algebraic decomposition and triangle inequality. The second inequality follows from lipschitzness of $V(\cdot)$ in Lemma~\ref{lem:poisson-property}. The last inequality follows lipschitzness of $\vtheta^*(\cdot)$ in Lemma~\ref{lem:theta-star-lipschitz}. This completes the proof.
\end{proof}

\begin{lemma}\label{lem:d-bound}
For $t\in\sN$ and $1\leq k \leq K$, we have
    \begin{align*}
        |d_{t,k}|  \leq & \frac{2}{\mu_{\eta}}\left( l_{\eta}l_{V_1}+4l_{\eta}l_{V_3} + 2\left( \kappa l_{V_1}\gamma +l_{\eta}l_{V_2}\right)\right)\left(L_{\eta}(\vtheta_{t,k},\vtheta_{t-1,K})-L_{\eta}(\vtheta^*(\vtheta_{t-1,K}),\vtheta_{t-1,K} ) \right) \\
        &+2\left( l_{V_1}\gamma\kappa + l_{\eta}l_{V_2} \right)\left\| \mPhi(\vtheta_{t,0}-\vtheta^*_{\eta}) \right\|_{\infty}^2+2l_{\eta}l_{V_3}.
    \end{align*}
\end{lemma}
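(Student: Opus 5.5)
The bound on $|d_{t,k}|$ will come from Cauchy--Schwarz, $|d_{t,k}|\le \|\nabla L_{\eta}(\vtheta_{t,k},\vtheta_{t-1,K})\|_2\,\|V(\vtheta_{t,k},\vtheta_{t,0},s_{t,k},a_{t,k})\|_2$. Each factor will then be bounded in terms of the inner-loop suboptimality gap and the outer error $\|\mPhi(\vtheta_{t,0}-\vtheta^*_{\eta})\|_\infty$. Recall $\vtheta_{t,0}=\vtheta_{t-1,K}$.

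For the gradient factor, $\nabla L_{\eta}(\vtheta^*(\vtheta_{t-1,K}),\vtheta_{t-1,K})=0$, so $l_\eta$-smoothness (Lemma~\ref{lem:strong_convexity}) gives
\[
\|\nabla L_{\eta}(\vtheta_{t,k},\vtheta_{t-1,K})\|_2\le l_{\eta}\|\vtheta_{t,k}-\vtheta^*(\vtheta_{t,0})\|_2 .
\]
For the Poisson factor, I will decompose as in Lemma~\ref{lem:V-tk-bound}, but at $\vtheta_{t,k}$ itself, so no step term appears. Insert $V(\vtheta^*(\vtheta_{t,0}),\vtheta_{t,0},\cdot)$ and $V(\vtheta^*_\eta,\vtheta^*_\eta,\cdot)$ and use Lemma~\ref{lem:poisson-property}, Lemma~\ref{lem:theta-star-lipschitz} and Lemma~\ref{lem:V-theta-eta-bound}:
\[
\|V(\vtheta_{t,k},\vtheta_{t,0},s_{t,k},a_{t,k})\|_2\le l_{V_1}\|\vtheta_{t,k}-\vtheta^*(\vtheta_{t,0})\|_2+\Bigl(\tfrac{l_{V_1}\gamma}{\mu_\eta}+l_{V_2}\Bigr)\|\mPhi(\vtheta_{t,0}-\vtheta^*_\eta)\|_\infty+l_{V_3}.
\]

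Multiplying the two bounds, write $e:=\|\vtheta_{t,k}-\vtheta^*(\vtheta_{t,0})\|_2$ and $m:=\|\mPhi(\vtheta_{t,0}-\vtheta^*_\eta)\|_\infty$. This gives
\[
|d_{t,k}|\le l_\eta l_{V_1}e^2+(\kappa l_{V_1}\gamma+l_\eta l_{V_2})\,e\,m+l_\eta l_{V_3}\,e .
\]
The cross terms are handled with Young's inequality. First, $e\,m\le e^2+m^2$, which yields the $2(\kappa l_{V_1}\gamma+l_\eta l_{V_2})m^2$ term after allowing a factor $2$ of slack. Second, $l_\eta l_{V_3}e\le 2l_\eta l_{V_3}e^2+ l_\eta l_{V_3}/8$, which is bounded by the constant term $2l_\eta l_{V_3}$ (splitting as $e\le e^2+1$ also works). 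This collects everything into
\[
|d_{t,k}|\le c\,e^2+2(\kappa l_{V_1}\gamma+l_\eta l_{V_2})m^2+2l_\eta l_{V_3},
\qquad c\le l_\eta l_{V_1}+4l_\eta l_{V_3}+2(\kappa l_{V_1}\gamma+l_\eta l_{V_2}).
\]
Finally, the quadratic growth consequence of $\mu_\eta$-strong convexity (the inequality used in Corollary~\ref{cor:quadratic-growth}) gives $e^2\le \frac{2}{\mu_\eta}\bigl(L_\eta(\vtheta_{t,k},\vtheta_{t-1,K})-L_\eta(\vtheta^*(\vtheta_{t-1,K}),\vtheta_{t-1,K})\bigr)$. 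Substituting this produces exactly the stated form, including the prefactor $\frac{2}{\mu_\eta}$.

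The only delicate step is the constant bookkeeping in the Young splits, so that the coefficients fit under the stated ones. In particular, the $l_{V_3}e$ term must be absorbed partly into $e^2$ and partly into the constant, and I expect the slack in the stated constants ($4l_\eta l_{V_3}$ and the factor $2$) to be enough. Everything else is a direct chaining of lemmas already established.
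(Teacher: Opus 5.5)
Your proposal is correct and follows the paper's proof step for step. The chain is Cauchy--Schwarz, then $l_\eta$-smoothness around the minimizer $\vtheta^*(\vtheta_{t,0})$ for the gradient factor, then the Poisson-solution bound of Lemma~\ref{lem:poisson-property} with Lemma~\ref{lem:theta-star-lipschitz} for the $V$ factor, then Young's inequality, and finally the strong-convexity bound $\|\vtheta_{t,k}-\vtheta^*(\vtheta_{t,0})\|_2^2\le \frac{2}{\mu_\eta}\bigl(L_\eta(\vtheta_{t,k},\vtheta_{t-1,K})-L_\eta(\vtheta^*(\vtheta_{t-1,K}),\vtheta_{t-1,K})\bigr)$; your explicit Young splits (e.g.\ $e\le 2e^2+\tfrac18$), which the paper leaves implicit, check out and fit under the stated constants.
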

\begin{proof}
From the definition of $d_{t,k}$ in~(\ref{eq:d_tk}),
\begin{align*}
    |d_{t,k}| = & | \nabla L_{\eta}(\vtheta_{t,k},\vtheta_{t,0})^{\top}V(\vtheta_{t,k},\vtheta_{t,0},s_{t,k},a_{t,k}) | \\
    \leq & \left\|\nabla L_{\eta}(\vtheta_{t,k},\vtheta_{t,0}) \right\|_2\left\| V(\vtheta_{t,k},\vtheta_{t,0},s_{t,k},a_{t,k})\right\|_2\\
        = & \left\|\nabla L_{\eta}(\vtheta_{t,k},\vtheta_{t,0}) -\nabla L_{\eta}(\vtheta^*(\vtheta_{t,0}),\vtheta_{t,0})\right\|_2\left\| V(\vtheta_{t,k},\vtheta_{t,0},s_{t,k},a_{t,k})\right\|_2\\
        \leq & l_{\eta}\left\| \vtheta_{t,k}- \vtheta^*(\vtheta_{t,0}) \right\|_2 \left( l_{V_1} \left\| \vtheta_{t,k}-\vtheta^*_{\eta} \right\|_2+l_2\left\|\mPhi(\vtheta_{t,0}-\vtheta^*_{\eta}) \right\|_{\infty}+l_{V_3} \right)\\
        \leq & l_{\eta}\left\| \vtheta_{t,k}- \vtheta^*(\vtheta_{t,0}) \right\|_2\left( l_{V_1} \left\| \vtheta_{t,k}-\vtheta^*(\vtheta_{t,0}) \right\|_2+l_{V_1} \left\|\vtheta^*(\vtheta_{t,0})-\vtheta^*_{\eta} \right\|_2+l_{V_2}\left\|\mPhi(\vtheta_{t,0}-\vtheta^*_{\eta}) \right\|_{\infty}+l_{V_3} \right)\\
        \leq & l_{\eta}l_{V_1}\left\| \vtheta_{t,k}-\vtheta^*(\vtheta_{t,0}) \right\|_2^2\\
        &+l_{\eta}\left\| \vtheta_{t,k}-\vtheta^*(\vtheta_{t,0}) \right\|_2\left( \left(\frac{l_{V_1}\gamma}{\mu_{\eta}}+l_{V_2} \right)\left\| \mPhi(\vtheta_{t,0}-\vtheta^*_{\eta}) \right\|_{\infty}+l_{V_3} \right)\\
        \leq & \left( l_{\eta}l_{V_1}+4l_{\eta}l_{V_3} + 2\left( \kappa l_{V_1}\gamma +l_{\eta}l_{V_2}\right)\right)\left\| \vtheta_{t,k}-\vtheta^*(\vtheta_{t,0})\right\|^2_2\\
        &+2\left( l_{V_1}\gamma\kappa + l_{\eta}l_{V_2} \right)\left\| \mPhi(\vtheta_{t,0}-\vtheta^*_{\eta}) \right\|_{\infty}^2+2 l_{\eta}l_{V_3}\\
         \leq & \frac{2}{\mu_{\eta}}\left( l_{\eta}l_{V_1}+4l_{\eta}l_{V_3} + 2\left( \kappa l_{V_1}\gamma +l_{\eta}l_{V_2}\right)\right)\left(L_{\eta}(\vtheta_{t,k},\vtheta_{t-1,K})-L_{\eta}(\vtheta^*(\vtheta_{t-1,K}),\vtheta_{t-1,K} ) \right) \\
        &+2\left( l_{V_1}\gamma\kappa + l_{\eta}l_{V_2} \right)\left\| \mPhi(\vtheta_{t,0}-\vtheta^*_{\eta}) \right\|_{\infty}^2+2l_{\eta}l_{V_3}.
\end{align*}

The first inequality follows from the Cauchy-Schwarz inequality. The third inequality follows from the smoothness of $L$.



\end{proof}

\section{Additional figure}


\begin{figure}[H]
    \centering
    \includegraphics[width=1.0\linewidth]{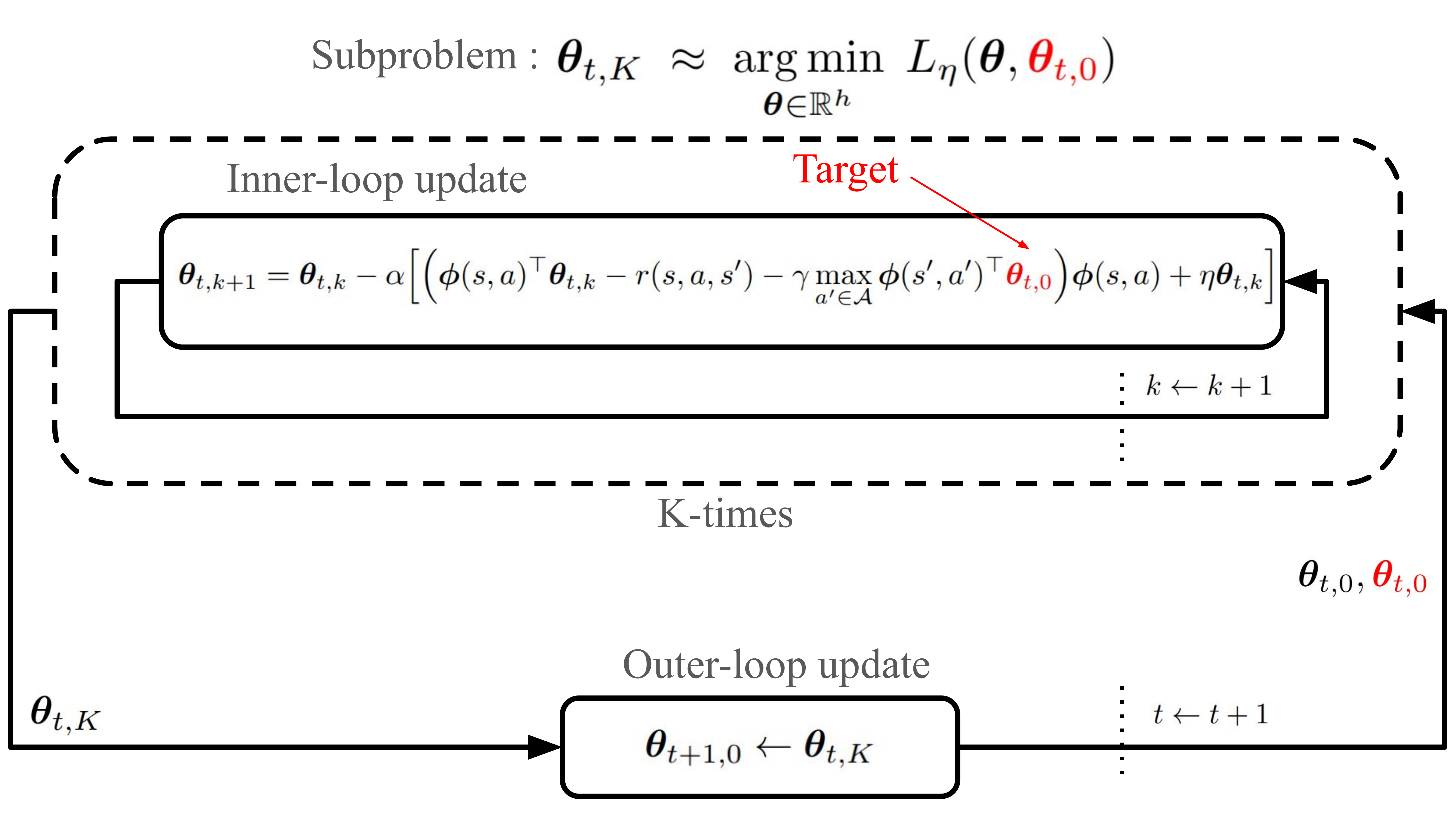}
    \caption{Double-loop structure of PRQ. The inner loop performs gradient descent to solve the regularized subproblem for K times, and the outer loop updates the target parameter.}
    \label{fig:f1-1}
\end{figure}




%% file: main_paper.bib
@inproceedings{karimi2016linear,
  title={Linear convergence of gradient and proximal-gradient methods under the polyak-{\l}ojasiewicz condition},
  author={Karimi, Hamed and Nutini, Julie and Schmidt, Mark},
  booktitle={Machine Learning and Knowledge Discovery in Databases: European Conference, ECML PKDD 2016, Riva del Garda, Italy, September 19-23, 2016, Proceedings, Part I 16},
  pages={795--811},
  year={2016},
  organization={Springer}
}

@article{bottou2018optimization,
  title={Optimization methods for large-scale machine learning},
  author={Bottou, L{\'e}on and Curtis, Frank E and Nocedal, Jorge},
  journal={SIAM review},
  volume={60},
  number={2},
  pages={223--311},
  year={2018},
  publisher={SIAM}
}

@inproceedings{lee2020periodic,
  title={Periodic {Q}-learning},
  author={Lee, Donghwan and He, Niao},
  booktitle={Learning for dynamics and control},
  pages={582--598},
  year={2020},
  organization={PMLR}
}

@article{farahmand2016regularized,
  title={Regularized policy iteration with nonparametric function spaces},
  author={Farahmand, Amir-massoud and Ghavamzadeh, Mohammad and Szepesv{\'a}ri, Csaba and Mannor, Shie},
  journal={Journal of Machine Learning Research},
  volume={17},
  number={139},
  pages={1--66},
  year={2016}
}

@inproceedings{geist2019theory,
  title={A theory of regularized markov decision processes},
  author={Geist, Matthieu and Scherrer, Bruno and Pietquin, Olivier},
  booktitle={International Conference on Machine Learning},
  pages={2160--2169},
  year={2019},
  organization={PMLR}
}

@article{bertsekas2011temporal,
  title={Temporal difference methods for general projected equations},
  author={Bertsekas, Dimitri P},
  journal={IEEE Transactions on Automatic Control},
  volume={56},
  number={9},
  pages={2128--2139},
  year={2011},
  publisher={IEEE}
}

@inproceedings{zhang2021breaking,
  title={Breaking the deadly triad with a target network},
  author={Zhang, Shangtong and Yao, Hengshuai and Whiteson, Shimon},
  booktitle={International Conference on Machine Learning},
  pages={12621--12631},
  year={2021},
  organization={PMLR}
}

@article{chen2023target,
  title={Target network and truncation overcome the deadly triad in {Q}-learning},
  author={Chen, Zaiwei and Clarke, John-Paul and Maguluri, Siva Theja},
  journal={SIAM Journal on Mathematics of Data Science},
  volume={5},
  number={4},
  pages={1078--1101},
  year={2023},
  publisher={SIAM}
}

@article{che2024target,
  title={{Target Networks and Over-parameterization Stabilize Off-policy Bootstrapping with Function Approximation}},
  author={Che, Fengdi and Xiao, Chenjun and Mei, Jincheng and Dai, Bo and Gummadi, Ramki and Ramirez, Oscar A and Harris, Christopher K and Mahmood, A Rupam and Schuurmans, Dale},
  journal={arXiv preprint arXiv:2405.21043},
  year={2024}
}

@article{asadi2024td,
  title={Td convergence: {A}n optimization perspective},
  author={Asadi, Kavosh and Sabach, Shoham and Liu, Yao and Gottesman, Omer and Fakoor, Rasool},
  journal={Advances in Neural Information Processing Systems},
  volume={36},
  year={2024}
}

@inproceedings{fellows2023target,
  title={Why target networks stabilise temporal difference methods},
  author={Fellows, Mattie and Smith, Matthew JA and Whiteson, Shimon},
  booktitle={International Conference on Machine Learning},
  pages={9886--9909},
  year={2023},
  organization={PMLR}
}

@inproceedings{lee2019target,
  title={Target-based temporal-difference learning},
  author={Lee, Donghwan and He, Niao},
  booktitle={International Conference on Machine Learning},
  pages={3713--3722},
  year={2019},
  organization={PMLR}
}

@article{manek2022pitfalls,
  title={The pitfalls of regularization in off-policy {TD} learning},
  author={Manek, Gaurav and Kolter, J Zico},
  journal={Advances in Neural Information Processing Systems},
  volume={35},
  pages={35621--35631},
  year={2022}
}

@inproceedings{limregularized,
  title={Regularized {Q}-Learning},
  author={Lim, Han-Dong and Lee, Donghwan},
    year={2024},
  booktitle={The Thirty-eighth Annual Conference on Neural Information Processing Systems}
}

@inproceedings{
gallici2024simplifying,
title={Simplifying Deep Temporal Difference Learning},
author={Matteo Gallici and Mattie Fellows and Benjamin Ellis and Bartomeu Pou and Ivan Masmitja and Jakob Nicolaus Foerster and Mario Martin},
booktitle={The Thirteenth International Conference on Learning Representations},
year={2025},
url={https://openreview.net/forum?id=7IzeL0kflu}
}

@article{wu2025unifying,
  title={{A Unifying View of Linear Function Approximation in Off-Policy RL Through Matrix Splitting and Preconditioning}},
  author={Wu, Zechen and Greenwald, Amy and Parr, Ronald},
  journal={arXiv preprint arXiv:2501.01774},
  year={2025}
}

@article{haque2024stochastic,
  title={{Stochastic Approximation with Unbounded Markovian Noise: A General-Purpose Theorem}},
  author={Haque, Shaan Ul and Maguluri, Siva Theja},
  journal={arXiv preprint arXiv:2410.21704},
  year={2024}
}

@article{glynn1996liapounov,
  title={A Liapounov bound for solutions of the {P}oisson equation},
  author={Glynn, Peter W and Meyn, Sean P},
  journal={The Annals of Probability},
  pages={916--931},
  year={1996},
  publisher={JSTOR}
}

@article{chen2022finite,
  title={Finite-sample analysis of nonlinear stochastic approximation with applications in reinforcement learning},
  author={Chen, Zaiwei and Zhang, Sheng and Doan, Thinh T and Clarke, John-Paul and Maguluri, Siva Theja},
  journal={Automatica},
  volume={146},
  pages={110623},
  year={2022},
  publisher={Elsevier}
}

@article{silver2017mastering,
  title={Mastering the game of go without human knowledge},
  author={Silver, David and Schrittwieser, Julian and Simonyan, Karen and Antonoglou, Ioannis and Huang, Aja and Guez, Arthur and Hubert, Thomas and Baker, Lucas and Lai, Matthew and Bolton, Adrian and others},
  journal={nature},
  volume={550},
  number={7676},
  pages={354--359},
  year={2017},
  publisher={Nature Publishing Group UK London}
}

@article{mnih2013playing,
  title={Playing atari with deep reinforcement learning},
  author={Mnih, Volodymyr and Kavukcuoglu, Koray and Silver, David and Graves, Alex and Antonoglou, Ioannis and Wierstra, Daan and Riedmiller, Martin},
  journal={arXiv preprint arXiv:1312.5602},
  year={2013}
}

@article{watkins1992q,
  title={{Q}-learning},
  author={Watkins, Christopher JCH and Dayan, Peter},
  journal={Machine learning},
  volume={8},
  number={3},
  pages={279--292},
  year={1992},
  publisher={Springer}
}

@book{sutton1998reinforcement,
  title={Reinforcement learning: {A}n introduction},
  author={Sutton, Richard S and Barto, Andrew G and others},
  volume={1},
  number={1},
  year={1998},
  publisher={MIT press Cambridge}
}

@inproceedings{melo2007convergence,
  title={Convergence of {Q}-learning with linear function approximation},
  author={Melo, Francisco S and Ribeiro, M Isabel},
  booktitle={2007 European control conference (ECC)},
  pages={2671--2678},
  year={2007},
  organization={IEEE}
}

@inproceedings{melo2008analysis,
  title={An analysis of reinforcement learning with function approximation},
  author={Melo, Francisco S and Meyn, Sean P and Ribeiro, M Isabel},
  booktitle={Proceedings of the 25th international conference on Machine learning},
  pages={664--671},
  year={2008}
}

@inproceedings{yang2019sample,
  title={Sample-optimal parametric q-learning using linearly additive features},
  author={Yang, Lin and Wang, Mengdi},
  booktitle={International conference on machine learning},
  pages={6995--7004},
  year={2019},
  organization={PMLR}
}

@article{lee2020unified,
  title={A unified switching system perspective and convergence analysis of {Q}-learning algorithms},
  author={Lee, Donghwan and He, Niao},
  journal={Advances in neural information processing systems},
  volume={33},
  pages={15556--15567},
  year={2020}
}

@article{lim2025understanding,
  title={Understanding the theoretical properties of projected {B}ellman equation, linear {Q}-learning, and approximate value iteration},
  author={Lim, Han-Dong and Lee, Donghwan},
  journal={arXiv preprint arXiv:2504.10865},
  year={2025}
}

@inproceedings{maei2010toward,
  title={Toward off-policy learning control with function approximation.},
  author={Maei, Hamid Reza and Szepesv{\'a}ri, Csaba and Bhatnagar, Shalabh and Sutton, Richard S},
  booktitle={ICML},
  volume={10},
  pages={719--726},
  year={2010}
}

@inproceedings{lu2021convex,
  title={Convex {Q}-learning},
  author={Lu, Fan and Mehta, Prashant G and Meyn, Sean P and Neu, Gergely},
  booktitle={2021 American Control Conference (ACC)},
  pages={4749--4756},
  year={2021},
  organization={IEEE}
}

@article{devraj2017zap,
  title={Zap {Q}-learning},
  author={Devraj, Adithya M and Meyn, Sean},
  journal={Advances in Neural Information Processing Systems},
  volume={30},
  year={2017}
}

@article{clarke1981generalized,
  title={Generalized gradients of {L}ipschitz functionals},
  author={Clarke, Frank H},
  journal={Advances in Mathematics},
  volume={40},
  number={1},
  pages={52--67},
  year={1981},
  publisher={Elsevier}
}

@article{de2000existence,
  title={On the existence of fixed points for approximate value iteration and temporal-difference learning},
  author={De Farias, Daniela Pucci and Van Roy, Benjamin},
  journal={Journal of Optimization theory and Applications},
  volume={105},
  number={3},
  pages={589--608},
  year={2000},
  publisher={Springer}
}

@article{meyn2024projected,
  title={The projected Bellman equation in reinforcement learning},
  author={Meyn, Sean},
  journal={IEEE Transactions on Automatic Control},
  volume={69},
  number={12},
  pages={8323--8337},
  year={2024},
  publisher={IEEE}
}

@inproceedings{baird1995residual,
  title={Residual algorithms: {R}einforcement learning with function approximation},
  author={Baird, Leemon and others},
  booktitle={Proceedings of the twelfth international conference on machine learning},
  pages={30--37},
  year={1995}
}

@book{bertsekas2012dynamic,
  title={Dynamic programming and optimal control: {Volume I}},
  author={Bertsekas, Dimitri},
  volume={4},
  year={2012},
  publisher={Athena scientific}
}

@article{clarke1976new,
  title={A new approach to {L}agrange multipliers},
  author={Clarke, Frank H},
  journal={Mathematics of Operations Research},
  volume={1},
  number={2},
  pages={165--174},
  year={1976},
  publisher={INFORMS}
}

@article{clarke1975generalized,
  title={Generalized gradients and applications},
  author={Clarke, Frank H},
  journal={Transactions of the American Mathematical Society},
  volume={205},
  pages={247--262},
  year={1975}
}

@book{nesterov2018lectures,
  title={Lectures on convex optimization},
  author={Nesterov, Yurii and others},
  volume={137},
  year={2018},
  publisher={Springer}
}

@article{zhang2023convergence,
  title={On the convergence and sample complexity analysis of deep q-networks with $\epsilon$-greedy exploration},
  author={Zhang, Shuai and Li, Hongkang and Wang, Meng and Liu, Miao and Chen, Pin-Yu and Lu, Songtao and Liu, Sijia and Murugesan, Keerthiram and Chaudhury, Subhajit},
  journal={Advances in Neural Information Processing Systems},
  volume={36},
  pages={13064--13102},
  year={2023}
}
